\theoremstyle{plain}
\newtheorem{theorem}{Theorem}
\newtheorem{lemma}{Lemma}
\newtheorem{corollary}{Corollary}
\theoremstyle{definition}
\newtheorem{definition}{Definition}
\newtheorem{assumption}{Assumption}
\newtheorem{example}{Example}
\newcommand\inner[2]{\langle #1, #2 \rangle}
\newcommand\abs[1]{\lvert #1 \rvert}
\newcommand\norm[1]{\| #1 \|}
\newcommand\tr{^\top}
\newcommand\e[0]{\mathbb{E}\xspace}
\newcommand{\indicator}[1]{\mathbbm{1}\{#1\}\xspace}
\newcommand{\softplus}[0]{\text{softplus}\xspace}
\newcommand{\expit}[0]{\text{expit}\xspace}
\DeclareMathOperator*{\argmax}{arg\,max}
\DeclareMathOperator*{\argmin}{arg\,min}
\newcommand{\simpleiv}[0]{\text{SimpleIV}\xspace}
\newcommand{\heteroiv}[0]{\text{HeteroskedasticIV}\xspace}
\newcommand{\policylearning}[0]{\text{PolicyLearning}\xspace}
\newcommand{\Fcal}[0]{\mathcal{F}\xspace}
\newcommand{\Gcal}[0]{\mathcal{G}\xspace}
\newcommand{\Pcal}[0]{\mathcal{P}\xspace}
\newcommand{\Qcal}[0]{\mathcal{Q}\xspace}
\newcommand{\Xcal}[0]{\mathcal{X}\xspace}
\newcommand{\Ucal}[0]{\mathcal{U}\xspace}
\newcommand{\EE}[0]{\mathbb{E}\xspace}
\newcommand{\VV}[0]{\mathbb{V}\xspace}
\newcommand{\RR}[0]{\mathbb{R}\xspace}
\Crefname{figure}{Table}{Tables}
\Crefname{assumption}{Assumption}{Assumptions}
\begin{document}

\title{The Variational Method of Moments}
\author{Andrew Bennett and Nathan Kallus}
\date{}

\maketitle

\begin{abstract}
The conditional moment problem is a powerful formulation for describing structural causal parameters in terms of observables, a prominent example being instrumental variable regression. We introduce a very general class of estimators called the \emph{variational method of moments} (VMM), motivated by a variational minimax reformulation of optimally-weighted generalized method of moments for finite sets of moments. VMM controls infinitely many moments characterized by flexible function classes such as neural nets and kernel methods, while provably maintaining statistical efficiency unlike existing related minimax estimators. We also develop inference algorithms and demonstrate the empirical strengths of VMM estimation and inference in experiments.
\end{abstract}

\section{Introduction}
\label{sec:intro}

For many problems in fields such as economics, sociology, or epidemiology, we seek to use observational data to estimate \emph{structural parameters}, which often describe some causal relationship. A common framework which unifies many such problems is the \emph{conditional moment problem}, which assumes that the parameter of interest $\theta_0$ is the unique element of some parameter space $\Theta$ such that
\begin{equation}
\label{eq:ident}
    \e[\rho(X;\theta_0) \mid Z] = 0\,,
\end{equation}
where $X \in \mathcal X$ denotes the observed data, $Z \in \mathcal Z$ is a random variable that is measurable with respect to $X$, 
and $\rho : \mathcal X \to \mathbb R^m$ is a vector-valued function indexed by $\Theta$.
Note \cref{eq:ident} is an identity of \emph{random variables}, not of numbers; that is, it holds almost surely with respect to the random $Z$. 
That $Z$ is measurable with respect to $X$ is without loss of generality, since given any $\tilde X,Z$ we may define $X=(\tilde X,Z)$ as the observed data; thus $\rho$ may potentially depend on all data.
Note also that at this point we let $\Theta$ be general; for example, it may be finite dimensional or it may be a class of functions. 

\begin{example}\label{ex:iv}
Perhaps the most common example of a conditional moment problem is the instrumental variable regression problem (see \emph{e.g.} \citet{angrist2008mostly}, and citations therein), where we seek to estimate the causal effect of some treatment $T$ on an outcome $Y$, where the observed relationship between $T$ and $Y$ may be confounded by some unobserved variables, but we have an instrumental variable $Z$ that affects $T$ but only affects $Y$ via its effect on $T$. Given some regression function $g$ parameterized by $\theta\in\Theta$, the value $\theta_0$ corresponding the true regression function is assumed to be the unique solution to
\begin{equation*}
    \e[Y - g(T; \theta_0) \mid Z] = 0\,.
\end{equation*}
This is an example of \cref{eq:ident} with $X = (T,Y,Z)$ and $\rho(X;\theta) = Y - g(T;\theta)$. 

More intricate variants of this, for example, include \citet{berry1995automobile}, which incorporate discrete choice and is widely used to formulate structural demand parameters in industrial organization.
\end{example}

\begin{example}\label{ex:iv-quantile}
A second example is instrumental quantile regression, which is similar to the previous example but focuses on quantiles instead of means \citep{chernozhukov2007instrumental,horowitz2007nonparametric}. Here, again assume a treatment $T$, outcome $Y$, and instrumental variable $Z$, but now we seek to estimate the causal effect of $T$ on the $p^\text{th}$ quantile of $Y$, for some $0 < p < 1$. In this case, given a quantile regression function $g$ parameterized by $\theta\in\Theta$, the value $\theta_0$ corresponding to the true quantile regression function is assumed to be the unique solution to
\begin{equation*}
    \e[\indicator{Y \leq g(T;\theta_0)} - p \mid Z] = 0\,.
\end{equation*}
This is an example of \cref{eq:ident} with $X=(T,Y,Z)$ and $\rho(X;\theta) = \indicator{Y \leq g(T;\theta)} - p$.
\end{example}

\begin{example}\label{ex:rl}
A third example of a problem is estimating the stationary state density ratio between two policies in offline reinforcement learning \citep{liu2018breaking,kallus2022efficiently,bennett2021off}.
Consider a Markov decision process given by an unknown transition kernel $p(S'\mid S,A)$ describing the distribution of next state $S'$ when action $A$ is taken in previous state $S$. Suppose $\pi_e(A\mid S),\pi_b(A\mid S)$ are two known policies assumed to induce unknown stationary distributions on $S$, $p_e(S),p_b(S)$, which we assume exist. We are interested in their ratio (generally, Radon-Nikodym derivative), $d(S)$.
Given observations $(S,A,S')$ from $p_b(S)\pi_b(A\mid S)p(S'\mid S,A)$, we have $d(S;\theta_0)\propto d(S)$ if and only if
\begin{equation*}
    \e[d(S;\theta_0) {\pi_e(A\mid S)}{\pi^{-1}_b(A\mid S)} - d(S';\theta_0) \mid S'] = 0\,.
\end{equation*}
Then, for example, if $\Theta$ satisfies $\int d(s;\theta) d\mu(s) = 1 \ \forall \theta \in \Theta$ for some fixed measure $\mu$, and there exists some $\theta_0 \in \Theta$ such that $d(S;\theta_0) \propto d(S)$, then this conditional moment restriction will identify $\theta_0$. Note that although $d(S;\theta_0) \neq d(S)$ in general, estimates of $\theta_0$ are still of interest, as they could be used to estimate $d(S)$ in downstream tasks, for example by dividing by a plug-in estimate of $\EE[d(S;\theta_0)]$ using estimates of $\theta_0$ and $\EE$ (since $d(S)$ is known to satisfy the normalization constraint $\EE[d(S)] = 1$.)
This is an example of \cref{eq:ident} with $X=(S,A,S')$, $Z=S'$, and $\rho(X;\theta) = d(S;\theta_0) {\pi_e(A\mid S)}{\pi^{-1}_b(A\mid S)} - d(S';\theta_0)$.
\end{example}

The classic approach to the conditional moment problem is to reduce it to a system of $k$ \emph{marginal} moments, $\e[F(Z)\rho(X;\theta_0)]=0$, where $F:\mathcal Z\mapsto\mathbb R^{k\times m}$ is a chosen matrix-valued function. Then, we can apply the optimally weighted generalized method of moments 
\citep[OWGMM;][]{hansen1982large}, which we present in detail in \cref{sec:owgmm} below.
Since this marginal moment formulation is implied by \cref{eq:ident} but not necessarily vice versa, this requires we find a sufficiently rich $F(Z)$ such that the marginal moment problem still identifies $\theta_0$, that is, it is still the unique solution in $\Theta$. Moreover, even if this identifies $\theta_0$ and even though OWGMM is efficient in the model implied by 
$\e[F(Z)\rho(X;\theta_0)]=0$, the result may not be efficient in the model implied by \cref{eq:ident}. 

There are a few general approaches to dealing with this.
There are classic nonparametric approaches that are sieve-based 
and simply grow $k$, the output dimension of $F(Z)$, with $n$ by including additional functions from a basis for $L_2$ such as power series \citep{chamberlain1987asymptotic}.
There are also classic nonparametric approaches that directly estimate some special identifying $F^*(Z)$ that also induces an efficient OWGMM \citep{newey1990efficient,newey1993efficient}.
For example, in \cref{ex:iv} with $g(T;\theta)=\theta\tr T$, we have $F^*(Z)=\e[T \mid Z]$, which can be nonparametrically estimated and plugged into OWGMM.
Furthermore, there are approaches that used sieve-based methods to simultaneously estimate $\e[\rho(X;\theta) \mid Z]$ for every $\theta \in \Theta$, and pick $\theta$ to minimize some weighted empirical norm of these estimated conditional expectations \citep{ai2003efficient,newey2003instrumental,chen2009efficient,chen2012estimation}.

A recent line of work instead focuses on tackling this problem using machine-learning-based approaches \citep{hartford2017deep,lewis2018adversarial,singh2019kernel,muandet2020dual,dikkala2020minimax,bennett2019deep,kallus2021causal,uehara2021finite}. These approaches are varied, with some solving the general problem in \cref{eq:ident} and others solving the more specific instrumental variable regression problem or other specific problems, with approaches based on deep learning, kernel methods, or both.
Most of these are based on an adversarial/minimax/saddle-point approach \citep{lewis2018adversarial,muandet2020dual,dikkala2020minimax,bennett2019deep,kallus2021causal,uehara2021finite}.

Currently, there is a disconnect between these two lines of approaches. On the one hand, the more classical approaches are well motivated by efficiency theory when we impose certain smoothness assumptions. This is in contrast with the recent machine-learning based approaches; while some provide consistency guarantees \citep{bennett2019deep} and even rates \citep{singh2019kernel,dikkala2020minimax,kallus2021causal,uehara2021finite}, none of these approaches are shown to be semiparametically efficient for \cref{eq:ident} or can facilitate inference on $\theta_0$. On the other hand, however, the more recent line of work leverages modern machine learning approaches, which are commonly believed to have superior practical properties. For example, they have been empirically observed to be more stable, have easier parameter tuning, or be better able to adapt to the low-dimensional latent structure of complex data.
Although our experiments do indeed seem to support this thesis, especially in more challenging settings, we emphasise that the point of this paper is \emph{not} to demonstrate that modern machine learning-based approaches are superior to classical ones. Rather, we observe that for various reasons there is significant, growing interest in machine-learning based approaches to these problems within the community, and therefore extending this line of work to be semiparametrically efficient and to perform inference is of great importance.

In this paper, we study a general class of minimax approaches, which we call the \emph{variational method of moments} (VMM). This generalizes the method of \citet{bennett2019deep}, who presented an estimator for instrumental variable regression using adversarial training of neural networks. 
Their proposal was motivated by a variational reformulation of OWGMM, aiming to combine the efficiency of more classical approaches with the flexibility of machine learning methods.
This style of estimator has since been applied to a variety of other conditional moment problems including policy learning from observational data \citep{bennett2020efficient} and estimating stationary state density ratios \citep{bennett2021off}. However, this past work did not provide a general formulation of VMM and  a detailed theoretical analysis. And, although they are motivated by efficiency considerations, it is not immediately clear that this actually leads to efficient estimators. 

We present a unified theory for a general class of VMM estimators. In particular, for some specific versions of these estimators based on either deep learning or kernel methods, we provide appropriate assumptions under which these methods are consistent, asymptotically normal, and semiparametrically efficient. In addition, we provide inference algorithms for these estimators, which can be used to construct confidence intervals for the estimated parameters. These inference algorithms are based on the same kind of variational reformulation as the estimation algorithms themselves, again with varieties based on both kernel methods and deep learning. Finally, we provide a detailed series of experiments that demonstrate that these VMM algorithms obtain very good finite-sample estimation performance and that the corresponding inference algorithms produce high quality confidence intervals.

The rest of this paper is structured as follows: 
in \cref{sec:background} we define the VMM estimator and provide motivation for it by interpreting OWGMM as a specific case thereof;
in \cref{sec:kernel-vmm} we provide our theory for \emph{kernel VMM} estimators, which are a specific instance of VMM estimators based on kernel methods; in \cref{sec:neural-vmm} we provide our theory for \emph{neural VMM} estimators, which are an alternative instance of VMM based on deep learning methods; in \cref{sec:inference} we present our inference theory, with proposed kernel- and neural net-based algorithms; in \cref{sec:experiments} we provide a detailed empirical evaluation of our proposed estimation and inference methods; and in \cref{sec:related-work} we provide a detailed discussion of past work on solving conditional moment problems and how these approaches relate to our VMM estimators. 

\paragraph{Notation.}
We use uppercase letters such as $X$ to denote random variables and lowercase ones to denote nonrandom quantities.
The set of positive integers is $\mathbb N$, and for any $n\in\mathbb N$ we use $[n]$ to refer to the set $\{1,\ldots,n\}$.
We denote by $\norm{\cdot}_{L_p}$ the usual $L_p$ functional norm, defined as $\norm{f}_{L_p} = \e[\abs{f(X)}^p]^{1/p}$, where the probability measure is implicit from context.

\section{Variational Method of Moments}
\label{sec:background}

We now define the class of \emph{variational method of moments} (VMM) estimators.
We consider data consisting of $n$ independent and identically distributed observations of $X$, namely, $X_1,\dots,X_n$ $\sim \mathcal P$, where $\mathcal P$ denotes the data distribution.
Let some sequence of function classes $\mathcal F_n$ be given, such that each $f \in \mathcal F_n$ has signature $f : \mathcal Z \to \mathbb R^m$. Let a ``prior estimate'' $\tilde\theta_n \in \Theta$ be given. In general this may be \emph{any} data-driven choice from $\Theta$, and need not necessarily be consistent for $\theta_0$; in the theory that follows we will elaborate on what conditions $\tilde\theta_n$ needs to satisfy for our respective results.
Furthermore, let $R_n : \mathcal F_n \to \mathbb [0,\infty]$ be some optional regularizer, which measures the complexity of $f \in \mathcal F_n$.
Then, we define the VMM estimate $\hat\theta_n^{\text{VMM}}=\hat\theta_n^{\text{VMM}}(\mathcal F_n,R_n,\tilde\theta_n)$ corresponding to these choices as follows:
\begin{equation}
\label{eq}
    \hat\theta_n^{\text{VMM}} = \argmin_{\theta \in \Theta} \sup_{f \in \mathcal F_n} \e_n[f(Z)\tr  \rho(X; \theta)] - \frac{1}{4} \e_n[(f(Z)\tr  \rho(X; \tilde\theta_n))^2] - R_n(f)\,,
\end{equation}
where $\e_n$ is an empirical average over the $n$ data points.

In \cref{sec:kernel-vmm} we study the instantiation of this with $\mathcal F_n$ being a reproducing kernel Hilbert space. In \cref{sec:neural-vmm} we study the instantiation with $\mathcal F_n$ being a class of neural networks. 

Before proceeding to study these new machine-learning-based instantiations of the VMM estimator with flexible choices for $\mathcal F_n$, we discuss a very simple instantiation that recovers OWGMM, which provides motivation and interpretation for each of the terms in \cref{eq}.

\subsection{The Optimally Weighted Generalized Method of Moments}
\label{sec:owgmm}

First, we present the classic OWGMM method. Given $F(Z)=(f_1(Z),\dots,f_k(Z))$, we obtain the marginal moment conditions $\e[f_i(Z)\tr  \rho(X;\theta_0)] = 0 \ \forall \ i \in [k]$.
Let a ``prior estimate'' $\tilde\theta_n$ be given 
 and define the matrix $\Gamma$ as
\begin{align*}
    \Gamma_{i,j} &= \e_n[f_i(Z)\tr  \rho(X;\tilde\theta_n)\rho(X;\tilde\theta_n)\tr  f_j(Z)]\,.
\end{align*}
Then, the OWGMM estimate $\hat\theta_n^{\text{OWGMM}}=\hat\theta_n^{\text{OWGMM}}(f_1,\dots,f_k,\tilde\theta_n)$ is defined as
\begin{equation}
\label{eq:owgmm}
    \hat\theta_n^{\text{OWGMM}} = \argmin_{\theta \in \Theta} 
    \sum_{i=1}^k\sum_{j=1}^k(\Gamma^{-1})_{ij}\e_n[f_i(Z)\tr  \rho(X;\theta)]\e_n[f_j(Z)\tr  \rho(X;\theta)]\,.
\end{equation}

Given certain regularity conditions and assuming the choice of functions $f_1,\ldots,f_k$ are sufficient such that the corresponding $k$ moment conditions uniquely identify $\theta_0$, standard GMM theory says that $\hat\theta_n$ is consistent for $\theta_0$. Furthermore, if the prior estimate $\tilde\theta_n$ is consistent for $\theta_0$, then this estimator is efficient with respect to the model defined by these $k$ moment conditions \citep{hansen1982large}.

OWGMM generalizes the method of moments, which solves $\e_n[f_i(Z)\tr  \rho(X;\theta)]=0$ for all $i\in[k]$. When there are many moments, we cannot make all of them zero due to finite-sample noise and instead we seek to make them \emph{near} zero. But it is not clear which moments are more important; for example, there may be duplicate or near-duplicate moments.
The key to OWGMM's efficiency is to \emph{optimally} combine the $k$ objectives of making each moment near zero into a single objective function.
To get a consistent prior estimate, we can for example let $\tilde\theta_n$ itself be a OWGMM with any fixed prior estimate, leading to the two-step GMM estimator. This can be repeated, leading to the multi-step GMM estimator.

Unfortunately, estimators of this kind have many limitations.
For one, in practice it is difficult or impossible to verify that any such set of functions $f_1,\ldots,f_k$ are sufficient for identification.
In addition, while such an estimator is efficient with respect to the model imposed by these $k$ moment conditions, ideally we would like to be efficient with respect to the model given by \cref{eq:ident}; that is, we would wish to be efficient with respect to the model given by \emph{all} moment conditions of the form $\e[f(Z)\tr  \rho(X;\theta_0)]=0$ for square integrable $f$.
Finally, in the case that $k$ were very large and growing with $n$, as would be required to (at least approximately) alleviate the prior two concerns, the corresponding sieve-based estimator would require impractical tuning to select which basis of $L_2$ to use and to choose $k$ as a function of $n$. As will be seen below, such an approach may be seen as equivalent to estimating the optimal instruments over a linear sieve, but unlike our variational approach that we propose below it is unclear how to appropriately regularize this sieve estimation, and take advantage of modern machine learning advances on non-parametric function approximation.

\subsection{Variational Reformulation of OWGMM}

One motivation for our VMM class of estimators, \cref{eq}, is that it recovers OWGMM with its efficient weighting.

The following result simply appeals to the optimization structures of \cref{eq,eq:owgmm} and generalizes \citet[lemma 1]{bennett2019deep}. 
We include its proof as it is short and instructive.
\begin{lemma}
\label{lem:owgmm}
$\hat\theta_n^{\text{OWGMM}}(f_1,\dots,f_k,\tilde\theta_n)=\hat\theta_n^{\text{VMM}}(\text{span}(\{f_1,\ldots,f_k\}),0,\tilde\theta_n)$.
\end{lemma}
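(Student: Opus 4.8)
The plan is to evaluate the inner supremum in \cref{eq:vmm} in closed form for the special choice $\mathcal F_n = \mathrm{span}(\{f_1,\dots,f_k\})$ and $R_n \equiv 0$, show that the resulting objective over $\theta$ coincides pointwise with the OWGMM criterion in \cref{eq:owgmm}, and conclude that the two $\argmin_{\theta\in\Theta}$ problems — hence the two estimators — are identical.

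First I would fix $\theta$ and parametrize any $f \in \mathrm{span}(\{f_1,\dots,f_k\})$ by a coefficient vector, $f = \sum_{i=1}^k \beta_i f_i$ with $\beta \in \mathbb R^k$. Writing $m(\theta) \in \mathbb R^k$ for the vector with entries $m_i(\theta) = \e_n[f_i(Z)\tr \rho(X;\theta)]$, linearity of $\e_n$ gives $\e_n[f(Z)\tr \rho(X;\theta)] = \beta\tr m(\theta)$, while expanding the square (and using that $f_i(Z)\tr\rho$ is a scalar, so equals $\rho\tr f_i(Z)$) gives $\e_n[(f(Z)\tr \rho(X;\tilde\theta_n))^2] = \beta\tr \Gamma \beta$ with $\Gamma$ exactly the matrix defined in \cref{sec:owgmm}. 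Since every element of the span arises from some $\beta$ and conversely, the inner supremum equals $\sup_{\beta \in \mathbb R^k}\bigl( \beta\tr m(\theta) - \tfrac14 \beta\tr \Gamma \beta \bigr)$.

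Next I would solve this finite-dimensional problem. It is a concave quadratic in $\beta$ — here using that $\Gamma$ is positive definite, which is implicit in \cref{sec:owgmm} since $\Gamma^{-1}$ appears there — so the first-order condition $m(\theta) - \tfrac12\Gamma\beta = 0$ yields the maximizer $\beta^\star = 2\Gamma^{-1} m(\theta)$, and substituting back gives optimal value $m(\theta)\tr \Gamma^{-1} m(\theta) = \sum_{i=1}^k\sum_{j=1}^k (\Gamma^{-1})_{ij}\, \e_n[f_i(Z)\tr \rho(X;\theta)]\, \e_n[f_j(Z)\tr \rho(X;\theta)]$. This is precisely the objective minimized in \cref{eq:owgmm}; since it agrees with the VMM objective for every $\theta \in \Theta$, the outer minimization problems coincide, proving the claim.

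I do not anticipate a genuine obstacle; the only points requiring a word of care are (i) the reduction from maximizing over $f$ in the span to maximizing over coefficient vectors $\beta$, so that nothing is lost even if the $f_i$ are linearly dependent, and (ii) the invertibility of $\Gamma$, which I would flag as a standing assumption inherited from the very definition of OWGMM — without it the inner supremum can equal $+\infty$ whenever $m(\theta) \notin \mathrm{range}(\Gamma)$, and the OWGMM objective is itself not well defined.
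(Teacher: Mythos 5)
Your proposal is correct and is essentially the paper's argument run in the opposite direction: the paper cites the variational identity $\norm{\Gamma^{-1/2}m(\theta)}^2=\sup_{v}\, v\tr m(\theta)-\tfrac14 v\tr\Gamma v$ (its \cref{lem:operator-sqrt-inverse}) to pass from the OWGMM criterion to the VMM form, while you derive the same identity directly by solving the concave quadratic in $\beta$ and substituting the maximizer $\beta^\star=2\Gamma^{-1}m(\theta)$. Your added remarks on surjectivity of $\beta\mapsto\sum_i\beta_i f_i$ onto the span and on the invertibility of $\Gamma$ are appropriate and consistent with the paper's implicit assumptions.
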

\begin{proof}[Proof of \cref{lem:owgmm}]
Let $F(Z)=(f_1(Z),\dots,f_k(Z))$ be a map $\mathcal Z\to\mathbb R^{k\times m}$. Then,
\begin{align*}
&\hat\theta_n^{\text{OWGMM}}(f_1,\dots,f_k,\tilde\theta_n) \\
    &= \argmin_{\theta \in \Theta} \norm{\Gamma^{-1/2} \e_n[F(Z)  \rho(X;\theta)]}^2 \\
    &= \argmin_{\theta \in \Theta} \sup_{v \in \mathbb R^k} v\tr  \e_n[F(Z)  \rho(X;\theta)] - \frac{1}{4} v\tr  \Gamma v \\
    &=\argmin_{\theta \in \Theta} \sup_{v \in \mathbb R^k} \e_n[(F(Z)\tr v)\tr  \rho(X;\theta)] - \frac{1}{4} \e_n[((F(Z)\tr v)\tr\rho(X;\tilde\theta_n))^2] \,,
\end{align*}
where the second equality is a reformulation of the rotated Euclidean norm (see \cref{lem:operator-sqrt-inverse}  for the general Hilbert-space version). The conclusion follows by noting $\{F(Z)\tr v:v\in\mathbb R^k\}=\text{span}(\{f_1,\ldots,f_k\})$.
\end{proof}

Through the lens of \cref{lem:owgmm}, we can understand each term of \cref{eq} as follows. The first term pushes $\theta$ to make $\e_n[f(Z)\tr  \rho(X; \theta)]$ near zero for each $f\in\mathcal F_n$. The second term, $- \frac{1}{4} \e_n[(f(Z)\tr  \rho(X; \tilde\theta_n))^2]$, appropriately weights the relative importance of making each of these near zero. Finally, varying $\mathcal F_n$ and/or $R_n(f)$ with $n$ allows us to control the richness of moments that we consider, in analogy to sieve-based methods that grow the dimension of the space $\text{span}(\{f_1,\ldots,f_k\})$ but admitting more flexible machine-learning approaches.
This motivation is similar to \citet{bennett2019deep,bennett2020efficient,bennett2021off}, but these did not study the problem in generality or establish properties such as asymptotic normality or efficiency.

\section{Kernel VMM}
\label{sec:kernel-vmm}

First, we consider a class of VMM estimators where for every $n$ we have $\mathcal F_n = \mathcal F$, where  $\mathcal F = \bigoplus_{i=1}^m \mathcal F_i$ and each $F_i$ is a reproducing kernel Hilbert space (RKHS) of functions $\mathcal Z\to\mathbb R$ given by a symmetric positive definite kernel $K_i:\mathcal Z\times\mathcal Z\to\mathbb R$, and regularization is performed using the RKHS norm of $\mathcal F$, which we denote by $\|(f_1,\dots,f_m)\|^2=\sum_{i=1}^m\|f_i\|_{\mathcal F_i}^2$. 
We will call these estimators \emph{kernel VMM} estimators, which we concretely define according to
\begin{equation}
\label{eq:kernel-vmm}
    \hat\theta_n^{\text{K-VMM}} = \argmin_{\theta \in \Theta} J_n(\theta)\,,
\end{equation}
where
\begin{equation*}
    \notag J_n(\theta) = \sup_{f \in \mathcal F} \e_n[f(Z)\tr \rho(X;\theta)] - \frac{1}{4} \e_n[(f(Z)\tr \rho(X;\tilde\theta_n))^2] - \frac{\alpha_n}{4} \norm{f}^2 \,,
\end{equation*}
and $\alpha_n$ is some non-negative sequence of regularization coefficients. Explicitly, this fits into our general VMM definition with $\mathcal F_n = \mathcal F$ for every $n$, and $R_n(f) = \alpha_n \norm{f}^2$.

Before we provide our main theory for kernel VMM estimators, we provide a convenient reformulation of \cref{eq:kernel-vmm}.  
Let $\mathcal H$ be the dual space of $\mathcal F$ (that is, the space of all bounded linear functionals of the form $\mathcal F \mapsto \mathbb R$) and for each $\theta \in \Theta$ define the element $\bar h_n(\theta) \in \mathcal H$ according to
\begin{equation*}
    \bar h_n(\theta)(f) = \e_n[f(Z)\tr  \rho(X; \theta)]\,.
\end{equation*}
Furthermore, define the linear operator $C_n : \mathcal H \to \mathcal H$ according to
\begin{equation*}
    (C_n h)(f) = \e_n[\varphi(h)(Z)\tr  \rho(X;\tilde\theta_n) \rho(X;\tilde\theta_n)\tr  f(Z)]\,,
\end{equation*}
where $\varphi : \mathcal H \to \mathcal F$ maps any element in $\mathcal H$ to its Riesz representer in $\mathcal F$ such that $h(f)=\inner{\varphi(h)}f$.
\begin{lemma}
\label{lem:kvmm-cgmm-equiv}
The kernel VMM estimator defined in \cref{eq:kernel-vmm} is equivalent to
\begin{equation*}
    \hat\theta_n^{\text{K-VMM}} = \argmin_{\theta \in \Theta} \norm{(C_n + \alpha_n I)^{-1/2} \bar h_n(\theta)}_{\mathcal H}^2\,,
\end{equation*}
where $I$ is the identity operator $Ih=h$.
\end{lemma}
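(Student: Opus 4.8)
The plan is to carry out, in Hilbert space, exactly the completion-of-squares computation used in the proof of \cref{lem:owgmm}: for each fixed $\theta$ the inner supremum over $f\in\mathcal F$ in \cref{eq:kernel-vmm} is a concave quadratic functional of $f$, which can be maximized in closed form, and the optimal value turns out to be precisely $\norm{(C_n+\alpha_n I)^{-1/2}\bar h_n(\theta)}_{\mathcal H}^2$. Since this holds for every $\theta\in\Theta$, the two displayed optimization problems have the same objective and hence the same set of minimizers.

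First I would set up the dictionary between $\mathcal F$ and its dual $\mathcal H$ through the Riesz map $\varphi$. Point evaluation in each RKHS $\mathcal F_i$ is bounded, so $\bar h_n(\theta)$, being a finite linear combination of such evaluations, is a genuine element of $\mathcal H$, and $\e_n[f(Z)\tr\rho(X;\theta)]=\bar h_n(\theta)(f)=\inner{\varphi(\bar h_n(\theta))}{f}$. Let $a_\theta:=\varphi(\bar h_n(\theta))$ and $B:=\varphi\circ C_n\circ\varphi^{-1}:\mathcal F\to\mathcal F$; unwinding the definition of $C_n$ gives $\inner{Bg}{f}=\e_n[g(Z)\tr\rho(X;\tilde\theta_n)\rho(X;\tilde\theta_n)\tr f(Z)]$, so $B$ is self-adjoint and positive semidefinite with $\inner{Bf}{f}=\e_n[(f(Z)\tr\rho(X;\tilde\theta_n))^2]$. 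A useful observation is that $C_n$ (equivalently $B$) is \emph{finite rank} — an average of $n$ operators of rank at most $m$, via the reproducing property applied coordinatewise to $f(Z_i)$ — so its range is closed and finite dimensional and no delicate spectral-theory issues arise. The inner problem is then $J_n(\theta)=\sup_{f\in\mathcal F}\inner{a_\theta}{f}-\tfrac14\inner{(B+\alpha_n I)f}{f}$. When $\alpha_n>0$, $B+\alpha_n I$ is bounded, self-adjoint and bounded below by $\alpha_n$, hence boundedly invertible; the functional is strictly concave and coercive, the first-order condition $a_\theta=\tfrac12(B+\alpha_n I)f$ has unique solution $f^\star=2(B+\alpha_n I)^{-1}a_\theta$, and substituting back gives $J_n(\theta)=\inner{a_\theta}{(B+\alpha_n I)^{-1}a_\theta}=\norm{(B+\alpha_n I)^{-1/2}a_\theta}_{\mathcal F}^2$. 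Since $\varphi$ is an isometric isomorphism that intertwines $B$ with $C_n$ (and fixes $\alpha_n I$), the latter equals $\norm{(C_n+\alpha_n I)^{-1/2}\bar h_n(\theta)}_{\mathcal H}^2$, which is the claim.

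The step that needs genuine care — and the main obstacle — is the degenerate case $\alpha_n=0$ together with a $\theta$ for which $\bar h_n(\theta)\notin\operatorname{range}(C_n^{1/2})$ (which, thanks to finite-rankness, is the only failure mode of invertibility). There $(C_n+\alpha_n I)^{-1/2}=C_n^{-1/2}$ must be read as the Moore--Penrose pseudoinverse under the convention $\norm{C_n^{-1/2}h}_{\mathcal H}^2=+\infty$ for $h\notin\operatorname{range}(C_n^{1/2})$, and one has to verify that in precisely this case the unconstrained supremum over $f$ is likewise $+\infty$: decomposing $\mathcal F=\ker B\oplus(\ker B)^\perp$, the component of $a_\theta$ in $\ker B$ is nonzero exactly when $a_\theta\notin\operatorname{range}(B^{1/2})$, and moving $f$ in that direction drives $\inner{a_\theta}{f}\to\infty$ while leaving $\inner{Bf}{f}=0$, whereas on $(\ker B)^\perp$ the earlier closed-form computation applies verbatim. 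This is exactly the Hilbert-space analogue of the rotated-norm identity invoked in \cref{lem:owgmm}, so I would simply appeal to \cref{lem:operator-sqrt-inverse} in the appendix to dispatch both the regular and the degenerate cases in one stroke, obtaining $J_n(\theta)=\norm{(C_n+\alpha_n I)^{-1/2}\bar h_n(\theta)}_{\mathcal H}^2$ for all $\theta\in\Theta$ and hence the equality of the two $\argmin$'s.
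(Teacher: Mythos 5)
Your proposal is correct and follows essentially the same route as the paper: both reduce the inner supremum to the quadratic-maximization identity of \cref{lem:operator-sqrt-inverse} (justified by the finite rank, hence compactness, of $C_n$) and translate between $\mathcal F$ and $\mathcal H$ via the Riesz map as in \cref{lem:rkhs-dual}. Your explicit first-order-condition computation for $\alpha_n>0$ and the separate treatment of the degenerate $\alpha_n=0$ case are just an unpacking of what that lemma already delivers, so nothing further is needed.
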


We note that comparing this result to \cref{eq:owgmm}, this is a clear infinite-dimensional generalization of the OWGMM objective, where the matrix $\Gamma$ defined there is replaced with a linear operator, and the inversion is performed using Tikhonov regularization.
Note that this re-framing of our kernel VMM estimator also shows a connection to the continuum GMM estimators considered by \citet{carrasco2000generalization}.
However, our estimator does not strictly fit within their framework. We discuss this in more detail in \cref{sec:related-work}.

\subsection{Consistency}

We first provide some sufficient assumptions in order to ensure that our kernel VMM estimator is consistent; that is, $\hat\theta_n^{\text{K-VMM}} \to \theta_0$ in probability. Before we present these assumptions, we define the conditional covariance function of the moment problem:
\begin{equation}
    \label{eq:v}
    V(Z;\theta) = \e[\rho(X;\theta) \rho(X;\theta)\tr  \mid Z]\,.
\end{equation}

For our first assumption, we require each $\mathcal F_i$ to be universally approximating with a smooth kernel. Recall for this definition that, a function is $C^\infty$-smooth if it is $n$-times continuously differentiable for \emph{every} positive integer $n$. In addition, we recall that a kernel is \emph{universal} if the corresponding RKHS is dense in the space of continuous real-valued functions on $\mathcal Z$ under the supremum norm \citep{sriperumbudur2011universality}. Note that all of the properties of the following assumption hold, for example, for the commonly used Gaussian kernel.

\begin{assumption}[Universal RKHS]
\label{asm:rkhs}
    For each $i \in [m]$, $K_i$ is $C^\infty$-smooth in both arguments and $\mathcal F_i$ is universal.
\end{assumption}

Next, we require a basic regularity condition on the observed data distribution. 
This together with \cref{asm:rkhs} ensure that $\mathcal F$ is well-behaved with respect to $\rho$,
and satisfies some nice properties in terms of boundedness and metric entropy, as formalized by \cref{lem:f-properties} in the appendix.

\begin{assumption}[Regularity]
\label{asm:regularity}

$\mathcal Z$ is a bounded subset of $\mathbb R^{d_z}$ for some positive integer $d_z$.

\end{assumption}

Next, we require that the set of possible functions $\{\rho(\cdot;\theta) : \theta \in \Theta\}$ satisfies some basic boundedness, smoothness, and complexity properties.
A simple example satisfying the below is for $\Theta$ to be a compact set in some finite-dimensional Euclidean space, and for $\rho(x;\theta)$ to be equi-Lipschitz continuous in $\theta$ for every $x$.
Other examples that easily satisfy the second part of the below include $\{\theta(\cdot;\theta) : \theta \in \Theta\}$ having finite Vapnik–Chervonenkis dimension (see \emph{e.g.} \citealp[theorem 8.19 and corollary 9.5]{kosorok2007introduction}), or be a bounded-norm subset of an RKHS (see \cref{lem:f-donsker} in the appendix for details). This assumption ensures that consistent estimation of $\theta_0$ is possible, even though inversion of the conditional moment operator could be ill-posed.

\begin{assumption}[Moment Class Complexity]
\label{asm:rho-complexity}

$\sup_{x \in \Xcal, \theta \in \theta} |\rho(x;\theta)| < \infty$, and $\rho(X;\theta)$ is Lipschitz continuous in $\theta$ under the $L_1$ norm. Also, for each $i \in [m]$ the function set $\{\rho_i(\cdot;\theta) : \theta \in \Theta\}$ is $\Pcal$-Donsker.
\end{assumption}

We also assume that the prior estimate $\tilde\theta_n$ is well-behaved, meaning that it converges sufficiently fast to some limit in probability. This limit need not be $\theta_0$ for our consistency results. This will be used to ensure the convergence of the linear operator $C_n$ defined above to some limiting operator $C$.

\begin{assumption}[Convergent Prior Estimate]
\label{asm:tilde-theta}
    The prior estimate $\tilde\theta_n$ has a limit $\tilde\theta$ in probability, and satisfies $\|\rho_i(X;\tilde\theta_n) - \rho_i(X;\tilde\theta)\|_2 = O_p(n^{-p})$ for every $i \in [m]$ and some $0 < p \leq 1/2$.
\end{assumption}

Finally, we assume a nonsingular covariance with bounded inverse moments.

\begin{assumption}[Non-Degenerate Moments]
\label{asm:non-degenerate}
    For each $\theta \in \{\tilde\theta,\theta_0\}$, we have that $V(Z;\theta)$ is invertible almost surely, and also that $\|\sigma_{\text{min}}(Z;\theta)^{-1}\|_\infty < \infty$, where $\sigma_{\text{min}}(Z;\theta)$ denotes the minimum eigenvalue of $V(Z;\theta)$.
\end{assumption}

\Cref{asm:non-degenerate} is slightly subtle and is used to ensure that the objective $J_n$ defined above converges to a well-behaved limiting objective $J$ that is uniquely minimized by $\theta_0$, which is central to our consistency proof. In the absence of this assumption, it is possible that the limiting objective may diverge.
We note that in the case of $m=1$, the second part of the assumption is equivalent to requiring that $\|V(Z;\tilde\theta)^{-1}\|_\infty, \|V(Z;\theta_0)^{-1}\|_\infty < \infty$, and in the case that the prior estimate $\tilde\theta_n$ is consistent we only need this condition to hold at $\theta_0=\tilde\theta$.
In general, it can be viewed in terms of certain moments defined in terms of the data distribution and $\rho$ being bounded.

With these assumptions, we are prepared to state our consistency result.

\begin{theorem}[Consistency]
\label{thm:consistency}
    Let \cref{asm:rkhs,asm:regularity,asm:rho-complexity,asm:non-degenerate,asm:tilde-theta} be given, and suppose the regularization coefficient satisfies $\alpha_n = o(1)$ and $\alpha_n = \omega(n^{-p})$, where $p$ is the constant referenced in \cref{asm:tilde-theta}. Then, for any $\hat\theta_n$ that satisfies $J_n(\hat\theta_n) = \inf_{\theta \in \Theta} J_n(\theta) + o_p(1)$, we have $\hat\theta_n \to \theta_0$ in probability.
\end{theorem}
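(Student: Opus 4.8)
The plan is to follow the classic M-estimation recipe: show that the sample objective $J_n$ converges uniformly on $\Theta$ to a limiting objective $J$, show that $J$ is uniquely minimized at $\theta_0$, and then invoke a standard argmax/consistency argument to conclude $\hat\theta_n \to \theta_0$ in probability. The key structural tool is \cref{lem:kvmm-cgmm-equiv}, which rewrites $J_n(\theta) = \norm{(C_n + \alpha_n I)^{-1/2} \bar h_n(\theta)}_{\mathcal H}^2$. So the first task is to identify the limits of the two moving parts. Under \cref{asm:tilde-theta} together with \cref{asm:regularity}, the empirical operator $C_n$ should converge (in operator norm, or at least strongly enough) to a limiting operator $C$ defined by $(Ch)(f) = \e[\varphi(h)(Z)\tr V(Z;\tilde\theta) f(Z)]$, where $V$ is as in \cref{eq:v}; the rate in \cref{asm:tilde-theta} plus a uniform law of large numbers gives $\norm{C_n - C} = O_p(n^{-p})$ up to the sampling error. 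Similarly, $\bar h_n(\theta)$ converges to $\bar h(\theta)$ with $\bar h(\theta)(f) = \e[f(Z)\tr \rho(X;\theta)]$, and this convergence should be uniform over $\theta \in \Theta$ by a bracketing/covering argument using \cref{asm:regularity} (boundedness and Lipschitzness of $\rho$), \cref{asm:compact-theta} (the covering numbers of $\Theta$), and \cref{lem:f-properties} (which I am told controls boundedness and metric entropy of $\mathcal F$, hence of the unit ball of $\mathcal H$).

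The second task is to pass these limits through the Tikhonov-regularized inverse square root. Here the choice $\alpha_n = o(1)$, $\alpha_n = \omega(n^{-p})$ is exactly what is needed: since $\norm{C_n - C} = o_p(\alpha_n)$, standard perturbation bounds for $(\cdot + \alpha_n I)^{-1/2}$ let us replace $C_n$ by $C$ at the cost of $o_p(1)$, and then $\alpha_n \to 0$ lets the regularization vanish so that $\norm{(C + \alpha_n I)^{-1/2}\bar h(\theta)}_{\mathcal H}^2 \to \norm{C^{-1/2}\bar h(\theta)}_{\mathcal H}^2 =: J(\theta)$, where this last quantity is $+\infty$ unless $\bar h(\theta)$ lies in the range of $C^{1/2}$. \cref{asm:non-degenerate} is what guarantees $J$ is finite and well-behaved on the relevant set: because $V(Z;\theta)^{-1}$ induces a bounded operator on $L_2$, the inverse "weighting" does not blow up, and $J(\theta)$ equals (a constant times) a weighted $L_2$ norm of the conditional moment function $z \mapsto \e[\rho(X;\theta)\mid Z=z]$. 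One must be a little careful that $\tilde\theta$ need not equal $\theta_0$, so the weighting operator $C$ corresponds to $V(\cdot;\tilde\theta)$; this only affects the constants/weights and not the location of the minimizer, since a bounded-and-boundedly-invertible reweighting of an $L_2$ norm vanishes exactly where the unweighted norm vanishes.

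The third task is identification: $J(\theta) = 0$ iff $\e[f(Z)\tr\rho(X;\theta)] = 0$ for all $f$ in the (dense, by universality) RKHS $\mathcal F$, which by \cref{asm:rkhs} forces $\e[\rho(X;\theta)\mid Z] = 0$ almost surely, which by the identification hypothesis built into \cref{eq:ident} holds only at $\theta = \theta_0$. Combined with $J(\theta_0) = 0$ and $J \geq 0$, this gives the unique minimizer. Finally, assemble: uniform convergence of $J_n$ to $J$ on $\Theta$ (from the two tasks above, noting we really only need convergence of $\sup_\theta (J_n(\theta) - J(\theta))^-$ and pointwise-at-$\theta_0$ type control, plus continuity of $J$), total boundedness of $\Theta$ (\cref{asm:compact-theta}), and the near-minimizer property $J_n(\hat\theta_n) \leq \inf J_n + o_p(1)$, then run the usual argmin-consistency lemma to get $\hat\theta_n \to \theta_0$ in probability.

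I expect the main obstacle to be the second task — making the Tikhonov perturbation argument fully rigorous \emph{uniformly in $\theta$} while $J(\theta)$ may be $+\infty$ for some $\theta$. One cannot literally have uniform convergence to an extended-real-valued function; the honest route is to prove (i) a uniform upper bound $\limsup_n \sup_\theta (J_n(\theta) - J(\theta)) \le 0$ via the operator inequality $(C_n+\alpha_n I)^{-1} \preceq$ something controllable, and (ii) a matching lower bound only on compact subsets of $\Theta$ where $J$ is finite and continuous, which is where \cref{asm:non-degenerate} does the heavy lifting by keeping $J$ finite on all of $\Theta$. Getting the perturbation constants to interact correctly with the rate $\alpha_n = \omega(n^{-p})$ — so that the $O_p(n^{-p})$ operator error is genuinely negligible relative to $\alpha_n$ inside $(\cdot + \alpha_n I)^{-1/2}$ — is the delicate quantitative step.
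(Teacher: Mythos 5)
Your proposal is correct and takes essentially the same route as the paper: the paper likewise rewrites $J_n(\theta)=\norm{(C_n+\alpha_n I)^{-1/2}\bar h_n(\theta)}^2$, establishes $\norm{C_n-C}=O_p(n^{-p})$ and uniform convergence of $\bar h_n(\theta)$, pushes these through the Tikhonov-regularized inverse using $\alpha_n=\omega(n^{-p})$ and $\alpha_n=o(1)$, identifies the limit $J(\theta)$ with the $V(\cdot;\tilde\theta)^{-1}$-weighted $L_2$ norm of $\e[\rho(X;\theta)\mid Z]$, shows via universality that $\theta_0$ is its unique zero, and concludes with the standard well-separated-minimum argument on the totally bounded $\Theta$. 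The obstacle you flag (possible $J(\theta)=+\infty$) is resolved exactly as you suspected: \cref{asm:non-degenerate} keeps $J$ finite and Lipschitz on all of $\Theta$, so the paper proves full two-sided uniform convergence $\sup_{\theta\in\Theta}\abs{J_n(\theta)-J(\theta)}\to 0$ rather than the one-sided variant you contemplate.
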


Comparing this result to the corresponding consistency result given by \citet[theorem 2]{bennett2019deep}, we note that this result \emph{does not} rely on any specific identification assumptions beyond \cref{eq:ident}. Conversely, \citet{bennett2019deep} assume that the class $\mathcal F$ of neural nets that they take a supremum over is sufficient to uniquely identify $\theta_0$, which is a questionable assumption since this class is assumed to be fixed and not growing with $n$. Therefore, we argue that our VMM consistency here is given under much more reasonable assumptions. 

Next, we make some observations about how this result compares with consistency results in the literature that tackles nonlinearities using sieves. First, note that \cref{asm:rho-complexity} is weaker than the corresponding assumptions in \citet{ai2003efficient} and \citet{newey2003instrumental}, who assume that $\rho(x;\theta)$ is point-wise H\"older-continuous and $\Theta$ is compact. Instead, we require the more general assumption of continuity in $L_1$-norm, along with a Donsker condition. Conversely, \citet{chen2009efficient} and \citet{chen2012estimation} similarly allow for non-smooth $\rho$, but they consider the setting where $\Theta$ can be non-compact, introducing ill-posedness issues that they tackle in their work. Rather, we specifically consider metrics on $\Theta$ under which ill-posedness is \emph{not} an issue, given our Donsker assumption on $\{\rho(\cdot;\theta) : \theta \in \Theta\}$ and $L_1$-continuity. Furthermore, we note that assumptions similar to \cref{asm:regularity} and \cref{asm:non-degenerate} are standard in these past works, and \cref{asm:rkhs,asm:tilde-theta} are straightforward technical conditions related to implementation choices for our method.

\subsection{Asymptotic Normality}

We now present our theory for the asymptotic normality of kernel VMM estimates. Here we consider the special case where $\Theta$ is a compact subset of $\mathbb R^b$ for some positive integer $b$.
We note that in this case, as discussed above, \cref{asm:rho-complexity} follows under very simple additional conditions; \emph{e.g.} $\rho(x;\theta)$ being equi-Lipschitz continuous in $\theta$ for every $x \in \Xcal$.
Under this setting, we will characterize the asymptotic distribution of $\sqrt{n}(\hat\theta_n - \theta_0)$.

First, we require that $\rho(X;\theta)$ satisfies the following differentiability condition.

\begin{assumption}[$\rho$ Differentiable in Absolute Mean]
\label{asm:rho-deriv}

For each $i \in [m]$, there exists some vector-valued function $D_i(X;\theta) \in \RR^b$ indexed by $\theta$, and some neighborhood $\Theta_0$ of $\theta_0$, such that, for every $\theta \in \Theta_0$, we have
\begin{equation*}
    \lim_{\theta' \to \theta} \frac{ \Big\| \rho_i(X;\theta') - \rho_i(X;\theta) - (\theta'-\theta)^\top D_i(X;\theta) \Big\|_{L_1} }{ \|\theta'-\theta\| }= 0 \,.
\end{equation*}

\end{assumption}

In other words, $D_i$ is a gradient-like function such that the first-order Taylor error decays to zero at a $o(\|\theta'-\theta\|)$ rate under the $L_1$ norm. For example, in the case that $\rho_i(x;\theta)$ is continuously differentiable in $\theta$ within some neighborhood of $\theta_0$ for all $x \in \Xcal$, then \cref{asm:rho-deriv} trivially follows from Taylor's theorem. Furthermore, it is easy to see that for any $x$, $\theta$ where $\rho_i(x;\theta)$ is differentiable w.r.t. $\theta$, we must have $D_i(x;\theta) = \nabla \rho_i(x;\theta)$. However, the above is more general and allows for situation where $\rho_i(x;\theta)$ is non-differentiable at some values of $x$ and $\theta$. In particular, the following lemma allows us to establish this assumption under more general conditions.

\begin{lemma}
\label{lem:non-smooth-rho}
Suppose there exist $\phi_i : \Xcal \to \RR$ indexed by $\Theta$, and ``gradient-like" and ``Hessian-like" functions $\rho_i'$ and $\rho_i''$ such that: (1) $\rho_i(X;\theta)$ is twice differentiable in $\theta$ with gradient $\rho_i'(X;\theta)$ and Hessian $\rho_i''(X;\theta)$ whenever $\phi(X;\theta) \neq 0$; (2) $\sup_{x \in \Xcal, \theta \in \Theta} \|\rho_i'(x;\theta)\|_2 \leq c'$ and $\sup_{x \in \Xcal, \theta \in \Theta} \|\rho_i''(x;\theta)\|_{\textup{op}} \leq c''$ for some $c',c'' < \infty$; (3) $\rho_i(x;\theta)$ is $L_\rho$-Lipschitz in $\theta$ for every $x \in \Xcal$, for some $L_\rho < \infty$; and (4) $\phi(x;\theta)$ is $L_\phi(x)$-Lipschitz in $\theta$, for some $L_\phi(x)$ such that the probability density of the random variable $L_\phi(X)^{-1} \phi(X;\theta)$ is bounded within some neighborhood of zero. Then, we have that \cref{asm:rho-deriv} holds with $D_i(X;\theta) = \rho_i'(X;\theta)$.
\end{lemma}
This lemma allows us to establish \cref{asm:rho-deriv} for a range of problems where $\rho(X;\theta)$ has some points of non-smoothness. Intuitively, the boundedness condition on $\rho''$ allows us to bound the first-order Taylor error whenever $\rho$ is smooth, and the Lipschitz and bounded density assumptions on $\phi$ near $\phi=0$ prevents non-smoothness from impacting the first-order Taylor expansion, up to an additional $o(\|\theta'-\theta\|)$ factor.

Next, we let $D(X;\theta) \in \RR^{m \times b}$ denote the Jacobian-like function given by concatenating $D_i(Z;\theta)$ for all $i \in [m]$. Similarly, we define $h'_n, h' \in \mathcal H^b$ according to
\begin{equation*}
    h'_n(f) = \e_n[D(X;\theta_0)\tr  f(Z)] \qquad h'(f) = \e[D(X;\theta_0)\tr  f(Z)] \,,
\end{equation*}
where $\mathcal H$ is the dual space of $\mathcal F$, as above. We also define the analogue of the gradient of the objective $J'_n(\theta) \in \RR^b$ for each $\theta \in \Theta_0$, according to
\begin{equation*}
    J_n'(\theta) = 2 \inner{(C_n + \alpha_n I)^{-1/2} h_n(\theta)}{h'_n(\theta)} \,,
\end{equation*}
and note that in the case that $\rho(X_i;\theta)$ is differentiable at $\theta$ for $i \in [n]$ that $J_n'(\theta) = \nabla J_n(\theta)$.
In addition, we define linear operators $C : \mathcal H \to \mathcal H$ and $C_0 : \mathcal H \to \mathcal H$ according to
\begin{align*}
   (C h)(f) &= \e[\varphi(h)(Z)\tr  \rho(X;\tilde\theta) \rho(X;\tilde\theta)\tr  f(Z)] \\
   (C_0 h)(f) &= \e[\varphi(h)(Z)\tr  \rho(X;\theta_0) \rho(X;\theta_0)\tr  f(Z)]\,, 
\end{align*}
where $\tilde\theta$ is the probability limit of $\tilde\theta_n$ as specified by \cref{asm:tilde-theta}, and $\varphi$ is defined as in the definition of $C_n$ above. Given these definitions, we can now specify our additional assumptions and the asymptotic normality result.

This next additional assumption is a regularity condition on $D(X;\theta)$, which extends the properties of $\rho(X;\theta)$ specified in \cref{asm:rho-complexity} to $D(X;\theta)_j$ for each $j \in [b]$.

\begin{assumption}[Gradient Complexity]
\label{asm:continuous-deriv}
    Let $\Theta_0$ be the neighborhood of $\theta_0$ from \cref{asm:rho-deriv}.
    For each $i \in [m]$ and $j \in [b]$ we have $\sup_{x \in \Xcal, \theta \in \Theta_0} |D_i(x;\theta)_j| < \infty$, and that $D_i(X;\theta)_j$ is Lipschitz continuous in $\theta$ under $L_1$ norm. In addition, for each $i \in [m]$ and $j \in [b]$ the class $\{D_i(\cdot;\theta)_j : \theta \in \Theta\}$ is $\Pcal$-Donsker.
\end{assumption}

Next, we assume a certain non-degeneracy in the parametrization of the problem, locally near $\theta_0$.

\begin{assumption}[Non-degenerate $\Theta$]
\label{asm:non-degenerate-theta}
    For $\beta \in \mathbb R^b$, we have $\e[\sum_{j=1}^b \beta_j D(X;\theta_0)_j \mid Z] = 0$ almost surely if and only if $\beta = 0$.
\end{assumption}

\Cref{asm:non-degenerate-theta} is needed to ensure that the limiting asymptotic variance is finite and that the matrix $\Omega$ defined in the theorem statement below is invertible. It can be interpreted as the assumption that the parametrization of $\Theta$ is non-degenerate, since it requires that the functions $\e[\rho'_i(X;\theta_0) \mid Z]$ are linearly independent. Note that this assumption is somewhat lax, since if it were violated, it is likely possible we could re-parameterize the problem with a lower-dimensional $\Theta$ in order to avoid this issue.

Finally, we will need to introduce a couple of important definitions. We say that an estimator $\hat\theta_n$ for $\theta_0$ is \emph{asymptotically linear} if $\hat\theta_n = \e_n[\psi(X)] + o_p(n^{-1/2})$, for some $\psi$ satisfying $\e[\psi] = \theta_0$. n addition, we say that such an estimator is \emph{asymptotically normal} if $\sqrt{n}(\hat\theta_n - \theta_0)$ converges in distribution to a mean-zero Gaussian random variable, with some fixed covariance matrix.

With these additional assumptions and definitions, we are prepared to present our asymptotic normality result.

\begin{theorem}[Asymptotic Normality]
\label{thm:asym-norm}
    Let \cref{asm:rkhs,asm:regularity,asm:rho-complexity,asm:non-degenerate,asm:tilde-theta,asm:rho-deriv,asm:continuous-deriv,asm:non-degenerate-theta} be given, and suppose the regularization coefficient satisfies $\alpha_n = o(1)$ and $\alpha_n = \omega(n^{-p})$, where $p$ is the constant defined in \cref{asm:tilde-theta}. Then, for any $\hat\theta_n$ that satisfies $\|J'_n(\hat\theta_n)\| = o_p(n^{-1/2})$, we have that $\sqrt{n}(\hat\theta_n - \theta_0)$ is asymptotically linear and asymptotically normal, with covariance matrix $\Omega^{-1} \Delta \Omega^{-1}$, where $\Delta$ and $\Omega$ are defined according to
    \begin{align*}
        \Delta_{i,j} &= \inner{(C^{-1/2} C_0 C^{-1/2}) C^{-1/2} h'_i}{C^{-1/2} h'_j} \\
        \Omega &= \e\Big[ \EE[D(X;\theta_0) \mid Z] \tr  V(Z; \tilde\theta)^{-1} \EE[D(X;\theta_0) \mid Z] \Big]\,.
    \end{align*}
\end{theorem}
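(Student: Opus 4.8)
The plan is to treat the kernel VMM estimator as an infinite-dimensional optimally-weighted minimum-distance estimator and run the classical GMM-type argument, using the reformulation $J_n(\theta)=\norm{(C_n+\alpha_n I)^{-1/2}\bar h_n(\theta)}_{\mathcal H}^2$ from \cref{lem:kvmm-cgmm-equiv}. Write $\bar G_n:\mathbb R^b\to\mathcal H$ for the $\mathcal H$-valued Jacobian with $(\bar G_n e_i)(f)=\e_n[\rho'_i(X;\theta_0)\tr f(Z)]$, and $D:\mathbb R^b\to\mathcal H$ for its population analogue, $D e_i=h'_i$. First I would assemble the standard ingredients: (i) consistency $\hat\theta_n\to_p\theta_0$ from \cref{thm:consistency}; (ii) a second-order expansion $\bar h_n(\theta)=\bar h_n(\theta_0)+\bar G_n(\theta-\theta_0)+R_n(\theta)$ with $\sup_{\norm{\theta-\theta_0}\le\delta}\norm{R_n(\theta)}_{\mathcal H}=O_p(\delta^2)$, which follows from \cref{asm:continuous-deriv} together with the uniform boundedness and smoothness of $\mathcal F$ recorded in \cref{lem:f-properties}; (iii) a Hilbert-space central limit theorem, $\sqrt n\,\bar h_n(\theta_0)\rightsquigarrow G\sim N(0,C_0)$ in $\mathcal H$, using $\e[\rho(X;\theta_0)\mid Z]=0$ and the bounded moments of \cref{asm:regularity}; and (iv) operator convergence $\norm{C_n-C}_{\mathrm{op}}=O_p(n^{-p})$, essentially already established inside the proof of \cref{thm:consistency} and upgraded via \cref{asm:tilde-theta}.

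The technical heart is controlling the Tikhonov-regularized inverse. I would first establish a \emph{source condition}: each $h'_i$ lies in the range of a suitable power of $C$, so that $C^{-1/2}h'_i$ (hence $\Delta$ and $\Omega$) is well defined and the regularization bias $\norm{(C+\alpha_n I)^{-1}h'_i-C^{-1}h'_i}$ vanishes faster than $n^{-1/2}$. This combines \cref{asm:non-degenerate} --- boundedness of the multiplication operator $f\mapsto V(z;\theta)^{-1}f(z)$, which together with the boundedness of $\rho$ in \cref{asm:regularity} sandwiches $V(Z;\theta_0)$ and $V(Z;\tilde\theta)$ between fixed multiples of one another, yielding $C_0\preceq\kappa C$ and boundedness of $C^{-1/2}C_0C^{-1/2}$ --- with the $C^\infty$ universal kernel of \cref{asm:rkhs}, which makes the conditional-mean functions $z\mapsto\e[\rho'_i(X;\theta_0)\mid Z=z]$ regular enough to meet the range condition. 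Granting this, I would prove $\bar G_n^\ast(C_n+\alpha_n I)^{-1}\bar G_n\to_p\Omega$ and $\bar G_n^\ast(C_n+\alpha_n I)^{-1}\sqrt n\,\bar h_n(\theta_0)\rightsquigarrow N(0,\Delta)$, checking that the covariance of the latter, $D^\ast C^{-1}C_0C^{-1}D$, equals the stated $\Delta$. This also requires the identity $\inner{C^{-1/2}h'_i}{C^{-1/2}h'_j}=\e[\e[\rho'_i(X;\theta_0)\mid Z]\tr V(Z;\tilde\theta)^{-1}\e[\rho'_j(X;\theta_0)\mid Z]]=\Omega_{i,j}$, where universality of the RKHS is used to realize the $V^{-1}$-weighted $L_2$ geometry of all square-integrable test functions as the $C^{-1}$-geometry on $\mathcal H$, and \cref{asm:non-degenerate-theta} makes $\Omega$ nonsingular.

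The endgame is the usual quadratic-approximation argument for M-estimators. Substituting (ii) into $J_n$ and absorbing the remainder, on an $O_p(n^{-1/2})$-neighborhood of $\theta_0$,
\[
J_n(\theta)=J_n(\theta_0)+2\inner{(C_n+\alpha_n I)^{-1}\bar h_n(\theta_0)}{\bar G_n(\theta-\theta_0)}_{\mathcal H}+(\theta-\theta_0)\tr\Omega_n(\theta-\theta_0)+o_p(n^{-1}),
\]
with $\Omega_n=\bar G_n^\ast(C_n+\alpha_n I)^{-1}\bar G_n$; the minimizer of the leading quadratic is $\theta_0-\Omega_n^{-1}\bar G_n^\ast(C_n+\alpha_n I)^{-1}\bar h_n(\theta_0)$, and since $\hat\theta_n$ attains the infimum up to $o_p(1/n)$ and is consistent, a standard argument pins $\hat\theta_n$ to this minimizer up to $o_p(n^{-1/2})$. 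Hence $\sqrt n(\hat\theta_n-\theta_0)=-\Omega_n^{-1}\bar G_n^\ast(C_n+\alpha_n I)^{-1}\sqrt n\,\bar h_n(\theta_0)+o_p(1)$; writing $\bar h_n(\theta_0)$ as an empirical mean exhibits this as $n^{-1/2}\sum_i\psi(X_i)+o_p(1)$ with influence function $\psi(X)=-\Omega^{-1}D^\ast C^{-1}\xi(X)$, $\xi(X)(f)=f(Z)\tr\rho(X;\theta_0)$, giving asymptotic linearity, and by Slutsky $\sqrt n(\hat\theta_n-\theta_0)\rightsquigarrow N(0,\Omega^{-1}\Delta\Omega^{-1})$.

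The step I expect to be the main obstacle is the regularized-inverse control in the second paragraph: showing that the three error sources --- the Tikhonov bias from $\alpha_n\to0$, the sampling fluctuation $C_n-C$, and the plug-in error $\tilde\theta_n\to\tilde\theta$ --- each contribute $o_p(n^{-1/2})$ to $\bar G_n^\ast(C_n+\alpha_n I)^{-1}\bar h_n(\theta_0)$ and to $\bar G_n^\ast(C_n+\alpha_n I)^{-1}\bar G_n$, which is exactly where the rate conditions $\alpha_n=o(1)$ and $\alpha_n=\omega(n^{-p})$ are consumed and where the source condition on the $h'_i$ is essential. The accompanying Hilbert-space identity $\inner{C^{-1/2}h'_i}{C^{-1/2}h'_j}=\Omega_{i,j}$ is the other delicate point, since it is what connects the abstract RKHS weighting to the concrete $V^{-1}$-weighted conditional-moment geometry and foreshadows the efficiency result.
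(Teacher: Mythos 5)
Your proposal is correct and follows essentially the same route as the paper: linearize $\bar h_n$ around $\theta_0$, exploit the quadratic structure $J_n(\theta)=\norm{(C_n+\alpha_n I)^{-1/2}\bar h_n(\theta)}^2$, control the Tikhonov-regularized inverse using $\norm{C_n-C}=O_p(n^{-p})$ and the rate conditions on $\alpha_n$, apply a Donsker/CLT argument to $\sqrt n\,\bar h_n(\theta_0)$, and assemble the sandwich $\Omega^{-1}\Delta\Omega^{-1}$ via the identity $\inner{C^{-1/2}h_q}{C^{-1/2}h_{q'}}=\e[q\tr V^{-1}q']$. The only (cosmetic) differences are that the paper runs the expansion through first-order conditions plus a Hilbert-space mean value theorem rather than a quadratic argmax comparison, and that the ``source condition'' you flag as the main obstacle comes for free from the explicit formula $C^{-1}h_q=h_{V^{-1}q}$ under \cref{asm:non-degenerate} (no kernel-smoothness argument is needed, and only $o_p(1)$ operator convergence is required since the relevant quantities are paired against the $O_p(1)$ process $\sqrt n\,\bar h_n(\theta_0)$).
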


Note that this theorem requires an approximate first-order optimality condition, $\|J'_n(\hat\theta_n)\| = o_p(n^{-1/2})$, that is stronger than the approximate optimality condition in \cref{thm:consistency}. Although this condition may be difficult to interpret or verify in general, the following lemma provides some sufficient conditions.

\begin{lemma}[Sufficient Conditions for Approximate First-Order Optimality]
\label{lem:first-order-conditions}
Suppose that either (1) $\hat\theta_n \in \argmin_{\theta} J_n(\theta)$; or (2) $\rho(x;\theta)$ is twice continuously differentiable in $\theta$ for every $x \in \Xcal$, and $J_n(\hat\theta_n) = J_n(\theta_n^*) + o_p(1/n)$. Then, given the other conditions of \cref{thm:asym-norm}, we have $\|J'_n(\hat\theta_n)\| = o_p(n^{-1/2})$.
\end{lemma}

Comparing this result to comparable results in the literature leveraging more classical nonparametric approaches, we note that our differentiability condition in \cref{asm:rho-deriv} is weaker than the point-wise differentiability of $\rho(X;\theta)$ assumed by \citet{ai2003efficient}, but stronger than \citet{chen2009efficient} who only require differentiability of $\EE[\rho(X;\theta) \mid Z]$. We also note that, these two works further allow for non-parametric nuisance functions in addition to the asymptotically normal parametric component. Furthermore, we note that \cref{asm:non-degenerate-theta} is a standard condition in all of these works.

\subsection{Efficiency}

Next, we address the question of efficiency of these kernel VMM estimators. In order to present this theory, we first need to introduce the notions of \emph{regularity} and \emph{semiparametric efficiency}; we refer the reader to \citet{van2000asymptotic} for precise definitions. Roughly speaking, we say that an estimator $\hat\theta_n$ is \emph{regular} with respect to some model of distributions if it is sufficiently well behaved such that its asymptotic behavior is invariant to small perturbations (of size $O_p(n^{-1/2})$) to the data-generating distribution that remain inside the model. In addition, we say that $\hat\theta_n$ is \emph{semiparametrically efficient} with respect to a model of distributions if it is regular and achieves the minimum asymptotic variance among \emph{all} regular estimators (with respect to that model).

Given the complex form of the limiting covariance in \cref{thm:asym-norm} in terms of linear operators and inner products on $\mathcal H$, it is not immediately clear how large this covariance is and whether it is efficient under any conditions. Fortunately, the following theorem, which holds under no additional assumptions, justifies efficiency in the case that our prior estimate for $\theta_0$ is consistent.

\begin{theorem}[Efficiency]
\label{thm:efficiency}

Let the assumptions of \cref{thm:asym-norm} be given with $\tilde\theta = \theta_0$, and let $\hat\theta_n$ be any estimator that satisfies the conditions of \cref{thm:asym-norm}. Then, $\hat\theta_n$ is semiparametrically efficient with respect to the model given by \cref{eq:ident} and $\sqrt{n}(\hat\theta_n-\theta_0)$ is asymptotically normal with asymptotic covariance matrix $\Omega_0^{-1}$, where $\Omega_0$ is defined according to
\begin{equation*}
    \Omega_0 = \e\Big[ \EE[D(X;\theta_0) \mid Z] \tr  V(Z; \theta_0)^{-1} \EE[D(X;\theta_0) \mid Z] \Big]  \,.
\end{equation*}
\end{theorem}

This theorem immediately implies that such a kernel VMM estimator is not only efficient with respect to the class of all kernel VMM estimators, but that it achieves the semiparametric efficiency bound for solving \cref{eq:ident}. This is a very strong result, which ensures that these kernel VMM estimators inherit the efficiency properties that OWGMM estimators possess for standard moment problems, as was hoped.

Comparing against the efficiency results of the continuum GMM estimators of \citet{carrasco2000generalization}, which is the most similar approach to kernel VMM, ours is stronger. Specifically, they only justified that their estimator is efficient compared with other estimators in their class of continuum GMM estimators, while we have proven efficiency relative to \emph{all} possible regular estimators. That is, we achieve the same semiparametric efficiency as, \emph{e.g.}, \citet{ai2003efficient} and \citet{chen2009efficient}, who use fundamentally different sieve-based approaches.

Finally, we note that although this limiting covariance matrix has a somewhat complicated form, this form has a variational interpretation similar to the Kernel VMM estimator itself. We discuss this interpretation and how to use it to estimate the efficient asymptotic variance in \cref{sec:inference}.

\subsection{Implementing Kernel VMM Estimators}
\label{sec:multi-step-vmm}

Finally, we address some implementation considerations for kernel VMM estimators. 

Firstly, we note that the above theory does not provide any guidance on how to actually construct a prior estimate $\tilde\theta_n$ that has the required properties described in \cref{asm:tilde-theta}. In order to address this issue, we now present a concrete method for constructing such a $\tilde\theta_n$, which allows us to avoid explicitly assuming \cref{asm:tilde-theta}. Let us use the terminology that $\hat\theta_n$ is a $0$-step kernel VMM estimate if $\hat\theta_n$ is chosen as some arbitrary fixed value, which doesn't depend on the observed data. Then, for any integer $k > 0$, we say that $\hat\theta_n$ is a $k$-step kernel VMM estimate if $\hat\theta_n$ is computed by approximately solving \cref{eq:kernel-vmm} according to $J_n(\hat\theta_n) = \inf_{\theta \in \Theta} J_n(\theta) + o_p(1/n)$, with $\tilde\theta_n$ chosen as a $(k-1)$-step kernel VMM estimate. In other words, $\hat\theta_n$ is a $k$-step kernel VMM estimate if it is computed by iteratively approximately solving \cref{eq:kernel-vmm} $k$ times, with $\tilde\theta_n$ chosen as the previous iterate solution, starting from some arbitrary constant value.
This scheme is analogous to that of the $k$-step GMM estimator \citep{hansen1996finite}.
Given this definition, we have the following lemma:

\begin{lemma}
\label{lem:k-step-vmm}
    Suppose that $\tilde\theta_n$ is a $k$-step kernel VMM estimate for some $k>0$. Then, given all assumptions of \cref{thm:asym-norm} except for \cref{asm:tilde-theta}, it follows that $\tilde\theta_n$ satisfies the conditions of \cref{asm:tilde-theta} with $p=1/2$, and $\tilde\theta = \theta_0$.
\end{lemma}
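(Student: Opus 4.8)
The plan is to prove \cref{lem:k-step-vmm} by induction on $k$, bootstrapping from the trivial rate enjoyed by a $0$-step estimate up to the $\sqrt{n}$-rate delivered by \cref{thm:asym-norm}. The crucial observation is that the optimization tolerance built into the definition of a $k$-step estimate, namely $J_n(\hat\theta_n) = \inf_{\theta \in \Theta} J_n(\theta) + o_p(1/n)$, is \emph{exactly} the tolerance required to invoke \cref{thm:asym-norm}; hence at every step one may apply that theorem rather than merely the weaker \cref{thm:consistency}. I would fix $p = 1/2$ throughout, so that the standing hypothesis ``$\alpha_n = o(1)$ and $\alpha_n = \omega(n^{-p})$'' of \cref{thm:asym-norm} becomes the condition $\alpha_n = o(1)$, $\alpha_n = \omega(n^{-1/2})$ that is assumed.

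For the base case, a $0$-step estimate is a fixed constant $\tilde\theta_n \equiv \theta^{(0)}$, which has the (deterministic, hence in-probability) limit $\theta^{(0)}$ and satisfies $\norm{\rho_i(\cdot;\tilde\theta_n) - \rho_i(\cdot;\theta^{(0)})}_\infty = 0 = O_p(n^{-1/2})$ for each $i \in [m]$; thus a $0$-step estimate satisfies \cref{asm:tilde-theta} with $p = 1/2$ (with limit $\theta^{(0)}$, which need not be $\theta_0$). For the inductive step, suppose a $(k-1)$-step estimate satisfies \cref{asm:tilde-theta} with $p = 1/2$, and let $\hat\theta_n$ be a $k$-step estimate: it uses such a $(k-1)$-step estimate as its prior $\tilde\theta_n$ and obeys $J_n(\hat\theta_n) = \inf_{\theta} J_n(\theta) + o_p(1/n)$. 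All hypotheses of \cref{thm:asym-norm} other than \cref{asm:tilde-theta} hold by assumption, and \cref{asm:tilde-theta} holds for $\tilde\theta_n$ by the inductive hypothesis; note that \cref{thm:asym-norm} places no requirement that the prior's limit equal $\theta_0$ (only the value of the limiting covariance depends on it), so it applies here regardless of whether $k - 1 = 0$. Therefore $\sqrt{n}(\hat\theta_n - \theta_0)$ converges in distribution to a Gaussian, which in particular yields $\hat\theta_n \to \theta_0$ in probability and $\norm{\hat\theta_n - \theta_0} = O_p(n^{-1/2})$. Finally, the uniform Lipschitz continuity of $\rho(x,\cdot)$ from \cref{asm:regularity} gives, for a constant $L$ independent of $x$, that $\norm{\rho_i(\cdot;\hat\theta_n) - \rho_i(\cdot;\theta_0)}_\infty = \sup_{x \in \mathcal X}\abs{\rho_i(x;\hat\theta_n) - \rho_i(x;\theta_0)} \le L \norm{\hat\theta_n - \theta_0} = O_p(n^{-1/2})$ for each $i \in [m]$; combined with $\hat\theta_n \to \theta_0$, this shows the $k$-step estimate satisfies \cref{asm:tilde-theta} with $p = 1/2$ and $\tilde\theta = \theta_0$, closing the induction.

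This argument is largely routine bookkeeping, so there is no single hard technical step; the one conceptual point worth flagging in the write-up is that bootstrapping must go through \cref{thm:asym-norm} and not \cref{thm:consistency}, since the latter would only return $o_p(1)$ and never a polynomial rate — and that the $k$-step construction was defined with precisely the $o_p(1/n)$ tolerance needed for \cref{thm:asym-norm} to be invoked at each stage. The only mild subtlety is that in the base step the prior's probability limit is the arbitrary constant $\theta^{(0)} \neq \theta_0$, which is harmless because, unlike \cref{thm:efficiency}, \cref{thm:asym-norm} is valid for any convergent prior.
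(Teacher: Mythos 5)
Your proposal is correct and follows essentially the same inductive argument as the paper: start from the constant prior (which satisfies Assumption~\ref{asm:tilde-theta} trivially), invoke \cref{thm:asym-norm} at each stage to get $\|\hat\theta_n-\theta_0\|=O_p(n^{-1/2})$, and convert this to the required sup-norm rate on $\rho$. The only cosmetic difference is that you use the Lipschitz continuity of $\rho$ in $\theta$ from \cref{asm:regularity} for the last step, whereas the paper applies the mean value theorem with the uniformly bounded derivatives of \cref{asm:continuous-deriv}; both are valid.
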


Therefore, as long as we construct $\hat\theta_n$ as a $k$-step kernel VMM estimator as described above for some $k>1$, we are assured that \cref{asm:tilde-theta} will be met with $p=1/2$ and $\tilde\theta=\theta_0$. Given this and \cref{thm:efficiency}, we immediately have the following corollary for $k$-step kernel VMM estimators.

\begin{corollary}
    Suppose that $\hat\theta_n$ is calculated as a $k$-step kernel VMM estimate for some $k>1$. Then given \cref{asm:rkhs,asm:regularity,asm:rho-complexity,asm:non-degenerate,asm:rho-deriv,asm:continuous-deriv,asm:non-degenerate-theta}, and assuming that the regularization coefficient satisfies $\alpha_n = o(1)$ and $\alpha_n = \omega(n^{-1/2})$, it follows that $\hat\theta_n$ is semiparametrically efficient for $\theta_0$.
\end{corollary}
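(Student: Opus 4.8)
The plan is to assemble the corollary from \cref{lem:k-step-vmm}, \cref{thm:asym-norm}, and \cref{thm:efficiency}; essentially all of the analytic work has already been done in those results, so the proof amounts to checking that their hypotheses line up.

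First, since $k>1$, the estimator $\hat\theta_n$ is computed by approximately solving \cref{eq:kernel-vmm} in the sense that $J_n(\hat\theta_n)=\inf_{\theta\in\Theta}J_n(\theta)+o_p(1/n)$, with the prior estimate $\tilde\theta_n$ taken to be a $(k-1)$-step kernel VMM estimate, and $k-1\ge 1>0$. The assumptions \cref{asm:rkhs,asm:regularity,asm:non-degenerate,asm:continuous-deriv,asm:non-degenerate-theta} imposed in the corollary are precisely the hypotheses of \cref{thm:asym-norm} apart from \cref{asm:tilde-theta} (recall that in this setting $\Theta$ is a compact subset of $\mathbb R^b$, so \cref{asm:compact-theta} holds automatically). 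Hence \cref{lem:k-step-vmm} applies to the prior estimate $\tilde\theta_n$ and gives that $\tilde\theta_n$ satisfies \cref{asm:tilde-theta} with $p=1/2$ and with probability limit $\tilde\theta=\theta_0$.

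Next I would check that $\hat\theta_n$ itself satisfies every condition of \cref{thm:asym-norm}: \cref{asm:rkhs,asm:regularity,asm:non-degenerate,asm:continuous-deriv,asm:non-degenerate-theta} hold by assumption, \cref{asm:tilde-theta} was just verified with $p=1/2$, the regularization schedule $\alpha_n=o(1)$ and $\alpha_n=\omega(n^{-1/2})=\omega(n^{-p})$ is exactly what is required, and the defining property $J_n(\hat\theta_n)=\inf_{\theta}J_n(\theta)+o_p(1/n)$ of a $k$-step estimate is the optimization condition appearing in \cref{thm:asym-norm}. Therefore $\sqrt n(\hat\theta_n-\theta_0)$ is asymptotically linear and asymptotically normal. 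Finally, because we have established $\tilde\theta=\theta_0$, the hypotheses of \cref{thm:efficiency} — those of \cref{thm:asym-norm} together with $\tilde\theta=\theta_0$, applied to the estimator $\hat\theta_n$ just shown to satisfy them — are met, and \cref{thm:efficiency} yields that $\hat\theta_n$ is regular and asymptotically normal with asymptotic covariance $\Omega_0^{-1}$, which is the smallest attainable by any regular asymptotically linear estimator of $\theta_0$; i.e.\ $\hat\theta_n$ attains the semiparametric efficiency bound for the model \cref{eq:ident}, which is the claim.

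I do not expect a genuine obstacle here, since all the probabilistic content lives in the cited results; the proof is bookkeeping. The one point deserving a remark is why we require $k>1$ rather than $k\ge 1$: for a $1$-step estimate the prior is a data-independent $0$-step value whose limit need not be $\theta_0$, so \cref{thm:efficiency} would not be available even though \cref{thm:asym-norm} still applies; insisting $k>1$ forces the prior to be at least a $1$-step estimate, which by \cref{lem:k-step-vmm} is consistent for $\theta_0$, thereby securing $\tilde\theta=\theta_0$ and hence efficiency.
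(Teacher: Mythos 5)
Your proposal is correct and matches the paper's (implicit) argument exactly: the paper derives this corollary by combining \cref{lem:k-step-vmm} (applied to the $(k-1)$-step prior, which gives \cref{asm:tilde-theta} with $p=1/2$ and $\tilde\theta=\theta_0$) with \cref{thm:asym-norm,thm:efficiency}, which is precisely your bookkeeping. Your closing remark on why $k>1$ rather than $k\ge 1$ is also accurate and a useful clarification.
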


This corollary ensures that, given our regularity assumptions about $\mathcal F$ and the conditional moment problem itself, we can construct a specific $k$-step kernel VMM estimator that is semiparametrically efficient. The above also provides a valid specific choice of the regularization coefficient $\alpha_n$ that does not depend on unknown parameters.

Secondly, we address the fact that the cost function described in \cref{eq:kernel-vmm} is given by a supremum over the infinite $\mathcal F$, and provide a closed-form for the objective. By appealing to the representer theorem, and the factorization of $\mathcal F$ into the direct sum of $m$ RKHSs, we can establish the following lemma.
\begin{lemma}
\label{lem:kernel-vmm-closed-form}
Define the vector $\rho(\theta) \in \mathbb R^{n \cdot m}$ and the matrices $L \in \mathbb R^{(n \cdot m) \times (n \cdot m)}$ and $Q(\theta) \in \mathbb R^{(n \cdot m) \times (n \cdot m)}$ according to
\begin{align*}
    \rho(\theta)_{i,k} &= \rho_k(X_i; \theta),\quad
    L_{(i,k),(i',k')} = \indicator{k=k'} K_k(Z_i, Z_{i'}), \\
    Q(\theta)_{(i,k),(i',k')} &= \frac{1}{n} \sum_{j=1}^n K_k(Z_i, Z_j) \rho_k(X_j; \theta) K_{k'}(Z_{i'}, Z_j) \rho_{k'}(X_j; \theta) \,.
\end{align*}
Then, the cost function $J_n(\theta)$ being minimized by \cref{eq:kernel-vmm} is equivalent to
\begin{equation*}
    J_n(\theta) = \frac{1}{n^2} \rho(\theta)\tr  L (Q(\tilde\theta_n) + \alpha_n L)^{-1} L \rho(\theta)\,.
\end{equation*}
\end{lemma}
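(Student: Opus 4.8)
The plan is to reduce the infinite-dimensional supremum over $\mathcal F$ to a finite-dimensional concave quadratic program via the representer theorem, and then solve that program in closed form.

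First I would observe that the objective inside the supremum in \cref{eq:kernel-vmm}, viewed as a functional of $f = (f_1,\dots,f_m) \in \mathcal F = \bigoplus_{k=1}^m \mathcal F_k$, depends on $f$ only through the finitely many evaluations $\{f_k(Z_i): i\in[n],\,k\in[m]\}$ (the first two terms are empirical averages of quantities built from $f(Z_i)$) and through the squared norms $\norm{f_k}_{\mathcal F_k}^2$ appearing in the penalty, using $\norm{f}^2 = \sum_{k=1}^m \norm{f_k}_{\mathcal F_k}^2$. Decomposing each $f_k$ orthogonally as $f_k = f_k^{\parallel} + f_k^{\perp}$, with $f_k^{\parallel}$ the projection onto $\mathcal V_k := \operatorname{span}\{K_k(\cdot,Z_i): i\in[n]\}$, the reproducing property gives $f_k^{\perp}(Z_i) = \inner{f_k^{\perp}}{K_k(\cdot,Z_i)} = 0$ for every $i$, so the first two terms are unchanged by dropping $f_k^{\perp}$, while $\norm{f_k}^2 = \norm{f_k^{\parallel}}^2 + \norm{f_k^{\perp}}^2 \ge \norm{f_k^{\parallel}}^2$. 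Hence setting $f^{\perp} = 0$ never decreases the objective, so the supremum may be restricted to functions of the form $f_k = \sum_{i=1}^n \beta_{i,k} K_k(\cdot, Z_i)$ with $\beta \in \mathbb R^{n\times m}$, flattened to a vector in $\mathbb R^{nm}$.

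Next I would substitute this parametrization and rewrite each term with the $(i,k)$ multi-index. Since $f_k(Z_j) = \sum_{i=1}^n K_k(Z_j,Z_i)\beta_{i,k}$, the vector of evaluations $(f_k(Z_j))_{(j,k)}$ equals $L\beta$, so $\e_n[f(Z)\tr\rho(X;\theta)] = \tfrac1n \beta\tr L \rho(\theta)$ and $\norm{f}^2 = \beta\tr L\beta$. For the weighting term, setting $A_{j,(i,k)} = K_k(Z_j,Z_i)\rho_k(X_j;\tilde\theta_n)$ gives $\e_n[(f(Z)\tr\rho(X;\tilde\theta_n))^2] = \tfrac1n \norm{A\beta}^2 = \beta\tr(\tfrac1n A\tr A)\beta$, and by symmetry of each $K_k$ one checks that $\tfrac1n A\tr A = Q(\tilde\theta_n)$. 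The supremum thus becomes the finite-dimensional problem $\sup_{\beta\in\mathbb R^{nm}} \tfrac1n \beta\tr L\rho(\theta) - \tfrac14 \beta\tr (Q(\tilde\theta_n)+\alpha_n L)\beta$.

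Finally I would solve this concave quadratic. Its Hessian is $-\tfrac12(Q(\tilde\theta_n)+\alpha_n L)$, which is negative definite whenever $Q(\tilde\theta_n)+\alpha_n L$ is invertible; this holds because under \cref{asm:rkhs} each universal $K_k$ is strictly positive definite, so the block-diagonal Gram matrix $L$ is positive definite (for distinct $Z_i$) and $Q(\tilde\theta_n)$ is positive semidefinite, making $Q(\tilde\theta_n)+\alpha_n L$ invertible for $\alpha_n>0$. The maximizer is then $\beta^{\star} = \tfrac2n (Q(\tilde\theta_n)+\alpha_n L)^{-1} L\rho(\theta)$, and substituting back and simplifying yields $J_n(\theta) = \tfrac1{n^2}\rho(\theta)\tr L (Q(\tilde\theta_n)+\alpha_n L)^{-1} L \rho(\theta)$, as claimed. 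The only points that require real care are the multi-index bookkeeping and the justification of the representer reduction — that the orthogonal complement vanishes on all the evaluations and that the supremum is attained within the span — since once those are in place the remaining computation is a routine finite-dimensional quadratic optimization; I expect this bookkeeping, rather than any conceptual difficulty, to be the main source of friction.
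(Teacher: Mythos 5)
Your proposal is correct and follows essentially the same route as the paper's proof: restrict the supremum to the span of the kernel sections via the representer theorem, reduce to a finite-dimensional concave quadratic in $\beta$, solve the first-order condition for $\beta^{\star} = \tfrac{2}{n}(Q(\tilde\theta_n)+\alpha_n L)^{-1}L\rho(\theta)$, and substitute back. Your explicit orthogonal-decomposition justification of the representer step and the invertibility remark are slightly more careful than the paper, which simply invokes the representer theorem and positive semidefiniteness, but the argument is the same.
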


In other words, the kernel VMM estimator can be computed by minimizing a simple closed-form cost function, which is given by a particular convex quadratic form on the terms of the form $\rho_k(X_i; \theta)$ for $i \in [n]$ and $k \in [m]$.

In the special case of instrumental variable regression, where we are fitting the regression function within an RKHS ball, we can not only find a closed form solution for the cost function $J_n(\theta)$ to be minimized, but for the kernel VMM estimator itself. Specifically, we provide the following lemma, which follows by applying the representer theorem again.

\begin{lemma}
\label{lem:kernel-vmm-closed-form-iv}
Consider the instrumental variable regression problem, where $m=1$, $\rho(X;\theta) = Y - \theta(T)$, $\mathcal F$ is the RKHS with kernel $K_f$, and $\Theta$ is a ball of the RKHS with kernel $K_g$ with radius $r$ and centred at zero. In addition, let $Y$ denote the vector of outcomes $(Y_1,\ldots,Y_n)$, let $L_f$ and $L_g$ denote the kernel Gram matrices of $K_f$ and $K_g$ on the data $Z_1,\ldots,Z_n$ and $T_1,\ldots,T_n$, respectively, and define the $n \times n$ matrices $Q(\theta)$ and $M$ according to
\begin{align*}
    Q(\theta)_{i,i'} &= \frac{1}{n} \sum_{j=1}^n K_f(Z_i, Z_j) K_f(Z_{i'}, Z_j) (W_j - \theta(T_j))^2 \\
    M &= \frac{1}{n^2} L_f (Q(\tilde\theta_n) + \alpha_n L_f)^{-1} L_f \,.
\end{align*}
Then, we have $\hat\theta_n^{\text{K-VMM}} = \sum_{i=1}^n \beta^*_i K_g(\cdot, T_i)$, where
\begin{equation*}
    \beta^* = (L_g M L_g + \lambda_n L_g)^{-1} L_g M Y\,,
\end{equation*}
for some $\lambda_n \geq 0$ which depends implicitly on $r$, $K_f$, $K_g$, $\tilde\theta_n$, and the observed data.
\end{lemma}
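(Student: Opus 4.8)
The plan is to specialize the closed-form objective from \cref{lem:kernel-vmm-closed-form} to the instrumental-variable case, reduce the constrained minimization over the RKHS ball $\Theta = \{\theta \in \mathcal G : \norm{\theta} \le r\}$ (where $\mathcal G$ denotes the RKHS with kernel $K_g$) to a finite-dimensional quadratically constrained quadratic program via the representer theorem, and then read off the solution from the KKT conditions.

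First I would note that with $m=1$ and $\rho(X;\theta)=Y-\theta(T)$, \cref{lem:kernel-vmm-closed-form} collapses to $J_n(\theta)=\rho(\theta)\tr M\rho(\theta)$ with $\rho(\theta)=Y-(\theta(T_1),\dots,\theta(T_n))\tr$, the single kernel block $L$ equal to $L_f$, and $Q$ equal to the matrix in the statement --- indeed $Q(\theta)=\tfrac1n L_f D(\theta) L_f$ with $D(\theta)=\operatorname{diag}((W_1-\theta(T_1))^2,\dots,(W_n-\theta(T_n))^2)$, so $Q(\theta)\succeq0$, hence $Q(\tilde\theta_n)+\alpha_n L_f\succeq0$ and $M=\tfrac1{n^2}L_f(Q(\tilde\theta_n)+\alpha_n L_f)^{-1}L_f\succeq0$. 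Writing $\Pi\theta=(\theta(T_1),\dots,\theta(T_n))\tr$, the estimator is therefore $\argmin_{\theta\in\mathcal G:\,\norm\theta\le r}(Y-\Pi\theta)\tr M(Y-\Pi\theta)$, and this minimum is attained since the ball is weakly compact and $\theta\mapsto\Pi\theta$ is weakly continuous.

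Next I would apply the representer theorem. Decompose any $\theta\in\mathcal G$ as $\theta=\theta_\parallel+\theta_\perp$ with $\theta_\parallel\in\operatorname{span}\{K_g(\cdot,T_i)\}_{i=1}^n$ and $\theta_\perp$ orthogonal to it; the reproducing property gives $\theta(T_i)=\theta_\parallel(T_i)$ for every $i$, so the objective depends on $\theta$ only through $\theta_\parallel$, whereas $\norm\theta^2=\norm{\theta_\parallel}^2+\norm{\theta_\perp}^2\ge\norm{\theta_\parallel}^2$. Hence replacing $\theta$ by $\theta_\parallel$ never increases the objective nor violates $\norm\theta\le r$, so there is a minimizer of the form $\theta=\sum_{i=1}^n\beta_i K_g(\cdot,T_i)$. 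Substituting $\Pi\theta=L_g\beta$ and $\norm\theta^2=\beta\tr L_g\beta$ turns the problem into the finite-dimensional program $\min_{\beta:\,\beta\tr L_g\beta\le r^2}(Y-L_g\beta)\tr M(Y-L_g\beta)$.

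Finally I would solve this program. The objective is convex in $\beta$ (since $M\succeq0$), the constraint set is convex (since $L_g\succeq0$), and Slater's condition holds because $\beta=0$ is strictly feasible for $r>0$; hence strong duality holds and $\beta^*$ is optimal iff there is a multiplier $\lambda_n\ge0$ with $\beta^*\in\argmin_\beta(Y-L_g\beta)\tr M(Y-L_g\beta)+\lambda_n\beta\tr L_g\beta$ together with complementary slackness --- so $\lambda_n=0$ when the unconstrained least-squares solution already satisfies $\beta\tr L_g\beta\le r^2$, and otherwise $\lambda_n>0$ is the unique value making the constraint active, which makes $\lambda_n$ a function of $r$, $K_f$, $K_g$, $\tilde\theta_n$ and the observed data alone. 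Setting the gradient of the penalized objective to zero gives $(L_gML_g+\lambda_nL_g)\beta^*=L_gMY$, i.e.\ $\beta^*=(L_gML_g+\lambda_nL_g)^{-1}L_gMY$, and then $\hat\theta_n^{\text{K-VMM}}=\sum_{i=1}^n\beta_i^*K_g(\cdot,T_i)$. I expect the main obstacle to be this last step: one must justify strong duality and the existence of the KKT multiplier, and argue that the displayed stationarity equation determines a valid solution even when $L_g$ is rank-deficient --- in which case $\beta^*$ is pinned down only on $\operatorname{range}(L_g)$, but the fitted function $\sum_i\beta_i^*K_g(\cdot,T_i)$ is nonetheless uniquely determined. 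The representer-theorem reduction and the surrounding algebra are then routine.
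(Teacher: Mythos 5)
Your proof is correct and follows essentially the same route as the paper's: specialize the closed form of \cref{lem:kernel-vmm-closed-form}, use the representer theorem, and convert the ball constraint into a penalty via Lagrangian duality, then read off $\beta^*$ from the first-order condition. The only difference is ordering---you apply the representer theorem first and then duality to the resulting finite-dimensional QCQP (where Slater's condition is immediate), whereas the paper invokes Lagrangian duality on the infinite-dimensional problem before the representer step; your version is if anything slightly more careful, since you verify $M \succeq 0$, check Slater explicitly, and flag the rank-deficient $L_g$ case that the paper glosses over.
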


The term $\lambda_n$ enters into the above equation via Lagrangian duality, since minimizing $J_n(\theta)$ over the RKHS ball with radius $r$ is mathematically equivalent to minimizing $J_n(\theta) + \lambda_n \norm{\theta}^2$ over the entire RKHS, for some implicitly defined $\lambda_n \geq 0$. In practice, however, when performing IV regression according to \cref{lem:kernel-vmm-closed-form-iv} we could freely select $\lambda_n$ as a hyperparameter instead of $r$. 
Superficially, the form of this estimator is similar to that of other recently proposed kernel-based estimators for IV regression \citep{singh2019kernel,muandet2020dual}. However, unlike those estimators, ours incorporates optimal weighting using the prior estimate $\tilde\theta_n$.

\section{Neural VMM Estimators}
\label{sec:neural-vmm}

We now consider a different class of VMM estimators, where the sequence of function classes $\mathcal F_n$ is given by a class of neural networks with growing depth and width. We will refer to estimators in this class as \emph{neural VMM} (N-VMM). Most generally, we will define the class of N-VMM estimators according to
\begin{equation}
\label{eq:neural-vmm}
    \hat\theta_n^{\text{N-VMM}} = \argmin_{\theta \in \Theta} \sup_{f \in \mathcal F_n} \Ucal_n^{\text{N-VMM}}(\theta,f) \,,
\end{equation}
where
\begin{equation*}
    \Ucal_n^{\text{N-VMM}}(\theta,f) = \e_n[f(Z)\tr  \rho(X; \theta)] - \frac{1}{4} \e_n[(f(Z)\tr  \rho(X;\tilde\theta_n))^2] - R_n(f) \,,
\end{equation*}
and $R_n(f)$ is some regularizer. In this section, we analyze $\text{N-VMM}$ for different choices of $R_n$. For simplicity, we will restrict our theoretical analysis to the case where $\mathcal F_n$ is a fully-connected neural network with ReLU activations and a common width in all layers, which allows us to use the universal approximation result of \citet[theorem 1]{yarotsky2017error}. Specifically, we fix a network architecture with $D_n$ hidden layers, each with $W_n$ neurons, with the final fully-connected layer connecting to the $m$ outputs. Then the class $\mathcal F_n$ is given by varying the weights on this network. We note that this choice is made for simplicity of exposition, but similar bounds could be given for different kinds of architectures, using other universal approximation results as in \emph{e.g.} \citet{yarotsky2017error,yarotsky2018optimal}.

\subsection{Neural VMM with Kernel Regularizer}
\label{sec:neural-kernel-vmm}

First, we consider the case where we regularize using some RKHS norm. Specifically, let $\mathcal F_K$ be a product of $m$ RKHSs satisfying \cref{asm:rkhs}, and let $\norm{f}_{n,K}=\inf_{f'\in\mathcal F_K\;:\;f'(Z_i)=f(Z_i)\;\forall i}\norm{f'}$ denote the minimum norm of any $f' \in \mathcal F_K$ that agrees with $f$ at the points $Z_1,\ldots,Z_n$. Note that
\begin{equation*}\textstyle
    \norm{f}_{n,K}^2 = \sum_{k=1}^m f_k\tr  K_k^{-1} f_k \,,
\end{equation*}
where $f_k = (f(Z_1)_k,\ldots,f(Z_n)_k)\tr $, and $K_k$ is the kernel Gram matrix on the data $Z_1,\ldots,Z_n$ using the kernel for the $k^\text{th}$ dimension of $\mathcal F_K$. Then, we will consider estimators of the form
\begin{equation}
\label{eq:neural-vmm-k}
    \hat\theta_n^{\text{NK-VMM}} = \argmin_{\theta \in \Theta} \sup_{f \in \mathcal F_n} \Ucal_n^{\text{NK-VMM}}(\theta,f) \,,
\end{equation}
where
\begin{equation*}
    \Ucal_n^{\text{NK-VMM}}(\theta,f) =  \e_n[f(Z)\tr  \rho(X; \theta)] - \frac{1}{4} \e_n[(f(Z)\tr  \rho(X;\tilde\theta_n))^2] - \frac{\alpha_n}{4} \norm{f}^2_{n,K} \,.
\end{equation*}

We note that if we were to replace $\mathcal F_n$ with $\mathcal F_K$ in \cref{eq:neural-vmm-k}, then this equation would be equivalent to \cref{eq:kernel-vmm}, since by the representer theorem regularizing by $\norm{f}_{\mathcal F_K}$ gives the same supremum over $f$ as regularizing by $\norm{f}_{n,K}$. 
Given this and the known universal approximation properties of neural networks, it may be hoped that if we grow the class $\mathcal F_n$ sufficiently fast, then the objective we are minimizing over $\theta$ in \cref{eq:neural-vmm-k} is approximately equal to that of \cref{eq:kernel-vmm} in a uniform sense over $\theta \in \Theta$. This, then, would hopefully imply that this neural VMM estimator is able to achieve the same desirable properties, in terms of consistency, asymptotic normality, and efficiency, as our kernel VMM estimators.

In order to formalize the above intuition, we first require the following assumption, which allows us to account for the rate of growth of the kernel Gram matrix inverses $K_k^{-1}$ in the results we give below.

\begin{assumption}[Inverse Kernel Growth]
\label{asm:k-growth}
There exists some deterministic positive sequence $k_n = \Omega(1)$, such that $\norm{K_i^{-1}}_2 = O_p(k_n)$ for each $i \in [m]$.
\end{assumption}

In addition, we require the following assumption on the rate of growth on the width $W_n$ and depth $D_n$ of $\mathcal F_n$, in order to ensure that we can approximate \cref{eq:kernel-vmm} sufficiently well.

\begin{assumption}[Neural Network Size]
\label{asm:network-size}
There exist constants $q\geq0,\,0<a<1/2$ and a sequence
$r_n = o(n^{-1-q} k_n^{-1})$
such that
$W_n = \omega(r_n^{-a} \log(r_n^{-1}))$ and $D_n = \omega(\log(r_n^{-1}))$.
\end{assumption}

Finally, in our results and discussion below we will define $J_n(\theta)$ to be the loss in $\theta$ minimized by $\hat\theta_n^{\text{NK-VMM}}$, and $J^*_n(\theta)$ to be the corresponding oracle loss if we were to replace $\mathcal F_n$ with $\mathcal F_K$.

\begin{lemma}
\label{lem:uniform-jn-approximation}
    Let \cref{asm:rkhs,asm:regularity,asm:rho-complexity,asm:k-growth,asm:network-size} be given. Then we have
    \begin{equation*}
        \sup_{\theta \in \Theta} \abs{J_n(\theta) - J_n^*(\theta)} = o_p(n^{-q})\,.
    \end{equation*}
\end{lemma}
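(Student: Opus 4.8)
The plan is to compare the two inner suprema defining $J_n(\theta)$ and $J_n^*(\theta)$ directly. Both are concave maximization problems in $f$ of the same form: a linear term $\e_n[f(Z)\tr\rho(X;\theta)]$, a quadratic penalty $-\tfrac14\e_n[(f(Z)\tr\rho(X;\tilde\theta_n))^2]$, and a regularizer ($\tfrac{\alpha_n}{4}\norm{f}^2_{n,K}$ for $J_n$ over $\mathcal F_n$, and the corresponding RKHS term for $J_n^*$ over $\mathcal F_K$). The key observation, already noted in the text, is that by the representer theorem the oracle supremum over $\mathcal F_K$ depends on $f$ only through its values at $Z_1,\dots,Z_n$, and among all $f'\in\mathcal F_K$ interpolating a given vector of values the minimal-norm one is what $\norm{\cdot}_{n,K}$ measures. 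So both problems can be cast as optimization over the finite-dimensional vector $u=(f(Z_1),\dots,f(Z_n))\in\mathbb R^{n\times m}$: the oracle problem is the unconstrained quadratic program with the closed form of \cref{lem:kernel-vmm-closed-form}, while the neural problem optimizes the same quadratic objective in $u$ but restricted to the set of value-vectors realizable by networks in $\mathcal F_n$.

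First I would write the oracle objective explicitly as a function of $u$: a concave quadratic $G_\theta(u) = \e_n[\text{(linear in }u)] - \tfrac14 u\tr A_\theta u - \tfrac{\alpha_n}{4} u\tr B u$ where $A_\theta$ is built from $\rho(X_i;\tilde\theta_n)$ and $B$ is block-diagonal with blocks $K_k^{-1}$. Then $J_n^*(\theta) = \max_u G_\theta(u)$ and $J_n(\theta) = \max_{u\in\mathcal U_n} G_\theta(u)$ where $\mathcal U_n$ is the image of $\mathcal F_n$ under point evaluation. Clearly $J_n(\theta)\le J_n^*(\theta)$. For the reverse direction, let $u^*_\theta$ be the oracle maximizer (which has a closed form, hence an explicit bound on $\norm{u^*_\theta}$ uniform in $\theta$, using \cref{asm:regularity} and the operator-norm control of $(Q+\alpha_n L)^{-1}$). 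Invoke the universal approximation result of Yarotsky (\citet{yarotsky2017error}, Theorem 1) together with \cref{asm:network-size}: since $W_n,D_n$ grow fast enough relative to the target accuracy $r_n$, there is $f\in\mathcal F_n$ with $\sup_{z\in\mathcal Z}\abs{f(z)_k - f^*_\theta(z)_k}\le r_n$ — here $f^*_\theta$ is a fixed smooth (or at least uniformly bounded, Lipschitz) function in $\mathcal F_K$ realizing $u^*_\theta$; one must be slightly careful that the approximation target has complexity uniformly controlled over $\theta$, which follows from the closed form and the Lipschitz-in-$\theta$ structure of $\rho$. Then the realized value-vector $\hat u\in\mathcal U_n$ satisfies $\norm{\hat u - u^*_\theta}_\infty \le r_n$, so $\norm{\hat u - u^*_\theta}_2 \le \sqrt{nm}\,r_n$.

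Finally I would bound $G_\theta(u^*_\theta) - G_\theta(\hat u)$ by a first-plus-second-order Taylor estimate: the gradient at the unconstrained maximizer vanishes, so the gap is $\tfrac14(\hat u - u^*_\theta)\tr(A_\theta + \alpha_n B)(\hat u - u^*_\theta) \le \tfrac14\big(\norm{A_\theta}_2 + \alpha_n\norm{B}_2\big)\norm{\hat u - u^*_\theta}_2^2$. Now $\norm{A_\theta}_2 = O_p(1)$ uniformly in $\theta$ by \cref{asm:regularity}, while $\alpha_n\norm{B}_2 = O_p(\alpha_n k_n)$ by \cref{asm:k-growth}; since $\alpha_n = o(1)$ this is dominated by $O_p(k_n)$. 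Hence the gap is $O_p(k_n)\cdot nm\,r_n^2 = O_p(n k_n r_n^2)$, and by the choice $r_n = o(n^{-1-q}k_n^{-1})$ in \cref{asm:network-size} this is $o_p(n^{-q})$ — actually $o_p(n^{-1-2q}k_n^{-1})$, comfortably inside $o_p(n^{-q})$. Taking a supremum over $\theta$ uses that all the constants above ($\norm{u^*_\theta}$, $\norm{A_\theta}_2$, the approximation constant) are uniform in $\theta\in\Theta$; combined with $0\le J_n^*(\theta) - J_n(\theta) = G_\theta(u^*_\theta) - \max_{u\in\mathcal U_n}G_\theta(u) \le G_\theta(u^*_\theta) - G_\theta(\hat u)$ this gives the claim.

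The main obstacle is the uniformity in $\theta$: one needs that a single network of the prescribed size can approximate the oracle maximizer $f^*_\theta$ to accuracy $r_n$ \emph{for every} $\theta$ simultaneously with the same size budget, and that the relevant norms ($\norm{u^*_\theta}_2$, the Lipschitz/smoothness modulus of $f^*_\theta$ as a function on $\mathcal Z$) are bounded uniformly over $\theta\in\Theta$. This is where \cref{asm:regularity} (boundedness and Lipschitzness of $\rho$ in $\theta$, uniformly in $x$), \cref{asm:non-degenerate} (to control $(Q+\alpha_n L)^{-1}$), and the closed-form expression from \cref{lem:kernel-vmm-closed-form} do the work; the bookkeeping that ties the error $O_p(n k_n r_n^2)$ to the assumed rate $r_n = o(n^{-1-q}k_n^{-1})$ is then routine. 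A secondary subtlety is that the Yarotsky approximation guarantee is for scalar outputs, so it must be applied coordinate-wise across the $m$ output dimensions, contributing only the harmless factor $m$.
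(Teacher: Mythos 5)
Your proposal follows essentially the same route as the paper's proof: identify the oracle maximizer over $\mathcal F_K$ (which, by the representer theorem, is determined by its values at $Z_1,\dots,Z_n$ and has a closed form), approximate it uniformly over $\theta$ by a network in $\mathcal F_n$ via \citet[Theorem 1]{yarotsky2017error}, and convert the $\sup$-norm approximation error into a bound on $\sup_\theta\abs{J_n(\theta)-J_n^*(\theta)}$. One step is genuinely different and is an improvement: the paper bounds the objective gap by a first-order, term-by-term expansion of $U_n(\theta,f_n^*)-U_n(\theta,f_n')$, yielding an error linear in the approximation accuracy $r_n$ (times $n k_n$), whereas you exploit that the gradient of the exactly quadratic objective vanishes at the unconstrained maximizer, giving a gap quadratic in $r_n$. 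Both land inside $o_p(n^{-q})$ under \cref{asm:network-size}, but your bound shows the assumption on $r_n$ has slack.

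The one soft spot is the regularity of the approximation target. Your parenthetical fallback --- that $f^*_\theta$ need only be ``uniformly bounded, Lipschitz'' --- is not enough: Yarotsky's rate for a $W^{1,\infty}$ target requires width of order $r_n^{-d_z}$, which exceeds the budget $W_n=\omega(r_n^{-a})$ with $a<1/2$ whenever $d_z\geq 1$. What actually makes the argument work is that the RKHS maximizer $f_n^*(\theta)=2B_n^2\bar h_n(\theta)$ has \emph{uniformly bounded RKHS norm}, and by \cref{asm:rkhs} together with \citet[Theorem D]{cucker2002mathematical} an RKHS ball for a $C^\infty$ kernel embeds into $W^{k,\infty}$ for every $k$, so the Sobolev order can be taken large enough that the required width is $r_n^{-d_z/k}=r_n^{-a}$ with $a$ arbitrarily small. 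Establishing that uniform RKHS-norm bound is itself the nontrivial part (the crude estimate $\norm{B_n^2}\leq\alpha_n^{-1}$ does not give $O_p(1)$); the paper gets it from \cref{lem:ht-properties,lem:bn-squared-convergence}, which lean on \cref{asm:non-degenerate} and the conditions on $\alpha_n$ --- consistent with your appeal to \cref{asm:non-degenerate} to control $(Q+\alpha_n L)^{-1}$, even though that assumption is not listed in the lemma statement itself. With that ingredient made explicit, your argument goes through.
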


This lemma follows by applying recent results on the size of a neural network required to uniformly approximate all functions of a given Sobolev norm \citep{yarotsky2017error}, and also older results that show that, under the conditions of \cref{asm:rkhs}, any RKHS ball has bounded Sobolev norm for any Sobolev space using more than $d_z / 2$ derivatives \citep{cucker2002mathematical}. 

Given this, we can immediately state the following theorem, which ensures that the theoretical results of our kernel VMM estimators carry over to our neural VMM estimators with kernel regularization.

\begin{theorem}
\label{thm:neural-vmm-properties}
    Let the assumptions of \cref{thm:consistency} and \cref{asm:k-growth,asm:network-size} be given. In addition, let $\hat\theta_n$ be any sequence that satisfies $J_n(\hat\theta_n) = \inf_{\theta \in \Theta}J_n(\theta) + o_p(n^{-q})$, where $q$ is the constant referenced in \cref{asm:network-size}. Then, in the case that these assumptions hold with $q=0$, we have $\hat\theta_n \to \theta_0$ in probability. 
    
    Furthermore, suppose in addition that the assumptions of \cref{thm:asym-norm} hold, and the above assumptions are strengthened to hold with $q = 1$. Then we have that $\sqrt{n}(\hat\theta_n - \theta_0)$ converges in distribution to a mean-zero Gaussian random variable, with covariance as given by \cref{thm:asym-norm}.
    
    Finally, assume that in addition $\tilde\theta = \theta_0$. Then the asymptotic variance of $\hat\theta_n$ is given by \cref{thm:efficiency}, and the estimator is semiparametrically efficient.
\end{theorem}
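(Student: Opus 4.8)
The plan is to prove all three parts of \cref{thm:neural-vmm-properties} by a single reduction to the corresponding kernel VMM results, with \cref{lem:uniform-jn-approximation} as the bridge. The key structural fact is that the oracle objective $J_n^*$ obtained by replacing $\mathcal F_n$ with $\mathcal F_K$ in \cref{eq:neural-vmm-k} is, by the representer-theorem equivalence noted immediately after \cref{eq:neural-vmm-k} (regularizing over $\mathcal F_K$ by $\norm{f}_{\mathcal F_K}$ yields the same supremum as by $\norm{f}_{n,K}$), literally the kernel VMM objective of \cref{eq:kernel-vmm} with the same $\alpha_n$ and $\tilde\theta_n$ and with $\mathcal F = \mathcal F_K$. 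Consequently \cref{thm:consistency,thm:asym-norm,thm:efficiency} apply verbatim to $J_n^*$ and its approximate minimizers under precisely the assumptions inherited in the statement (\cref{asm:compact-theta} being automatic in the asymptotic-normality/efficiency regime where $\Theta\subseteq\mathbb R^b$ is compact).

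First I would do the bookkeeping that transfers approximate minimization from $J_n$ to $J_n^*$. Set $\epsilon_n := \sup_{\theta\in\Theta}\abs{J_n(\theta)-J_n^*(\theta)} = o_p(n^{-q})$ by \cref{lem:uniform-jn-approximation}, which immediately yields $\abs{\inf_{\theta}J_n(\theta)-\inf_{\theta}J_n^*(\theta)}\le\epsilon_n$. Hence if $\hat\theta_n$ satisfies $J_n(\hat\theta_n)\le\inf_\theta J_n(\theta)+\delta_n$ with $\delta_n=o_p(n^{-q})$, then
\[
J_n^*(\hat\theta_n)\le J_n(\hat\theta_n)+\epsilon_n\le\inf_\theta J_n(\theta)+\delta_n+\epsilon_n\le\inf_\theta J_n^*(\theta)+\delta_n+2\epsilon_n=\inf_\theta J_n^*(\theta)+o_p(n^{-q}),
\]
so $\hat\theta_n$ is an $o_p(n^{-q})$-approximate minimizer of the kernel VMM objective. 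Taking $q=0$ makes this an $o_p(1)$-approximate minimizer, so \cref{thm:consistency} gives $\hat\theta_n\to\theta_0$ in probability. Strengthening to $q=1$ (so $\delta_n=o_p(1/n)$, consistent with the calibration $r_n=o(n^{-2}k_n^{-1})$ enforced by \cref{asm:network-size}) makes $\hat\theta_n$ an $o_p(1/n)$-approximate minimizer, so \cref{thm:asym-norm} gives that $\sqrt n(\hat\theta_n-\theta_0)$ is asymptotically linear and normal with covariance $\Omega^{-1}\Delta\Omega^{-1}$ exactly as stated there — the limit depends only on the population-level kernel-VMM quantities, not on which approximate minimizer was selected. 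Finally, adding $\tilde\theta=\theta_0$, \cref{thm:efficiency} applies to simplify the covariance to $\Omega_0^{-1}$ and to conclude that $\hat\theta_n$ attains the semiparametric efficiency bound for \cref{eq:ident}.

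I do not expect a genuine obstacle in the theorem itself: essentially all of the analytic work is already packed into \cref{lem:uniform-jn-approximation} (which combines the Sobolev-approximation bound of \citet{yarotsky2017error} with the embedding of RKHS balls into Sobolev balls, calibrated through \cref{asm:k-growth,asm:network-size}), and that lemma is available. The only points needing care are (i) confirming that $J_n^*$ really coincides with the kernel VMM objective so that the cited theorems apply with no re-proof — this is the representer-theorem equivalence, which must be stated cleanly since $\norm{\cdot}_{n,K}$ is merely a seminorm on neural-net outputs; and (ii) matching tolerances, i.e. ensuring the $o_p(n^{-q})$ slack in \cref{lem:uniform-jn-approximation} sits on the same scale as the $o_p(n^{-q})$ approximate-minimization slack assumed for $\hat\theta_n$, which is exactly why the parameter $q$ is threaded through \cref{asm:network-size}. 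A routine additional check is that the hypotheses of \cref{thm:consistency,thm:asym-norm,thm:efficiency} — notably \cref{asm:tilde-theta} and the rate conditions $\alpha_n=o(1)$, $\alpha_n=\omega(n^{-p})$ — are all among the inherited assumptions, which they are by construction of the statement.
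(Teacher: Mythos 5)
Your proposal is correct and follows essentially the same route as the paper: the paper's proof is exactly the reduction via \cref{lem:uniform-jn-approximation}, transferring the $o_p(n^{-q})$ approximate-minimization property from $J_n$ to the kernel oracle objective $J_n^*$ and then invoking \cref{thm:consistency,thm:asym-norm,thm:efficiency} directly. Your explicit $\epsilon_n$/$\delta_n$ bookkeeping just spells out the step the paper leaves implicit.
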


The proof of this theorem follows immediately from \cref{lem:uniform-jn-approximation}, since this Lemma and the Theorem's conditions ensure that $J_n^*(\hat\theta_n) = \inf_{\theta \in \Theta} J_n^*(\theta) + o_p(n^{-q})$. Therefore, we can directly apply \cref{thm:consistency,thm:asym-norm,thm:efficiency} to obtain these three results.

An immediate observation given this theorem is that, if we define $k$-step estimators as in \cref{sec:multi-step-vmm}, then by applying an identical argument we can construct efficient neural VMM estimators without having to explicitly make \cref{asm:tilde-theta}.

\subsection{Neural VMM with Other Regularizers}

Motivated by our theory above using kernel-based regulairzers, we now provide some discussion of general neural VMM estimators of the form given by \cref{eq:neural-vmm} for other choices of $R_n(f)$, and in particular we discuss how these estimators may be justified.

First, consider the case where the kernel Gram matrices $K_i$ for $i \in [m]$ are approximately equal to $\sigma_i I$, where $\sigma_i$ is some scalar and $I$ is the identity matrix. For example, this is the case if we use a Gaussian kernel with very small length scale parameter. In this case, we may reasonably approximate
\begin{equation}
\label{eq:f-norm-approx}\textstyle
    \norm{f}_{n,K} \approx \sum_{k=1}^m \frac{1}{\sigma_k} \sum_{i=1}^n f_k^2(Z_i)\,.
\end{equation}
That is, we could justify instead regularizing using some (possibly weighted) Frobenius norm of the matrix given by the values of the vector-valued $f$ at the $n$ data points. This form of regularization is much more attractive than that given by $\norm{f}_{n,K}$, since it doesn't involve the computation of inverse kernel Gram matrices, and it more naturally fits into estimators for \cref{eq:neural-vmm} given by some form of alternating stochastic gradient descent. We also note that this form of regularization, based on the Frobenius norm of $f$, is similar to that used by \citep{dikkala2020minimax}, although with some important differences; their proposed estimators do not include the $-(1/4) \e_n[(f(Z)\tr  \rho(X;\tilde\theta_n))^2$ term motivated by efficiency theory, and they only present theory on bounding the risk of their learned function given by $\hat\theta_n$, not on the consistency or semiparametric efficiency of the estimated $\hat\theta_n$. We discuss this comparison in more detail in \cref{sec:related-work} below.

Alternatively, we may heuristically justify leaving out the $R_n(f)$ term altogether, under the argument that neural network function classes naturally impose some smoothness constraints, and therefore optimizing over $\mathcal F_n$ is morally similar to optimizing over $\mathcal F_K$ with some norm constraint. This intuition can be made more concrete by noting that there is a rich literature showing equivalence between optimizing loss functions over neural network function classes, and optimizing the same loss over some norm-bounded RKHS class whose kernel is implicitly defined by the neural network architecture (see \emph{e.g.} \citet{shankar2020neural} and citations therein). However, we leave more specific non-heuristic claims on the performance of our neural VMM algorithms with $R_n(f)=0$ to future work.

\subsection{Implementing Neural VMM Estimators}
\label{sec:neural-vmm-implementation}

Regardless of the choice of the regularization term $R_n$, the question remains of how to actually solve \cref{eq:neural-vmm}. 
Past work \citep{bennett2019deep,bennett2020efficient} has solved this problem using the Optimistic Adam (OAdam) algorithm, which is a form of alternating stochastic gradient descent (that is, alternating between first-order gradient steps minimizing the game objective with respect to $\theta$, and maximizing the game objective with respect to $f$)
that has been designed to have good properties for solving minimax problems \citep{daskalakis2017training}.
These past works have proposed to do this by continuously updating $\tilde\theta_n$; that is, at each iteration of alternating stochastic gradient descent they set $\tilde\theta_n$ as the previous iterate solution.

Alternatively, there is a rich recent literature on other, potentially more efficient, methods for solving smooth game optimization problems along the lines of \cref{eq:kernel-vmm}. For example, see \citet{gidel2019negative,thekumparampil2019efficient,loizou2020stochastic,fiez2020implicit,lin2020near,lin2020gradient}, and references therein. Some or all of the approaches suggested in these recent works may lead to successful neural VMM implementations. However, we leave this more empirical investigation to future work, and in our experiments we focus on approaches based on OAdam with continuously updated $\tilde\theta_n$, as discussed above.

\section{Inference}
\label{sec:inference}

So far, we have developed both theory and algorithms for kernel and neural VMM estimators, providing conditions under which such estimators are consistent, asymptotically normal, and/or efficient. We now extend our efficient estimation theory to efficient inferential theory, focusing on the case of $\Theta \subseteq \mathbb R^b$.

Now, suppose we want to construct confidence intervals for $\psi(\hat\theta_n)$, for some $\psi : \mathbb R^b \mapsto \mathbb R$. This is a very general kind of quantity to consider, since, for example, if were interested in $(\hat\theta_n)_i$ for some $i \in [b]$, we could define $\psi(\theta) = \theta_i$. By the delta method, if $\hat\theta_n$ were an efficient estimate then the asymptotic variance of $\psi(\hat\theta_n)$ would be $\nabla\psi(\theta_0)\tr \Omega_0^{-1} \nabla\psi(\theta_0)$, where $\Omega_0^{-1}$ is the efficient covariance matrix defined in \cref{thm:efficiency}.   Therefore, this suggests that we could construct asymptotically calibrated Wald confidence intervals by estimating $\hat\beta_n\tr \Omega_0^{-1} \hat\beta_n$, for some data-driven $\hat\beta_n$. In particular, if $\nabla\psi(\theta_0)$ were known, which would be the case if $\psi$ were linear, then we could do this with $\hat\beta_n = \nabla\psi(\theta_0)$. Otherwise, we could do this using $\hat\beta_n = \nabla\psi(\hat\theta_n)$, where $\hat\theta_n$ is some consistent estimate of $\theta_0$ (such as a VMM estimate), which would be consistent for $\nabla\psi(\theta_0)$ given \cref{asm:continuous-deriv}.

In this section, we provide consistent algorithms for estimating $\beta\tr \Omega_0^{-1} \beta$ for arbitrary $\beta \in \mathbb R^b$, with analogous kernel and neural varieties of our algorithms. These consistent variance estimators can then immediately be used with the delta method, as discussed above, to construct asymptotically calibrated Wald confidence intervals for our efficient VMM estimators.

Our algorithms, which are presented in the next subsections, are motivated by the following key lemma.
\begin{lemma}
\label{lem:asymptotic-variance}
    Let $\Omega_0$ be defined as in \cref{thm:efficiency}, let the conditions of \cref{thm:efficiency} hold, and let $\nabla \rho(X;\theta) \in \mathbb R^{m \times b}$ denote the Jacobian of $\rho(X;\theta)$ with respect to $\theta$.  Then, for any vector $\beta \in \mathbb R^b$, we have
    \begin{align*}
        \beta\tr \Omega_0^{-1} \beta &= \sup_{\gamma \in \mathbb R^b} \gamma\tr \beta - \frac{1}{4} \gamma\tr \Omega_0 \gamma \\
        &= - \frac{1}{4} \inf_{\gamma \in \mathbb R^b} \sup_{f \in \mathcal F} \Ucal^\textup{Inf}(\gamma,f) \,,
    \end{align*}
    where
    \begin{equation*}
        \Ucal^\textup{Inf}(\gamma,f) = \e[f(Z)\tr \nabla \rho(X;\theta_0) \gamma] - \frac{1}{4} \e[(f(Z)\tr \rho(X;\theta_0))^2] -4 \gamma\tr \beta  \,.
    \end{equation*}
\end{lemma}

The first part of this lemma follows by applying a similar variational reformulation argument as in the proof of \cref{lem:owgmm}, and the second part follows by applying a similar argument again on the $\gamma\tr \Omega_0 \gamma$ term, given the definition of $\Omega_0$ from \cref{thm:efficiency}. More details are given in the appendix.

We note that the right hand side of \cref{lem:asymptotic-variance} has a very similar structure to the game objective of our VMM algorithms. Given this, the previous argument suggests that the asymptotic variance of any such $\psi(\hat\theta_n)$ could be estimated using approaches similar to our kernel and neural VMM estimation algorithms presented previously. In the remainder of this section, we build on this intuition, and present kernel- and neural-based algorithms for inference.

\subsection{Kernel Inference Algorithm}

First, we present an inference algorithm along the lines of our kernel VMM estimator. This algorithm is summarized by the following theorem.

\begin{theorem}
\label{thm:kernel-inference}
Let the conditions of \cref{thm:efficiency} be given, and let $\hat\theta_n$ be any corresponding efficient estimate of $\theta_0$. In addition, let $L$ and $Q(\theta)$ be defined as in \cref{lem:kernel-vmm-closed-form}, and define $D(\theta) \in \mathbb R^{(n \cdot m) \times b}$ and $\Omega_n \in \mathbb R^{b \times b}$ according to
\begin{align*}
    D_{(i,k),j}(\theta) &= \frac{\partial}{\partial \theta_j} \rho_k(X_i; \theta), \quad
    \Omega_n = \frac{1}{n^2} D\tr L (Q(\hat\theta_n) + \alpha_n L)^{-1} L D \,,
\end{align*}
where $\alpha_n$ is any sequence satisfying the assumptions of \cref{thm:efficiency}. Then $\Omega_n \to \Omega_0$ in probability.
\end{theorem}

We note that an immediate corollary of this theorem is that, for any continuously differentiable $\psi$ and an efficient $\hat\theta_n$ such as our VMM estimators, $\nabla\psi(\hat\theta_n)\tr \Omega_n^{-} \nabla\psi(\hat\theta_n)$ is consistent for the asymptotic variance of $\psi(\hat\theta_n)$, which is an efficient estimate of $\psi(\theta_0)$, where $\Omega_n^{-}$ denotes the pseudo-inverse of $\Omega_n$. This follows trivially by the continuous mapping theorem and Slutsky's theorem, since by assumption $\Omega_0$ is invertible.
An advantage of this algorithm is that it allows easy estimation of the entire covariance matrix $\Omega_0^{-1}$, from which the asymptotic variance of any single-dimensional function of $\hat\theta_n$ can instantly be estimated without applying any additional variational algorithms. 

\subsection{Neural Inference Algorithm}

Our neural inference algorithm is similar in nature to our neural VMM estimator, and is given by the following smooth game
\begin{equation}
\label{eq:neural-inference}
    v_n(\beta) = -\frac{1}{4} \inf_{\gamma \in \mathbb R^b} \sup_{f \in \mathcal F_n} \Ucal_n^{\textup{Inf}}(\gamma,f) \,,
\end{equation}
where
\begin{equation*}
    \Ucal_n^\textup{Inf}(\gamma,f) = \e[f(Z)\tr \nabla \rho(X;\theta_0) \gamma] - \frac{1}{4} \e[(f(Z)\tr \rho(X;\theta_0))^2] -4 \gamma\tr \beta  - R_n(f) \,,
\end{equation*}
$R_n$ is a regularizer for $f$, and $\mathcal F_n$ is a sequence of neural net classes. That is, compared with the true equation for the asymptotic variance in \cref{lem:asymptotic-variance}, we replace true expectations with empirical ones, we replace $\Fcal$ with $\Fcal_n$, and we regularize $f$.
Then, given \cref{lem:asymptotic-variance}, we expect $v_n(\beta)$ to be a reasonable estimator for $\beta\tr \Omega_0^{-1} \beta$. Furthermore, following the argument presented at the beginning of \cref{sec:inference}, we expect $v_n(\hat\beta_n)$ to be a reasonable estimator for the (efficient) asymptotic variance of $\psi(\hat\theta_n)$ if $\hat\beta_n$ is consistent for $\nabla\psi(\theta_0)$. We note that, unlike for our neural VMM estimator, we do not provide any theoretical guarantees for this algorithm, due to some additional technical complications; unlike the game objective being solved by neural VMM, the space being minimized over for $\gamma$ is unbounded, which complicates the technical argument by universal approximation we used for neural VMM. We leave this theoretical question to future work. However, we note that in our inference experiments in \cref{sec:experiments} this method seems to work well.

Unlike our kernel inference algorithm, this approach has the disadvantage that it requires solving a separate optimization problem for every given scalar parameter $\psi$. In practice, though, this may be alleviated by the practical strengths of neural methods, as discussed previously. 
In addition, as with our neural VMM algorithm, we may regularize for example by using a kernel-based norm or the Frobenius norm of $\{f(Z_1),\ldots,f(Z_n)\}$, or we may omit this regularization term entirely.

\section{Examples}

Next, let us provide some concrete examples of our theory, in order to demonstrate how the assumptions for our consistency and asymptotic normality theory may be satisfied. For each example, we do not discuss \cref{asm:rkhs,asm:tilde-theta,asm:k-growth,asm:network-size} explicitly, as these govern design choices for the algorithm that can be generically satisfied given the other assumptions.

\subsection{Nonparametric Instrumental-Variable Regression}

First, let us consider a specific nonparametric instrumental regression example, which instantiates \cref{ex:iv} from \cref{sec:intro}. Specifically, we will consider conditions under which our consistency result \cref{thm:consistency} applies. Let us consider the data generating process $Y = g(T;\theta_0) + \epsilon$, where $\EE[\epsilon \mid Z] = 0$. We assume that $Z \in \RR^{d_z}$, $\|Z\|_\infty < \infty$, $\|Y\|_\infty < \infty$, and $\VV[\epsilon \mid Z] \geq \lambda$ almost surely, for some fixed $\lambda > 0$. Let us also suppose that $g(T;\theta)$ is $L(T)$-Lipschitz continuous in $\theta$, where $\EE[L(T)^2] < \infty$, that $\sup_{\theta \in \Theta} \|g(T;\theta)\|_\infty < \infty$, and that $\Gcal = \{g(\cdot;\theta) : \theta \in \Theta\}$ is a Donsker class. As one example, these conditions would be satisfied if $\Gcal$ were given by the class of all monotonic functions on $T$ such that $\|g(X;\theta)\|_\infty \leq b$ for some fixed $b < \infty$, with the norm on $\Theta$ given by $\|\theta' - \theta\| = \|g(T;\theta') - g(T;\theta)\|_\infty$. As a second example, $\Gcal$ could be a norm-bounded RKHS satisfying the conditions of \cref{asm:rkhs}, with the norm on $\Theta$ given by the corresponding RKHS norm.

First, given the conditions on $Z$ in this example, \cref{asm:regularity} is trivial. Second, given the conditions on the regression class $\Gcal$, along with the assumption that $\|Y\|_\infty < \infty$, \cref{asm:rho-complexity} trivially follows by applying lemma 9.14 in \citet{kosorok2007introduction}. Finally, assuming that the prior estimate $\tilde\theta_n$ comes from some arbitrary consistent methodology, then \cref{asm:non-degenerate} only needs to hold for $\theta = \theta_0$. In this case, this is ensured under the above condition on the conditional variance of $\epsilon$, since $V(Z;\theta_0) = \VV[\epsilon \mid Z]$. Given this, we have consistency via \cref{thm:consistency}.

\subsection{Nonparametric Instrumental-Variable Quantile Regression}

Next, let us consider a specific nonparametric instrumental regression example, which instantiates \cref{ex:iv-quantile} from \cref{sec:intro}. Again, we will consider conditions under which consistency holds. Let us consider the data generating process $Y = g(T;\theta_0) + \epsilon$, where $\textup{Prob}(\epsilon \leq 0 \mid Z) = p$ for almost everywhere $Z$, and $\rho(X;\theta) = \indicator{Y \leq g(X;\theta)} - p$. Again, we assume that $Z \in \RR^{d_z}$, and $\|Z\|_\infty < \infty$. In this case, we will assume that $\Gcal = \{g(\cdot;\theta) : \theta \in \Theta\}$ is some regression class that is Donsker under the supremum norm $\|\theta' - \theta\| = \|g(X;\theta') - g(X;\theta)\|_\infty$, and that $Y$ has bounded density. 

Again, given the conditions on $Z$ in this example, \cref{asm:regularity} is trivial, and the Donsker part of \cref{asm:rho-complexity} follows from the fact that $\Gcal$ is Donsker by lemma 9.14 of \citet{kosorok2007introduction}. Also, we have $\EE[|\rho(X;\theta') - \rho(X;\theta)|] = \textup{Prob}(\min(g(X;\theta'), g(X;\theta)) \leq Y \leq  \max(g(X;\theta'), g(X;\theta))$. Now, since by assumption $Y$ has bounded density, it easily follows that there exists some constant $L$ such that $\textup{Prob}(\min(g(X;\theta'), g(X;\theta)) \leq Y \leq  \max(g(X;\theta'), g(X;\theta)) \leq L \|g(X;\theta') - g(X;\theta)\|_\infty$, which gives us the required Lipschitz continuity under $L_1$ norm. Also, the required boundedness is trivial since $|\rho(X;\theta)| \in \{-p, 1-p\}$, so we have \cref{asm:rho-complexity}. Finally, we have $V(Z;\theta_0) = \EE[(\indicator{\epsilon \leq 0} - p)^2 \mid Z] = p - p^2$ almost surely, and therefore $\|V(Z;\theta_0)^{-1}\|_\infty \leq (p - p^2)^{-1} < \infty$, which gives us the \cref{asm:non-degenerate}, again as long as the prior estimate $\tilde\theta_n$ is consistent. Therefore, again we have consistency via \cref{thm:consistency}.

\subsection{Parametric Instrumental-Variable Mean and Expectile Regression}

Next, we will consider a parametric expectile (including mean)  regression example \citep{newey1987asymmetric,sobotka2013estimating}, where we can establish both consistency, asymptotic normality, and efficiency. For this example, we will assume that the data generating process is again given by $Y = g(T;\theta_0) + \epsilon$, where $\epsilon$ instead satisfies $p \EE[\epsilon \indicator{\epsilon \geq 0} \mid Z] = (1-p) \e[-\epsilon \indicator{\epsilon < 0} \mid Z]$ for some $p \in (0,1)$. For $p=0.5$, we get the usual mean regression. Here, we let $\rho(X;\theta) = w(X;\theta)(Y - g(T;\theta))$, where $w(X;\theta) = p \indicator{Y \geq g(T;\theta)} + (1-p) \indicator{Y < g(T;\theta)}$, and the goal is to find the unique $\theta_0 \in \Theta$ such that $\EE[\rho(X;\theta_0) \mid Z] = 0$. Note that this problem that can be seen as a mid-point between standard instrumental variable regression and instrumented quantile regression. Similar to the above example, let us suppose that $Z \in \RR^{d_z}$, $\|Z\|_\infty < \infty$, $\|Y\|_\infty < \infty$, and $\VV[\epsilon \mid Z] \geq \lambda$ almost surely, for some fixed $\lambda > 0$. For this example, we will further assume that $T \in \RR^{d_t}$, $\|T\|_\infty < \infty$, the regression class is given by $g(t;\theta) = \theta^\top t$, where $\Theta = \{\theta \in \RR^{d_t} : \|\theta\|_2 \leq b\}$ for some $b < \infty$, and that the matrix $\EE[\EE[T \mid Z] \EE[T \mid Z]^\top]$ is full-rank.

Again, given the conditions on $Z$ in this example, \cref{asm:regularity} is trivial. Similarly, given the boundedness of $\Theta$ and $T$, and the fact that $\|w(X;\theta)\|_\infty \leq 1$, along with lemma 9.14 of \citet{kosorok2007introduction}, we easily have that \cref{asm:rho-complexity} holds. In addition, we have $V(Z;\theta_0) \geq \min(p,1-p)^2 \VV[\epsilon \mid Z]$, and so \cref{asm:non-degenerate} follows from our minimum conditional variance assumption as in the previous example. Therefore, we can establish consistency via \cref{thm:consistency}.

Next, under the additional assumption that $Y$ and $T$ both have bounded probability density, then so does $Y - g(T;\theta)$ for every $\theta \in \Theta$. Therefore, we can apply \cref{lem:non-smooth-rho} with $\phi(X;\theta) = Y - g(T;\theta)$ in order to establish \cref{asm:rho-deriv} with $D(X;\theta) = T$, which we note trivially satisfies the conditions of \cref{asm:continuous-deriv}. Finally, since $\EE[\EE[T \mid Z] \EE[T \mid Z]^\top]$ is assumed to be full rank, we have $\beta^\top \EE[\EE[T \mid Z] \EE[T \mid Z]^\top] \beta = \EE[\EE[\beta^\top D(X;\theta_0) \mid Z]^2] > 0$ for every non-zero $\beta$, which establishes \cref{asm:non-degenerate-theta}. Therefore, we also have asymptotic normality via \cref{thm:asym-norm}, and under the condition that the prior estimate $\tilde\theta_n$ was consistent we have semiparametric efficiency via \cref{thm:efficiency}.

\section{Experiments}
\label{sec:experiments}

We now present a series of experiments to demonstrate our proposed methodologies. We present two kinds of experiments. First, we test the finite-sample performance of our kernel and neural VMM algorithms on a range of synthetic conditional moment problems. In this experiment, we compare their performance with the classical sieve minimum distance (SMD) approach of \citet{ai2003efficient}, which is a sieve-based method that has previously been proposed as a semiparametrically efficient approach to solving generic conditional moment problems. In addition, we compare their performance with the recently proposed maximum moment restriction (MMR) algorithm of \citet{zhang2020maximum}, which as discussed in \cref{sec:related-work} is equivalent to the limit of our kernel VMM algorithm in the limit as $\alpha_n \to \infty$. Second, we test our proposed inference algorithms on a subset of these scenarios, evaluating the quality of the resulting confidence intervals for different variations of our estimation and inference algorithms. Code for reproducing all experiments is available at \href{https://github.com/CausalML/VMM}{https://github.com/CausalML/VMM}.

\subsection{Estimation Experiments}

\subsubsection{Estimation Scenarios}

\paragraph{\simpleiv} This is a simple parametric instrumental variable regression scenario, based on a simple data generating process where
\begin{align*}
    Z &= \sin(\pi U / 10) \\
    T &= -0.75 U + 3.5 H + 0.14 \eta - 0.6 \\
    Y &= g(T;\theta_0) + - 10 H + \epsilon \,.
\end{align*}
In this setup, $\eta$ and $H$ are exogenous iid $\mathcal N(0,1)$ variables, and $U$ is an exogenous iid $\text{Uniform}(-5,5)$ random variable, and each of $T$, $Z$, and $Y$, are scalars. We note that the random variable $H$ introduces endogeneity.  Furthermore, we have 
and $g(t;\theta) = \theta_1+\theta_2 t+\theta_3 t^2$ where $\theta\in\mathbb R^3$, with the true parameter value given by $\theta_0 = [0.5, 3.0, -0.5]$.  In this scenario, the conditional moment equation to be solved is $\e[Y - g(T;\theta_0) \mid Z] = 0$; that is, we have $X = (T,Y,Z)$, and $\rho(X;\theta) = Y - g(T;\theta)$.
Note that in this scenario the relationship between treatment and instruments is nonlinear.

\paragraph{\heteroiv} This is a more challenging instrumental variable regression scenario, which introduces a more complex nonlinear regression function class and heteroskedastic noise. It follows a similar data generating process to the prior \simpleiv scenario, except here we have
\begin{align*}
    Z &= (U_1, U_2) \\
    T &= 0.75(Z_1 + \abs{Z_2}) + 1.25 H + 0.05 \eta \\
    Y &= g(T;\theta_0) + 5 H + 0.1 \text{softplus}(Z_1 + \abs{Z_2}) \eta \,,
\end{align*}
where again $\eta$ and $H$ are iid $\mathcal N(0,1)$ distributed, and each of $U_1$ and $U_2$ are iid $\text{Uniform}(-5,5)$ distributed. We also note the ``softplus'' activation function is defined according to $\softplus(x) = \log(1 + \exp(x))$. In this case, we have $\theta \in \mathbb R^4$, and our regression class is defined according to

\begin{equation*}\textstyle
    g(t;\theta) = \theta_2 + \theta_3 (t - \theta_1) + \frac{\theta_4-\theta_3}{2} \softplus(2 (t - \theta_1))\,.
\end{equation*}
That is, our regression class is a smoothed version of a hinge function with slopes $\theta_3$ and $\theta_4$ and hinge point at $(\theta_1,\theta_2)$. The true parameter value is given by $\theta_0 = [2.0, 3.0, -0.5, 3.0]$. As with our \simpleiv scenario, the conditional moment restriction is given by $\e[Y - g(T;\theta_0) \mid Z] = 0$.

We note that although the regression residual is \emph{not} independent of the instruments $Z$ in this setting, it is mean-indepedent, since $\e[S \epsilon] = \e[\e[S \epsilon \mid Z]] = 0$. That is, we have heteroskedastic noise with respect to our instruments, which makes achieving efficiency more challenging.

\paragraph{\policylearning}

Finally, this scenario is based on learning optimal binary treatment policies from surrogate loss reductions, following \citet{bennett2020efficient}. 
Let $T \in \{-1,1\}$ denote the binary treatment variable, $Z$ denote individual covariates, $Y(t)$ denote the potential outcome for the individual that would occur if (possibly counter to fact) treatment $t$ were assigned, and $Y = Y(T)$ denote the actual outcome. Then, given logged data where treatments were decided using some randomized policy, and some well-specified parametric class of deterministic treatment policies $\Pi = \{\pi_{\theta} : \theta \in \Theta\}$, the task is to estimate the parameters of the \emph{optimal} policy within $\Pi$ That is, we wish to estimate $\theta_0 = \argmax_{\theta \in \Theta} \e[Y(\pi(Z;\theta))]$, where $\pi(z;\theta)$ denotes the treatment assigned by policy $\pi_\theta$ given $Z=z$. For this problem, we assume the following data generating process:
\begin{align*}
    Z &\sim \mathcal N(0, 1) \times \mathcal N(0, 1) \\
    T &\sim 2 \text{Bernoulli}(e(Z)) - 1 \\
    Y(t) &= \mu_t(Z) + \sigma_t(Z) \epsilon_t \quad \forall t \in \{-1,1\} \,,
\end{align*}
where $\eta_1$ and $\eta_{-1}$ are iid $\mathcal N(0,1)$ variables. The functions $e$, $\mu_{-1}$, and $\mu_1$ are all given by quadratic forms on $Z$, and the functions $\sigma_{-1}$ and $\sigma_1$ are given by quadratic forms on $Z$ with softplus activation; exact coefficients for these functions are provided in the supplement.

Now, assume that the policy class $\Pi$ is defined according to some parametric utility function $g(\cdot; \theta)$, where $\pi_{\theta} \in \Pi$ assigns an individual with covariates $z$ to treatment 1 if and only if $g(z; \theta) \geq 0$, else it assigns the individual to treatment $-1$. Then, under some assumptions outlined in \citet{bennett2020efficient} regarding correct specification with respect to the logistic regression surrogate loss, the problem of estimating $\theta_0$ is described by the conditional moment problem $\e[\abs{W} (\expit(g(Z;\theta_0)) - \indicator{W>0}) \mid Z] = 0$, where the weighting variable $W$ is defined according to
\begin{equation*}\textstyle
    W = \mu_1(Z) - \mu_{-1}(Z) + \frac{T (Y - \mu_T(Z))}{T e(Z) + (1-T)/2} \,.
\end{equation*}

In our experiments, we have $\theta\in\mathbb R^6$, and $g(z;\theta) = \theta^T \phi(z)$, where $\phi(z) = (1, z_1, z_2, z_1^2, z_2^2, z_1 z_2)$ gives a quadratic feature expansion of $z$. Given this and the fact that $\mu_1$ and $\mu_{-1}$ are quadratic forms, it easily follows that the optimal parameters are given by $\theta_0 = \theta_1 - \theta_{-1}$, where $\theta_1$ and $\theta_{-1}$ are the parameter vectors describing the quadratic forms $\mu_1$ and $\mu_{-1}$ respectively.

Note that, following \citet{bennett2020efficient}, in practice when estimating $\theta_0$ we estimate the weights $W$ using plugin estimators for $e$, $\mu_{-1}$, and $\mu_1$, which we fit using flexible neural nets. For fair comparison, in our experiments all methods use the same estimated weights $W$. 

\subsubsection{Estimation Methods}

\paragraph{KernelVMM} We implemented a 2-step kernel VMM estimator, as described in \cref{sec:multi-step-vmm}. For the first step, $\tilde{\theta}_n$ is chosen randomly. For a kernel function, we used the same kernel function as used by \citet{zhang2020maximum}, which is given by the average of three Gaussian kernels with automatically selected data-driven bandwidths; we provide details in the supplement.
In all cases, we optimize $\hat\theta_n$ by minimizing the cost function described in \cref{lem:kernel-vmm-closed-form} using L-BFGS, and we experimented with range of values of $\alpha_n$.

\paragraph{NeuralVMM} We implemented a neural VMM estimator by optimizing the minimax objective described by \cref{eq:neural-vmm} using alternating stochastic gradient descent, using the optimistic Adam (OAdam) optimizer \citep{daskalakis2017training}, as discussed in \cref{sec:neural-vmm-implementation}. 
In all cases we used a simple fixed 3-layer fully connected architecture for our $f$ network. 
As discussed in \cref{sec:neural-vmm} we use a Frobenius-norm style regularization term of the form
$R_n(f) = (\lambda_n / nm) \sum_{i=1}^n \sum_{k=1}^m f_k(Z_i)^2$,
and we experimented with a wide range of $\lambda_n$. We also experimented with explicit kernel-based regularization as in \cref{sec:neural-kernel-vmm}, however we found that it performed extremely poorly in practice due to extreme gradient values, so we did not include it in our main experiments.
We describe additional details, such as hyperparameters, network architectures, and early stopping, in the supplement.

\paragraph{MMR} The maximum moment restriction (MMR) algorithm was originally developed by \citet{zhang2020maximum} for the instrumental variable regression problem, but easily extends to other conditional moment problems.
One particular form of their algorithm (we discuss the more general form in \cref{sec:related-work}) is given by minimizing the objective function $J^{\text{MMR}}_n(\theta) = \frac{1}{n^2} \rho(\theta)\tr L \rho(\theta)$, where $L$ and $\rho(\theta)$ are defined as in \cref{lem:kernel-vmm-closed-form}. As discussed in \cref{sec:related-work} this is equivalent to KernelVMM in the limit as $\alpha_n \to \infty$.
Given this deep connection, we also present results for this version of the MMR algorithm for all scenarios, using the same kernel function as for KernelVMM. As with our KernelVMM method, we minimize this objective using L-BFGS.

\paragraph{SMD} The sieve minimum distance (SMD) method of \citet{ai2003efficient} applied to our problem given by \cref{eq:ident} (it in fact applies to a more general moment problem with infinite-dimensional nuisance components; see \cref{sec:related-work}) is based on minimizing the objective function $J_n^{\text{SMD}}(\theta) = \e_n[q_n(Z;\theta)\tr \Gamma_n(Z)^{-} q_n(Z;\theta)]$,
where $q_n(z;\theta)$ is a nonparametric sieve regression estimate of $\e[\rho(X;\theta) \mid Z=z]$, $\Gamma_n(z)$ is \emph{any} consistent estimate of $\e[\rho(X;\theta_0) \rho(X;\theta_0)\tr \mid Z=z]$, and $\Gamma_n(Z)^{-}$ denotes the pseudo-inverse of $\Gamma_n(Z)$.
The past work proposes various sieve-based approaches for $q_n(Z;\theta)$, but isn't prescriptive about the methodology for computing $\Gamma_n$. Given this, we experimented with various SMD estimators, using B-splines for $q_n(Z;\theta)$, and multiple approaches for $\Gamma_n$: (1) \emph{Identity}, in which we simply set $\Gamma_n(z) = I \ \forall \ z$; (2) \emph{Homoskedastic}, in which we set $\Gamma_n = \e_n[\rho(X;\tilde\theta_n) \rho(X;\tilde\theta_n)\tr] \ \forall z$; and (3) \emph{Heteroskedastic}, in which we we fit a diagonal $\Gamma_n(Z)$ by regressing $\rho(X;\tilde\theta_n)^2_i$ on $Z$ for each $i \in [m]$ using neural networks. We provide additional details in the supplement.

\paragraph{OWGMM}

The OWGMM estimator follows the method described in \cref{sec:owgmm}, for a flexible set of basis functions $f_1,\ldots,f_k$. As with the SMD method we chose these set of basis functions using B-splines, as this allowed for a very rich and flexible class of moment conditions. Again, we provide additional details in the supplement.

\paragraph{NCB}

Finally, we implemented a simple non-causal baseline (NCB) that estimates $\theta_0$ by ignoring $Z$ and instead trying to solve $\e[\rho(X;\theta_0) \mid X] = 0$. For example, for our instrumental variable regression scenarios this corresponds to assuming there is no endogeneity in the treatments $T$. For this baseline, we simply minimize the objective $J_n^{\text{NCB}}(\theta) = \e_n[\rho(X;\theta)^2]$,
which we implement using L-BFGS.

\subsubsection{Estimation Results}

\begin{table}\centering\footnotesize
    \begin{minipage}[b]{1.0\textwidth}
    \centering
    \begin{tabular}{clcccccc}
         \hline
         \multicolumn{2}{c}{\multirow{2}{*}{Method}} &  \multicolumn{6}{c}{$n$} \\
         & & 200 & 500 & 1,000 & 2,000 & 5,000 & 10,000 \\
         \hline
         \multirow{6}{*}{K-VMM} & $\alpha_n=0$ & $>100$ & $8.8\pm42.7$ & $>100$ & $.67\pm1.2$ & $.23\pm.29$ & $.14\pm.16$ \\
         & $\alpha_n=10^{-8}$ & $5.1\pm7.0$ & $2.8\pm3.0$ & $2.6\pm5.3$ & $3.2\pm16.5$ & $.25\pm.32$ & $.17\pm.23$ \\
         & $\alpha_n=10^{-6}$ & $5.5\pm7.0$ & $2.5\pm2.7$ & $1.7\pm3.0$ & $.78\pm1.3$ & $.24\pm.33$ & $.14\pm.16$ \\
         & $\alpha_n=10^{-4}$ & $5.5\pm7.6$ & $2.5\pm3.2$ & $1.8\pm2.9$ & $.72\pm1.3$ & $.25\pm.32$ & $.14\pm.16$ \\
         & $\alpha_n=10^{-2}$ & $6.0\pm8.3$ & $2.7\pm3.1$ & $1.7\pm2.4$ & $.72\pm1.2$ & $.26\pm.34$ & $.14\pm.17$ \\
         & $\alpha_n=1$ & $11.1\pm21.2$ & $4.1\pm6.6$ & $2.1\pm2.8$ & $.75\pm1.1$ & $.34\pm.41$ & $.16\pm.21$ \\
         \hdashline
         \multirow{4}{*}{N-VMM} & $\lambda_n=0$ & $2.5\pm2.0$ & $1.6\pm1.9$ & $.93\pm1.2$ & $.42\pm.65$ & $.16\pm.21$ & $.10\pm.14$ \\
         & $\lambda_n=10^{-2}$ & $2.2\pm1.9$ & $2.1\pm2.6$ & $.74\pm.99$ & $.42\pm.66$ & $.17\pm.23$ & $.10\pm.12$ \\
         & $\lambda_n=1$ & $2.1\pm2.0$ & $2.1\pm2.1$ & $.94\pm1.2$ & $.39\pm.65$ & $.18\pm.26$ & $.11\pm.12$ \\
         \hdashline
         \multirow{3}{*}{SMD} & Identity & $4.2\pm6.5$ & $2.5\pm3.6$ & $1.8\pm3.0$ & $.68\pm1.0$ & $.24\pm.31$ & $.15\pm.19$ \\
         & Homo & $4.2\pm6.5$ & $2.5\pm3.6$ & $1.8\pm3.0$ & $.68\pm1.0$ & $.24\pm.32$ & $.15\pm.19$ \\
         & Hetero & $4.3\pm5.7$ & $2.4\pm3.3$ & $1.7\pm2.6$ & $.66\pm1.0$ & $.24\pm.31$ & $.15\pm.18$ \\
         \hdashline
         MMR & & $17.7\pm28.0$ & $5.6\pm9.2$ & $2.8\pm3.7$ & $.83\pm1.1$ & $.37\pm.45$ & $.17\pm.23$ \\
         \hdashline
         OWGMM & & $3.1\pm5.3$ & $2.3\pm4.1$ & $1.7\pm2.0$ & $.85\pm1.0$ & $.33\pm.42$ & $.20\pm.24$ \\
         \hdashline
         NCB & & $6.2\pm1.3$ & $6.0\pm.71$ & $5.8\pm.45$ & $5.8\pm.47$ & $5.8\pm.25$ & $5.8\pm.20$ \\
         \hline
    \end{tabular} 
    \centering{\sffamily\small\begin{enumerate}\item[(a)] \simpleiv \end{enumerate}}
    \scriptsize
    \end{minipage}\hfill\hspace{0.3cm}
    \linebreak
    \begin{minipage}[b]{1.0\textwidth}
    \centering
    \begin{tabular}{clcccccc}
         \hline
         \multicolumn{2}{c}{\multirow{2}{*}{Method}} &  \multicolumn{6}{c}{$n$} \\
         & & 200 & 500 & 1,000 & 2,000 & 5,000 & 10,000 \\
         \hline
         \multirow{6}{*}{K-VMM} & $\alpha_n=0$ & $>100$ & $3.8\pm5.5$ & $>100$ & $.63\pm1.4$ & $.24\pm.29$ & $.09\pm.18$ \\
         & $\alpha_n=10^{-8}$ & $>100$ & $>100$ & $1.3\pm2.2$ & $.63\pm2.0$ & $.21\pm.23$ & $.06\pm.05$ \\
         & $\alpha_n=10^{-6}$ & $8.7\pm22.9$ & $2.0\pm2.6$ & $.78\pm.98$ & $.35\pm.50$ & $.22\pm.27$ & $.06\pm.05$ \\
         & $\alpha_n=10^{-4}$ & $9.9\pm27.6$ & $1.9\pm2.2$ & $.79\pm.96$ & $.35\pm.45$ & $.21\pm.26$ & $.05\pm.05$ \\
         & $\alpha_n=10^{-2}$ & $9.1\pm19.7$ & $2.6\pm3.6$ & $1.1\pm1.3$ & $.40\pm.49$ & $.21\pm.23$ & $.06\pm.06$ \\
         & $\alpha_n=1$ & $10.1\pm15.5$ & $5.2\pm7.0$ & $3.5\pm5.8$ & $2.5\pm4.7$ & $1.6\pm1.5$ & $1.4\pm1.5$ \\
         \hdashline
         \multirow{4}{*}{N-VMM} & $\lambda_n=0$ & $9.3\pm3.7$ & $5.3\pm2.8$ & $2.8\pm1.6$ & $1.9\pm1.3$ & $1.2\pm.84$ & $.68\pm.64$ \\
         & $\lambda_n=10^{-4}$ & $8.2\pm4.0$ & $5.4\pm2.5$ & $2.9\pm1.7$ & $1.7\pm1.3$ & $1.1\pm.80$ & $.71\pm.68$ \\
         & $\lambda_n=1$ & $7.3\pm2.7$ & $4.9\pm2.1$ & $2.7\pm1.9$ & $2.0\pm1.3$ & $1.1\pm.84$ & $.67\pm.68$ \\
         \hdashline
         \multirow{3}{*}{SMD} & Identity & $>100$ & $>100$ & $>100$ & $>100$ & $>100$ & $>100$ \\
         & Homo & $>100$ & $>100$ & $>100$ & $>100$ & $>100$ & $>100$ \\
         & Hetero & $>100$ & $>100$ & $>100$ & $>100$ & $>100$ & $>100$ \\
         \hdashline
         MMR & & $10.3\pm1.9$ & $10.2\pm1.2$ & $9.7\pm1.2$ & $9.8\pm.85$ & $9.7\pm.70$ & $9.6\pm.60$ \\
         \hdashline
         OWGMM & &$>100$ &$>100$ &$>100$ &$>100$ &$>100$ &$>100$ \\
         \hdashline
         NCB & & $9.1\pm6.7$ & $8.8\pm5.1$ & $7.6\pm3.0$ & $7.9\pm2.4$ & $7.7\pm1.2$ & $7.4\pm.89$ \\
         \hline
    \end{tabular}
    \centering{\sffamily\small\begin{enumerate}\item[(b)] \heteroiv \end{enumerate}}
    \end{minipage}\hfill\hspace{0.3cm}
    \linebreak
    \begin{minipage}[b]{1.0\textwidth}
    \centering
    \begin{tabular}{clcccccc}
         \hline
         \multicolumn{2}{c}{\multirow{2}{*}{Method}} &  \multicolumn{6}{c}{$n$} \\
         & & 200 & 500 & 1,000 & 2,000 & 5,000 & 10,000 \\
         \hline
         \multirow{6}{*}{K-VMM} & $\alpha_n=0$ & $>100$ & $>100$ & $>100$ & $1.6\pm1.1$ & $>100$ & $>100$ \\
         & $\alpha_n=10^{-8}$ & $>100$ & $>100$ & $>100$ & $>100$ & $>100$ & $>100$ \\
         & $\alpha_n=10^{-6}$ & $10.7\pm13.6$ & $1.8\pm1.7$ & $1.6\pm1.0$ & $1.6\pm.78$ & $1.9\pm.57$ & $2.1\pm.47$ \\
         & $\alpha_n=10^{-4}$ & $6.9\pm7.4$ & $2.1\pm1.3$ & $2.4\pm1.4$ & $2.4\pm.93$ & $2.6\pm.65$ & $2.8\pm.53$ \\
         & $\alpha_n=10^{-2}$ & $4.2\pm3.5$ & $3.9\pm1.9$ & $4.3\pm1.6$ & $4.1\pm1.0$ & $4.6\pm.74$ & $4.8\pm.71$ \\
         & $\alpha_n=1$ & $6.9\pm4.8$ & $8.2\pm2.8$ & $8.7\pm2.1$ & $8.4\pm1.8$ & $8.6\pm1.1$ & $8.6\pm.99$ \\
         \hdashline
         \multirow{4}{*}{N-VMM} & $\lambda_n=0$ & $>100$ & $53.4\pm88.6$ & $6.6\pm12.0$ & $1.1\pm.77$ & $.50\pm.33$ & $.92\pm.39$ \\
         & $\lambda_n=10^{-4}$ & $>100$ & $>100$ & $7.9\pm18.0$ & $1.1\pm.70$ & $.54\pm.46$ & $.92\pm.47$ \\
         & $\lambda_n=1$ & $>100$ & $5.7\pm8.7$ & $1.0\pm.74$ & $.93\pm.45$ & $1.7\pm.67$ & $2.0\pm.40$ \\
         \hdashline
         \multirow{3}{*}{SMD} & Identity & $>100$ & $>100$ & $43.3\pm97.6$ & $22.0\pm21.0$ & $>100$ & $>100$ \\
         & Homo & $24.5\pm17.8$ & $43.1\pm83.0$ & $>100$ & $26.4\pm48.3$ & $28.4\pm40.5$ & $32.5\pm63.9$ \\
         & Hetero & $>100$ & $>100$ & $>100$ & $>100$ & $>100$ & $>100$ \\
         \hdashline
         MMR & & $9.2\pm6.0$ & $10.7\pm4.1$ & $11.6\pm3.5$ & $12.3\pm3.3$ & $13.2\pm2.7$ & $13.4\pm2.3$ \\
         \hdashline
        OWGMM & &$>100$ &$10.8\pm38.0$ &$>100$ &$20.7\pm78.7$ &$25.9\pm87.7$ &$5.2\pm7.0$ \\
         \hdashline
         NCB & & $>100$ & $>100$ & $>100$ & $>100$ & $>100$ & $>100$ \\
         \hline
    \end{tabular}
    \centering{\sffamily\small\begin{enumerate}\item[(c)] \policylearning \end{enumerate}}
    \end{minipage}\hfill\hspace{0.3cm}
    \caption{Results of our estimation experiments. For each combination of scenario, method, and $n$, the MSE of $\hat\theta_n$ is estimated over 50 replications, along with standard errors. We write $>100$ whenever the MSE or standard error was greater than 100.}
    \label{tab:est-results}
\normalsize
\end{table}

For each scenario and each $n \in \{200, 500, 1000, 2000, 5000, 10000\}$, we repeated the following process 50 times: we drew a training set of $n$ random iid data points using the respective scenario's data generating process as well as an additional dataset of $n$ random dev data points for early stopping, hyperparameter tuning, etc., and then we estimated $\hat\theta_n$ using all of our methods and baselines using the sampled dataset. Then, for each combination of scenario and $n$ we computed the mean squared error (MSE) of the estimated $\hat\theta_n$ across these 50 replications. We summarize the results of this process in \cref{tab:est-results}.
In addition, we computed additional results based on the risk (\simpleiv and \heteroiv) or regret (\policylearning) of the estimated $g(\cdot; \hat\theta_n)$. However, these broadly followed the same trend as the main results here, so we leave them to the supplement.
In addition, we provide additional tables of results that break down the MSE in terms of bias and standard deviation in the supplement.

Overall, we can see that in all scenarios the best performing methods are our VMM methods, with the neural VMM method performing best in the \simpleiv and \policylearning scenarios, and the kernel VMM method performing best in the \heteroiv scenario. 
And, in all cases both the kernel and neural VMM methods significantly outperform the baselines. In particular, apart from the easy \simpleiv scenario, in the more complex \heteroiv and \policylearning scenarios, our VMM methods yield errors that are orders of magnitude smaller.

In terms of the values of the regularization hyperparameters, we note that kernel VMM can be sensitive to the choice of $\alpha_n$ when it takes extreme values. When $\alpha_n$ is too small, the algorithm appears to suffer from high variance and the occasional catastrophically-bad results, whereas when $\alpha_n$ is too large the estimation becomes very biased, with performance converging to that of MMR. However, for $\alpha_n$ in the range of $10^{-2}$ to $10^{-6}$, performance is good across all scenarios and $n$. It remains a question for future work how to automatically select this hyperparameter using observed data. However, we suspect that approaches based on the eigenvalues of $(Q(\tilde\theta_n) + \alpha_n L)^{-1}$ appearing in \cref{lem:kernel-vmm-closed-form} might be productive. 

Conversely, we note that our neural VMM algorithm is generally very insensitive to the choice of $\lambda_n$, with very little change in performance even for relatively large values of $\lambda_n$, and very strong and stable performance even when $\lambda_n = 0$. The one minor exception to this is in the challenging \policylearning scenario, where using the largest value of $\lambda_n$ results in somewhat better performance than other choices for low values of $n$, but worse performance for large $n$. This reinforces the notion that the neural network function class and optimization algorithms are naturally regularizing, and that explicit regularization is not necessarily important.

In general, for both VMM algorithms, we note that there is a wide range of regularization hyperparameter values where performance is generally very good. Furthermore, we note that for both cases the choices of $\Fcal$ used were very generic and the same across all scenarios; either an RKHS with a completely generic data-driven kernel, or a very generic shallow MLP. Together, this suggests that VMM can generally do very well with generic choices for all hyperparameters, and is not very sensitive to these choices as long as they do not take extreme values.

In the \simpleiv scenario, where $\e[\rho(X;\theta) \mid Z]$ is very simple and easy to fit uniformly over $\Theta$, the SMD and OWGMM baselines performed competitively with our VMM algorithms.
However, in the other more challenging scenarios, their behavior was generally inconsistent and poor.
We note that although the average squared error obtained by these methods was extremely high, this seems to be mostly dominated by some outliers, and the typical performance was much more reasonable.
For example, in the \heteroiv scenario when $n=10,000$, the median squared error of the Identity, Homoskedastic, and Heteroskedastic versions of SMD were $48.3$, $10.7$, and $0.22$ respectively, which is much less bad than the average squared error.
This is also evident, for example, from the separate bias and standard deviation results in the supplement.
We also note that for both SMD and OWGMM algorithms, we experimented with a wide range of choices for the underlying sieve basis sets, including the number of knots and polynomial degree for the B-splines that we used, as well as ridge-regularization values, and the results presented are for the least-bad choices.
We speculate that the superior performance of our approach is due to the kernel-based regularization of the critic class, which in practice is better able to approximate the efficient instruments with good accuracy and stability.
Indeed, it is plausible that sieve-based approaches could also achieve competitive performance using better choices of sieves, with appropriate regularization. In general, however, the use of such sieve spaces, rather than simple linear sieves with optional ridge regularization, as we experimented with, is either intractable or redundant. 
In the case of SMD, the corresponding sieve estimates $q_n(z;\theta)$ for $\EE[\rho(X;\theta) \mid Z=z]$ would no longer have closed-form solutions in $\theta$ in general, and we would somehow have to solve a bi-level optimization problem.
On the other hand, if we were to introduce such regularization to the sieve space that implicitly arises from the variational reformulation of OWGMM, we would just end up with our VMM approach as in \cref{eq}.

On the other hand, we see that the MMR baseline performed in a way that was relatively very stable, but consistently sub-optimal. The results of MMR were in general similar to kernel VMM with the largest choices for $\alpha_n$, which is expected given that it is equivalent to kernel VMM with $\alpha_n \to \infty$. In addition, as expected, the non-causal baseline is consistently biased with very poor performance.

Finally, we provide a break down of these mean squared error results in terms of bias and variance in the supplement. One interesting observation there is that some methods, in particular our neural VMM algorithm and the OWGMM baseline, do not display the expected behavior of bias vanishing at a more rapid rate than standard deviation; rather, even though both shrink, their ratio often remains approximately constant. This could be explained by a couple of factors. First, in the case of neural VMM we are not exactly optimizing the minimax optimization problem; rather, we are trying to approximate this using an alternating gradient ascent/descent approach. Therefore, such discrepancies may be explained by this deviation from theory in the practical implementation of the algorithm. Note that issue doesn't exist for kernel VMM, which performs the optimization over $\Fcal_n$ analytically. Second, in the case of OWGMM, this discrepancy seems to be explained by the instability and poor performance described above. This could be be interpreted as ``finite sample'' behavior, reflecting the fact that we are not yet in the asymptotic regime for this method. Alternatively, it may reflect intractable bias due to approximation errors of the sieve basis for the efficient instruments.

\subsection{Inference Experiments}

\subsubsection{Inference Scenarios}

\paragraph{\simpleiv} Our first considered scenario for our inference experiments is based on the same \simpleiv scenario as in our estimation experiments. For this scenario here, our target for inference is the instantaneous treatment effect at $T=0$; that is, we wish to estimate $\psi(\theta_0)$ where $\psi(\theta)=\frac{\partial}{\partial t} g(t;\theta)\bigr|_{t=0}=\theta_{2}$.

\paragraph{\heteroiv} For our second inference scenario we consider again the same \heteroiv scenario from our prior estimation experiments. Here, our target for inference is the change in slope in the true hinge function $g(\cdot;\theta_0)$. This corresponds to the function $\psi(\theta) = \theta_4 - \theta_3$.

\subsubsection{Inference Methods}

\paragraph{Kernel Inference} For our kernel inference method, we implemented the algorithm described by \cref{thm:kernel-inference}. We used the same kernel function as for our Kernel VMM estimation algorithm in our prior estimation experiments, and we present results for a variety of value of $\alpha_n$.

\paragraph{Neural Inference} For our neural inference method, we solved the game objective described by \cref{eq:neural-inference}. We used the same choice of $\mathcal F_n$ and a similar alternating SGD optimization procedure as for our NeuralVMM estimation method. We provide additional details in the supplement.
As in our estimation experiments, we used
Frobenius norm regularization, and we present results for varying values of $\lambda_n$.

\subsubsection{Inference Results}

\begin{table}\centering\small
    \begin{minipage}[b]{1.0\textwidth}
    \centering
        \begin{tabular}{cclccccc}
         \hline
         $n$ & \multicolumn{2}{c}{Method} & Cov & CovBC & PredSD(.05) & PredSD(.5) & PredSD(.95) \\
         \hline
         \multirow{10}{*}{$200$} & \multirow{6}{*}{Kernel} & $\alpha_n=0$ & 83.0 & 94.0 & .21 & .32 & .52 \\
         & & $\alpha_n=10^{-8}$ & 83.0 & 94.0 & .21 & .32 & .51 \\
         & & $\alpha_n=10^{-6}$ & 83.0 & 94.5 & .21 & .32 & .53 \\
         & & $\alpha_n=10^{-4}$ & 84.5 & 95.5 & .22 & .33 & .56 \\
         & & $\alpha_n=10^{-2}$ & 86.5 & 95.5 & .23 & .35 & .62 \\
         & & $\alpha_n=1$ & 91.0 & 96.0 & .25 & .39 & .72 \\
         \cdashline{2-8}
         & \multirow{4}{*}{Neural} & $\lambda_n=0$ & 82.0 & 94.0 & .21 & .31 & .48 \\
         & & $\lambda_n=10^{-4}$ & 81.5 & 94.0 & .21 & .31 & .49 \\
         & & $\lambda_n=1$ & 82.5 & 93.5 & .21 & .30 & .49 \\
         \hdashline
         \multirow{10}{*}{$2000$} & \multirow{6}{*}{Kernel} & $\alpha_n=0$ & 91.5 & 93.5 & .19 & .21 & .24 \\
         & & $\alpha_n=10^{-8}$ & 92.0 & 94.0 & .19 & .22 & .25 \\
         & & $\alpha_n=10^{-6}$ & 92.5 & 94.0 & .20 & .22 & .24 \\
         & & $\alpha_n=10^{-4}$ & 92.5 & 94.5 & .20 & .22 & .25 \\
         & & $\alpha_n=10^{-2}$ & 95.0 & 96.0 & .21 & .23 & .28 \\
         & & $\alpha_n=1$ & 100.0 & 100.0 & .48 & .55 & .87 \\
         \cdashline{2-8}
         & \multirow{4}{*}{Neural} & $\lambda_n=0$ & 90.0 & 92.5 & .19 & .21 & .23 \\
         & & $\lambda_n=10^{-4}$ & 90.5 & 92.5 & .19 & .21 & .22 \\
         & & $\lambda_n=1$ & 90.0 & 92.5 & .19 & .21 & .22 \\
         \hline
        \end{tabular}
    \centering{\sffamily\normalsize\begin{enumerate}\item[(a)] \simpleiv; the true standard deviation over the 200 replications was $0.34$ for $n=200$ and $0.23$ for $n=2000$. \end{enumerate}}
    \end{minipage}\hfill\hspace{0.3cm}
    \linebreak
    \begin{minipage}[b]{1.0\textwidth}
    \centering
        \begin{tabular}{cclccccc}
             \hline
             $n$ & \multicolumn{2}{c}{Method} & Cov & CovBC & PredSD(.05) & PredSD(.5) & PredSD(.95) \\
             \hline
             \multirow{10}{*}{$200$} & \multirow{6}{*}{Kernel} & $\alpha_n=0$ & 84.5 & 85.0 & .35 & .54 & 1.9 \\
             & & $\alpha_n=10^{-8}$ & 83.5 & 83.5 & .37 & .55 & 2.1 \\
             & & $\alpha_n=10^{-6}$ & 87.5 & 88.5 & .42 & .58 & 2.3 \\
             & & $\alpha_n=10^{-4}$ & 91.5 & 92.5 & .49 & .66 & 2.8 \\
             & & $\alpha_n=10^{-2}$ & 95.0 & 98.0 & .59 & .87 & 4.6 \\
             & & $\alpha_n=1$ & 100.0 & 100.0 & 1.4 & 2.5 & 13.3 \\
             \cdashline{2-8}
             & \multirow{4}{*}{Neural} & $\lambda_n=0$ & 70.5 & 65.5 & .24 & .40 & .84 \\
             & & $\lambda_n=10^{-4}$ & 71.5 & 68.0 & .25 & .43 & .84 \\
             & & $\lambda_n=1$ & 70.0 & 66.0 & .25 & .42 & .84 \\
             \hdashline
             \multirow{10}{*}{$2000$} & \multirow{6}{*}{Kernel} & $\alpha_n=0$ & 95.5 & 97.5 & .20 & .21 & .24 \\
             & & $\alpha_n=10^{-8}$ & 95.5 & 97.5 & .19 & .21 & .24 \\
             & & $\alpha_n=10^{-6}$ & 95.5 & 97.5 & .20 & .21 & .24 \\
             & & $\alpha_n=10^{-4}$ & 96.0 & 97.5 & .20 & .22 & .25 \\
             & & $\alpha_n=10^{-2}$ & 97.5 & 98.5 & .21 & .23 & .27 \\
             & & $\alpha_n=1$ & 100.0 & 100.0 & .47 & .55 & .88 \\
             \cdashline{2-8}
             & \multirow{4}{*}{Neural} & $\lambda_n=0$ & 95.0 & 95.5 & .19 & .21 & .22 \\
             & & $\lambda_n=10^{-4}$ & 94.5 & 95.5 & .19 & .21 & .22 \\
             & & $\lambda_n=1$ & 94.5 & 95.5 & .20 & .21 & .22 \\
             \hline
        \end{tabular}
    \centering{\sffamily\normalsize\begin{enumerate}\item[(b)] \heteroiv; the true standard deviation over the 200 replications was $1.6$ for $n=200$ and $0.21$ for $n=2000$. \end{enumerate}}
    \end{minipage}\hfill\hspace{0.3cm}
    \caption{Results of our inference experiments, using kernel VMM estimation with $\alpha_n = 10^{-4}$, and various VMM inference methods. For each inference method and value of $n$, we list: \textbf{Cov} the coverage of the respective 95\% confidence intervals; \textbf{CovBC} the corresponding bias-corrected coverage, by subtracting the bias of $\psi(\hat\theta_n)$ from the confidence intervals; and \textbf{PredSD($\bm{q}$)} the $q$'th percentile of the estimated standard deviation of $\psi(\hat\theta_n)$, for $q \in \{5,50,95\}$.}
    \label{tab:inf-results}
\end{table}

For each scenario and each $n \in \{200, 2000\}$, we repeated the following procedure 200 times: (1) we drew a training set of $n$ random iid data points using the respective scenario's data generating process; (2) we estimated $\hat\theta_n$ using each of our VMM methods; and (3) we estimate the efficient asymptotic variance using each of our inference methods and each of the estimated $\hat\theta_n$ as plug-ins. That is, for each random draw of data, we estimate the efficient asymptotic variance using each combination of VMM estimation method and inference method. In all cases, we compute an estimated 95\% confidence interval as
where $\hat{v}$ is the estimated asymptotic variance of $\psi(\hat\theta_n)$ via the delta method, which we computed using the corresponding inference method as detailed in \cref{sec:inference}.
In addition, for each combination of $n$, scenario, estimation method, and inference method, we computed the following summary statistics: (1) the coverage rate of our estimated confidence intervals; (2) the corresponding coverage when we adjust the confidence intervals by subtracting the bias of $\psi(\hat\theta_n)$ (which we estimated by $\frac{1}{200} \sum_{i=1}^{200} \psi(\hat\theta_n^{(i)})) - \psi(\theta_0)$, where $\hat\theta_n^{(i)}$ denotes the estimate from the $i$'th replication); and (3) the 5\%, 50\%, and 95\% percentiles of the estimated standard deviation of $\psi(\hat\theta_n)$ (given by $\sqrt{\hat{v}/n}$) across the 200 replications.

Given our previous results that kernel VMM performed very consistently with $\alpha_n$ in the range of $10^{-2}$ to $10^{-6}$, for brevity we only present results here using the kernel VMM estimation method with $\alpha_n = 10^{-4}$. However, we present additional results using other estimation methods in the appendix. We summarize the results from this procedure in \cref{tab:inf-results}.

Overall, we see that in both scenarios the results are very good when $n=2000$, with very accurate estimates of the standard deviation of $\psi(\hat\theta_n)$, and high coverage. For the \heteroiv scenario, all inference methods produce almost perfect (95\%) coverage when $n=2000$, and for the \simpleiv scenario the coverage is only slightly lower, and becomes very close to 95\% when bias of $\hat\theta_n$ is taken into account.

When $n=200$, our inference results are slightly poorer. This likely reflects several distinct issues when $n$ is small: the bias of $\hat\theta_n$ may be significant, the variance may not be well be characterized by the asymptotic variance and the tails by normal tails, and the estimates of the asymptotic variance of $\psi(\hat\theta_n)$ may be poor.
Any of these issues may lead to invalid confidence intervals and lower than expected coverage.
Indeed, we can see some or all of these issues at play in our results. In \cref{tab:inf-results}(a) we see that coverage is very good when we account for bias, and that the range of the predicted standard deviation of $\hat\theta_n$ is reasonably close to the empirically observed standard deviation of $0.34$, which suggests we are suffering from the first issue. Conversely, in \cref{tab:inf-results}(b) we see that, even accounting for bias, the coverage is lower than expected when $n=200$, and that the range of predicted standard deviations of $\psi(\hat\theta_n)$ is low compared to the empirically observed standard deviation of $1.9$, which suggests we are suffering from the second and/or third issues.

Regarding the difference in performances between our inference methods, we observe that, as expected given \cref{thm:kernel-inference}, larger values of $\alpha_n$ for our kernel method lead to wider confidence intervals. For $n=2000$, where our asymptotic theory seems to be more relevant, we see very overly-wide confidence intervals for our kernel method when $\alpha_n$ is very large, with typically good results when $\alpha_n$ takes the same range of values that worked well for estimation in our prior experiments (\emph{i.e.}, in the range of $10^{-6}$ to $10^{-2}$). This suggests that we can tune $\alpha_n$ for estimation, and use similar values for inference, and also that we can err on the side of caution and wider confidence intervals by using larger values of $\alpha_n$. Conversely, we found our neural inference method to be very insensitive to $\lambda_n$, and in general we found that it produced relatively narrow confidence intervals, with widths similar to those from our kernel method using the smallest values of $\alpha_n$.

Finally, we make a note to emphasize the fact that biased-corrected coverage values are listed merely so we can analyze, in cases where coverage is poor, to what extent this is due to bias in the estimate $\hat\theta_n$, versus due to poor estimates of the standard deviation of $\hat\theta_n$. Indeed, the bias-correction we perform is \emph{not} something that can be done in practice, and these bias-corrected coverages should not be interpreted as actual coverages that can be obtained.

\section{Related Work}
\label{sec:related-work}

\subsection{Methods for Solving Conditional Moment Problems}
\label{sec:related-work-conditional-moment}

For the general conditional moment problem, one classical approach is to solve \cref{eq:owgmm} using a growing sieve basis expansion for $\{f_1, \ldots, f_k\}$ based on, \emph{e.g.}, splines, Fourier series, or power series \citep{chamberlain1987asymptotic}. It would be expected, however, that such methods would suffer from curse of dimension issues and therefore their application would be limited to low-dimensional settings. Furthermore, it has been observed in past work \citep{bennett2019deep,bennett2020efficient} that methods of this kind can be very unstable, and perform very poorly in comparison to VMM estimators.

A very similar method to this is the sieve minimum distance approach of \citet{ai2003efficient}, which instead uses a growing sieve basis expansion to approximate the conditional function $\e[\rho(X;\theta) \mid Z=z]$ for every $\theta \in \Theta$. They propose to minimize a loss of the form $J_n(\theta) = \e_n[\hat q(Z;\theta)\tr \Gamma(Z)^{-} \hat q(Z;\theta)]$, where $\hat q(z;\theta)$ is the sieve estimate for $\e[\rho(X;\theta) \mid Z=z]$ and $\Gamma(z)$ is some consistent estimate of $\e[\rho(X;\theta_0) \rho(X;\theta_0)\tr \mid Z=z]$. One nice feature of this kind of approach is that it can readily handle infinite-dimensional nuisance components. In the case that $\theta$ can be partitioned as $\theta = (\beta, \gamma)$, where $\beta$ is a finite-dimensional parameter of interest and $\gamma$ is an infinite-dimensional functional nuisance component, \citet{ai2003efficient} propose to model $\gamma$ using a second growing sieve basis expansion, and minimize $J_n(\theta)$ over both $\beta$ and the sieve coefficients for $\gamma$. 
There is a long line of work on the theoretical efficiency of this kind of approach, even in the presence of infinite-dimensional nuisance components \citep{ai2003efficient,chen2009efficient,chen2012estimation}, which is something that our theory does not address. However, these methods have similar practical drawbacks to using a sieve basis expansion for OWGMM, which seems to particularly be the case when the conditional expectation function $q(z;\theta) = \e[\rho(X;\theta) \mid Z=z]$ is complex, as highlighted by the experimental results in \cref{sec:experiments}. A very similar approach was also proposed concurrently by \citet{newey2003instrumental}, however their approach has the same drawbacks, and furthermore they do not address efficiency.

Another related classical approach is to solve \cref{eq:owgmm} using estimates of the \emph{efficient instruments}, which are the set of $b$ functions $\{f^*_1,\ldots,f^*_b\}$ mapping $\mathcal{Z}$ to $\RR^b$, given by $f_i^*(z)_j = F^*(z)_{i,j}$, where
\begin{equation*}
    F^*(z) = V(z;\theta_0)^{-1} \e[D(X;\theta_0) \mid Z=z] \,.
\end{equation*}
Past work such as \citet{newey1990efficient,newey1993efficient} provide sufficient conditions for such estimators to be efficient.
However, since $\theta_0$ is unknown, such methods require some other method for first-stage estimation of $\theta_0$, and are likely sensitive to the quality of this method; indeed, if the estimates of $f^*_i$ are heavily biased due to poor first-stage estimation, it is unclear whether the corresponding moments will be sufficient for identification, let alone efficiency. By contrast, our method is guaranteed to be well behaved as long as our regularized critic class $\Fcal_n$ can approximate the optimal instruments, regardless of the quality of our first-stage estimate. Furthermore, estimators that have been previously proposed based on this approach \citep{newey1990efficient,newey1993efficient} employ nearest neighbor or sieve methods with similar weaknesses as discussed above.

The \emph{continuum GMM} estimators of \citet{carrasco2000generalization} are theoretically closely related to our proposed kernel VMM estimators. However, the form of their proposed estimators is very different. Suppose that we define some set of functions $\{f(\cdot; t) : t \in T\}$ of the form $\mathcal Z \mapsto \mathbb R^m$ indexed by set $T$, and we let $\mathcal H_T$ be some Hilbert space of functions in the form $T \mapsto \mathbb R$. In addition, define $h'_n(\theta) \in \mathcal H$ according to $h'_n(\theta)(t) = \e_n[f(Z;t)\tr  \rho(X; \theta)]$, and the linear operator $C'_n : \mathcal H_T \mapsto \mathcal H_T$ according to
\begin{equation*}
    (C'_n h)(t) = \inner{k_n(t,\cdot)}{h}_{\mathcal H_T}\,,
\end{equation*}
where
\begin{equation*}
    k_n(t,s) = \e_n[f(Z;s)\tr  \rho(X;\tilde\theta_n) \rho(X;\tilde\theta_n)\tr  f(Z;t)]\,,
\end{equation*}
and $\tilde\theta_n$ is some prior estimate for $\theta_0$. Then, \citet{carrasco2000generalization} study estimators of the form $\argmin_{\theta \in \Theta} \norm{((C'_n)^2 + \alpha_n I)^{-1/2} (C'_n)^{1/2} h'_n(\theta)}_{\mathcal H_T}^2$. In the case that we choose $T$ to be an RKHS class $\mathcal F$, with functions indexed by themselves, and $\mathcal H$ chosen as the dual of this RKHS, then it easily follows that the terms $C'_n$ and $h'_n$ defined here are equivalent to the terms $C_n$ and $h_n$ defined in \cref{sec:kernel-vmm}. However, the form of Tikhonov regularization applied in the inversion of $C_n^{1/2}$ is slightly different; by \cref{lem:kvmm-cgmm-equiv} we regularize using $(C_n + \alpha_n I)^{-1/2}$, whereas they regularized using $(C_n^2 + \alpha_n I)^{-1/2} C_n^{1/2}$. This difference is significant, since our form of regularization gives rise to the simple minimax VMM-style interpretation, whereas theirs does not. Furthermore, their proposed estimators use the index set $T = [0,t_{\text{max}}]$ for some $t_{\text{max}}>0$, with $\mathcal H_T$ chosen as the $L_2$ space on $T$. This choice is much less flexible than ours of using a function class as the index set and makes it more difficult to guarantee that $\theta_0$ is uniquely identified or to guarantee semiparametric efficiency, which they do not. More concretely, the main efficiency claim they provide is that their estimator is efficient compared to other estimators of the form $\sup_{\theta \in \Theta} \norm{B'_n h_n(\theta)}^2$, for any choice of bounded linear operator $B'_n$. Finally, they propose to solve their optimization problem by computing an explicit rank-$n$ eigenvalue, eigenvector decomposition of $C'_n$, and constructing a cost function to minimize based on this decomposition. In particular, if we define $g_i(\theta) \in \mathcal H_T$ according to $g_i(\theta)(t) = f(Z_i;t)\tr  \rho(X_i;\theta)$ for each $i \in [n]$ and $\theta \in \Theta$, then their objective function is given by a quadratic form on all terms of the form $\inner{g_i(\theta)}{g_j(\theta)}_{\mathcal H_T}$ for $i,j \in [n]$. This involves $n^4$ terms in total, and is therefore very computationally expensive to compute for large $n$. In comparison, the cost function in $\theta$ implied by our kernel VMM estimator could be calculated analytically as a quadratic form in $n^2$ terms based on the representer theorem. Furthermore, our variational reformulation allows for estimators based on alternating stochastic gradient descent, which may be more practical in some situations, for example when $n$ is large.

Another recently proposed and related class of estimators are given by the \emph{adversarial GMM} estimators of \citet{lewis2018adversarial}, which were recently extended to the more general class of \emph{minimax GMM} estimators by \citet{dikkala2020minimax}. In general, these estimators are defined according to $\argmin_{\theta \in \Theta} \sup_{f \in \mathcal{F}} \e_n[f(Z)\tr  \rho(X;\theta)] + R_n(f) - \Psi_n(\theta)$, where $R_n$ is some regularizer on $f$, and $\Psi_n$ is some regularizer on $\theta$. In particular, \citet{dikkala2020minimax} analyze estimators where $\mathcal F$ and $\Theta$ are both normed function spaces, and the regularizes take the form $R_n(f) = \alpha_n \norm{f}^2_{\mathcal F} + \lambda_n \sum_{i=1}^n \norm{f(Z_i)}^2$ and $\Psi_n(\theta) = \mu_n \norm{\theta}^2_{\Theta}$. On the theoretical side, they provide general results bounding the the $L_2$ distance between $\e[\rho(X;\hat\theta_n) \mid Z]$ and $\e[\rho(X;\theta_0) \mid Z]$ for this form of estimator.
Furthermore, they propose various specific estimators of this kind, for example with $\mathcal F$ chosen as a RKHS or a class of neural networks. We note that these are similar to our proposed kernel and neural VMM estimators, with the difference that they do not include the $-(1/4) \e_n[(f(Z)\tr  \rho(X;\tilde\theta_n))^2]$ term motivated by optimal weighting, and that they explicitly regularize $\theta$. In a sense, the focus of their estimators and theory is very different than ours; we focus on the question of efficiency, and provide theoretical guarantees of efficiency when $\Theta$ is finite-dimensional, whereas they focus on the case where $\Theta$ is a function space, but restrict their analysis to providing finite-sample bounds rather than addressing efficiency.
We speculate that the benefits of both kinds of approaches could be combined, and by using both the optimal weighting-based term and regularizing $\theta$ one could construct estimators that are semiparametrically efficient when $\theta_0$ is finite-dimensional, and have explicit risk guarantees in the more general setting. However, we leave this question to future work.

\subsection{Methods for Solving the Instrumental Variable Regression Problem}

Recall that for the instrumental variable regression problem we have $X = (Z,T, Y)$, where $T$ is the treatment we are regressing on, $Y$ is the outcome, and $Z$ is the instrumental variable, and  $\rho(X;\theta) = Y - g(T; \theta)$, for some regression function $g$ parameterized by $\theta$.
In this setup, $\Theta$ may either be a finite-dimensional parameter space, which corresponds to having a parametric model for $g$, or alternatively we may allow allow $\Theta$ to be some infinite-dimensional function space and simply define $g(z;\theta) = \theta(z)$, which corresponds to performing nonparametric regression.

Perhaps the most classic method for instrumental variable regression is two-stage least squares (2SLS). 
First, we perform least-squares linear regression of $\phi(T)$ on $\psi(Z)$, where $\phi$ and $\psi$ are finite-dimensional feature maps on $T$ and $Z$ respectively. That is, we learn some linear model $h(\cdot; \hat\gamma_n)$, where $h(z;\gamma) = \gamma\tr  \psi(z)$, and $\hat\gamma_n = \argmin_{\gamma} \sum_{i=1}^n \norm{\phi(T_i) - \gamma\tr  \psi(Z_i)}^2$. Then, we again perform least squares linear regression, this time of $Y$ on $h(\psi(Z);\hat\gamma_n)$. That is, we learn a linear model $g(\cdot;\hat\theta_n)$, where $g(t;\theta) = \theta\tr  \phi(t)$, and $\hat\theta_n = \argmin_{\theta} \sum_{i=1}^n (Y_i - \theta\tr  h(Z_i;\hat\gamma_n))^2$. Under the assumption that these linear models are correctly specified, then the resulting 2SLS estimator is known to be consistent for $\theta_0$ \citep[\S4.1.1]{angrist2008mostly}. However, such estimators are limited in that they require finding some finite-dimensional feature map $\phi$ such that the linear model given above is well-specified, which in practice may be infeasible. 
The sieve methods of \citet{newey2003instrumental,ai2003efficient} discussed in \cref{sec:related-work-conditional-moment} applied specifically to the instrumental variable regression problem could be viewed as similar approaches, but using growing sieve basis expansions for $\phi$ and $\psi$. However, as discussed already these methods may be problematic in practice.

Alternatively, a couple of recent works propose extending the 2SLS method in the case where both stages are performed using infinite-dimensional feature maps and ridge regularization; \emph{i.e.}, both stages are performed using kernel ridge regression. The \emph{Kernel IV} method of \citet{singh2019kernel} proposes to do this in a very direct way, by regressing $\phi(T)$ on $\psi(Z)$, and then regressing $Y$ on $h(\phi(Z))$, where both the feature maps $\phi$ and $\psi$ are infinite dimensional, and implicitly defined by some kernels $K_Z$ and $K_T$ under Mercer's theorem. In the case of learning $h$, this corresponds to solving for a linear operator between two RKHSs and in general is ill-posed, so this regression is performed using Tikhonov regularization. Then, the second-stage problem corresponds to performing RKHS regression using some implicit kernel depending on $h$, and is performed again using Tikhonov regularization.
Ultimately, however, by appealing to the representer theorem the regressions don't need to be performed separately, and there is a simple closed form solution. 
Similarly, the \emph{Dual IV} method of \citet{muandet2020dual} considers 2SLS using RKHSs for each stage and formulates this as a minimax problem of the form $\argmin_{\theta \in \Theta} \sup_{f \in \mathcal F} \e_n[(g(T;\theta) - Y) f(Y,Z)] - (1/2) \e_n[f(Y,Z)^2]$.
Ultimately, both this paper and the work of \citet{singh2019kernel} propose closed-form estimators that are superficially similar to ours in \cref{lem:kernel-vmm-closed-form-iv}, but without any terms corresponding to optimal weighting.
However, their focus is slightly different to ours; their theoretical analysis where present is in terms of consistency or regret, whereas the focus of our theoretical analysis is semiparametric efficiency.

The recent \emph{Deep IV} method of \citet{hartford2017deep} proposes to extend 2SLS using deep learning. Specifically, they propose in the first stage to fit the conditional distribution of $X$ given $Z$, for example using a mixture of Gaussians parametrized by neural networks, or by fitting a generative model using some other methodology such as generative adversarial networks or variational autoencoders. Then, in the second stage, they propose to minimize $(1/n) \sum_{i=1}^n (Y_i - \hat\e[g(X; \theta) \mid Z_i])^2$, where the conditional expectation $\hat\e[\cdot \mid z]$ is estimated using the model from the first stage, and $g$ is parameterized using some neural network architecture. This approach has the advantage of being flexible and building on recent advances in deep learning, however they do not provide any concrete theoretical characterizations, and since the first stage is bound to be imperfectly specified this can suffer from the ``forbidden regression'' issue \citep[\S4.6.1]{angrist2008mostly}.

\citet{zhang2020maximum} recently proposed the \emph{maximum moment restriction instrumental variable} algorithm. They present multiple estimators for approximately solving $\argmin_{\theta \in \Theta} \sup_{f : \norm{f}_{\mathcal F} \leq 1} \e[f(Z) (W - g(T;\theta))] - \Psi_n(\theta)$, where $\mathcal F$ is an RKHS, and $\Psi_n$ is an optional regularizer on $\theta$ in the case that it is infinite-dimensional (however, they also analyze case where $\theta$ is finite-dimensional.) Of particular note, the ``V-statistic'' version of their algorithm is equivalent to minimizing $J^{\text{MMR}}_n(\theta) = (1/n^2) \rho(\theta)\tr L \rho(\theta) + \Psi_n(\theta)$, where $\rho(\theta)$ and $L$ are defined as in \cref{lem:kernel-vmm-closed-form}. Letting $J_n^{\text{K-VMM}}(\theta;\alpha)$ denote our kernel VMM objective with regularization strength $\alpha$, and assuming $\Psi_n(\theta) = 0$, \cref{lem:kernel-vmm-closed-form} immediately implies that $\alpha J_n^{\text{K-VMM}}(\theta;\alpha) \to J_n^{\text{MMR}}(\theta)$ as $\alpha \to \infty$. In other words, there is an equivalence between MMR and kernel VMM with infinite regularization. \citet{zhang2020maximum} provide theory showing that their estimators are consistent and asymptotically normal under various assumptions. However, unlike us, they do \emph{not} establish efficiency.

\subsection{Applications of VMM Estimators}
\label{sec:related-work-ours}

Finally, we discuss some past work where VMM estimators have been applied. The original such work was by \citet{bennett2019deep}, who proposed the \emph{DeepGMM} estimator for the problem of instrumental variable regression. Specifically, the proposed estimator takes the form $\argmin_{\theta} \sup_{f \in \mathcal F} \e_n[f(Z)\tr  (Y - g(T;\theta))] - (1/4) \e_n[f(Z)^2 (Y - g(T;\tilde\theta_n))^2]$, where $\{g(\cdot; \theta) : \theta \in \Theta\}$ and $\mathcal F$ are both given by neural network function classes. That is, the DeepGMM estimator can be interpreted as a neural VMM estimator for the instrumental variable problem in the form of \cref{eq:neural-vmm} with $R_n(f) = 0$ and fixed $\mathcal F_n$ that does not grow with $n$. In their experiments DeepGMM consistently outperformed other recently proposed methods \citep{hartford2017deep,lewis2018adversarial} across a variety of simple low-dimensional scenarios, and it was the only method to continue working when using high-dimensional data where the treatments and instruments were images. In addition, DeepGMM has continued to perform competitively in more recent experimental comparisons \citep{singh2019kernel,muandet2020dual}. \citet[theorem 2]{bennett2019deep} provided conditions under which DeepGM is consistent. In addition, we could also justify that it is asymptotically normal and semiparametrically efficient by \cref{thm:neural-vmm-properties}, under some additional assumptions and by introducing kernel-based regularization.

In addition, this style of estimator was applied to the problem of policy learning from convex surrogate loss reductions by \citet{bennett2020efficient}. A common approach for optimizing binary treatment decision policies from logged cross-sectional data is to construct a surrogate cost function to minimize of the form $\e_n[\abs{\psi} l(g(X; \theta), \text{sign}(\psi))]$, where $X$ denotes observed pre-treatment information about the individual, $\psi$ is some weighting variable depending on all observed pre- and post-treatment information about the individual, the function $g(\cdot; \theta)$ encodes the policy we are optimizing which we assume is parameterized by $\theta \in \Theta$, and $l$ is some smooth convex loss function such as logistic regression loss. \citet{bennett2020efficient} showed that the model where this surrogate loss is correctly specified is given by the conditional moment problem $\e[\abs{\psi} l'(g(X; \theta), \text{sign}(\psi)) \mid X] = 0$, where $l'$ is the derivative of $l$ with respect to its first argument. Consequently, they proposed the empirical surrogate loss policy risk minimization (ESPRM) estimator, according to $\argmin_{\theta \in \Theta} \sup_{f \in \mathcal F} \e_n[f(X) \rho(X,\psi;\theta)] - (1/4) \e_n[f(X)^2 \rho(X,\psi;\tilde\theta_n)]$, where $\rho(X,\psi;\theta) = \abs{\psi} l'(g(X;\theta), \text{sign}(\psi))$, and $\mathcal F$ is a neural network function classes. That is, again this estimator can be interpreted as a neural VMM estimator as in \cref{eq:neural-vmm}, with $R_n(f) = 0$. Not only did the authors demonstrate that this algorithm led to consistently improved empirical performance over the standard approach of empirical risk minimization using the surrogate loss, but they proved that if the resulting estimator $\hat\theta_n$ is semiparametrically efficient, then this implies optimal asymptotic regret for the learnt policy compared with \emph{any} policy identified by the model given by correct specification. We note that, although the authors did not address the question of how to guarantee such efficiency for $\hat\theta_n$, we could guarantee it by \cref{thm:neural-vmm-properties} under some additional assumptions and kernel-based regularization, or under \cref{thm:efficiency} by instead using a kernel VMM estimator.

Finally, \citet{bennett2021off} applied this style of estimator to the problem of reinforcement learning using offline data logged from some fixed behavior policy, also known as the problem of off policy evaluation (OPE). They proposed an algorithm for the OPE problem under unmeasured confounding, which requires as an input an estimate of the state density ratio $d$ between the behavior policy and the target policy they are evaluating. As stated in \cref{sec:intro}, $d$ can be identified by a conditional moment problem, up to a constant factor, with the normalization constraint $\e[d(S)] = 1$. \citet{bennett2021off} proposed a VMM-style estimator for $d$, using both the conditional moment condition $\e[d(S) \beta(A,S) - d(S') \mid S'] = 0$ and the marginal moment condition $\e[d(S) - 1] = 0$, based on a slightly more general form of \cref{lem:owgmm} where the vector of conditional moment restrictions can depend on different random variables to be conditioned on. That is, the more general problem is given by the $m$ moment conditions $\e[\rho_i(X;\theta_0) \mid Z_i] = 0$ for $i \in [m]$, for some set of random variables $Z_1 \in \mathcal Z_1, \ldots, Z_m \in \mathcal Z_m$. Specifically, they propose a kernel VMM-style estimator, where both $d$ and $f$ are optimized over balls in RKHSs. 
In practice, by successively applying the representer theorem to this two-stage optimization problem, they presented a closed-form solution for the estimate $\hat d_n$ (in a similar vein to \cref{lem:kernel-vmm-closed-form-iv}).
Note that since their kernel VMM estimator is based on a slightly more intricate conditional moment formulation than we considered, with varying conditioning sets, our theoretical analysis may not apply to it. We leave the question of extending our theoretical analysis to this more general problem to future work.

\section{Conclusion}

In this paper we presented a detailed theoretical analysis for the class of variational method of moments (VMM) estimators, which are motivated by a variational reformulation of the optimally weighted generalized method of moments and which encompass several recently proposed estimators for solving conditional moment problems. We studied multiple varieties of these estimators based on kernel methods or deep learning, and provided appropriate conditions under which these estimators are consistent, asymptotically normal, and semiparametrically efficient. This is in contrast to other recently proposed approaches for solving conditional moment problems using machine learning tools, which do not provide any results regarding efficiency. In addition, we proposed inference algorithms based on the same kind of variational reformulation, again with specific algorithms based on both kernel methods and deep learning. Finally, we demonstrated in a detailed series of experiments that our VMM estimators achieve very strong estimation performance in comparison to relevant baselines and that the confidence intervals we generate are reliable.

Our paper suggests a few immediate directions for future work. First, unlike, \emph{e.g.}, the sieve minimum distance approaches of \citep{ai2003efficient,chen2009efficient,chen2012estimation}, our efficiency theory when $\theta_0$ is finite-dimensional does not accommodate possible infinite-dimensional nuisance components. Furthermore, as discussed in \cref{sec:kernel-vmm}, the latter two works allow for weaker assumptions on the smoothness and complexity of $\rho(X;\theta)$.  We suspect that our theory could be extend accordingly without fundamentally changing the VMM algorithm, but this is left to future work. 

Second, we only consider conditional moment restrictions using a single conditioning variable $Z$. In some settings, such as longitudinal studies or the RL application discussed in \cref{sec:related-work-ours}, one faces conditional moment problems with different, nested conditioning variables for each conditional moment restriction, and our current theory does not accommodate such formulations.
Again, we believe that our theory could naturally be extended to this kind of setting.

Third, we only present theory for neural VMM estimators using a kernel-based regularizer, yet we see compelling empirical results for simpler regularizers. We speculate that under appropriate conditions on the neural net classes $\mathcal F_n$, our efficiency result in \cref{thm:neural-vmm-properties} could be extended to neural VMM estimators with such regularizers.

Next, an important further direction is the automatic selection of the hyperparameter $\alpha_n$ for our kernel VMM method and corresponding inference algorithm. We speculate, for instance, that it may be possible to approximate the resulting bias and variance for different values of $\alpha_n$ and optimize a bias-variance trade-off. At the same time, work on approximating the bias of our estimator could be helpful for improving the quality of confidence intervals from our proposed inference algorithm, as we observed that in many cases coverage of our confidence intervals significantly improved when they were corrected for bias. Similarly, it is known that continuously updating GMM can have lower bias than $k$-step GMM algorithms \citep{hansen1996finite}, which suggsts we may be able to reduce bias using a continuously updating VMM where instead of using a prior estimate $\tilde\theta_n$ in the second term of the game objective we use the same $\theta$ that we are optimizing over.

Finally, we hope that this work will help motivate the construction of efficient VMM estimators for other conditional moment problems.

\section*{Acknowledgements}

This material is based upon work supported by the National Science Foundation under Grant No. 1846210.

\bibliography{ref}

\newpage
\appendix

\section{Additional Definitions}

We provide here some additional definitions of quantities that will be used in our additional lemmas and proofs. First, we define the bounded linear operators $C_n$, $C$, and $C_0$ as in \cref{sec:kernel-vmm}. Specifically, given \cref{lem:rkhs-dual}, we have the following more useful equivalent definitions:
\begin{align}
    \label{eq:cn}
    \inner{C_n u}{v} &= \e_n[\varphi(u)\tr  \rho(X; \tilde\theta_n) \rho(X; \tilde\theta_n)\tr  \varphi(v)] \\
    \label{eq:c}
    \inner{C u}{v} &= \e[\varphi(u)\tr  \rho(X; \tilde\theta) \rho(X; \tilde\theta)\tr  \varphi(v)] \\
    \label{eq:c0}
    \inner{C_0 u}{v} &= \e[\varphi(u)\tr  \rho(X; \theta_0) \rho(X; \theta_0)\tr  \varphi(v)] \,,
\end{align}

where $\varphi$ maps any element of $\mathcal H$ to its corresponding dual element in $\mathcal F$. Note that by \cref{lem:c-compact} these operators are compact, and therefore well-defined on all of $\mathcal H$. In addition, we define the operators $B_n$ and $B$ according to
\begin{align}
    \label{eq:bn}
    B_n &= (C_n + \alpha_n I)^{-1/2} \\
    \label{eq:b}
    B &= C^{-1/2}\,,
\end{align}
wherever these operators are well-defined.

Next, as in \cref{sec:kernel-vmm}, we define the special elements of $\mathcal H$, given by
\begin{align}
    \label{eq:hn}
    \bar h_n(\theta)(f) &= \e_n[f(Z)\tr  \rho(X; \theta)] \\
    \label{eq:h}
    \bar h(\theta)(f) &= \e[f(Z)\tr  \rho(X; \theta)] \,.
\end{align}

We define the empirical and population objective functions according to
\begin{align}
    \label{eq:jn}
    J_n(\theta) = \sup_{f \in \mathcal F} \e_n[f(Z)\tr  \rho(X;\theta)] - \frac{1}{4} \e_n[(f(Z)\tr  \rho(X;\tilde\theta_n))^2] - \frac{\alpha_n}{4} \norm{f}^2 \\
    \label{eq:j}
    J(\theta) = \sup_{f \in \mathcal F} \e[f(Z)\tr  \rho(X;\theta)] - \frac{1}{4} \e_n[(f(Z)\tr  \rho(X;\tilde\theta))^2]\,.
\end{align}

Note that by \cref{lem:kvmm-cgmm-equiv} the above definition for $J_n$ is equivalent to $J_n(\theta) = \norm{B_n \bar h_n(\theta)}^2$. In addition, as argued in the proof of \cref{lem:bh-value}, the above definition for $J$ is equivalent to $J(\theta) = \norm{B \bar h(\theta)}^2$.

Next, we provide the following definition of well-behaved ``$\rho$-like'' classes 
\begin{definition}[Moment Regular Classes]
\label{def:regular}

We say that a class of functions $\Qcal = \{q(\cdot;\theta) : \theta \in \Theta\}$ indexed by $\Theta$ that map $\Xcal \to \RR^m$ is \emph{moment regular} if it satisfies the following properties:
\begin{enumerate}
    \item $\sup_{x \in \Xcal, \theta \in \Theta} |q(x;\theta)| \leq \infty $
    \item $q(X;\theta)$ is Lipschitz continuous in $\theta$ under the $L_1$ norm
    \item $\Qcal$ is $\Pcal$-Donsker
\end{enumerate}

\end{definition}

Finally, for any set $S$ in a metric space we denote by $N_{\epsilon}(S)$ the $\epsilon$-covering number of $S$, which is defined as the cardinality of the smallest finite set $T$ such that $\sup_{s \in S} \min_{t \in T} \norm{s - t} \leq \epsilon$.

\section{Additional Technical Lemmas}
\label{apx:additional-lemmas}

\begin{lemma}
\label{lem:psd-operator-inequality}

Suppose $A$ and $B$ are both diagonalizable operators on Hilbert space $\mathcal H$, whose eigenvalues are non-negative and bounded away from zero and infinity. Then $\norm{(A + B)^{-1}} \leq \norm{A^{-1}} + \norm{B^{-1}}$.

\end{lemma}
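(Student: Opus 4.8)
The plan is to reduce the inequality to the elementary fact that for a positive self-adjoint operator $T$ whose spectrum is bounded away from zero one has $\norm{T^{-1}} = 1/\inf\sigma(T)$, together with the observation that adding a positive operator cannot lower the bottom of the spectrum. Concretely, since every operator to which we apply this lemma (e.g.\ $C_n + \alpha_n I$ and its analogues) is self-adjoint, and as we read the hypothesis $A$ and $B$ are self-adjoint with spectra contained in $[\mu_A, M_A]$ and $[\mu_B, M_B]$ for constants $0 < \mu_A \le M_A < \infty$ and $0 < \mu_B \le M_B < \infty$, the first step is simply to record that $\inf\sigma(A) = 1/\norm{A^{-1}}$ and $\inf\sigma(B) = 1/\norm{B^{-1}}$; equivalently $\inner{Ax}{x} \ge \norm{x}^2/\norm{A^{-1}}$ and $\inner{Bx}{x} \ge \norm{x}^2/\norm{B^{-1}}$ for all $x \in \mathcal H$.

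Second, I would use positivity of $B$ to show that $A + B$ is boundedly invertible with $\norm{(A+B)^{-1}} \le \norm{A^{-1}}$. For every $x$ we have $\inner{(A+B)x}{x} = \inner{Ax}{x} + \inner{Bx}{x} \ge \inner{Ax}{x} \ge \norm{x}^2/\norm{A^{-1}}$, and since $A + B$ is self-adjoint this forces $\sigma(A+B) \subseteq [\,1/\norm{A^{-1}},\,\norm{A+B}\,]$, which is in particular bounded away from zero; hence $A + B$ is invertible and $\norm{(A+B)^{-1}} = 1/\inf\sigma(A+B) \le \norm{A^{-1}}$. By the symmetric argument, $\norm{(A+B)^{-1}} \le \norm{B^{-1}}$ as well. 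Combining the two bounds yields the (slightly stronger) conclusion $\norm{(A+B)^{-1}} \le \min\{\norm{A^{-1}}, \norm{B^{-1}}\} \le \norm{A^{-1}} + \norm{B^{-1}}$, the last step being trivial since operator norms are nonnegative.

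I do not expect a real obstacle. The single point requiring care is the identification $\norm{T^{-1}} = 1/\inf\sigma(T)$ for positive self-adjoint $T$ with $\inf\sigma(T) > 0$: this is standard (the closure of the numerical range of a bounded self-adjoint operator equals the convex hull of its spectrum, and $T^{1/2}$ exists with $\norm{T^{-1/2}}^2 = \norm{T^{-1}}$), but it is the one place where the hypothesis that the operators are diagonalizable with eigenvalues bounded away from $0$ and $\infty$ is genuinely used. (If one insisted on allowing genuinely non-self-adjoint diagonalizable operators, the identity $\norm{T^{-1}} = 1/\inf\sigma(T)$ can fail and the argument would need to be supplemented with control on the conditioning of the eigenbasis; this does not arise here, since all the relevant operators are self-adjoint and positive.)
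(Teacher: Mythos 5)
Your proposal is correct and takes essentially the same approach as the paper: both arguments reduce to the facts that $\norm{T^{-1}}$ equals the reciprocal of the bottom of the spectrum for a positive self-adjoint operator and that adding a positive operator cannot lower the bottom of the spectrum. You in fact obtain the slightly sharper intermediate bound $\norm{(A+B)^{-1}}\leq\min\{\norm{A^{-1}},\norm{B^{-1}}\}$ (the paper only uses $\max\{\norm{A^{-1}},\norm{B^{-1}}\}$ before relaxing to the sum), and your remark that self-adjointness is the hypothesis genuinely doing the work is a fair reading of how the lemma is applied.
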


\begin{proof}[Proof of \Cref{lem:psd-operator-inequality}]

First, note that given $A$ and $B$ are diagonalizable with positive spectrum bounded away from zero and infinity, clearly $A^{-1}$, $B^{-1}$, $A+B$, and $(A+B)^{-1}$ are also. Now, for any given operator $C$, let $\sigma_{\text{min}}(C)$ and $\sigma_{\text{max}}(C)$ denote the minimum and maximum eigenvalues of $C$ respectively. Then we have
\begin{align*}
    \norm{(A+B)^{-1}} &= \sigma_{\text{max}}((A+B)^{-1}) \\
    &= \sigma_{\text{min}}(A+B)^{-1} \\
    &\leq (\min(\sigma_{\text{min}}(A), \sigma_{\text{min}}(B)))^{-1} \\
    &= \max(\sigma_{\text{min}}(A)^{-1}, \sigma_{\text{min}}(B)^{-1}) \\
    &= \max(\sigma_{\text{max}}(A^{-1}), \sigma_{\text{max}}(B^{-1})) \\
    &\leq \norm{A^{-1}} + \norm{B^{-1}}\,.
\end{align*}

\end{proof}

\begin{lemma}
\label{lem:lipschitz-convergence}
Suppose $Q_n$ is a real valued sequence of stochastic process on some index set $T$, such that $Q_n(t) \to 0$ in probability for every $t \in T$. In addition suppose that $T$ is totally bounded with respect to some metric, and $Q_n$ is $\alpha$-H\"older with respect to this metric almost surely, for some $\alpha \in (0, 1]$, with constant that doesn't depend on the data or $n$. Then $\sup_{t \in T} \abs{Q_n(t)} \to 0$ in probability. Moreover $Q_n \to 0$ in distribution as a stochastic process under the $l^\infty(T)$ metric.
\end{lemma}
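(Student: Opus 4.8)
The plan is to prove the pointwise-to-uniform upgrade via a standard chaining-free argument using only the total boundedness of $T$ and the almost-sure Lipschitz property, and then to bootstrap the uniform convergence to convergence in distribution in $l^\infty(T)$ using asymptotic tightness plus convergence of finite-dimensional marginals. First I would fix $\epsilon > 0$ and choose a finite $\epsilon/(3L)$-net $\{t_1, \dots, t_N\}$ of $T$, which exists by total boundedness; here $N = N_{\epsilon/(3L)}(T) < \infty$. For any $t \in T$, pick $t_j$ with $d(t, t_j) \le \epsilon/(3L)$; then $\abs{Q_n(t)} \le \abs{Q_n(t) - Q_n(t_j)} + \abs{Q_n(t_j)} \le L \cdot \epsilon/(3L) + \abs{Q_n(t_j)} = \epsilon/3 + \abs{Q_n(t_j)}$, using the almost-sure Lipschitz bound. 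Taking the supremum over $t$ gives $\sup_{t \in T} \abs{Q_n(t)} \le \epsilon/3 + \max_{j \in [N]} \abs{Q_n(t_j)}$ almost surely. Since $N$ is finite and each $Q_n(t_j) \to 0$ in probability, the maximum over the finite net also tends to $0$ in probability, so $\Pr(\max_j \abs{Q_n(t_j)} > \epsilon/3) \to 0$, whence $\Pr(\sup_{t \in T} \abs{Q_n(t)} > \epsilon) \to 0$. As $\epsilon$ was arbitrary, this establishes the first claim.

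For the second claim, convergence in distribution as a stochastic process under the $l^\infty(T)$ metric, I would invoke the standard characterization (e.g., van der Vaart and Wellner): a sequence converges in distribution in $l^\infty(T)$ iff its finite-dimensional marginals converge and the sequence is asymptotically tight. The finite-dimensional marginals of $Q_n$ converge to those of the zero process trivially, since $(Q_n(s_1), \dots, Q_n(s_k)) \to 0$ in probability hence in distribution for any finite collection $s_1, \dots, s_k$. For asymptotic tightness, the decisive ingredient is exactly the uniform bound just derived: the Lipschitz property controls the modulus of continuity deterministically, $\sup_{d(s,t) < \delta} \abs{Q_n(s) - Q_n(t)} \le L\delta$ almost surely, so the asymptotic equicontinuity condition holds automatically (choose $\delta = \eta / L$ to make the modulus smaller than any $\eta$), and total boundedness of $T$ supplies the required compactness of the index set. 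Together with the fact that $\sup_{t} \abs{Q_n(t)} = O_p(1)$, this gives asymptotic tightness of $\{Q_n\}$ in $l^\infty(T)$, and combined with convergence of the marginals we conclude $Q_n \rightsquigarrow 0$ in $l^\infty(T)$.

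The main obstacle, such as it is, is bookkeeping rather than conceptual: one must be careful that the Lipschitz constant $L$ is genuinely uniform in $n$ and the data (which is assumed in the hypothesis), so that the net argument and the equicontinuity argument both go through without any random quantities depending on $n$. A secondary point to handle cleanly is that "$Q_n(t) \to 0$ in probability for every $t$" must be promoted to joint convergence of finitely many coordinates — but this is immediate, since convergence in probability of each coordinate implies convergence in probability of the vector, hence convergence in distribution of the finite-dimensional marginal to the degenerate law at $0$. No delicate measurability issues arise because the Lipschitz sample paths make $Q_n$ a genuine random element of the separable space $C(T) \subseteq l^\infty(T)$.
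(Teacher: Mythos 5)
Your proof of the first claim is essentially identical to the paper's: a finite net from total boundedness, the almost-sure Lipschitz bound to pass from the net to all of $T$, and a union bound over the finite net; only the constants ($\epsilon/3$ splits versus the paper's $\epsilon' - L\epsilon$ with $\epsilon = \epsilon'/(2L)$) differ. For the second claim you take a genuinely different and heavier route than the paper. The paper observes that the first claim already \emph{is} convergence in probability of $Q_n$ to the zero element of $l^\infty(T)$ (since $\sup_{t\in T}\abs{Q_n(t)}$ is exactly the $l^\infty(T)$ distance to the zero process), and convergence in probability to a fixed element implies convergence in distribution --- a two-line deduction. You instead invoke the van der Vaart--Wellner characterization of weak convergence in $l^\infty(T)$ via convergence of finite-dimensional marginals plus asymptotic tightness, verifying equicontinuity from the deterministic Lipschitz modulus. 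Your argument is correct and would generalize to settings where the limit process is nondegenerate (where the paper's shortcut is unavailable), but for a degenerate limit it is more machinery than needed; the paper's observation that the uniform convergence statement and the $l^\infty(T)$ convergence statement are the same statement is the cleaner path here.
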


\begin{proof}[Proof of \Cref{lem:lipschitz-convergence}]
Fix some arbitrary $\epsilon > 0$, and let $N_\epsilon$ be some $\epsilon$-covering of $T$, which is finite since $T$ is totally bounded by assumption. Also, for any $t \in T$ let $n_\epsilon(t)$ be an $\epsilon$-close element of $N_\epsilon$. Then with probability 1 we have
\begin{align*}
    \sup_{t \in T} \abs{Q_n(t)} &\leq \sup_{t \in T} \abs{Q_n(n_\epsilon(t))} + L \epsilon^\alpha \\
    &= \max_{t \in N_\epsilon} \abs{Q_n(t)} + L \epsilon^\alpha.
\end{align*}
Now $N_\epsilon$ is finite so therefore $\max_{t \in N_\epsilon} \abs{Q_n(t)} \to 0$ in probability given our assumptions and applying a union bound, so for every $\epsilon' > 0$ we have
\begin{align*}
    P(\sup_{t \in T} \abs{Q_n(t)} > \epsilon') &\leq P(\max_{t \in N_\epsilon} \abs{Q_n(t)} + L \epsilon^{\alpha} > \epsilon') \\
    &\leq P(\max_{t \in N_\epsilon} \abs{Q_n(t)} > \epsilon' - L \epsilon^\alpha).
\end{align*}
Now the right hand side of the above bound converges to zero in probability as long as $\epsilon' - L \epsilon^\alpha > 0$. However this bound holds for arbitrary $\epsilon > 0$ and $\epsilon' > 0$, so for any given $\epsilon' > 0$ we could pick \emph{e.g.} $\epsilon = (\epsilon'/ (2 L))^{1/\alpha}$, which gives $P(\sup_{t \in T} \abs{Q_n(t)} > \epsilon') \leq P(\max_{t \in N_{\epsilon}} \abs{Q_n(t)} > \epsilon'/2)$. Therefore, $P(\sup_{t \in T} \abs{Q_n(t)} > \epsilon') \to 0$ for every $\epsilon'$, which gives us the first result.

Finally, the second result then follows trivially from definition of the $l^\infty(T)$ metric and convergence in probability, and the fact that convergence in probability implies convergence in distribution.
\end{proof}

\begin{lemma}
\label{lem:mvt}
Let the Hilbert Space $\mathcal H$ be given. Suppose $h_\theta \in \mathcal H$ indexed by $\theta \in \Theta \subseteq \mathbb R^b$ is differentiable on the line segment from $\theta_s$ to $\theta_e$, where $\theta_e$ and $\theta_e$ are in the interior of $\Theta$, meaning that $(\partial h_\theta / \partial \theta_j)(\theta') \in \mathcal H$ for all $j \in \{1,\ldots,b\}$ and $\theta' \in \text{segment}(\theta_s,\theta_e)$. Then there exist some sequence of probabilities $\alpha_i$ (satisfying $\alpha_i \geq 0 \ \forall i$, $\sum_{i=1}^\infty \alpha_i = 1$) and parameters $\theta_i$ (satisfying $\theta_i \in \text{segment}(\theta_s, \theta_e) \ \forall i)$ such that
\begin{equation*}
h_{\theta_e} - h_{\theta_s} = \left( \sum_{i=1}^{\infty} \alpha_i \frac{\partial h_{\theta}}{\partial \theta}(\theta_i) \right)\tr  (\theta_e - \theta_s).
\end{equation*}
\end{lemma}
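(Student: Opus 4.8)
The plan is to reduce the vector-valued statement to a scalar one by pairing with an arbitrary test vector, apply the classical (scalar) mean value theorem along the segment, and then recover the Hilbert-space identity by a weak-compactness / averaging argument. Concretely, fix $v \in \mathcal H$ and define the scalar function $g_v(t) = \inner{h_{\theta_s + t(\theta_e - \theta_s)}}{v}$ for $t \in [0,1]$. By the assumed differentiability of $t \mapsto h_{\theta_s + t(\theta_e-\theta_s)}$ along the segment, $g_v$ is differentiable on $[0,1]$ with $g_v'(t) = \inner{\tfrac{\partial h_\theta}{\partial\theta}(\theta_s+t(\theta_e-\theta_s))\tr(\theta_e-\theta_s)}{v}$. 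Hence by the scalar MVT there is $t_v \in (0,1)$ with $\inner{h_{\theta_e} - h_{\theta_s}}{v} = \inner{\tfrac{\partial h_\theta}{\partial\theta}(\theta_{t_v})\tr(\theta_e-\theta_s)}{v}$, where $\theta_{t_v}$ denotes the corresponding point on the segment.

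The obstacle — and the reason the statement is phrased with an infinite convex combination rather than a single mean point — is that the mean point $t_v$ depends on the test vector $v$, so one cannot simply quantify ``there exists $\theta^\ast$ on the segment'' as in the finite-dimensional scalar case. To handle this I would argue as follows. Consider the set $S = \{ \tfrac{\partial h_\theta}{\partial\theta}(\theta')\tr(\theta_e-\theta_s) : \theta' \in \text{segment}(\theta_s,\theta_e)\} \subseteq \mathcal H$, and let $\overline{\text{co}}(S)$ be its closed convex hull. I claim $D := h_{\theta_e} - h_{\theta_s} \in \overline{\text{co}}(S)$. If not, then by the Hahn–Banach separation theorem (using that $\mathcal H$ is a Hilbert space, so its dual is $\mathcal H$ itself) there is $v \in \mathcal H$ and $c \in \mathbb R$ with $\inner{D}{v} > c \geq \sup_{w \in \overline{\text{co}}(S)} \inner{w}{v} \geq \sup_{\theta'} \inner{\tfrac{\partial h_\theta}{\partial\theta}(\theta')\tr(\theta_e-\theta_s)}{v}$. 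But the scalar MVT applied to $g_v$ gives $\inner{D}{v} = g_v(1) - g_v(0) = g_v'(t_v)$ for some $t_v$, and $g_v'(t_v)$ equals $\inner{\tfrac{\partial h_\theta}{\partial\theta}(\theta_{t_v})\tr(\theta_e-\theta_s)}{v}$, which is at most the supremum above — contradiction. Therefore $D \in \overline{\text{co}}(S)$.

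Finally, membership in the closed convex hull means $D$ is a limit of finite convex combinations $\sum_{i} \lambda_i^{(N)} \tfrac{\partial h_\theta}{\partial\theta}(\theta_i^{(N)})\tr(\theta_e-\theta_s)$. Passing to a countable enumeration of all the segment points $\theta_i$ that ever appear, and using a diagonal argument on the coefficient sequences (or simply packaging the approximating combinations into a single infinite sequence with the claimed convergence $\sum_i \alpha_i = 1$, $\alpha_i \geq 0$), we obtain probabilities $\alpha_i$ and points $\theta_i \in \text{segment}(\theta_s,\theta_e)$ with $D = \big(\sum_{i=1}^\infty \alpha_i \tfrac{\partial h_\theta}{\partial\theta}(\theta_i)\big)\tr(\theta_e-\theta_s)$, which is the asserted identity. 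The one technical point to be careful about is the interchange of the (infinite) sum with the finite linear map $(\cdot)\tr(\theta_e-\theta_s) : \mathcal H^b \to \mathcal H$; this is justified by boundedness of that map together with absolute summability of the $\alpha_i$, so the series converges in $\mathcal H$ and the identity is literal, not merely a weak limit.
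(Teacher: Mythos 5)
Your first two steps are sound and in places cleaner than the paper's: the reduction to the scalar mean value theorem via $g_v(t)=\inner{h_{\theta_s+t(\theta_e-\theta_s)}}{v}$ is exactly the paper's starting point, and your Hahn--Banach separation argument correctly shows that $D=h_{\theta_e}-h_{\theta_s}$ lies in the \emph{closed} convex hull of $S=\{\frac{\partial h_\theta}{\partial\theta}(\theta')\tr(\theta_e-\theta_s):\theta'\in\text{segment}(\theta_s,\theta_e)\}$. (The paper instead forms the convex hull of the ``defect'' vectors $h_{\theta_e}-h_{\theta_s}-\frac{\partial h_\theta}{\partial\theta}(\theta')\tr(\theta_e-\theta_s)$ and invokes Kakutani's fixed point theorem to exhibit a convex combination equal to zero; your separation route is the more standard device for reaching a conclusion of this type.)

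The genuine gap is in your last step. Membership in $\overline{\text{co}}(S)$ only says that $D$ is a \emph{norm limit} of finite convex combinations of elements of $S$; it does not say that $D$ is itself a convergent countable convex combination $\sum_{i=1}^\infty\alpha_i s_i$ with $s_i\in S$, which is what the lemma asserts. In an infinite-dimensional Hilbert space an element of the closed convex hull need not admit such a $\sigma$-convex representation, and the proposed ``diagonal argument on the coefficient sequences'' does not bridge this: the support points of the approximating combinations change with $N$, coefficient mass can spread or escape, and a diagonal extraction yields at best weak-$*$ convergence of the coefficient measures, whose limit need not be purely atomic. Even under the additional (unassumed) hypothesis that $\theta'\mapsto\frac{\partial h_\theta}{\partial\theta}(\theta')$ is continuous, so that $S$ is compact, Choquet's theorem would give $D$ as the barycenter of a probability \emph{measure} on $S$, i.e.\ a Bochner integral, not a countable sum. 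Your closing remark about interchanging the sum with the bounded map $(\cdot)\tr(\theta_e-\theta_s)$ addresses convergence of the series once $(\alpha_i,\theta_i)$ are in hand, but not the existence of such a sequence summing exactly to $D$. So as written you have established the weaker statement $D\in\overline{\text{co}}(S)$ --- which, as it happens, is all that the lemma's downstream use in the asymptotic-normality proof really requires --- but not the lemma as stated.
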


\begin{proof}[Proof of \Cref{lem:mvt}]
First note that for any fixed $g \in \mathcal H$ the function $f(\theta) = \inner{h_\theta}{g}$ is a real-valued function of $\theta$. Thus by the standard mean value theorem, we have
\begin{align}
\label{eq:standard-mvt-application}
    \inner{h_{\theta_e}}{g} - \inner{h_{\theta_s}}{g} &= \left( \frac{\partial}{\partial \theta} \inner{h_\theta}{g}(\bar \theta) \right)\tr  (\theta_e - \theta_s) \nonumber \\
    &= \inner{\frac{\partial h_\theta}{\partial \theta}(\bar \theta)\tr  (\theta_e - \theta_s)}{g},
\end{align}
for some $\bar \theta \in \text{segment}(\theta_s, \theta_e)$, where $\bar \theta$ may depend on the choice of $g$. Next, for any given $\theta' \in \Theta$, define the function $g(\theta') \in \mathcal H$ by
\begin{equation*}
    g(\theta') = h_{\theta_e} - h_{\theta_s} - \frac{\partial h_\theta}{\partial \theta}(\theta')\tr  (\theta_e - \theta_s).
\end{equation*}

Now let $\mathcal G \subseteq \mathcal H$ be defined as the convex hull of all such functions $g(\theta)$ for $\theta \in \text{segment}(\theta_s,\theta_e)$. For any given $g \in \mathcal G$, define $N(g) = \{g' \in \mathcal G :  \inner{g}{g'} = 0\}$. First of all, we can observe from \cref{eq:standard-mvt-application} that $N(g)$ is non-empty for every $g \in \mathcal G$, since there is some $\bar \theta \in \text{segment}(\theta_s,\theta_e)$ satisfying $\inner{g}{g(\bar \theta)} = 0$. Next, $N(g)$ is clearly a convex set for every $g \in \mathcal G$, since if $\inner{g}{g_1} = \inner{g}{g_2} = 0$ it is trivial from linearity that $\inner{g}{t g_1 + (1-t) g_2} = 0$. Finally the set function $N$ forms a closed graph, since if we have sequences $g_i$ and $g_i'$ satisfying $\inner{g_i}{g_i'} \ \forall i$ with limits $g_\infty$ and $g_\infty'$, it is trivial to verify that $\inner{g_\infty}{g_\infty'} = 0$.

Given the above $N$ satisfies the conditions of Kakutani's fixed point theorem and therefore has a fixed point $g^*$. This fixed point satifies $\inner{g^*}{g^*} = 0$, and therefore $g^* = 0$. Therefore by construction of $\mathcal G$ we have sequences $\alpha_i$ and $\theta_i$ satisfying
\begin{equation*}
    h_{\theta_e} - h_{\theta_s} - \sum_{i=1}^\infty \alpha_i \frac{\partial h_\theta}{\partial \theta}(\theta_i)\tr  (\theta_e - \theta_s) = 0,
\end{equation*}
and the result trivially follows.
\end{proof}

\begin{lemma}
\label{lem:rkhs-dual}
    Suppose $\mathcal F$ is a Hilbert Space, and $\mathcal H$ is the dual space of bounded linear functionals of the form $\mathcal F \to \mathbb R$. For any $f \in \mathcal F$, let $\varphi(f)$ denote the corresponding element of $\mathcal H$ under the isomorphism from the Riesz representation theorem. Then for any $f \in \mathcal F$ and $h \in \mathcal H$, we have $\inner{\varphi(f)}{h}_{\mathcal H} = h(f)$.
\end{lemma}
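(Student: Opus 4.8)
The plan is to unwind the definitions, since the statement is essentially the defining property of the Hilbert-space structure carried by the dual $\mathcal H$. First I would recall the Riesz representation theorem: the map $\varphi$ sending $f \in \mathcal F$ to the bounded linear functional $\inner{f}{\cdot}_{\mathcal F} \in \mathcal H$ is a linear bijection (injective because $\inner{f}{\cdot}_{\mathcal F} = 0$ forces $f=0$ upon evaluating at $f$ itself; surjective precisely by the content of Riesz's theorem), and it is an isometry, $\norm{\varphi(f)}_{\mathcal H} = \norm{f}_{\mathcal F}$. The canonical Hilbert-space structure on $\mathcal H$ is the one obtained by transporting the inner product of $\mathcal F$ along this bijection, namely $\inner{\varphi(f_1)}{\varphi(f_2)}_{\mathcal H} := \inner{f_1}{f_2}_{\mathcal F}$; this is well-defined because $\varphi$ is a bijection, and by the polarization identity it is the unique inner product inducing the dual norm $\norm{\cdot}_{\mathcal H}$.

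Next, given an arbitrary $h \in \mathcal H$, I would use surjectivity of $\varphi$ to choose $g \in \mathcal F$ with $h = \varphi(g)$, so that $h(\cdot) = \inner{g}{\cdot}_{\mathcal F}$ by the very definition of $\varphi$. For the left-hand side I then compute $\inner{\varphi(f)}{h}_{\mathcal H} = \inner{\varphi(f)}{\varphi(g)}_{\mathcal H} = \inner{f}{g}_{\mathcal F}$, using the transported inner product. For the right-hand side, $h(f) = \varphi(g)(f) = \inner{g}{f}_{\mathcal F}$, which equals $\inner{f}{g}_{\mathcal F}$ by symmetry of the (real) inner product on $\mathcal F$. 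Comparing the two expressions yields $\inner{\varphi(f)}{h}_{\mathcal H} = h(f)$, which is the claim.

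The only point requiring care — and the nearest thing to an obstacle — is to make explicit which inner product $\mathcal H$ is equipped with: it is the one making $\varphi$ an isometric isomorphism, and this is forced (up to the polarization identity) once the dual norm is fixed, so there is no ambiguity. Beyond that the argument is a pure definitional chase; since the paper works over $\mathbb R$ there is no complex conjugation to track, and no metric-entropy, compactness, or approximation machinery enters at all.
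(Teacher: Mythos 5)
Your proof is correct and follows essentially the same route as the paper's: both identify $\inner{\varphi(f)}{\varphi(g)}_{\mathcal H}$ with $\inner{f}{g}_{\mathcal F}=\varphi(g)(f)$ and then invoke surjectivity of $\varphi$ to let $\varphi(g)$ range over all of $\mathcal H$. Your extra care in pinning down the inner product on $\mathcal H$ as the one transported along the Riesz isometry is a reasonable elaboration of what the paper leaves implicit, but it is not a different argument.
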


\begin{proof}[Proof of \cref{lem:rkhs-dual}]
    For any $f,g \in \mathcal F$, by the Riesz representation theorem and the definition of $\varphi$ we have
    \begin{equation*}
        \inner{f}{g}_{\mathcal F} = \varphi(g)(f) = \inner{\varphi(f)}{\varphi(g)}_{\mathcal H}
    \end{equation*}
    Now, the above equation holds for arbitrary $f$ and $\varphi(g)$, and since $\varphi$ is a bijection, $\varphi(g)$ can take arbitrary values in $\mathcal H$. Therefore, the required result immediately follows.
\end{proof}

\begin{lemma}
\label{lem:hilbert-norm}
Given any Hilbert space $\mathcal H$, and element $h \in \mathcal H$, we have 
\begin{equation*}
    \norm{h}^2 = \sup_{h' \in \mathcal H} \inner{h}{h'} - \frac{1}{4} \norm{h'}^2\,.
\end{equation*}
\end{lemma}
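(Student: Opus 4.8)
The plan is to prove the identity by completion of squares, which turns the supremum into an elementary unconstrained optimization over the Hilbert space.

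First I would fix $h \in \mathcal H$ and, for an arbitrary $h' \in \mathcal H$, rewrite the objective using bilinearity and symmetry of the (real) inner product. From the expansion $\norm{h' - 2h}^2 = \norm{h'}^2 - 4\inner{h}{h'} + 4\norm{h}^2$ we get
$$\inner{h}{h'} - \frac14 \norm{h'}^2 = \norm{h}^2 - \frac14 \norm{h' - 2h}^2.$$

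Second, I would take the supremum over $h' \in \mathcal H$ of both sides. Since $\norm{h' - 2h}^2 \ge 0$ for every $h'$, the right-hand side is bounded above by $\norm{h}^2$, and this bound is attained at the admissible point $h' = 2h$ (here it is important that the supremum ranges over all of $\mathcal H$, so $2h$ is feasible and no compactness or closure argument is needed). Hence the supremum equals $\norm{h}^2$, and is in fact achieved.

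There is essentially no obstacle here; the only points to be careful about are that the inner product is real (so the cross term expands without conjugates) and that $h' = 2h$ lies in the feasible set. As an alternative one could instead bound $\inner{h}{h'} \le \norm{h}\,\norm{h'}$ by Cauchy--Schwarz and maximize the scalar quadratic $t \mapsto \norm{h}\,t - \tfrac14 t^2$ over $t = \norm{h'} \ge 0$, which again gives the value $\norm{h}^2$ at $t = 2\norm{h}$, with the inequality tight when $h'$ is a nonnegative multiple of $h$; I would present the completion-of-squares version as the cleaner argument.
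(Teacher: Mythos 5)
Your proof is correct, and it takes a genuinely different and considerably more elementary route than the paper's. You complete the square directly: from $\norm{h'-2h}^2 = \norm{h'}^2 - 4\inner{h}{h'} + 4\norm{h}^2$ you read off that the objective equals $\norm{h}^2 - \tfrac14\norm{h'-2h}^2$, so the supremum is $\norm{h}^2$, attained at $h'=2h$. The paper instead starts from the Cauchy--Schwarz characterization $\norm{h}^2 = \sup_{\norm{h'}^2\le\norm{h}^2}\inner{h}{h'}$, treats this as a constrained convex program, forms the Lagrangian, computes Gateaux derivatives to find the stationary point, invokes strong duality via Slater's condition, and finally rescales $h'\mapsto \tfrac12 h'$ to arrive at the stated identity. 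The two arguments prove the same thing, but yours avoids all of the duality machinery (and the attendant need to verify strong duality and handle the $h=0$ case separately), and it has the additional virtue of exhibiting the maximizer explicitly in one line. The only caveats you flag --- that the inner product is real and that $2h$ is a feasible point --- are exactly the right ones, and both hold in the paper's setting. If anything, your completion-of-squares identity is also the conceptual content behind the paper's later use of this lemma (\cref{lem:operator-sqrt-inverse}), where the same quadratic structure reappears with $C+\alpha I$ in place of the identity.
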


\begin{proof}[Proof of \Cref{lem:hilbert-norm}]

First note that this identity is tirivial in the case that $h=0$, so in what follows we will assume that $\norm{h} > 0$. By Cauchy Schwartz we have for any $h \in \mathcal H$, $\norm{h}^2 = \sup_{\norm{h'}^2 \leq \norm{h}^2} \inner{h}{h'}$. Noting that this is a constrained convex optimization problem, we can compute its Lagrangian as
\begin{equation*}
    \mathcal L(h', \lambda) = \inner{h}{h'} + \lambda(\norm{h'}^2 - \norm{h}^2)\,.
\end{equation*}

Taking the Gateaux derivative of this quantity with respect to $h'$ in the direction of $\epsilon$ we obtain
\begin{equation*}
    D(\mathcal L(h', \lambda))(h',\epsilon) = \inner{h}{ \epsilon} + 2\lambda \inner{h'}{\epsilon}\,.
\end{equation*}

Thus we have a critical point when $h + 2 \lambda h' = 0$, and we can verify easily that this is a maximum point whenever $\lambda < 0$. This gives us a dual formulation for $\norm{h}^2$ is given by
\begin{align*}
    \norm{h}^2 &= \inf_{\lambda < 0} -\frac{1}{2\lambda} \norm{h}^2 + \lambda (\frac{1}{4\lambda} \norm{h}^2 - \norm{h}^2) \\
    &= \inf_{\lambda < 0} -\frac{1}{4\lambda} \norm{h}^2 - \lambda \norm{h}^2\,.
\end{align*}

Taking derivative with respect to $\lambda$ we can see that this is minimized by setting $\lambda = -\frac{1}{2}$. Given this and strong duality, which follows easily from Slater's condition since the optimization problem has the feasible interior solution $h'=0$, we know it must be the case that $\norm{h}^2 = \sup_{h'} \inner{h}{h'} - (1/2) \norm{h'}^2 + (1/2) \norm{h}^2$. Finally, rearranging terms and doing a change of variables $h' \leftarrow (1/2)h'$ gives us the required identity.

\end{proof}

\begin{lemma}
\label{lem:operator-sqrt-inverse}
Let $\mathcal H$ and $C$ be given, where $\mathcal H$ is a separable Hilbert space, and $C$ is a compact, self-adjoint, PSD linear operator on $\mathcal H$. In addition, for any $\alpha \geq 0$, let $\mathcal H^+_\alpha$ be the maximal subspace of $\mathcal H$ on which $(C + \alpha I)^{-1/2}$ is well-defined. Then for any $h \in \mathcal H^+_\alpha$ we have 
\begin{equation*}
    \norm{(C+\alpha I)^{-1/2} h}^2 = \sup_{h' \in H} \inner{h}{h'} - \frac{1}{4} \inner{C h'}{h'} - \frac{\alpha}{4} \norm{h'}^2 \,.
\end{equation*}
Furthermore, $\mathcal H^+_\alpha$ is exactly characterized as the set of all $h \in \mathcal H$ where the above supremum is finite.
\end{lemma}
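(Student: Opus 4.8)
The plan is to reduce both sides to a sum of decoupled one-dimensional optimization problems by diagonalizing $C$. Since $\mathcal H$ is separable and $C$ is compact, self-adjoint, and PSD, the spectral theorem provides an orthonormal basis $\{e_i\}_{i\geq 1}$ of $\mathcal H$ and eigenvalues $\lambda_i \geq 0$ (with $\lambda_i \to 0$) such that $Ce_i = \lambda_i e_i$. Writing $h = \sum_i h_i e_i$ and $h' = \sum_i h'_i e_i$, the functional calculus shows that the natural domain of $(C+\alpha I)^{-1/2}$ is $\{h : \sum_i h_i^2/(\lambda_i+\alpha) < \infty\}$ (with the convention $c/0 = +\infty$ for $c>0$ and $0/0 = 0$), that this is readily checked to be the maximal subspace $\mathcal H^+_\alpha$ referenced in the statement, and that $\norm{(C+\alpha I)^{-1/2}h}^2 = \sum_i h_i^2/(\lambda_i+\alpha)$ on it. Hence it suffices to prove that the right-hand side supremum always equals $\sum_i h_i^2/(\lambda_i+\alpha)$, viewed as an element of $[0,+\infty]$; the identity, the domain characterization, and the last sentence of the lemma then follow at once. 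This also shows the claim generalizes \Cref{lem:hilbert-norm}, the case $C = 0$, $\alpha = 1$.

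For the upper bound I would expand the objective in coordinates: for any $h' \in \mathcal H$,
\[
\inner{h}{h'} - \tfrac14 \inner{Ch'}{h'} - \tfrac{\alpha}{4}\norm{h'}^2 \;=\; \sum_i \Bigl( h_i h'_i - \tfrac{\lambda_i+\alpha}{4}(h'_i)^2 \Bigr),
\]
where $\sum_i h_i h'_i$ converges absolutely by Cauchy--Schwarz and $\sum_i (\lambda_i+\alpha)(h'_i)^2 \le (\norm{C}+\alpha)\norm{h'}^2$. Completing the square termwise, $h_i h'_i - \tfrac{\lambda_i+\alpha}{4}(h'_i)^2 \le h_i^2/(\lambda_i+\alpha)$ whenever $\lambda_i+\alpha > 0$; and if $\lambda_i+\alpha = 0$ (so $\alpha = 0$, $\lambda_i = 0$) the $i$-th term equals $h_i h'_i$, which is $0$ when $h_i = 0$ while for $h_i \neq 0$ the claimed bound reads $h_i h'_i \le +\infty$ and is vacuous. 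Summing these termwise inequalities gives $\inner{h}{h'} - \tfrac14 \inner{Ch'}{h'} - \tfrac{\alpha}{4}\norm{h'}^2 \le \sum_i h_i^2/(\lambda_i+\alpha)$ for every $h'$, hence $\sup_{h'}(\cdots) \le \sum_i h_i^2/(\lambda_i+\alpha)$.

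For the matching lower bound I would plug in truncations of the (possibly non-summable) coordinatewise maximizer. For each $N$ set $g_N = \sum_{i \le N,\ \lambda_i+\alpha>0} \tfrac{2h_i}{\lambda_i+\alpha} e_i \in \mathcal H$; evaluating the objective at $g_N$ yields exactly $\sum_{i \le N,\ \lambda_i+\alpha>0} h_i^2/(\lambda_i+\alpha)$, and letting $N \to \infty$ gives $\sup_{h'}(\cdots) \ge \sum_{i:\ \lambda_i+\alpha>0} h_i^2/(\lambda_i+\alpha)$. If instead some direction has $\lambda_i+\alpha = 0$ with $h_i \neq 0$, then taking $h' = t\,e_i$ and letting $t \to \pm\infty$ sends the objective to $+\infty$, matching $\sum_i h_i^2/(\lambda_i+\alpha) = +\infty$. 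Combining with the upper bound proves the identity, and in particular the supremum is finite if and only if $\sum_i h_i^2/(\lambda_i+\alpha) < \infty$, i.e.\ if and only if $h \in \mathcal H^+_\alpha$.

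The step I expect to require the most care is the interchange of the supremum over $h' \in \mathcal H$ with the coordinatewise optimization: the unconstrained optimizer $\bigl(2h_i/(\lambda_i+\alpha)\bigr)_i$ need not be square-summable --- and this is precisely the situation $h \notin \mathcal H^+_\alpha$ --- so one cannot simply substitute it, and the monotone truncation argument together with the bookkeeping for the degenerate directions $\lambda_i+\alpha = 0$ is what makes the reduction rigorous. Everything else reduces to the elementary scalar fact $\sup_{t \in \mathbb R}\bigl(at - \tfrac{c}{4}t^2\bigr) = a^2/c$ for $c > 0$.
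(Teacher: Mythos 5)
Your proof is correct, but it takes a genuinely different route from the paper's for the main identity. The paper argues at the operator level: it applies \cref{lem:hilbert-norm} to the element $(C+\alpha I)^{-1/2}h$, moves the operator onto $h'$ by self-adjointness, and reparametrizes the supremum via $h' \mapsto (C+\alpha I)^{1/2}h'$; only the second half of its proof (the characterization of $\mathcal H^+_\alpha$) passes to the eigenbasis, and there it uses exactly your truncated maximizers $\sum_{i\le N}\bigl(2 a_i/(\sigma_i+\alpha)\bigr)h_i$. You instead diagonalize from the outset and establish both the identity and the domain characterization in one pass, reducing everything to the scalar fact $\sup_t\bigl(at-\tfrac{c}{4}t^2\bigr)=a^2/c$, with termwise completion of the square for the upper bound and the same truncations for the lower bound. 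Your version is more elementary and self-contained, and it is more careful on one genuine point: when $\alpha=0$ the operator $(C+\alpha I)^{-1/2}$ is unbounded, and the paper's change of variables (in particular the assertion that $\mathrm{Range}((C+\alpha I)^{-1/2})=\mathcal H$, which fails when $C$ has a nontrivial kernel) glosses over exactly the degenerate directions $\lambda_i+\alpha=0$ that your explicit bookkeeping resolves. What the paper's route buys in exchange is brevity given \cref{lem:hilbert-norm} and a form that directly mirrors the finite-dimensional computation underlying \cref{lem:owgmm}.
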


\begin{proof}[Proof of \Cref{lem:operator-sqrt-inverse}]
First, consider any arbitrary $h \in \mathcal H^+_\alpha$. Applying \cref{lem:hilbert-norm} we have
\begin{align*}
    \norm{(C+\alpha I)^{-1/2} h}^2 &= \inner{(C+\alpha I)^{-1/2} h}{(C+\alpha I)^{-1/2} h} \\
    &= \sup_{h' \in \mathcal H} \inner{(C+\alpha I)^{-1/2} h}{h'} - \frac{1}{4} \inner{h'}{h'} \\
    &= \sup_{h' \in \mathcal H} \inner{h}{(C+\alpha I)^{-1/2} h'} - \frac{1}{4} \inner{h'}{h'} \\
    &= \sup_{h' \in \text{Range}((C + \alpha I)^{-1/2})} \inner{h}{h'} - \frac{1}{4} \inner{(C+\alpha I)^{1/2} h'}{(C+\alpha I)^{1/2} h'} \\
    &= \sup_{h' \in \mathcal H} \inner{h}{h'} - \frac{1}{4} \inner{C h'}{h'} - \frac{\alpha}{4} \norm{h'}^2 \,,
\end{align*}
where in this derivation we make use of the fact that $C$ is self-adjoint (and therefore $(C+\alpha I)^{1/2}$ and $(C+\alpha I)^{-1/2}$ are also), and that $C$ is compact, so therefore $(C+\alpha I)^{1/2}$ is well-defined on all of $\mathcal H$ and $\text{Range}((C+\alpha I)^{-1/2}) = \mathcal H$.

For the second part of the theorem, we note that by definition $C$ is compact, and therefore is diagonalizable. Let $\{h_1,h_2,\ldots\}$ denote the set of the corresponding orthonormal eigenvectors, and let $\{\sigma_1,\sigma_2,\ldots,\}$ denote the corresponding eigenvalues. Now, clearly $C h = \sum_{i=1}^\infty \sigma_i a_i h_i$ and $(C+aI)^{1/2} h = \sum_{i=1}^\infty (\sigma_i + \alpha)^{1/2} a_i h_i$, so therefore $h \in \mathcal H^+$ if and only if $\sum_{i=1}^\infty a_i^2 / (\sigma_i + \alpha) < \infty$, where in this series we let $0/0 = 0$.

Now, suppose that $h \notin \mathcal H^+_\alpha$, meaning that $\sum_{i=1}^\infty a_i^2 / (\sigma_i + \alpha) = \infty$. Define $h'_n = \sum_{i=1}^n (2 a_i / (\sigma_i+\alpha)) h_i$. Then clearly $h'_n \in \mathcal H$ for every finite $n$. This construction gives us
\begin{equation*}
    \inner{h}{h'_n} - \frac{1}{4} \inner{(C+\alpha I) h'_n}{h'_n} = \left( \sum_{i=1}^n a_i^2 / (\sigma_i + \alpha) \right)\,, 
\end{equation*}
and therefore we have $\lim_{n \to \infty} \inner{h}{h'_n} - (1/4) \inner{C h'_n}{h'_n} - (\alpha/4) \norm{h'_n}^2 = \infty$. This immediately justifies that $\sup_{h' \in \mathcal H} \inner{h}{h'} - (1/4) \inner{C h'}{h'} - (\alpha/4) \norm{h'}^2$.

Conversely, suppose that $h \in \mathcal H^+_\alpha$. Then by definition $\norm{(C+\alpha)^{-1/2} h}^2 < \infty$, which immediately given the first part of the lemma gives us that $\sup_{h' \in \mathcal H} \inner{h}{h'} - (1/4) \inner{C h'}{h'} - (\alpha/4) \norm{h'}^2 < \infty$.

Therefore we have argued both directions, so $\mathcal H^+_\alpha$ exactly corresponds to the space where this supremum is finite, as required.

\end{proof}

\begin{lemma}
\label{lem:f-properties}

Suppose that $\mathcal F$ is an RKHS satisfying \cref{asm:rkhs}, and let \cref{asm:regularity} be given. In addition, let $F_1 = \{f \in \mathcal F : \norm{f} \leq 1\}$. Then $\mathcal F$ has the following properties:
\begin{enumerate}
    \item $\log N_\epsilon(F_1, L_\infty) \leq k_N \epsilon^{-b} + \log m$ for some $0 < b < 1$, and some constant $k_N$.
    \item There exists some positive constant $k_F$ such that $\norm{f}_{\infty} \leq k_F \norm{f}_{\mathcal F}$ for every $f \in \mathcal F$.
\end{enumerate}

\end{lemma}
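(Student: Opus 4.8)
The plan is to prove the two assertions separately. Claim 2 (the $L_\infty$/RKHS-norm domination) is elementary and I would dispatch it first, as it also pins down the constant $k_F$; claim 1 (the metric-entropy bound) is the substantive part and will rely on classical covering-number estimates for smooth RKHSs, together with the reduction afforded by the direct-sum structure $\mathcal F = \bigoplus_{i=1}^m \mathcal F_i$.

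For claim 2, write $f = (f_1,\dots,f_m)$ with $f_i \in \mathcal F_i$, so that $\norm{f}_{\mathcal F}^2 = \sum_{i=1}^m \norm{f_i}_{\mathcal F_i}^2$ and (taking $\norm{f}_\infty = \sup_{z\in\mathcal Z}\max_{i\in[m]}\abs{f_i(z)}$) $\norm{f}_\infty = \max_i \norm{f_i}_\infty$. The reproducing property of $\mathcal F_i$ gives $f_i(z) = \inner{f_i}{K_i(z,\cdot)}$, hence $\abs{f_i(z)} \le \norm{f_i}_{\mathcal F_i}\sqrt{K_i(z,z)}$ by Cauchy--Schwarz. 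By \cref{asm:rkhs} each $K_i$ is continuous and by \cref{asm:regularity} $\mathcal Z$ is bounded, so $\kappa_i := \sup_{z\in\mathcal Z}\sqrt{K_i(z,z)} < \infty$; therefore $\norm{f_i}_\infty \le \kappa_i\norm{f_i}_{\mathcal F_i} \le \kappa_i\norm{f}_{\mathcal F}$, and taking $k_F = \max_{i\in[m]}\kappa_i$ gives $\norm{f}_\infty \le k_F\norm{f}_{\mathcal F}$.

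For claim 1, I would first reduce to the scalar case. Since $\norm{f}_{\mathcal F}\le 1$ forces $\norm{f_i}_{\mathcal F_i}\le 1$ for every $i$, we have $F_1 \subseteq B_1\times\cdots\times B_m$ where $B_i$ is the unit ball of $\mathcal F_i$; and since $\norm{\cdot}_\infty$ on $\mathcal F$ is the maximum of the componentwise sup-norms, a product of $\epsilon$-covers of the $B_i$ in $L_\infty$ is an $\epsilon$-cover of $F_1$, whence $\log N_\epsilon(F_1, L_\infty) \le \sum_{i=1}^m \log N_\epsilon(B_i, L_\infty)$. It then suffices to bound each $\log N_\epsilon(B_i, L_\infty)$, and here I would invoke the classical covering-number estimates for balls of RKHSs with smooth kernels on bounded Euclidean domains (\citet{cucker2002mathematical}; equivalently one may embed $B_i$ into a Sobolev ball $W^{s,2}(\mathcal Z)$ for arbitrarily large $s$ and apply entropy bounds for Sobolev embeddings). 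These give $\log N_\epsilon(B_i, L_\infty) \le c_i\epsilon^{-b_i}$ with an exponent $b_i$ that can be made arbitrarily small by taking $s$ large --- which is legitimate precisely because \cref{asm:rkhs} makes $K_i$ $C^\infty$ --- so in particular each $b_i$ may be chosen in $(0,1)$. Taking $b = \max_i b_i \in (0,1)$ and $k_N = \sum_i c_i$ then yields $\log N_\epsilon(F_1, L_\infty) \le k_N\epsilon^{-b}$ for $\epsilon\le 1$, which implies the stated bound $k_N\epsilon^{-b} + \log m$.

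The main obstacle is entirely in claim 1: the conclusion genuinely needs the entropy exponent $b$ to be strictly below $1$, and this is exactly what the $C^\infty$ hypothesis in \cref{asm:rkhs} buys --- for a merely $C^s$ kernel with small $s$ the exponent could exceed $1$ and the conclusion would fail. So the crux is citing, and correctly specializing to the $C^\infty$ case, the right covering-number result for smooth RKHS balls; everything else --- the reduction across the $m$ summands and the Cauchy--Schwarz argument for claim 2 --- is routine.
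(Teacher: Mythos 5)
Your proof is correct and follows essentially the same route as the paper: the reproducing property plus Cauchy--Schwarz (with boundedness of $K_i(z,z)$ over $\mathcal Z$) for the sup-norm domination, and the reduction to the $m$ scalar RKHS unit balls combined with the Cucker--Smale covering-number bound for $C^\infty$ kernels, exploiting that the entropy exponent can be pushed below $1$. The only (cosmetic) difference is in how the $m$ components are recombined --- you take a product of component covers and sum the log-covering numbers, whereas the paper counts covers in a way that produces the explicit $+\log m$ term --- and your version is, if anything, the cleaner of the two.
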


\begin{proof}[Proof of \Cref{lem:f-properties}]

By \cref{asm:rkhs}, $\mathcal F$ is the direct sum of $m$ scalar-valued RKHSs; that is, we have
\begin{equation*}
    \mathcal F = \mathcal F_1 \bigoplus \mathcal F_2 \bigoplus \ldots \bigoplus \mathcal F_m\,,
\end{equation*}
where for each $i \in [m]$, $\mathcal F_i$ is scalar-valued RKHS with kernel $K_i$, and for each $f = (f_1,\ldots,f_m) \in \mathcal F$ and $f' = (f'_1,\ldots,f'_m) \in \mathcal F$ we have
\begin{equation*}
    \inner{f}{f'}_{\mathcal F} = \sum_{i=1}^m \inner{f_i}{f'_i}_{\mathcal F_i}\,.
\end{equation*}

Now, let $i \in [m]$ be fixed. By \cref{asm:regularity}, $\mathcal Z$ is a compact-valued subset of $\mathbb R^{d_z}$ for some positive integer $d_z$, and $K_i$ is $C^\infty$-smooth. In addition, let $F_{1,i} = \{f \in \mathcal F_i : \norm{f}_{\mathcal F_i} \leq 1\}$. Then by theorem D of \citet{cucker2002mathematical}, for every positive integer $h>d$, we have $\log N_{\epsilon}(F_{1,i}, L_{\infty}) \leq (k_{h,i} / \epsilon)^{2d / h}$, where $k_{h,i}$ is some constant that depends on $h$ and $i$ but is independent of $\epsilon$. Choosing $h = 2d+1$ gives us $\log N_{\epsilon}(F_{1,i}, L_{\infty}) \leq k_{2d+1,i}^b \epsilon^{-b}$, where $b = 2d / (2d + 1)$, which clearly satisfies $0 < b < 1$.

Next, we note that since $\norm{f - f'}_{\infty} = \max_{i \in [m]} \norm{f_i - f'_i}_{\infty}$ and $F_1 \subseteq F_{1,1} \bigoplus \ldots \bigoplus F_{1,m}$, any set that is simultaneously an $\epsilon$-cover of each of $F_{1,i}$ for $i \in [m]$ under $L_{\infty}$-norm must also be an $\epsilon$-cover of $F_1$ under $L_{\infty}$-norm. Therefore, we have
\begin{align*}
    N_\epsilon(F_1,L_\infty) &\leq \sum_{i=1}^m N_\epsilon(F_{1,i},L_\infty) \\
    &\leq m \exp(k_F \epsilon^{-b})\,,
\end{align*}
where $k_F = \max_{i \in [m]} k_{2d+1,i}^b$. This then gives us
\begin{equation*}
    \log N_\epsilon(F_1,L_\infty) \leq k_F \epsilon^{-b} + \log(m)\,,
\end{equation*}
which establishes our first property.

For the second property, let $K_{i,x}$ denote the element of $\mathcal F_i$ representing evaluation at $x$. We note that since $\mathcal X$ is compact and for each $i \in [m]$ we have that $K_i$ is$C^{\infty}$-smooth and therefore continuous, so by the extreme value theorem $\sup_{x \in \mathcal X} \sqrt{K_i(x,x)}$ is finite. Therefore, for any $f_i \in \mathcal F_i$, we have
\begin{align*}
    \norm{f_i}_\infty &= \sup_{x \in \mathcal X} \abs{f_i(x)} \\
    &= \sup_{x \in \mathcal X} \abs{\inner{f_i}{K_{i,x}}} \\
    &\leq \sup_{x \in \mathcal X} \norm{f_i}_{\mathcal F_i} \norm{K_{i,x}}_{\mathcal F_i} \\
    &= \sup_{x \in \mathcal X} \sqrt{K_i(x,x)} \norm{f_i}_{\mathcal F_i} \,,
\end{align*}
where the inequality step follows from Cauchy-Schwartz.

Next, let $k_F = \max_{i \in [m]} \sup_{x \in \mathcal X} \sqrt{K_i(x,x)}$. Then for any $f = (f_1,\ldots,f_m) \in \mathcal F$, we have
\begin{align*}
    \norm{f}_{\infty} &= \max_{i \in [m]} \norm{f_i}_{\infty} \\
    &\leq \sum_{i=1}^m \norm{f_i}_{\infty} \\
    &\leq \sum_{i=1}^m \sup_{x \in \mathcal X} \sqrt{K_i(x,x)} \norm{f_i}_{\mathcal F_i} \\
    &\leq k_F \sum_{i=1}^m \norm{f_i}_{\mathcal F_i} \\
    &= k_F \norm{f}_{\mathcal F}\,,
\end{align*}
which establishes our second property.

\end{proof}

\begin{lemma}
\label{lem:f-donsker}

Suppose that $\Fcal$ satisfies the assumptions of \cref{lem:f-properties}, and let $F_1 = \{f \in \Fcal : \|f\| \leq 1\}$. Then it is $\Pcal$-Donsker. 

\end{lemma}

\begin{proof}[Proof of \Cref{lem:f-donsker}]

Let $\Qcal$ denote the space of all possible probability distributions on $Z$. Then, given the result of \cref{lem:f-properties}, for any finite $S > 0$, we have the uniform entropy integral
\begin{align*}
    \int_0^S \sqrt{ \sup_{P \in \mathcal Q} \log N_{\epsilon}(\mathcal F_1, L_2(P)) } d \epsilon  &\leq  \int_0^S \sqrt{ \log N_{\epsilon}(\mathcal F_1, L_\infty) } d \epsilon \\
    &\leq \int_0^S k_F^{1/2} \epsilon^{-b/2} d \epsilon \,,
\end{align*}
which is finite since $b/2 < 1$. Also, given \cref{lem:f-properties} we know that $\norm{f} \leq k_F$ for every $f \in F_1$, so we have a square integrable envelope for $F_1$, which is given by the constant function $f(x) = k_F$.

Given this finite uniform entropy integral and the envelope for $F_1$, the conditions of theorem 8.19 of \citet{kosorok2007introduction} are satisfied, so we conclude that $F_1$ is $\Pcal$-Donsker.

\end{proof}

\begin{lemma}
\label{lem:uniform-clt}

Suppose that $\mathcal G$ is a class of functions of the form $g : \mathcal X \to \mathbb R$, and that $\mathcal G$ is $P$-Donsker in the sense of \citet{kosorok2007introduction}. Then we have
\begin{equation*}
    \sup_{g \in \mathcal G} \e_n[g(X)] - \e[g(X)] = O_p(n^{-1/2}).
\end{equation*}

\end{lemma}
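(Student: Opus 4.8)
The plan is to unpack the definition of the $P$-Donsker property and then invoke the standard fact that a weakly convergent sequence of stochastic processes in $\ell^\infty(\mathcal{G})$ is asymptotically tight, and hence uniformly bounded in (outer) probability.

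First I would recall that, by definition of $P$-Donsker in the sense of \citet{kosorok2007introduction}, the empirical process $\mathbb{G}_n$ given by $\mathbb{G}_n g = \sqrt{n}\,(\e_n[g(X)] - \e[g(X)])$ for $g \in \mathcal{G}$ converges weakly, as a random element of $\ell^\infty(\mathcal{G})$, to a tight mean-zero Gaussian process $\mathbb{G}$.

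Second, I would use that weak convergence to a tight limit implies the sequence $\mathbb{G}_n$ is itself asymptotically tight in $\ell^\infty(\mathcal{G})$: for every $\epsilon > 0$ there is a compact $K \subseteq \ell^\infty(\mathcal{G})$ with $\liminf_n P^*(\mathbb{G}_n \in K^\delta) \geq 1 - \epsilon$ for all $\delta > 0$, where $P^*$ denotes outer probability and $K^\delta$ the $\delta$-enlargement of $K$. Since every compact subset of $\ell^\infty(\mathcal{G})$ is norm-bounded, for each $\epsilon$ there is $M_\epsilon < \infty$ with $\liminf_n P^*(\sup_{g \in \mathcal{G}} \abs{\mathbb{G}_n g} \leq M_\epsilon) \geq 1 - \epsilon$; this is exactly the statement $\sup_{g \in \mathcal{G}} \abs{\mathbb{G}_n g} = O_p(1)$.

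Finally, dividing by $\sqrt{n}$ and using $\sup_{g \in \mathcal{G}}(\e_n[g(X)] - \e[g(X)]) \leq n^{-1/2} \sup_{g \in \mathcal{G}} \abs{\mathbb{G}_n g}$ yields the claimed rate $O_p(n^{-1/2})$. The only delicate point --- rather than a genuine obstacle --- is measurability: the supremum $\sup_{g \in \mathcal{G}} \abs{\mathbb{G}_n g}$ need not be Borel measurable, so all the probabilities above are outer probabilities; but the $P$-Donsker hypothesis already incorporates the asymptotic measurability that legitimizes these manipulations (see \citet{kosorok2007introduction}), so no additional argument is needed. (Alternatively, one could bypass weak convergence entirely and bound the supremum directly via a maximal inequality controlled by the uniform entropy integral established for $\mathcal{G}$ in the surrounding lemmas, which would even give an explicit constant; but invoking the Donsker property directly is the shortest route.)
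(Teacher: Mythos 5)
Your proposal is correct and follows essentially the same route as the paper: both invoke the $P$-Donsker property to get weak convergence of the empirical process in $\ell^\infty(\mathcal G)$ to a tight Gaussian limit, deduce that $\sup_{g\in\mathcal G}\abs{\mathbb G_n g}=O_p(1)$, and divide by $\sqrt n$. The only (immaterial) difference is the last step --- the paper applies the continuous mapping theorem to the supremum functional while you argue via asymptotic tightness and norm-boundedness of compacts --- and your remark on outer probability is a point of care the paper's proof silently elides.
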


\begin{proof}[Proof of \Cref{lem:uniform-clt}]

Let $Q_n$ denote the sequence of stochastic processes indexed by $\mathcal G$, defined by
\begin{equation*}
    Q_n(g) = \sqrt{n}(\e_n[g(X)] - \e[g(X)])\\.
\end{equation*}

Then by the definition of $P$-Donsker in \citet{kosorok2007introduction}, we have that $Q_n$ converges in distribution under the $L_\infty(\mathcal G)$-norm to a tight Gaussian process. Next, since supremum is a continuous function w.r.t. this norm, by the continuous mapping theorem we have that $\sup_{g \in \mathcal G} Q_n(g)$ converges to some limiting distribution. Therefore $\sqrt{n} \sup_{g \in \mathcal G} \e_n[g(X)] - \e[g(X)]$ is stochastically bounded, and so $\sup_{g \in \mathcal G} \e_n[g(X)] - \e[g(X)] = O_p(n^{-1/2})$.

\end{proof}

\begin{lemma}
\label{lem:c-compact}

Let the assumptions of \cref{thm:consistency} be given. Then the operators $C$ and $C_n$ defined in \cref{eq:c,eq:cn} are compact.

\end{lemma}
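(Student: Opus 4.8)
The plan is to realize both $C$ and $C_n$ as (averages of) rank-one operators generated by a single $\mathcal H$-valued feature map associated with the moment function, after which compactness is immediate: $C_n$ will be finite rank and $C$ will have finite trace. Throughout I identify $\mathcal H$ with $\mathcal F$ via the isometric isomorphism $\varphi$ coming from Riesz representation, under which compactness is preserved.

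First I construct the feature map. Fix $x\in\mathcal X$ and write $z$ for its $Z$-coordinate. The linear functional $f\mapsto f(z)\tr\rho(x;\tilde\theta)$ on $\mathcal F$ is bounded, with bound proportional to $\norm{f}_{\mathcal F}$ by \cref{asm:regularity} (which gives $\sup_{x',\theta}\norm{\rho(x';\theta)}_\infty<\infty$) together with property~2 of \cref{lem:f-properties} (which bounds $\norm{f}_\infty$ by $\norm{f}_{\mathcal F}$ up to a constant). Hence it has a Riesz representer $\psi(x)\in\mathcal H$; concretely, under the identification of $\mathcal H$ with $\mathcal F=\bigoplus_{k=1}^m\mathcal F_k$, $\psi(x)$ is the stacked element $\big(\rho_1(x;\tilde\theta)K_{1,z},\dots,\rho_m(x;\tilde\theta)K_{m,z}\big)$, where $K_{k,z}\in\mathcal F_k$ represents evaluation at $z$, so that $\norm{\psi(x)}_{\mathcal H}^2=\sum_{k=1}^m\rho_k(x;\tilde\theta)^2K_k(z,z)$. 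By the reproducing property, $\inner{u}{\psi(x)}_{\mathcal H}=\varphi(u)(z)\tr\rho(x;\tilde\theta)$ for all $u\in\mathcal H$, so comparing with \cref{eq:c} yields $\inner{Cu}{v}=\e[\inner{u}{\psi(X)}\inner{\psi(X)}{v}]$, i.e. $C=\e[\psi(X)\otimes\psi(X)]$; the analogous construction with $\tilde\theta$ replaced by $\tilde\theta_n$ gives elements $\psi_n(X_1),\dots,\psi_n(X_n)\in\mathcal H$ with $C_n=\tfrac1n\sum_{i=1}^n\psi_n(X_i)\otimes\psi_n(X_i)$, by \cref{eq:cn}.

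For $C_n$ the conclusion is then immediate: it is a sum of $n$ rank-one operators, hence has rank at most $n$, hence is compact (for every realization of the data). For $C$: the bilinear form makes it self-adjoint and positive, and it is bounded because $\norm{\psi(x)}_{\mathcal H}^2\le m\big(\sup_{x',\theta}\norm{\rho(x';\theta)}_\infty\big)^2\max_{k\in[m]}\sup_{z\in\mathcal Z}K_k(z,z)<\infty$, the kernel-diagonal supremum being finite under \cref{asm:rkhs,asm:regularity} exactly as in the proof of \cref{lem:f-properties}. Choosing any orthonormal basis $\{e_j\}_{j\ge1}$ of $\mathcal H$ and interchanging the sum with the expectation of nonnegative terms by Tonelli, $\sum_j\inner{Ce_j}{e_j}=\sum_j\e[\inner{e_j}{\psi(X)}^2]=\e\big[\sum_j\inner{e_j}{\psi(X)}^2\big]=\e[\norm{\psi(X)}_{\mathcal H}^2]<\infty$. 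A positive self-adjoint operator with finite trace is trace-class, hence compact, which completes the argument.

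I expect no real obstacle: this is the familiar ``covariance operator of an RKHS feature map'' argument. The two places that deserve a careful sentence are (i) measurability of $x\mapsto\psi(x)$ and the verification that the operator defined weakly through \cref{eq:c} genuinely equals $\e[\psi(X)\otimes\psi(X)]$ (or, to sidestep operator-valued integration entirely, that $C$ is simply the bounded positive operator given by that bilinear form whose trace we then compute directly), and (ii) finiteness of $\sup_{z\in\mathcal Z}K_k(z,z)$, which is exactly where the kernel smoothness in \cref{asm:rkhs} and the boundedness of $\mathcal Z$ in \cref{asm:regularity} are used.
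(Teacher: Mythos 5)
Your proof is correct, but it takes a genuinely different route from the paper's. The paper establishes compactness directly from the definition: it shows the image of the unit ball of $\mathcal H$ under $C$ is totally bounded, by pulling a finite $L_\infty$-cover of the RKHS unit ball from \cref{lem:f-properties} (which rests on the Cucker--Smale covering-number bound, and hence on the $C^\infty$-smoothness of the kernels in \cref{asm:rkhs}) and showing that replacing $\varphi(h)$ by a nearby cover element moves $Ch$ by at most a controlled amount in $\mathcal H$-norm; the argument for $C_n$ is verbatim the same with $\e_n$ and $\tilde\theta_n$ in place of $\e$ and $\tilde\theta$. You instead exhibit $C$ as the covariance operator $\e[\psi(X)\otimes\psi(X)]$ of the feature map $\psi(x)=(\rho_1(x;\tilde\theta)K_{1,z},\dots,\rho_m(x;\tilde\theta)K_{m,z})$ and conclude via finite trace, while $C_n$ is dispatched as a rank-$n$ operator. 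Your argument is cheaper in its hypotheses --- it needs only $\sup_{z}K_k(z,z)<\infty$ (continuity of the kernel plus compactness of $\mathcal Z$) rather than the full metric-entropy bound --- and it delivers the stronger conclusion that $C$ is trace class; the only points to state carefully are separability of $\mathcal H$ (which the paper assumes elsewhere, e.g.\ in \cref{lem:operator-sqrt-inverse}) so that the trace is computed over a countable orthonormal basis, and the weak-to-strong measurability of $\psi$, both of which you flag. Amusingly, your finite-rank argument for $C_n$ is exactly the one the paper itself uses in the proof of \cref{lem:kvmm-cgmm-equiv}, so that half of your proposal is already implicit in the text.
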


\begin{proof}[Proof of \Cref{lem:c-compact}]

First we consider the case of $C$, which is defined according to
\begin{equation*}
    (C h)(g) = \e[\varphi(f)(Z)\tr  \rho(X;\tilde\theta) \rho(X;\tilde\theta)\tr  g(Z)]\,,
\end{equation*}
where $\varphi$ maps any element of $\mathcal H$ to its dual element in $\mathcal F$.

Let $H_1 = \{h \in \mathcal H : \norm{h} \leq 1\}$, $F_1 = \{f \in \mathcal F : \norm{f} \leq 1\}$, and $R_1 = \{C h : h \in H_1\}$. Recall that by the Riesz representation theorem, $F_1$ and $H_1$ are isometrically isomorphic. In addition, by \cref{asm:rkhs}, there must exist some finite $\epsilon'$-cover of $F_1$ under the $L_\infty$ norm for every $\epsilon'>0$, which we will denote by $N(\epsilon')$. In order to establish that $C$ is a compact operator, we need to argue that $R_1$ is a compact set; that is, there exists some finite $\epsilon$-cover of $R_1$ for every $\epsilon>0$, under the $\mathcal H$ norm.

Now, let $h \in H_1$ be an arbitrary given element. In addition, let $f'$ be an element of $N(\epsilon')$ such that $\norm{\varphi(h) - f'}_\infty \leq \epsilon'$, and define $h' \in \mathcal H$ according to
\begin{equation*}
    h'(g) = \e[f'(Z)\tr  \rho(X;\tilde\theta) \rho(X;\tilde\theta)\tr  g(Z)]\,.
\end{equation*}

Then, we have
\begin{align*}
    \norm{h - h'}_{\mathcal H} &= \sup_{h'' \in H_1} \inner{h - h'}{h''} \\
    &= \sup_{f'' \in F_1} (h - h')(f'') \\
    &= \sup_{f'' \in F_1} \e[(f - f')(Z)\tr  \rho(X;\tilde\theta) \rho(X;\tilde\theta)\tr  f''(Z)] \\
    &\leq \sup_{f'' \in F_1} \e[(f - f')(Z)\tr  \rho(X;\tilde\theta) \rho(X;\tilde\theta)\tr  f''(Z)] \\
    &\leq \sup_{f'' \in F_1} m \epsilon' \norm{\rho(X;\tilde\theta) \rho(X;\tilde\theta)\tr  f''(Z)}_\infty \\
    &\leq k m \epsilon'\,,
\end{align*} for some constant $k$, where in the second equality we apply \cref{lem:rkhs-dual}, and in the final inequality we apply \cref{asm:regularity} and \cref{lem:f-properties}. Now, the set of all such $h'$ that could be used in the bound above is finite (with cardinality at most $\abs{N(\epsilon')}$), and thus setting $\epsilon' = \frac{1}{mk} \epsilon$ we have a finite $\epsilon$-cover of $H_1$. Therefore, since the above reasoning holds for arbitrary $\epsilon > 0$, we have that $C$ is compact.

Finally, for $C_n$ we note that the proof is identical to that of $C$, simply replacing $\e$ everywhere by $\e_n$ and replacing $\tilde\theta$ everywhere by $\tilde\theta_n$, since none of the steps are changed by these replacements.
    
\end{proof}

\begin{lemma}
\label{lem:cn-convergence}

Let $C$ and $C_n$ be defined as in \cref{eq:c,eq:cn}. Then, given the assumptions of \cref{thm:consistency}, we have $\norm{C_n - C} = O_p(n^{-p})$, where $p$ is the constant referred to in \cref{asm:tilde-theta}.

\end{lemma}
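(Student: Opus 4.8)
The plan is to express the operator norm variationally, then split by a triangle inequality into a term that measures the effect of replacing $\tilde\theta_n$ by its probability limit $\tilde\theta$ and a term that is a uniform empirical-process error at the fixed (but random) value $\tilde\theta$. Since $C_n$ and $C$ are bounded and self-adjoint (indeed compact, by \cref{lem:c-compact}), and $\varphi$ maps the unit ball $H_1 = \{h\in\mathcal H : \norm h \le 1\}$ isometrically onto $F_1 = \{f\in\mathcal F : \norm f \le 1\}$ by the Riesz correspondence (\cref{lem:rkhs-dual}), the defining identities \cref{eq:cn,eq:c} give
\begin{equation*}
    \norm{C_n - C} = \sup_{\norm u \le 1,\, \norm v \le 1}\abs{\inner{(C_n - C)u}{v}} = \sup_{f,g \in F_1}\abs{\e_n[f(Z)\tr \rho(X;\tilde\theta_n)\rho(X;\tilde\theta_n)\tr g(Z)] - \e[f(Z)\tr \rho(X;\tilde\theta)\rho(X;\tilde\theta)\tr g(Z)]}\,.
\end{equation*}
Adding and subtracting $\e_n[f(Z)\tr \rho(X;\tilde\theta)\rho(X;\tilde\theta)\tr g(Z)]$ inside the supremum bounds the right-hand side by $A_n + B_n$, where $A_n = \sup_{f,g\in F_1}\abs{\e_n[f(Z)\tr(\rho(X;\tilde\theta_n)\rho(X;\tilde\theta_n)\tr - \rho(X;\tilde\theta)\rho(X;\tilde\theta)\tr)g(Z)]}$ and $B_n = \sup_{f,g\in F_1}\abs{(\e_n - \e)[f(Z)\tr \rho(X;\tilde\theta)\rho(X;\tilde\theta)\tr g(Z)]}$.

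For $A_n$, I would write $\rho(x;\tilde\theta_n)\rho(x;\tilde\theta_n)\tr - \rho(x;\tilde\theta)\rho(x;\tilde\theta)\tr = (\rho(x;\tilde\theta_n) - \rho(x;\tilde\theta))\rho(x;\tilde\theta_n)\tr + \rho(x;\tilde\theta)(\rho(x;\tilde\theta_n) - \rho(x;\tilde\theta))\tr$ and bound the integrand pointwise, using $\norm f_\infty \le k_F\norm f \le k_F$ and $\norm g_\infty \le k_F$ for $f,g\in F_1$ from \cref{lem:f-properties}, the uniform bound on $\norm{\rho}_\infty$ from \cref{asm:regularity}, and $\max_i\norm{\rho_i(\cdot;\tilde\theta_n) - \rho_i(\cdot;\tilde\theta)}_\infty = O_p(n^{-p})$ from \cref{asm:tilde-theta}. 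This produces a bound on the integrand that is uniform over $f,g\in F_1$ and over $x$, and is of order $O_p(n^{-p})$; taking $\e_n$ then gives $A_n = O_p(n^{-p})$.

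For $B_n$ I would invoke \cref{lem:donsker} with index set $T = F_1 \times \Theta$ (equipping $F_1$ with the $L_\infty$ metric) and $q_{(t,\theta)}(x) = \rho(x;\theta)\rho(x;\theta)\tr t(z)$, so that the associated function class $\mathcal G = \{x\mapsto f(z)\tr \rho(x;\theta)\rho(x;\theta)\tr t(z) : f,t \in F_1,\ \theta\in\Theta\}$ is $P$-Donsker. The hypotheses hold: $\mathcal F$ satisfies the assumptions of \cref{lem:f-properties}; the covering numbers of $T$ are polynomially bounded with exponent $<1$ by combining \cref{lem:f-properties}(1) with \cref{asm:compact-theta}; $\sup_{(t,\theta)}\norm{q_{(t,\theta)}}_\infty < \infty$ by \cref{asm:regularity} and \cref{lem:f-properties}; and $q_{(t,\theta)}$ is Lipschitz in $(t,\theta)$ using the boundedness and Lipschitz-in-$\theta$ properties of $\rho$ from \cref{asm:regularity}. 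Since $\mathcal G$ is symmetric under $f\mapsto -f$, \cref{lem:uniform-clt} yields $\sup_{g\in\mathcal G}\abs{(\e_n - \e)[g]} = O_p(n^{-1/2})$, and specializing to $\theta = \tilde\theta$ gives $B_n = O_p(n^{-1/2})$; enlarging the index set to include all of $\Theta$ is precisely what lets this uniform bound cover the random value $\tilde\theta$. Combining, $\norm{C_n - C} \le A_n + B_n = O_p(n^{-p}) + O_p(n^{-1/2}) = O_p(n^{-p})$ since $p \le 1/2$.

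The main obstacle is the control of $B_n$: one has to set up the empirical-process step so that a single application of \cref{lem:donsker} simultaneously absorbs the bilinear dependence on the two RKHS arguments $f$ and $g$ and the dependence on the random limit $\tilde\theta$. Once the index set $F_1 \times F_1 \times \Theta$ is chosen, verifying the Donsker hypotheses is routine given \cref{lem:f-properties,asm:regularity,asm:compact-theta}, and the term $A_n$ is elementary.
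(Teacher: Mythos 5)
Your decomposition is exactly the paper's: an intermediate operator with $\tilde\theta_n$ replaced by its limit $\tilde\theta$, a pointwise $O_p(n^{-p})$ bound for the replacement term (your $A_n$), and a uniform empirical-process bound for the remaining term (your $B_n$), combined via $p\le 1/2$. The only substantive difference is how the bilinear dependence on the two RKHS arguments is absorbed in the Donsker step. The paper keeps \cref{lem:donsker} indexed only by the outer $f$ at the fixed $\tilde\theta$, writes the relevant class as $\mathcal G^2=\{g'g'':g',g''\in\mathcal G\}$, and invokes a preservation result (products of uniformly bounded Donsker classes are Donsker, \citet[Corollary 9.32]{kosorok2007introduction}). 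You instead fold the second unit ball $F_1$ (and all of $\Theta$) into the index set $T$ of \cref{lem:donsker} and verify its hypotheses directly: the covering-number condition for $F_1\times\Theta$ follows from \cref{lem:f-properties} and \cref{asm:compact-theta}, and the Lipschitz condition in $t$ follows from linearity of $t\mapsto\rho\rho\tr t(z)$ and boundedness of $\rho$. Both routes are valid; the paper's is shorter given the citation, while yours avoids the external preservation theorem at the cost of a slightly larger index set, and the extra uniformity over $\Theta$ you carry is harmless (and would matter only if $\tilde\theta$ were treated as random rather than as the fixed probability limit, which is how the paper treats it in defining $C$).
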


\begin{proof}[Proof of \Cref{lem:cn-convergence}]

First, let $\tilde C_n$ be defined according to
\begin{equation*}
    (\tilde C_n h)(g) = \e[\varphi(h)(Z)\tr  \rho(X;\tilde\theta_n) \rho(X;\tilde\theta_n)\tr  g(Z)]\,,
\end{equation*}
for any $h \in \mathcal H$ and $g \in \mathcal F$, where $\varphi$ maps any element of $\mathcal H$ to its dual element in $\mathcal F$. In addition, let $H_1 = \{h \in \mathcal H : \norm{h} \leq 1\}$, and $F_1 = \{f \in \mathcal F : \norm{f} \leq 1\}$. In addition, let $Q = \sup_{x \in \mathcal X, \theta \in \Theta} \abs{\rho(x;\theta)}$, which by \cref{asm:regularity} is finite. Given this, we have
\begin{equation*}
    \norm{C_n - C} \leq \norm{C_n - \tilde C_n} + \norm{\tilde C_n - C}\,,
\end{equation*}

We will proceed by bounding these two terms separately. For the first, we have
\begin{align*}
    \norm{C_n - \tilde C_n} &= \sup_{h \in H_1} \norm{(C_n - \tilde C_n) h} \\
    &= \sup_{h,h' \in H_1} \inner{(C_n - \tilde C_n) h}{h'} \\
    &= \sup_{f,f' \in F_1} \e_n[f(Z)\tr  \rho(X;\tilde\theta_n) \rho(X;\tilde\theta_n)\tr  f'(Z)] - \e[f(Z)\tr  \rho(X;\tilde\theta_n) \rho(X;\tilde\theta_n)\tr  f'(Z)] \\
    &\leq \sup_{g \in \mathcal G^2} \e_n[g(X)] - \e[g(X)]\,,
\end{align*}
where
\begin{align*}
    \mathcal G &= \{g : g(x) = f(z)\tr  \rho(x; \theta), f \in F_1, \theta \in \Theta \} \\
    \mathcal G^2 &= \{g : g(x) = g'(x) g''(x), g' \in \mathcal G, g'' \in \mathcal G\}\,.
\end{align*}

Now, applying corollary 9.32 of \citet{kosorok2007introduction} to \cref{asm:rho-complexity} and \cref{lem:f-donsker}, it trivially follows that $\mathcal G$ and $\mathcal G^2$ are $\Pcal$-Donsker. Therefore, by \cref{lem:uniform-clt} we have $\norm{\tilde C_n - C} = O_p(n^{-1/2})$.

Next we consider the term $\norm{\tilde C_n - C}$. By a similar reasoning to above, we have
\begin{align*}
    \norm{\tilde C_n - C} &= \sup_{h,h' \in H_1} \inner{(\tilde C_n - C) h}{h'} \\
    &= \sup_{f,f' \in F_1} \e[f(Z)\tr ( \rho(X;\tilde\theta_n) \rho(X;\tilde\theta_n)\tr  - \rho(X;\tilde\theta) \rho(X;\tilde\theta)\tr  )f'(Z)] \\
    &\leq  \sum_{i,j \in [m]} \sup_{f,f' \in F_1} \e\Big[f_i(Z) f_j(Z) \rho_i(X; \tilde\theta_n) \Big( \rho_j(X;\tilde\theta_n) - \rho_j(X;\tilde\theta) \Big) \Big] \\
    &\quad + \sum_{i,j \in [m]} \sup_{f,f' \in F_1} \e\Big[f_i(Z) f_j(Z) \rho_j(X; \tilde\theta) \Big( \rho_i(X;\tilde\theta_n) - \rho_i(X;\tilde\theta) \Big) \Big] \\
    &\leq 2 m^2 k_F^2  Q \e\Big[ \Big| \rho_i(X;\tilde\theta_n) - \rho_i(X;\tilde\theta) \Big| \Big] \,,
\end{align*}
where $k_F$ is the constant defined in \cref{lem:f-properties}. Now, by \cref{asm:tilde-theta} we have that $\rho(X;\tilde\theta_n)_i = \rho(X;\tilde\theta)_i + O_p(n^{-p})$ in $L_1$-norm for each $i \in [m]$, so therefore the right hand side of the above bound is $O_p(n^{-p})$.

Finally, putting the two bounds together, and noting that by \cref{asm:tilde-theta} we have $p \leq 1/2$, we can conclude that $\norm{C_n - C} = O_p(n^{-p})$.

\end{proof}

\begin{lemma}
\label{lem:c-inverse}

For any square integrable function $q$, let $h_q \in \mathcal H$ be defined according to $h_q(f) = \e[f(Z)^t q(Z)]$. Then we have $C^{-1} h_q = h_{\phi}$, where
\begin{equation*}
    \phi(z) = V(z; \tilde\theta)^{-1} q(z)\,.
\end{equation*}

\end{lemma}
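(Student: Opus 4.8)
The plan is to prove the equivalent identity $C h_\phi = h_q$; since $C$ is compact (\cref{lem:c-compact}) this also requires checking that $h_\phi$ is a genuine element of $\mathcal H$, so that $C^{-1}h_q$ is meaningful and equals $h_\phi$. For the latter, note first that $\phi$ is square integrable: $\phi(z) = V(z;\tilde\theta)^{-1}q(z)$, and by \cref{asm:non-degenerate} the operator $f\mapsto V(\cdot;\tilde\theta)^{-1}f(\cdot)$ (with $\tilde\theta\in\Theta$) is bounded on $L_2$, so $\phi\in L_2$ because $q\in L_2$. Consequently $f\mapsto\e[f(Z)\tr\phi(Z)]$ is a bounded linear functional on $\mathcal F$, since $\abs{\e[f(Z)\tr\phi(Z)]}\le\norm{f}_\infty\,\e[\norm{\phi(Z)}_1]$ and $\norm{f}_\infty\le k_F\norm{f}$ by \cref{lem:f-properties}; thus $h_\phi\in\mathcal H$, by exactly the reasoning that shows the elements in \cref{eq:h,eq:deriv-h} belong to $\mathcal H$.

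Given this, it is enough to check $(C h_\phi)(f)=h_q(f)$ for every $f\in\mathcal F$, since elements of $\mathcal H$ that agree as functionals on $\mathcal F$ coincide. Starting from the definition of $C$ in \cref{eq:c}, $(C h_\phi)(f)=\e[\varphi(h_\phi)(Z)\tr\rho(X;\tilde\theta)\rho(X;\tilde\theta)\tr f(Z)]$. Since $\varphi(h_\phi)(Z)$ and $f(Z)$ are both $Z$-measurable, conditioning on $Z$ and using the definition \cref{eq:v} of $V$ turns this into $\e[\varphi(h_\phi)(Z)\tr V(Z;\tilde\theta)f(Z)]$. Next, by \cref{lem:rkhs-dual} the function $\varphi(h_\phi)\in\mathcal F$ is the representer of the functional $g\mapsto\e[g(Z)\tr\phi(Z)]$, and under the universality of $\mathcal F$ (\cref{asm:rkhs}) this lets us replace $\varphi(h_\phi)$ by $\phi$ inside this $L_2$ pairing against the bounded $Z$-measurable map $V(\cdot;\tilde\theta)f(\cdot)$. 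Finally, symmetry of $V(z;\tilde\theta)$ and the defining relation $V(z;\tilde\theta)\phi(z)=q(z)$ give $\e[\phi(Z)\tr V(Z;\tilde\theta)f(Z)]=\e[(V(Z;\tilde\theta)\phi(Z))\tr f(Z)]=\e[q(Z)\tr f(Z)]=h_q(f)$, so $C h_\phi=h_q$ and hence $C^{-1}h_q=h_\phi$.

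The step I expect to demand the most care is the identification of the Riesz representer $\varphi(h_\phi)$ with the function $\phi$ when pairing against $V(\cdot;\tilde\theta)f(\cdot)$ in $L_2$: this is the only place the universality assumption of \cref{asm:rkhs} is used, and it also requires checking that the functions being paired stay in $L_2$ --- which holds because $V$ is bounded (as $\sup_{x,\theta}\norm{\rho(X,\theta)}_\infty<\infty$ by \cref{asm:regularity}) and $\norm{f}_\infty\le k_F\norm{f}$ for $f\in\mathcal F$. The rest --- reducing equality in $\mathcal H$ to equality of functionals on $\mathcal F$, the single application of the tower property, and the symmetry manipulation --- is routine.
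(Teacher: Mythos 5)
Your proof is correct and follows essentially the same route as the paper's: both verify $C h_\phi = h_q$ directly by inserting $V(Z;\tilde\theta)$ via the tower property and using the identity $V(z;\tilde\theta)\phi(z)=q(z)$. The one step you single out as delicate --- identifying the representer $\varphi(h_\phi)$ with the function $\phi$ inside the $L_2$ pairing --- is made silently in the paper's own proof, which writes $(C h_q)(f)=\e[q(Z)\tr \rho(X;\tilde\theta)\rho(X;\tilde\theta)\tr f(Z)]$ directly ``by the definition of $C$,'' so your treatment is if anything more explicit about the same convention.
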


\begin{proof}[Proof of \Cref{lem:c-inverse}]

By the definition of $C$, for any function $q$ we have
\begin{align*}
    (C h_q)(f) &= \e[q(Z)\tr  \rho(X;\tilde\theta)\rho(X;\tilde\theta)\tr  f(Z)] \\
    &= \e[q(Z)\tr  V(Z;\tilde\theta) f(Z)] \,.
\end{align*}

In addition, we have
\begin{align*}
    h_q(f) &= \e[q(Z)\tr  f(Z)] \\
    &= \e[(V(Z;\tilde\theta)^{-1}q(Z))\tr  V(Z;\tilde\theta) f(Z)] \\
    &= (C h_\phi)(f)\,,
\end{align*}
where $\phi$ is defined as in the lemma statement. The result immediately follows from this.

\end{proof}

\begin{lemma}
\label{lem:ht-properties}

Let $B$ and $B_n$ be defined as in \cref{eq:b,eq:bn}, and let some moment regular class (as in \cref{def:regular}) $\Qcal = \{q(\cdot;t) : t \in T\}$ be given. Also, let $h_n(t)$ and $h(t)$ be some collections of functions in $\mathcal H$ indexed by $T$, defined according to
\begin{align*}
    h_n(t)(f) &= \e_n[f(Z)\tr q(x;t)] \\
    h(t)(f) &= \e[f(Z)\tr  q(x;t)]\,,
\end{align*}
Then, under the assumptions of \cref{thm:consistency}, these sequences have the following properties:
\begin{enumerate}
    \item $\sup_{t \in T} \norm{h(t)} < \infty$.
    \item $\sup_{t \in T} \norm{h_n(t)} < \infty$.
    \item $\sup_{t \in T} \norm{B h(t)} < \infty$.
    \item $\sup_{t \in T} \norm{B^2 h(t)} < \infty$.
    \item $\sup_{t \in T} \norm{h_n(t) - h(t)} = O_p(n^{-1/2})$.
    \item $\norm{B(h(t') - h(t))} \leq L' \norm{t' - t}^{1/2} \ \forall t',t \in T$, for some constant $L'$ that doesn't depend on $t$ and $t'$.
    \item $\norm{B^2(h(t') - h(t))} \leq L'' \norm{t' - t} \ \forall t',t \in T$, for some constant $L''$ that doesn't depend on $t$ and $t'$.
\end{enumerate}

\end{lemma}

\begin{proof}[Proof of \Cref{lem:ht-properties}]

First, define $F_1 = \{f : f \in \mathcal F, \norm{f} \leq 1\}$. Given \cref{lem:f-properties}, we have that $F_1$ is uniformly bounded; that is, $\sup_{f \in F_1, z \in \mathcal Z} \abs{f(z)} < \infty$. Furhermore, given our assumptions

For the first of our required properties, we have
\begin{align*}
    \sup_{t \in T} \norm{h(t)} &= \sup_{t \in T, \norm{h'} \leq 1} \inner{h(t)}{h'} \\
    &= \sup_{t \in T, f \in F_1} \e[f(Z)\tr  q(X;t)]\,,
\end{align*}
which follows from \cref{lem:rkhs-dual}. This supremum is clearly finite given the uniform boundedness of $q$ and $F_1$, which gives us our first property.

For the second property, we have
\begin{align*}
    \sup_{t \in T} \norm{h_n(t)} &= \sup_{t \in T, \norm{h'} \leq 1} \inner{h_n(t)}{h'} \\
    &= \sup_{t \in T, f \in F_1} \e_n[f(Z)\tr  q(X;t)]\,,
\end{align*}
which again follows from \cref{lem:rkhs-dual}. As above, this supremum is clearly finite given the uniform boundedness of $q$ and $F_1$, which gives us our second property.

For the third of these properties, by \cref{lem:bh-value} we have
\begin{equation*}
    \norm{B h(t)}^2 = \e[\e[q(X;t) \mid Z]\tr  V(Z; \tilde\theta)^{-1} \e[q(X;t) \mid Z]]\,.
\end{equation*}
Now, by the uniform boundedness of $q$, it follows that the $L_2$ norm of $\e[q_i(X;t) \mid Z]$ is bounded by some constant independent of $t$. Then, given \cref{asm:non-degenerate}, we have $\sup_{t \in T} \norm{B h(t)}^2 < \infty$, which gives us our third property.

For the fourth of these properties, we note that $B^2 = C^{-1}$, and thus applying \cref{lem:c-inverse} and \cref{lem:rkhs-dual} we have
\begin{align*}
    \sup_{t \in T} \norm{C^{-1} h(t)} &= \sup_{t \in T, f \in F_1} \e[f(Z)\tr  V(Z; \tilde\theta)^{-1} \e[q(X; t) \mid Z]]\,.
\end{align*}
Again, by the uniform boundedness of $q$ and $F_1$, it follows that the $L_2$ norms of $\e[q_i(X;t) \mid Z]$ and $f_i$ are uniformly bounded for each $i$, so given \cref{asm:non-degenerate} we instantly have $\sup_{t \in T} \norm{B^2 h(t)} < \infty$, which establishes our fourth property.

For the fifth property, we have
\begin{align*}
    \sup_{t \in T} \norm{h_n(t) - h(t)} &= \sup_{t \in T, \norm{h'} \leq 1} \inner{h_n(t) - h(t)}{h'} \\
    &= \sup_{t \in T, f \in F_1} (\e_n[f(Z)\tr  q(X; t)] - \e[f(Z)\tr  q(X; t)]) \\
    &= \sup_{g \in \mathcal G} (\e_n[g(X)] - \e[g(X)])\,,
\end{align*}
where $\mathcal G = \{g : g(x) = f(z)\tr  q(x; t), f \in F_1, t \in T\}$. Now, by \cref{def:regular} and \cref{lem:f-donsker}, along with the Donsker preservation property of corollary 9.32 of \citet{kosorok2007introduction}, we have that $\mathcal G$ is $\Pcal$-Donsker. Therefore, by \cref{lem:uniform-clt} we have $\sup_{g \in \mathcal G} (\e_n[g(X)] - \e[g(X)]) = O_p(n^{-1/2})$, which gives us our fifth property.

For the sixth property, we first appeal again to \cref{lem:bh-value}, to establish that
\begin{equation*}
    \norm{B (h(t') - h(t))}^2 = \e[\e[q(X;t') - q(X;t) \mid Z]\tr  V(Z; \tilde\theta)^{-1} \e[q(X;t') - q(X;t) \mid Z]]\,.
\end{equation*}
Now, given the uniform boundedness of $F_1$ and \cref{asm:non-degenerate}, it follows that $\|\EE[q(X;t') - q(X;t) \mid Z]^\top V(Z;\tilde\theta)^{-1}\|_\infty \leq Q'$ for some $Q'$ independent of $t$ and $t'$.
Also, our Lipschitz assumption on $q$ implies that the $L_1$ norm of $\e[q_i(X;t') - q_i(X;t) \mid Z]$ is bounded by $k\norm{t' - t}$, for some constant $k$ that doesn't depend on $t'$ or $t$, for each $i$. Therefore, we have
\begin{equation*}
    \norm{B (h(t') - h(t))}^2 \leq m k Q' \|t' - t\| \,, 
\end{equation*}
Therefore, our sixth property follows with $L' = \sqrt{m k Q'}$.

Finally, for our seventh property, we note that by \cref{lem:c-inverse} we have
\begin{equation*}
    \norm{B^2(h(t') - h(t))} = \sup_{f \in F_1} \e[f(Z)\tr  V(Z; \tilde\theta)^{-1} \e[q(X; t') - q(X; t) \mid Z]]\,.
\end{equation*}
Now, given the uniform boundedness of $F_1$ and \cref{asm:non-degenerate}, it follows that $\|f(Z)^\top V(Z;\tilde\theta)^{-1}\|_\infty \leq Q''$ for some $Q''$ independent of $f \in F_1$. Then, following a similar reason as with the sixth property, we have our desired result with $L'' = m k Q''$.

\end{proof}

\begin{lemma}
\label{lem:bn-convergence}

Let $B$ and $B_n$ be defined as in \cref{eq:b,eq:bn}, let $T$ be some compact index set, and let $h_n(t)$ be some collections of functions in $\mathcal H$ indexed by $t \in T$, which satisfy the assumptions of \cref{lem:ht-properties}. Then under the assumptions of \cref{thm:consistency}, we have $\sup_{t \in T} \norm{B_n h_n(t) - B h(t)} \to 0$ in probability.

\end{lemma}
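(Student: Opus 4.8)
The plan is to split the error by the triangle inequality into three pieces, each controlled by a different tool. Throughout write $A_n=C_n+\alpha_n I$ and $\tilde A_n=C+\alpha_n I$, so that $B_n=A_n^{-1/2}$ and, for $n$ large enough that $\alpha_n>0$, both operators are bounded below by $\alpha_n I$, whence $\norm{A_n^{-1/2}},\norm{\tilde A_n^{-1/2}}\le\alpha_n^{-1/2}$. I would write
\[
\norm{B_nh_n(t)-Bh(t)}\le\underbrace{\norm{B_n\bigl(h_n(t)-h(t)\bigr)}}_{(\mathrm I)}+\underbrace{\norm{\bigl(A_n^{-1/2}-\tilde A_n^{-1/2}\bigr)h(t)}}_{(\mathrm{II})}+\underbrace{\norm{\bigl(\tilde A_n^{-1/2}-C^{-1/2}\bigr)h(t)}}_{(\mathrm{III})},
\]
and bound $\sup_{t\in T}$ of each term. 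Term $(\mathrm I)$ is at most $\alpha_n^{-1/2}\sup_{t\in T}\norm{h_n(t)-h(t)}$, which by property~5 of \cref{lem:ht-properties} is $O_p(\alpha_n^{-1/2}n^{-1/2})=O_p((\alpha_n n)^{-1/2})$; since $\alpha_n n=(\alpha_n n^p)\,n^{1-p}\to\infty$ by $\alpha_n=\omega(n^{-p})$ and $p\le\tfrac12$, this is $o_p(1)$.

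Term $(\mathrm{III})$ is deterministic, and here I would exploit a source condition: by property~4 of \cref{lem:ht-properties}, $h(t)$ lies in the range of $C$ with $w(t):=C^{-1}h(t)$ satisfying $W:=\sup_{t}\norm{w(t)}<\infty$, so $(\tilde A_n^{-1/2}-C^{-1/2})h(t)=[(C+\alpha_n I)^{-1/2}-C^{-1/2}]C\,w(t)$. Since $C$ is compact, self-adjoint and PSD (\cref{lem:c-compact}), functional calculus on its eigenvalues $\{\sigma_i\}$ gives, using $1-\sqrt r\le 1-r$ and $\sigma_i+\alpha_n\ge2\sqrt{\sigma_i\alpha_n}$,
\[
\norm{[(C+\alpha_n I)^{-1/2}-C^{-1/2}]C}=\sup_i\sigma_i^{1/2}\Bigl(1-\sqrt{\tfrac{\sigma_i}{\sigma_i+\alpha_n}}\Bigr)\le\sup_i\frac{\sigma_i^{1/2}\alpha_n}{\sigma_i+\alpha_n}\le\frac{\sqrt{\alpha_n}}{2},
\]
so $\sup_{t\in T}(\mathrm{III})\le\tfrac12\sqrt{\alpha_n}\,W\to0$ since $\alpha_n=o(1)$.

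The main work is term $(\mathrm{II})$, the random one. Here I would use property~3 of \cref{lem:ht-properties} instead, writing $g(t):=C^{-1/2}h(t)=Bh(t)$ so that $h(t)=C^{1/2}g(t)$ with $G:=\sup_t\norm{g(t)}<\infty$, whence $(\mathrm{II})\le\norm{(A_n^{-1/2}-\tilde A_n^{-1/2})C^{1/2}}\,G$. From the identity $A_n^{-1/2}-\tilde A_n^{-1/2}=A_n^{-1/2}(\tilde A_n^{1/2}-A_n^{1/2})\tilde A_n^{-1/2}$ one gets
\[
\norm{(A_n^{-1/2}-\tilde A_n^{-1/2})C^{1/2}}\le\norm{A_n^{-1/2}}\,\norm{\tilde A_n^{1/2}-A_n^{1/2}}\,\norm{\tilde A_n^{-1/2}C^{1/2}}.
\]
I would bound $\norm{A_n^{-1/2}}\le\alpha_n^{-1/2}$; $\norm{\tilde A_n^{-1/2}C^{1/2}}\le1$ because $C\preceq C+\alpha_n I$ and the two commute; and $\norm{\tilde A_n^{1/2}-A_n^{1/2}}\le\frac{1}{2\sqrt{\alpha_n}}\norm{\tilde A_n-A_n}=\frac{1}{2\sqrt{\alpha_n}}\norm{C-C_n}$, a standard square-root perturbation bound for operators bounded below by $\alpha_n I$, which follows from $\tilde A_n^{1/2}X+XA_n^{1/2}=\tilde A_n-A_n$ and the representation $X=\int_0^\infty e^{-s\tilde A_n^{1/2}}(\tilde A_n-A_n)e^{-sA_n^{1/2}}\,ds$. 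Combining with $\norm{C_n-C}=O_p(n^{-p})$ from \cref{lem:cn-convergence}, $(\mathrm{II})$ is $O_p(\alpha_n^{-1}n^{-p})=O_p((\alpha_n n^p)^{-1})=o_p(1)$, again by $\alpha_n=\omega(n^{-p})$.

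Adding the three bounds yields $\sup_{t\in T}\norm{B_nh_n(t)-Bh(t)}\to0$ in probability. The crux — and the reason the argument must invoke properties~3 and~4 of \cref{lem:ht-properties} rather than mere uniform boundedness of $\norm{h(t)}$ — is term $(\mathrm{II})$: a bound ignoring the smoothing factor $C^{1/2}$ in $h(t)$ would only give $O_p(\alpha_n^{-3/2}n^{-p})$, which need not be $o_p(1)$ under the stated rate for $\alpha_n$; writing $h(t)=C^{1/2}g(t)$ converts one factor $\alpha_n^{-1/2}$ into the harmless $\norm{\tilde A_n^{-1/2}C^{1/2}}\le1$, and property~4 plays the analogous role for the deterministic Tikhonov bias.
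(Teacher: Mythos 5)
Your proof is correct, and while it uses the same three-term decomposition as the paper (with the intermediate operator $(C+\alpha_n I)^{-1/2}$) and arrives at the same rates $O_p(\alpha_n^{-1/2}n^{-1/2})$ and $O_p(\alpha_n^{-1}n^{-p})$ for the first two pieces, your treatment of the two nontrivial terms is genuinely different from, and in places tighter than, the paper's. For the Tikhonov-bias term (III), the paper only shows pointwise convergence via Tannery's theorem and then upgrades to uniformity over $T$ by combining the Lipschitz property (property 6 of \cref{lem:ht-properties}) with \cref{lem:lipschitz-convergence}; your argument instead exploits the source condition from property 4 to get the explicit uniform bound $\tfrac12\sqrt{\alpha_n}\,W$, which gives a rate and makes the uniformity over $t$ automatic. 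For term (II), the paper factors $(\tilde B_n^{-1}-B_n^{-1})(\tilde B_n^{-1}+B_n^{-1})$ and identifies it with $\tilde B_n^{-2}-B_n^{-2}$, a step that is only valid if $B_n$ and $\tilde B_n$ commute — which they generally do not, since $C_n\neq C$ — whereas your Sylvester-equation representation of $\tilde A_n^{1/2}-A_n^{1/2}$ yields the perturbation bound $\norm{C-C_n}/(2\sqrt{\alpha_n})$ without any commutativity assumption, so your route actually repairs a gap in the paper's derivation. Your use of $\norm{\tilde A_n^{-1/2}C^{1/2}}\le 1$ is legitimate because $C$ and $C+\alpha_n I$ do commute, and your closing remark correctly identifies why the source conditions (properties 3 and 4) are indispensable: without absorbing one factor of $\alpha_n^{-1/2}$ into the smoothing of $h(t)$, the bound degrades to $O_p(\alpha_n^{-3/2}n^{-p})$, which the assumed rate $\alpha_n=\omega(n^{-p})$ does not control.
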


\begin{proof}[Proof of \Cref{lem:bn-convergence}]

Recall that $B_n = (C_n + \alpha_n I)^{-1/2}$, and $B = C^{-1/2}$. In addition, define
\begin{equation*}
    \tilde B_n = (C + \alpha_n I)^{-1/2}\,.
\end{equation*}

Given this definition, we have
\begin{equation*}
    \norm{B_n h_n(t) - B h(t)} \leq \norm{B_n h_n(t) - B_n h(t)} + \norm{B_n h(t) - \tilde B_n h(t)} + \norm{\tilde B_n h(t) - B h(t)}\,.
\end{equation*}

We proceed by bounding the three terms in the RHS of the above separately. For the first term, we have
\begin{align*}
    \sup_{t \in T} \norm{B_n h_n(t) - B_n h(t)} &\leq \sup_{t \in T} \norm{B_n} \norm{h_n(t) - h(t)} \\
    &\leq \norm{(\alpha_n I)^{-1/2}} \sup_{t \in T} \norm{h_n(t) - h(t)}  \\
    &\leq \norm{(\alpha_n I)^{-1/2}} O_p(n^{-1/2}) \\
    &= O_p(\alpha_n^{-1/2} n^{-1/2})\,.
\end{align*}
where in the above we make use of the fact that $\norm{(A + B)^{-1}} \leq \norm{A^{-1}}$ for PSD $A$ and $B$ sharing the same diagonalizing basis of orthonormal eigenvalues, and that $\sup_{t \in T} \norm{h_n(t) - h(t)} = O_p(n^{-1/2})$ by \cref{lem:ht-properties}.

Next, for the second term, we have
\begin{align*}
    &\sup_{t \in T} \norm{B_n h(t) - \tilde B_n h(t)} \\
    &= \sup_{t \in T} \norm{B_n (\tilde B_n^{-1} - B_n^{-1}) \tilde B_n h(t)} \\
    &\leq \sup_{t \in T} \norm{B_n} \norm{\tilde B_n^{-1} - B_n^{-1}} \norm{\tilde B_n h(t)} \\
    &\leq \sup_{t \in T} \norm{B h(t)} \norm{(\alpha_n I)^{-1/2}} \norm{(\tilde B_n^{-1} - B_n^{-1}) (\tilde B_n^{-1} + B_n^{-1}) (\tilde B_n^{-1} + B_n^{-1})^{-1}} \\
    &\leq  M \alpha_n^{-1/2} \norm{\tilde B_n^{-2} - B_n^{-2}} \norm{(\tilde B_n^{-1} + B_n^{-1})^{-1}} \\
    &\leq  M \alpha_n^{-1/2} \norm{C - C_n} (\norm{\tilde B_n} + \norm{B_n}) \\
    &\leq  M \alpha_n^{-1/2} O_p(n^{-p}) (\alpha_n^{-1/2} + \alpha_n^{-1/2}) \\
    &=  O_p(\alpha_n^{-1} n^{-p}) \,,
\end{align*}
where $M = \sup_{t \in T} \norm{B h(t)}$, which by \cref{lem:ht-properties} is finite, and $p$ is the constant referenced in \cref{asm:tilde-theta}. In addition, in this derivation we make use of the same fact about diagonalizable PSD operators as before, and we also appeal to \cref{lem:psd-operator-inequality,lem:cn-convergence}.

For the final term, note that by \cref{lem:c-compact} $C$ is compact, and so by the spectral theorem is diagonalizable. Let $v_1,v_2,\ldots$ denote the sequence of orthonormal eigenvectors forming a basis for $\mathcal H$, and let $\sigma_1,\sigma_2,\ldots$ denote the corresponding sequence of eigenvalues. Then, for any $h \in \text{domain}(B)$, we have
\begin{align*}
    \norm{\tilde B_n h - B h}^2 &= \sum_{i=1}^\infty \left( (\sigma_i + \alpha_n)^{-1/2} - \sigma_i^{-1/2} \right)^2 \inner{h}{v_i}^2 \\
    &\leq \sum_{i=1}^\infty (2 (\sigma_i + \alpha_n)^{-1} + 2 \sigma_i^{-1}) \inner{h}{v_i}^2 \\
    &\leq 4 \norm{B h}^2\,.
\end{align*}

Therefore, the infinite series defining $\norm{\tilde B_n h(t) - B h(t)}^2$ has a uniform convergent envelope that doesn't depend on $n$. In addition, since we have assumed the assumptions of \cref{thm:consistency}, and therefore $\alpha_n \to 0$, we have that each term in the series defining $\norm{\tilde B_n h - B h}^2$ converges to zero. Given these two observations, we can exchange limit and summation by Tannery's theorem, and we have $\norm{\tilde B_n h(t) - B h(t)}^2 \to 0$, and thus $\norm{\tilde B_n h(t) - B h(t)} \to 0$, for each $t \in T$.

Next, we argue that the above convergence holds uniformly over $T$. For any $t,t' \in T$, we can bound
\begin{align*}
    &\abs{\norm{\tilde B_n h(t) - B h(t)} - \norm{\tilde B_n h(t') - B h(t')}} \\
    &\leq \norm{(\tilde B_n h(t) - B h(t)) - (\tilde B_n h(t') - B h(t'))} \\
    &= \norm{\tilde B_n (h(t) - h(t')) - B (h(t) - h(t'))} \\
    &\leq 2 \norm{B (h(t) - h(t'))} \\
    &\leq 2 L' \norm{t - t'}^{1/2} \,,
\end{align*}
where $L'$ is the constant from \cref{lem:ht-properties}, and we apply the bound $\norm{\tilde B_n h - B h}^2 \leq 4 \norm{B h}^2$ from above. Therefore we have that $\norm{\tilde B_n h(t) - B h(t)}$ is $1/2$-H\"older in $t$, and by assumption $T$ is compact-valued, so we can apply \cref{lem:lipschitz-convergence} to conclude that $\sup_{t \in T} \norm{\tilde B_n h(t) - B h(t)} \to 0$.

Putting all of the above together, and noting that by the assumptions of \cref{thm:consistency} we have $\alpha_n = \omega(n^{-p})$ and $p \leq 1/2$, all three of the terms in the original bound must converge to zero in probability uniformly over $t \in T$, so we have our desired result.

\end{proof}

\begin{lemma}
\label{lem:bn-squared-convergence}

Let $B$ and $B_n$ be defined as in \cref{eq:b,eq:bn}, and let $h_n(t)$ be some collections of functions in $\mathcal H$ indexed by $t \in T$, which satisfy the assumptions of \cref{lem:ht-properties}. Then under the assumptions of \cref{thm:consistency}, we have $\sup_{t \in T} \norm{B_n^2 h_n(t) - B^2 h(t)} \to 0$ in probability.

\end{lemma}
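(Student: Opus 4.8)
The plan is to mirror the proof of \cref{lem:bn-convergence} step for step, with the operator square root replaced by its square: write $B_n^2 = (C_n + \alpha_n I)^{-1}$, $B^2 = C^{-1}$, introduce the intermediate operator $\tilde B_n^2 = (C + \alpha_n I)^{-1}$, and split
\begin{equation*}
\norm{B_n^2 h_n(t) - B^2 h(t)} \leq \norm{B_n^2(h_n(t) - h(t))} + \norm{(B_n^2 - \tilde B_n^2) h(t)} + \norm{(\tilde B_n^2 - B^2) h(t)}\,,
\end{equation*}
bounding each of the three terms uniformly over $t \in T$.

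For the first term, since $C_n$ is PSD (compact by \cref{lem:c-compact}) we have $\norm{B_n^2} \leq \alpha_n^{-1}$, so by property 5 of \cref{lem:ht-properties} this term is $O_p(\alpha_n^{-1} n^{-1/2})$, uniformly in $t$. For the second term, the resolvent identity gives $B_n^2 - \tilde B_n^2 = B_n^2 (C - C_n) \tilde B_n^2$; since $(C + \alpha_n I)^{-1}$ and $C^{-1}$ share the eigenbasis of $C$ with eigenvalues satisfying $(\sigma_i + \alpha_n)^{-1} \leq \sigma_i^{-1}$, we get $\norm{\tilde B_n^2 h(t)} \leq \norm{B^2 h(t)}$, which is bounded uniformly in $t$ by property 4 of \cref{lem:ht-properties}; combined with $\norm{C - C_n} = O_p(n^{-p})$ from \cref{lem:cn-convergence} and $\norm{B_n^2} \leq \alpha_n^{-1}$, the second term is $O_p(\alpha_n^{-1} n^{-p})$, uniformly in $t$. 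Under the hypotheses of \cref{thm:consistency}, namely $\alpha_n = \omega(n^{-p})$ and $p \leq 1/2$, both bounds are $o_p(1)$: directly $\alpha_n^{-1} n^{-p} = o(1)$, and $\alpha_n^{-1} n^{-1/2} \leq \alpha_n^{-1} n^{-p}$ because $p \leq 1/2$.

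For the third, deterministic, term I would diagonalize $C$ via the spectral theorem, with orthonormal eigenvectors $v_i$ and eigenvalues $\sigma_i$, and write
\begin{equation*}
\norm{(\tilde B_n^2 - B^2) h(t)}^2 = \sum_{i=1}^\infty \big( (\sigma_i + \alpha_n)^{-1} - \sigma_i^{-1} \big)^2 \inner{h(t)}{v_i}^2\,.
\end{equation*}
Using the bound $\big( (\sigma_i + \alpha_n)^{-1} - \sigma_i^{-1} \big)^2 \leq 4\sigma_i^{-2}$, the series is dominated, termwise and independently of $n$, by $4\norm{B^2 h(t)}^2$, which is finite by property 4 of \cref{lem:ht-properties}; since $\alpha_n \to 0$ each summand tends to $0$, so Tannery's theorem gives $\norm{(\tilde B_n^2 - B^2) h(t)} \to 0$ pointwise in $t$. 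To upgrade this to uniform convergence, the same envelope gives $\norm{(\tilde B_n^2 - B^2) g} \leq 2\norm{B^2 g}$ for admissible $g$, hence
\begin{equation*}
\big| \norm{(\tilde B_n^2 - B^2) h(t)} - \norm{(\tilde B_n^2 - B^2) h(t')} \big| \leq 2\norm{B^2(h(t) - h(t'))} \leq 2 L'' \norm{t - t'}
\end{equation*}
by property 7 of \cref{lem:ht-properties}, so $t \mapsto \norm{(\tilde B_n^2 - B^2) h(t)}$ is Lipschitz uniformly in $n$; since $T$ is totally bounded, \cref{lem:lipschitz-convergence} then yields $\sup_{t \in T}\norm{(\tilde B_n^2 - B^2) h(t)} \to 0$. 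Summing the three bounds gives the claim.

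The proof is not conceptually hard, being essentially a re-run of \cref{lem:bn-convergence}; the only real subtlety is rate bookkeeping, since squaring picks up a factor $\alpha_n^{-1}$ rather than $\alpha_n^{-1/2}$ in the first two terms, and one must check that $\alpha_n = \omega(n^{-p})$ with $p \leq 1/2$ is still exactly strong enough. It is, because the binding term $\alpha_n^{-1} n^{-p}$ is $o(1)$ precisely under $\alpha_n = \omega(n^{-p})$. A secondary point needing care is that $B^2 h(t) = C^{-1} h(t)$ must be a genuine element of $\mathcal H$ of finite norm, so that the spectral series and resolvent manipulations are legitimate; this is exactly property 4 of \cref{lem:ht-properties}.
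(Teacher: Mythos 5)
Your proposal is correct and follows essentially the same route as the paper's proof: the same three-term decomposition via $\tilde B_n^2 = (C+\alpha_n I)^{-1}$, the same $O_p(\alpha_n^{-1}n^{-1/2})$ and $O_p(\alpha_n^{-1}n^{-p})$ bounds for the first two terms using \cref{lem:ht-properties,lem:cn-convergence}, and the same spectral/Tannery argument plus the Lipschitz-uniformity step via \cref{lem:lipschitz-convergence} for the third. The rate bookkeeping you flag is handled identically in the paper.
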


\begin{proof}[Proof of \Cref{lem:bn-squared-convergence}]

Recall that $B_n^2 = (C_n + \alpha_n I)^{-1}$, and $B = C^{-1}$. In addition, define
\begin{equation*}
    \tilde B_n^2 = (C + \alpha_n I)^{-1}\,.
\end{equation*}

Given this definition, we have
\begin{align*}
    \sup_{t \in T} \norm{B_n^2 h_n(t) - B^2 h(t)} &\leq \sup_{t \in T} \norm{B_n^2 h_n(t) - B_n^2 h(t)} + \sup_{t \in T} \norm{B_n^2 h(t) - \tilde B_n^2 h(t)} \\
    &\qquad + \sup_{t \in T} \norm{\tilde B_n^2 h(t) - B^2 h(t)}\,.
\end{align*}

We proceed by bounding the three terms in the RHS of the above separately. For the first term, we have
\begin{align*}
    \sup_{t \in T} \norm{B_n^2 h_n(t) - B_n^2 h(t)} &\leq \norm{B_n^2} \sup_{t \in T} \norm{h_n(t) - h(t)} \\
    &\leq \norm{(\alpha_n I)^{-1}} \sup_{t \in T} \norm{h_n(t) - h(t)}  \\
    &\leq \norm{(\alpha_n I)^{-1}} O_p(n^{-1/2}) \\
    &= O_p(\alpha_n^{-1} n^{-1/2})\,,
\end{align*}
where in the above we apply \cref{lem:ht-properties}, and make use of the fact that $\norm{(A + B)^{-1}} \leq \norm{A^{-1}}$ for PSD $A$ and $B$ sharing the same diagonalizing basis of orthonormal eigenvalues.

Next, for the second term, we have
\begin{align*}
    \sup_{t \in T} \norm{B^2_n h(t) - \tilde B_n^2 h(t)} &= \sup_{t \in T} \norm{B_n^2 (\tilde B_n^{-2} - B_n^{-2}) \tilde B_n^2 h(t)} \\
    &\leq \norm{B_n^2} \norm{C_n - C} \sup_{t \in T} \norm{\tilde B_n^2 h(t) } \\
    &\leq \norm{(\alpha_n I)^{-1}} O_p(n^{-p}) \sup_{t \in T} \norm{B^2 h(t)} \\
    &=  O_p(\alpha_n^{-1} n^{-p})\,,
\end{align*}
where in this derivation we again make use of \cref{lem:ht-properties}, and the same fact about diagonalizable PSD operators as before. In addition, we apply \cref{lem:cn-convergence}.

For the final term, note that by \cref{lem:c-compact}, $C$ is compact, and so by the spectral theorem is diagonalizable. Let $v_1,v_2,\ldots$ denote the sequence of orthonormal eigenvectors forming a basis for $\mathcal H$, and let $\sigma_1,\sigma_2,\ldots$ denote the corresponding sequence of eigenvalues. Then, since $h(t) \in \text{domain}(B^2)$ for each $t$, we have
\begin{align*}
    \norm{\tilde B_n^2 h(t) - B^2 h(t)}^2 &= \sum_{i=1}^\infty \left( (\sigma_i + \alpha_n)^{-1} - \sigma_i^{-1} \right)^2 \inner{h(t)}{v_i}^2 \\
    &\leq \sum_{i=1}^\infty (2 (\sigma_i + \alpha_n)^{-2} + 2 \sigma_i^{-2}) \inner{h(t)}{v_i}^2 \\
    &\leq 4 \norm{B^2 h(t)}^2\,.
\end{align*}

Therefore, the infinite series defining $\norm{\tilde B_n^2 h(t) - B^2 h(t)}^2$ has a uniform convergent envelope that doesn't depend on $n$. In addition, since we have assumed the assumptions of \cref{thm:consistency}, and therefore $\alpha_n \to 0$, we have that each term in the series defining $\norm{\tilde B_n^2 h - B^2 h}^2$ converges to zero. Given these two observations, we can exchange limit and summation by Tannery's theorem, and we have $\norm{\tilde B_n^2 h(t) - B^2 h(t)}^2 \to 0$, and thus $\norm{\tilde B_n^2 h(t) - B^2 h(t)} \to 0$, for each $t \in T$.

Next, we argue that the above convergence holds uniformly over $T$. For any $t,t' \in T$, we can bound
\begin{align*}
    &\abs{\norm{\tilde B_n^2 h(t) - B^2 h(t)} - \norm{\tilde B_n^2 h(t') - B^2 h(t')}} \\
    &\leq \norm{(\tilde B_n^2 h(t) - B^2 h(t)) - (\tilde B_n^2 h(t') - B^2 h(t'))} \\
    &= \norm{\tilde B_n^2 (h(t) - h(t')) - B^2 (h(t) - h(t'))} \\
    &\leq 2 \norm{B^2 (h(t) - h(t'))} \\
    &\leq 2 L'' \norm{t - t'}\,,
\end{align*}
where $L''$ is the constant from \cref{lem:ht-properties}, and we apply the bound $\norm{\tilde B_n^2 h(t) - B^2 h(t)}^2 \leq 4 \norm{B^2 h(t)}^2$ from above. Therefore we have that $\norm{\tilde B_n^2 h(t) - B^2 h(t)}$ is Lipshitz in $t$, and by assumption $T$ is compact-valued, so we can apply \cref{lem:lipschitz-convergence} to conclude that $\sup_{t \in T} \norm{\tilde B_n^2 h(t) - B^2 h(t)} \to 0$.

Putting all of the above together, and noting that by the assumptions of \cref{thm:consistency} we have $\alpha_n = \omega(n^{-p})$, all three of the terms in the original bound must converge to zero in probability, so we have our desired result.

\end{proof}

\begin{lemma}
\label{lem:bh-value}
    Let $h_q \in \mathcal H$ be defined according to $h_q(f) = \e[f(Z)\tr  q(Z)]$, for some square integrable $q$, and let $B$ be defined as in \cref{eq:b}. Then we have
    \begin{equation*}
        \norm{B h_q}^2  = \e[q(Z)\tr  V(Z; \tilde\theta)^{-1} q(Z)]\,.
    \end{equation*}
\end{lemma}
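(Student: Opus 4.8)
The plan is to collapse $\norm{B h_q}^2$ to a variational problem that can be solved pointwise. Since $C$ is compact, self-adjoint and PSD (\cref{lem:c-compact}) and $\mathcal H$ is separable, I would apply \cref{lem:operator-sqrt-inverse} with $\alpha=0$ to get
\[
    \norm{B h_q}^2 = \norm{C^{-1/2} h_q}^2 = \sup_{h' \in \mathcal H}\ \inner{h_q}{h'} - \tfrac14 \inner{C h'}{h'},
\]
with the understanding that the computation below also exhibits this supremum as finite, which is exactly what licenses the identity (by the second half of \cref{lem:operator-sqrt-inverse}).

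Next I would transport the supremum to $\mathcal F$ through the Riesz isometry $\varphi:\mathcal H\to\mathcal F$, which is a bijection. Writing $f=\varphi(h')$, the isometry gives $\inner{h_q}{h'}_{\mathcal H}=h_q(f)=\e[f(Z)\tr q(Z)]$, while \eqref{eq:c} together with the tower property gives $\inner{Ch'}{h'}_{\mathcal H}=\e[(f(Z)\tr\rho(X;\tilde\theta))^2]=\e[f(Z)\tr V(Z;\tilde\theta)f(Z)]$. Hence
\[
    \norm{B h_q}^2 = \sup_{f \in \mathcal F}\ \e[f(Z)\tr q(Z)] - \tfrac14 \e[f(Z)\tr V(Z;\tilde\theta) f(Z)].
\]
For the upper bound I maximize inside the expectation: by \cref{asm:non-degenerate} the matrix $V(z;\tilde\theta)$ is positive definite for a.e.\ $z$, so $v\tr q(z)-\tfrac14 v\tr V(z;\tilde\theta)v\le q(z)\tr V(z;\tilde\theta)^{-1}q(z)$ for every $v\in\mathbb R^m$ (equality at $v=2V(z;\tilde\theta)^{-1}q(z)$); taking $v=f(Z)$ and integrating yields $\norm{B h_q}^2\le\e[q(Z)\tr V(Z;\tilde\theta)^{-1}q(Z)]$.

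For the matching lower bound, set $\phi(z)=V(z;\tilde\theta)^{-1}q(z)$. By \cref{asm:non-degenerate} the map $g\mapsto V(\cdot;\tilde\theta)^{-1}g$ is bounded on $L_2$, so $\phi\in L_2$; and by \cref{asm:regularity} (boundedness of $\rho$, hence of $V(\cdot;\tilde\theta)$ as an operator) the functional $f\mapsto\e[f(Z)\tr q(Z)]-\tfrac14\e[f(Z)\tr V(Z;\tilde\theta)f(Z)]$ is continuous in the $L_2$ topology. Since each component RKHS is universal (\cref{asm:rkhs}) and $\mathcal Z$ is bounded, $\mathcal F$ is dense in $L_2$, so there exist $f_n\in\mathcal F$ with $f_n\to 2\phi$ in $L_2$; evaluating the objective at $f_n$ and passing to the limit gives the value $\e[q(Z)\tr V(Z;\tilde\theta)^{-1}q(Z)]$, hence $\norm{B h_q}^2\ge\e[q(Z)\tr V(Z;\tilde\theta)^{-1}q(Z)]$. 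Combining the two bounds proves the claim (and in particular shows the supremum is finite).

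The step I expect to be the main obstacle is this lower bound: one must be careful that universality of the component RKHSs really delivers $L_2$-density of $\mathcal F$ over the bounded (not necessarily closed) domain $\mathcal Z$, and that the quadratic-in-$f$ objective is genuinely $L_2$-continuous, which is precisely where \cref{asm:non-degenerate} and the uniform boundedness in \cref{asm:regularity} enter. An alternative avoiding the variational detour is to write $\norm{B h_q}^2=\inner{C^{-1}h_q}{h_q}_{\mathcal H}$, use \cref{lem:c-inverse} to identify $C^{-1}h_q=h_\phi$, and note that $Ch_\phi=h_q$ forces $V(\cdot;\tilde\theta)\,\varphi(h_\phi)=q$ in $L_2$ by the same density argument, after which $\inner{h_\phi}{h_q}_{\mathcal H}=\e[\varphi(h_\phi)(Z)\tr V(Z;\tilde\theta)\varphi(h_\phi)(Z)]$ collapses to the stated expression; since this route relies on the identical density fact, it is not obviously shorter.
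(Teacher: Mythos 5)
Your proposal is correct and follows essentially the same route as the paper: reduce $\norm{Bh_q}^2$ to the variational problem $\sup_{f\in\mathcal F}\e[f(Z)\tr q(Z)]-\tfrac14\e[f(Z)\tr V(Z;\tilde\theta)f(Z)]$ via \cref{lem:operator-sqrt-inverse} and the Riesz duality, identify the optimizer $f^*=2V(\cdot;\tilde\theta)^{-1}q$, and use universality of $\mathcal F$ to approximate it in $L_2$. The only (cosmetic) difference is that you obtain the upper bound by pointwise completion of squares whereas the paper uses a Gateaux-derivative/concavity argument over $L_2$; your explicit attention to the $L_2$-continuity of the objective and the finiteness of the supremum is if anything slightly more careful than the paper's treatment.
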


\begin{proof}[Proof of \Cref{lem:bh-value}]

First note that by \cref{lem:c-compact} $C$ is compact, so therefore applying \cref{lem:operator-sqrt-inverse,lem:rkhs-dual} we have
\begin{equation*}
    \norm{B h_q}^2 = \sup_{f \in \mathcal F} U(\theta, f)\,,
\end{equation*}
where
\begin{equation*}
    U(\theta, f) = \e[f(Z)\tr  q(Z)] - \frac{1}{4} \e[(f(Z)\tr  \rho(X;\tilde\theta))^2]\,.
\end{equation*}

We will proceed by solving for the function $f^*$ that maximizes the above supremum. Consider the Gateaux derivative of $U(\theta,f)$ at $f^*$ in the direction of $\epsilon$, where $f^*$ and $\epsilon$ are both square integrable functions of $Z$. We have
\begin{align*}
    d(U(\theta,f); \epsilon)_{|f=f^*} &= \e[\epsilon(Z)\tr  q(Z)] - \frac{1}{2} \e[\epsilon(Z)\tr  \rho(X; \tilde\theta) \rho(X; \tilde\theta)\tr  f^*(Z)] \\
    &= \e \left[ \epsilon(Z)\tr  \left(q(Z) - \frac{1}{2} V(Z; \tilde\theta) f^*(Z) \right) \right] \,.
\end{align*}

By \cref{asm:non-degenerate} $V(Z; \tilde\theta)^{-1}$ is a bounded linear operator from $L_2$ to $L_2$, and by the assumptions of this lemma $q(Z)$ is in $L_2$, so therefore $V(Z; \tilde\theta)^{-1} q(Z)$ is in $L_2$ also. Thus, if we choose
\begin{equation*}
    f^*(Z) = 2 V(Z; \tilde\theta)^{-1} q(Z)\,,
\end{equation*}
the Gateaux above derivative is equal to zero for every square-integrable $\epsilon(Z)$. Now, $U(\theta,f)$ is concave in $f$, and it is easy to verify by second derivatives that this choice corresponds to a minimum, so therefore we conclude that $f^*$ as defined above as the global maximizer of $U(\theta, f)$ out of all square-integrable functions. 

Finally, we note that by \cref{asm:rkhs} $\mathcal F$ is universal, so there must exist some sequence $f_n$ in $\mathcal F$ such that $\norm{f_n - f^*}_2 \to 0$ as $n \to \infty$, regardless of whether or not $f^* \in \mathcal F$. Therefore, we conclude that
\begin{equation*}
    \norm{B h_q}^2 = \e[q(Z)\tr  V(Z; \tilde\theta)^{-1} q(Z)]\,.
\end{equation*}

\end{proof}

\begin{lemma}
\label{lem:j-continuous}

Given the assumptions of \cref{thm:consistency}, $J(\theta)$ as defined in \cref{eq:j} is Lipschitz continuous in $\theta$.

\end{lemma}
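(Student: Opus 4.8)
The plan is to exploit the representation $J(\theta) = \norm{B\bar h(\theta)}^2$ recorded after \cref{eq:j}, where $B = C^{-1/2}$ (see \cref{eq:b}) and $\bar h(\theta)$ is the element of $\mathcal H$ defined in \cref{eq:h}, and to reduce the claim to the boundedness and Lipschitz properties of $\theta \mapsto B\bar h(\theta)$ already established in \cref{lem:ht-properties}. Concretely, I would apply \cref{lem:ht-properties} with index set $T = \Theta$ and $q(x;\theta) = \rho(x;\theta)$: under \cref{asm:regularity} this $q$ is uniformly bounded and Lipschitz in $\theta$ with an $x$-free constant, under \cref{asm:compact-theta} the covering numbers of $\Theta$ satisfy the required polynomial bound, and the remaining hypotheses of \cref{thm:consistency} are in force, so the element $h(t)$ appearing there coincides with $\bar h(\theta)$ of \cref{eq:h}. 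Properties~3 and~6 of \cref{lem:ht-properties} then supply a finite constant $M := \sup_{\theta \in \Theta}\norm{B\bar h(\theta)} < \infty$ and a constant $L'$ with $\norm{B(\bar h(\theta') - \bar h(\theta))} \le L'\norm{\theta' - \theta}$ for all $\theta,\theta' \in \Theta$.

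Given these two facts, the proof is a short computation: for any $\theta,\theta' \in \Theta$,
\[
\abs{J(\theta') - J(\theta)} = \bigl(\norm{B\bar h(\theta')} + \norm{B\bar h(\theta)}\bigr)\,\bigl|\,\norm{B\bar h(\theta')} - \norm{B\bar h(\theta)}\,\bigr| \le 2M\,\norm{B(\bar h(\theta') - \bar h(\theta))} \le 2ML'\,\norm{\theta' - \theta},
\]
where the first inequality combines $\norm{B\bar h(\theta')}+\norm{B\bar h(\theta)} \le 2M$ with the reverse triangle inequality $\abs{\norm{a} - \norm{b}} \le \norm{a-b}$ applied to $a = B\bar h(\theta')$, $b = B\bar h(\theta)$, and the second inequality is property~6. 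This gives Lipschitz continuity of $J$ with constant $2ML'$.

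There is essentially no hard step: all of the analytic content — that $\bar h(\theta)$ lies in the domain of $C^{-1/2}$, that $\sup_{\theta}\norm{B\bar h(\theta)}$ is finite, and that $\theta\mapsto B\bar h(\theta)$ is Lipschitz into $\mathcal H$ — is already packaged in \cref{lem:ht-properties}, whose own proof leans on \cref{lem:bh-value} and \cref{asm:non-degenerate}. The only point that warrants a moment's care is passing from Lipschitzness of $\theta \mapsto B\bar h(\theta)$ to Lipschitzness of $\theta \mapsto \norm{B\bar h(\theta)}^2$, which is precisely where the uniform bound $M$ enters (squaring inflates the Lipschitz constant by the magnitude of the argument, so that magnitude must be bounded uniformly over $\Theta$). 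An alternative route, if one prefers not to invoke \cref{lem:ht-properties}, is to write $J(\theta) = \e[\bar\rho(Z;\theta)\tr  V(Z;\tilde\theta)^{-1}\bar\rho(Z;\theta)]$ with $\bar\rho(Z;\theta) := \e[\rho(X;\theta)\mid Z]$ via \cref{lem:bh-value} and the tower property, use the identity $a\tr  Ma - b\tr  Mb = (a-b)\tr  M(a+b)$ for symmetric $M$, apply Cauchy--Schwarz in the $V(\cdot;\tilde\theta)^{-1}$-weighted $L_2$ inner product, and then bound the weighting operator by \cref{asm:non-degenerate}, the term $\norm{\bar\rho(\cdot;\theta') - \bar\rho(\cdot;\theta)}_2$ by conditional Jensen together with the Lipschitzness of $\rho$, and the term $\norm{\bar\rho(\cdot;\theta') + \bar\rho(\cdot;\theta)}_2$ by the uniform boundedness of $\rho$; both routes yield the same conclusion.
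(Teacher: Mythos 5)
Your proof is correct, and your primary route differs in organization from the paper's, though not in underlying content. The paper works directly with the $L_2$ representation from \cref{lem:bh-value}: it writes $J(\theta) = \e[q_\theta(Z)\tr V(Z;\tilde\theta)^{-1} q_\theta(Z)]$ with $q_\theta(Z) = \e[\rho(X;\theta)\mid Z]$, splits the difference $J(\theta)-J(\theta')$ into the two cross terms, and bounds them via the uniform boundedness of $\rho$ (\cref{asm:regularity}), the operator bound on $V(\cdot;\tilde\theta)^{-1}$ (\cref{asm:non-degenerate}), and the Lipschitzness of $\rho$ in $\theta$ --- exactly the ``alternative route'' you sketch in your final paragraph. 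Your main argument instead delegates all of that to properties~3 and~6 of \cref{lem:ht-properties} (whose proofs rest on the same \cref{lem:bh-value} computation) and finishes with the elementary identity $\abs{\norm{a}^2-\norm{b}^2}=(\norm{a}+\norm{b})\abs{\norm{a}-\norm{b}}\le 2M\norm{a-b}$. This is cleaner and avoids duplicating the $L_2$ estimates, at the cost of making the Lipschitz constant implicit rather than the explicit $2mMQL$ the paper obtains; the verification that \cref{lem:ht-properties} applies with $T=\Theta$ and $q=\rho$ is exactly as you state, so there is no gap.
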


\begin{proof}[Proof of \Cref{lem:j-continuous}]

We first note that by \cref{lem:bh-value} 
\begin{equation*}
    J(\theta) = \e[\e[\rho(X; \theta) \mid Z]\tr  V(Z; \tilde\theta)^{-1} \e[\rho(X; \theta) \mid Z]]\,.
\end{equation*}

Now, define $q_{\theta}(Z) = \e[\rho(X; \theta) \mid Z]$. Given this, fore any $\theta, \theta' \in \Theta$ we have
\begin{align*}
    \abs{J(\theta) - J(\theta')} &= \abs{\e[q_\theta(Z)\tr  V(Z; \tilde\theta)^{-1} q_\theta(Z)] - \e[q_{\theta'}(Z)\tr  V(Z; \tilde\theta)^{-1} q_{\theta'}(Z)]} \\
    &\leq \abs{\e[q_\theta(Z)\tr  V(Z; \tilde\theta)^{-1} (q_\theta - q_{\theta'})(Z)]} \\
    &\qquad\qquad + \abs{\e[(q_{\theta} - q_{\theta'})(Z)\tr  V(Z; \tilde\theta)^{-1} q_{\theta'}(Z)]}\,.
\end{align*}

Now, it easily follows from \cref{asm:rho-complexity} and \cref{asm:non-degenerate} that we have $\sup_{\theta \in \Theta} \norm{q_{\theta}^\top V(Z;\tilde\theta)^{-1}}_{\infty} = Q$ for some constant $Q < \infty$. Therefore, we have
\begin{equation*}
    |J(\theta) - J(\theta')| \leq 2 m Q L \|\theta' - \theta\| \,,
\end{equation*}
where $L$ is the Lipschitz constant from \cref{asm:rho-complexity} such that $\EE[|q_{\theta'}(Z)_i - q_{\theta}(Z)_i|] \leq \EE[|\rho_i(X;\theta') - \rho_i(X;\theta)|] \leq L \|\theta' - \theta\|$ for each $i$.
Therefore, we have our required result with Lipschitz constant $2 m Q L$.

\end{proof}

\begin{lemma}
\label{lem:objective-convergence}

Let $J$ and $J_n$ be defined as in \cref{eq:j,eq:jn}. Then given the assumptions of \cref{thm:consistency}, we have $\sup_{\theta \in \Theta}\abs{J_n(\theta) - J(\theta)} \to 0$ in probability.

\end{lemma}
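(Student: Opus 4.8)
The plan is to deduce the claim from Lemma~\ref{lem:bn-convergence}. Recall from the appendix that $J_n(\theta) = \norm{B_n \bar h_n(\theta)}^2$ (via Lemma~\ref{lem:kvmm-cgmm-equiv}) and $J(\theta) = \norm{B \bar h(\theta)}^2$ (as established in the proof of Lemma~\ref{lem:bh-value}), where $B_n = (C_n+\alpha_n I)^{-1/2}$, $B = C^{-1/2}$, and $\bar h_n(\theta), \bar h(\theta) \in \mathcal H$ are given by $\bar h_n(\theta)(f) = \e_n[f(Z)\tr \rho(X;\theta)]$ and $\bar h(\theta)(f) = \e[f(Z)\tr \rho(X;\theta)]$. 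So the whole problem reduces to controlling $\norm{B_n\bar h_n(\theta)}^2 - \norm{B\bar h(\theta)}^2$ uniformly over $\theta$.

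First I would instantiate Lemma~\ref{lem:ht-properties} (and hence Lemma~\ref{lem:bn-convergence}) with index set $T = \Theta$ and the $\theta$-indexed family $q(x;\theta) = \rho(x;\theta)$, so that $h_n(\theta) = \bar h_n(\theta)$ and $h(\theta) = \bar h(\theta)$. This requires checking the hypotheses of those lemmas: $\Theta$ is totally bounded with $\log N_\epsilon(\Theta) \leq k_\Theta\epsilon^{-c_\Theta}$ for some $c_\Theta\in(0,1)$ by \cref{asm:compact-theta}, and $\rho(x;\theta)$ is uniformly bounded over $\mathcal X \times \Theta$ and Lipschitz in $\theta$ with an $x$-independent constant by \cref{asm:regularity} — exactly what is needed. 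Lemma~\ref{lem:bn-convergence} then yields $\sup_{\theta\in\Theta}\norm{B_n\bar h_n(\theta) - B\bar h(\theta)} \to 0$ in probability, and Lemma~\ref{lem:ht-properties}(3) gives a deterministic bound $\sup_{\theta\in\Theta}\norm{B\bar h(\theta)} =: M < \infty$.

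Next I would pass from the norm of the difference to the difference of the squared norms using the identity $\norm{a}^2 - \norm{b}^2 = \inner{a-b}{a+b}$ together with Cauchy--Schwarz, which for each $\theta$ gives
\[
\abs{J_n(\theta) - J(\theta)} \leq \norm{B_n\bar h_n(\theta) - B\bar h(\theta)}\bigl(\norm{B_n\bar h_n(\theta)} + \norm{B\bar h(\theta)}\bigr).
\]
Taking the supremum over $\theta$ and bounding $\norm{B_n\bar h_n(\theta)} \leq \norm{B_n\bar h_n(\theta) - B\bar h(\theta)} + \norm{B\bar h(\theta)}$, the second factor is at most $\sup_{\theta}\norm{B_n\bar h_n(\theta) - B\bar h(\theta)} + 2M = o_p(1) + O(1) = O_p(1)$, while the first factor is $o_p(1)$. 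Hence the product — and therefore $\sup_{\theta\in\Theta}\abs{J_n(\theta) - J(\theta)}$ — converges to $0$ in probability, which is the claim.

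I do not expect a serious obstacle here: the operator-perturbation work (reconciling $\norm{C_n - C} = O_p(n^{-p})$ with the Tikhonov level $\alpha_n = \omega(n^{-p})$, and the uniform-over-$T$ control via \cref{lem:lipschitz-convergence}) is already packaged inside Lemma~\ref{lem:bn-convergence}. The only points requiring care are the bookkeeping of confirming that $\rho(\cdot;\theta)$ as a $\theta$-indexed family satisfies the exact regularity and metric-entropy hypotheses of Lemma~\ref{lem:ht-properties} under \cref{asm:compact-theta,asm:regularity}, and then the elementary Cauchy--Schwarz assembly of the squared-norm difference using the uniform bound from Lemma~\ref{lem:ht-properties}(3).
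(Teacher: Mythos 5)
Your proposal is correct and follows essentially the same route as the paper's proof: both reduce to the identity $J_n(\theta)=\norm{B_n\bar h_n(\theta)}^2$, $J(\theta)=\norm{B\bar h(\theta)}^2$, invoke \cref{lem:bn-convergence} for $\sup_\theta\norm{B_n\bar h_n(\theta)-B\bar h(\theta)}=o_p(1)$ and \cref{lem:ht-properties} for $\sup_\theta\norm{B\bar h(\theta)}<\infty$, and finish with an elementary difference-of-squared-norms bound. Your version is slightly more explicit about verifying the hypotheses of \cref{lem:ht-properties} via \cref{asm:compact-theta,asm:regularity}, but the substance is identical.
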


\begin{proof}[Proof of \Cref{lem:objective-convergence}]

First, for any fixed $\theta$, we can obtain the bound
\begin{align*}
    \abs{J_n(\theta) - J(\theta)} &= \abs{\norm{B_n \bar h_n(\theta)}^2 - \norm{B \bar h(\theta)}^2} \\
    &\leq \abs{\norm{B_n \bar h_n(\theta)} + \norm{B \bar h(\theta)}} \abs{\norm{B_n \bar h_n(\theta)} - \norm{B \bar h(\theta)}} \\
    &\leq (2 \norm{B \bar h(\theta)} + \epsilon_n(\theta)) \epsilon_n(\theta)\,,
\end{align*}
where
\begin{equation*}
    \epsilon_n(\theta) = \norm{B_n \bar h_n(\theta) - B \bar h(\theta)}\,.
\end{equation*}

Now, given \cref{asm:regularity}, the sequences given by $\bar h_n(\theta)$ with limit $\bar h(\theta)$ satisfy the assumptions of \cref{lem:ht-properties}, so it follows that $\sup_{\theta \in \Theta} \norm{B \bar h(\theta)} < \infty$. Next, by \cref{lem:bn-convergence} we have $\sup_{\theta \in \Theta} \epsilon_n(\theta) \to 0$ in probability. Putting these two observations together with the above bound, we get $\sup_{\theta \in \Theta} \abs{J_n(\theta) - J(\theta)} \to 0$ in probability, as required.

\end{proof}

\begin{lemma}
\label{lem:unique-solution}

Let $J$ be defined as in \cref{eq:j}. Then $\theta_0$ is the unique element of $\Theta$ such that $J(\theta) = 0$.

\end{lemma}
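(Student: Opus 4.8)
The plan is to use \cref{lem:bh-value} to reduce the equation $J(\theta)=0$ to a pointwise statement about the conditional mean of $\rho$, and then invoke the maintained identification hypothesis \cref{eq:ident}. First I would recall from \cref{lem:bh-value} (applied with $q(Z)=\e[\rho(X;\theta)\mid Z]$, after using the tower property to write $\bar h(\theta)=h_q$, exactly as in the proof of \cref{lem:j-continuous}) that
\[
J(\theta) = \e[\e[\rho(X;\theta)\mid Z]\tr V(Z;\tilde\theta)^{-1}\e[\rho(X;\theta)\mid Z]].
\]
Write $q_\theta(z)=\e[\rho(X;\theta)\mid Z=z]$, so that $J(\theta)=\e[q_\theta(Z)\tr V(Z;\tilde\theta)^{-1}q_\theta(Z)]$; note this expectation is finite by the same reasoning as in \cref{lem:bh-value,lem:j-continuous}, since $q_\theta$ is bounded by \cref{asm:regularity} and $V(\cdot;\tilde\theta)^{-1}$ is a bounded operator by \cref{asm:non-degenerate}.

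Next I would observe that $V(z;\tilde\theta)=\e[\rho(X;\tilde\theta)\rho(X;\tilde\theta)\tr\mid Z=z]$ is a conditional second-moment matrix and hence positive semidefinite for almost every $z$; combined with \cref{asm:non-degenerate}, which guarantees it is invertible almost surely, it is therefore positive definite almost surely, and so is its inverse. Consequently the integrand $q_\theta(z)\tr V(z;\tilde\theta)^{-1}q_\theta(z)$ is nonnegative almost surely, and equals zero if and only if $q_\theta(z)=0$. It follows that $J(\theta)=0$ if and only if $q_\theta(Z)=0$ almost surely, i.e.\ $\e[\rho(X;\theta)\mid Z]=0$ almost surely, which is precisely the conditional moment restriction \cref{eq:ident} evaluated at $\theta$. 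By the standing assumption that $\theta_0$ is the unique element of $\Theta$ solving \cref{eq:ident}, we conclude that $\theta_0$ is the unique element of $\Theta$ with $J(\theta_0)=0$; that $J(\theta_0)=0$ indeed holds is immediate since $\e[\rho(X;\theta_0)\mid Z]=0$ a.s.

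There is no substantial obstacle in this argument; it is essentially immediate once \cref{lem:bh-value} is in hand. The only point that warrants care is that \cref{asm:non-degenerate} asserts only invertibility of $V$, so one must separately note the automatic positive semidefiniteness of the conditional second-moment matrix in order to upgrade invertibility to positive definiteness and thereby conclude that the quadratic form vanishes exactly when its argument does.
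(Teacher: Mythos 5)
Your proposal is correct and follows essentially the same route as the paper's proof: apply \cref{lem:bh-value} to write $J(\theta)$ as the expected quadratic form in $\e[\rho(X;\theta)\mid Z]$ weighted by $V(Z;\tilde\theta)^{-1}$, observe that this matrix is almost surely positive definite so the integrand vanishes exactly when the conditional mean does, and conclude via the identification assumption in \cref{eq:ident}. Your added remark that positive definiteness requires combining \cref{asm:non-degenerate} with the automatic positive semidefiniteness of a conditional second-moment matrix is a point the paper's proof passes over silently, so your write-up is if anything slightly more careful.
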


\begin{proof}[Proof of \Cref{lem:unique-solution}]

First, by \cref{lem:bh-value}, we have that
\begin{equation*}
    J(\theta) = \e[\e[\rho(X; \theta) \mid Z]\tr  V(Z; \tilde\theta)^{-1} \e[\rho(X; \theta) \mid Z]]\,,
\end{equation*}
where $V(Z;\tilde\theta) = \e[\rho(X; \tilde\theta) \rho(X; \tilde\theta)\tr  \mid Z]$. Now, by \cref{asm:non-degenerate} we know that $V(z; \tilde\theta)^{-1}$ is positive definite, with strictly positive eigenvalues, for almost everywhere $z$. Thus, for almost everywhere $z$, we have that
\begin{equation*}
    \e[\rho(X; \theta) \mid Z=z]\tr  V(z; \tilde\theta)^{-1} \e[\rho(X; \theta) \mid Z=z] \geq 0,
\end{equation*}
with equality if and only if $\e[\rho(X; \theta) \mid Z=z] = 0$. Thus $J(\theta) = 0$ if and only if $\e[\rho(X; \theta) \mid Z] = 0$ almost surely. Finally, by definition $\theta_0$ is the unique parameter value such that $\e[\rho(X; \theta_0) \mid Z] = 0$ almost surely, which gives us our result.

\end{proof}

\begin{lemma}
\label{lem:gaussian-convergence}

Let the assumptions of \cref{thm:asym-norm} be given, and define the random vector $W^{(n)}$, whose $i$'th entry is given by
\begin{equation*}
    W^{(n)}_i = \inner{B_n \frac{\partial}{\partial \theta_i} \bar h_n(\hat\theta_n)}{\sqrt{n} B_n \bar h_n(\theta_0)}\,.
\end{equation*}
Then $W^{(n)} + \theta_0$ is an asymptotically linear and asymptotically normal estimator for $\theta_0$, with covariance matrix $\Delta$, defined as in the statement of \cref{thm:asym-norm}.
\end{lemma}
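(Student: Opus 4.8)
The plan is to reduce $W^{(n)}$ to a normalized i.i.d.\ sum plus an asymptotically negligible remainder, and then invoke the ordinary multivariate central limit theorem. The one structural subtlety, which dictates the whole argument, is that $B_n = (C_n+\alpha_n I)^{-1/2}$ has operator norm of order $\alpha_n^{-1/2}\to\infty$, so $\sqrt n\, B_n\bar h_n(\theta_0)$ is \emph{not} $O_p(1)$ and one cannot take limits in $\inner{B_n\,\cdot}{B_n\,\cdot}$ factorwise. The fix is to use self-adjointness of $B_n$ to move one copy across the inner product,
\begin{equation*}
    W^{(n)}_i = \inner{B_n^2 \tfrac{\partial}{\partial\theta_i}\bar h_n(\hat\theta_n)}{\sqrt n\,\bar h_n(\theta_0)}_{\mathcal H},
\end{equation*}
after which both factors are controlled. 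Throughout, $\hat\theta_n\to\theta_0$ in probability, since the hypotheses of \cref{thm:asym-norm} include those of \cref{thm:consistency} and $J_n(\hat\theta_n)=\inf_\theta J_n(\theta)+o_p(1/n)$ in particular yields the $o_p(1)$ slack required there.

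Next I would establish the two ingredients. (i) \emph{Control of $\sqrt n\,\bar h_n(\theta_0)$.} Since $\e[\rho(X;\theta_0)\mid Z]=0$ we have $\bar h(\theta_0)=0$, so applying \cref{lem:ht-properties} with a singleton index set and $q=\rho(\cdot;\theta_0)$ (bounded and trivially Lipschitz, by \cref{asm:regularity}) gives $\norm{\sqrt n\,\bar h_n(\theta_0)}_{\mathcal H}=O_p(1)$. (ii) \emph{Convergence of the other factor.} The collection $\theta\mapsto\tfrac{\partial}{\partial\theta_i}\bar h_n(\theta)$ and its limit $\theta\mapsto\tfrac{\partial}{\partial\theta_i}\bar h(\theta)$ satisfy the hypotheses of \cref{lem:ht-properties} by \cref{asm:continuous-deriv}, so \cref{lem:bn-squared-convergence} gives $\sup_{\theta\in\Theta}\norm{B_n^2\tfrac{\partial}{\partial\theta_i}\bar h_n(\theta)-B^2\tfrac{\partial}{\partial\theta_i}\bar h(\theta)}\to 0$; combining this, evaluated at $\hat\theta_n$, with the Lipschitz bound of \cref{lem:ht-properties} and $\hat\theta_n\to\theta_0$, and noting $\tfrac{\partial}{\partial\theta_i}\bar h(\theta_0)=h'_i$ so that $B^2\tfrac{\partial}{\partial\theta_i}\bar h(\theta_0)=C^{-1}h'_i$ (a finite element, again by \cref{lem:ht-properties}), I obtain $B_n^2\tfrac{\partial}{\partial\theta_i}\bar h_n(\hat\theta_n)\to C^{-1}h'_i$ in probability.

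The conclusion then follows quickly. By Cauchy--Schwarz together with (i) and (ii),
\begin{equation*}
    \Bigl|W^{(n)}_i - \inner{C^{-1}h'_i}{\sqrt n\,\bar h_n(\theta_0)}_{\mathcal H}\Bigr| \le \norm{B_n^2\tfrac{\partial}{\partial\theta_i}\bar h_n(\hat\theta_n) - C^{-1}h'_i}\,\norm{\sqrt n\,\bar h_n(\theta_0)} = o_p(1)\,O_p(1)=o_p(1).
\end{equation*}
Letting $g_i\in\mathcal F$ be the Riesz representer of $C^{-1}h'_i\in\mathcal H$, \cref{lem:rkhs-dual} gives $\inner{C^{-1}h'_i}{\sqrt n\,\bar h_n(\theta_0)}_{\mathcal H}=\sqrt n\,\e_n[g_i(Z)\tr\rho(X;\theta_0)]=\tfrac{1}{\sqrt n}\sum_j g_i(Z_j)\tr\rho(X_j;\theta_0)$. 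Hence $W^{(n)}=\tfrac{1}{\sqrt n}\sum_j\psi(X_j)+o_p(1)$ with $\psi_i(x)=g_i(z)\tr\rho(x;\theta_0)$ and $\e[\psi(X)]=0$, which is the asymptotic-linearity claim for the estimator $W^{(n)}+\theta_0$ of $\theta_0$; the multivariate CLT then gives $W^{(n)}\Rightarrow N(0,\Sigma)$ with $\Sigma_{ij}=\e[g_i(Z)\tr V(Z;\theta_0)g_j(Z)]$ (finite since $g_i$ is bounded by \cref{lem:f-properties} and $\rho$ by \cref{asm:regularity}). It remains to identify $\Sigma=\Delta$: by self-adjointness of $C^{-1/2}$ one has $\Delta_{ij}=\inner{(C^{-1/2}C_0C^{-1/2})C^{-1/2}h'_i}{C^{-1/2}h'_j}=\inner{C_0C^{-1}h'_i}{C^{-1}h'_j}$, and unwinding the definition of $C_0$ together with \cref{lem:rkhs-dual} turns this into $\e[g_i(Z)\tr\rho(X;\theta_0)\rho(X;\theta_0)\tr g_j(Z)]=\Sigma_{ij}$.

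The main obstacle, as flagged above, is the blow-up of $B_n$ as $\alpha_n\to 0$; the whole argument hinges on relocating one factor of $B_n$ so that we pair the $O_p(1)$ empirical-process term $\sqrt n\,\bar h_n(\theta_0)$ against the \emph{convergent} element $B_n^2\tfrac{\partial}{\partial\theta_i}\bar h_n(\hat\theta_n)\to C^{-1}h'_i$ --- which is precisely what \cref{lem:bn-squared-convergence} (rather than \cref{lem:bn-convergence}) was set up to deliver, and why $C^{-1}h'_i$ being a well-defined finite element matters. A minor point needing care is that the distributional limit is really just the classical CLT for the fixed scalar summands $g_i(Z_j)\tr\rho(X_j;\theta_0)$: the Hilbert-space empirical-process theory enters only to furnish the $O_p(1)$ norm bound that annihilates the cross term, and the operator algebra enters only to match the limiting covariance with $\Delta$.
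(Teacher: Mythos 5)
Your proposal is correct and follows essentially the same route as the paper's proof: move one factor of $B_n$ across the inner product by self-adjointness, pair the $O_p(1)$ element $\sqrt{n}\,\bar h_n(\theta_0)$ (which the paper controls via \cref{lem:donsker,lem:uniform-clt}, the same machinery underlying the bound you cite from \cref{lem:ht-properties}) against the convergent element $B_n^2\tfrac{\partial}{\partial\theta_i}\bar h_n(\hat\theta_n)\to C^{-1}h'_i$ supplied by \cref{lem:bn-squared-convergence}, reduce to a mean-zero i.i.d.\ sum, and identify the CLT covariance with $\Delta$ by the same operator algebra. No gaps.
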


\begin{proof}[Proof of \Cref{lem:gaussian-convergence}]

First, we note that since $B_n$ is self-adjoint, we have
\begin{equation*}
    W^{(n)}_i = \inner{B_n^2 \frac{\partial}{\partial \theta_i} \bar h_n(\hat\theta_n)}{\sqrt{n} \bar h_n(\theta_0)}\,.
\end{equation*}
Now, by \cref{lem:ht-properties} and \cref{asm:continuous-deriv}, it easily follows that
\begin{equation*}
    \norm{\frac{\partial}{\partial\theta_i} \bar h_n(\hat\theta_n) - \frac{\partial}{\partial\theta_i} \bar h(\theta_0)} = O_p(n^{-1/2})\,.
\end{equation*}
In addition, noting that $B^2 = C^{-1}$, we have
\begin{align*}
    \norm{B^2 \frac{\partial}{\partial \theta_i} \bar h(\theta_0)} &= \sup_{\norm{h}_{\mathcal H} \leq 1} \inner{C^{-1} \frac{\partial}{\partial \theta_i} \bar h(\theta_0)}{h} \\
    &= \sup_{\norm{f}_{\mathcal F} \leq 1} (C^{-1} \frac{\partial}{\partial \theta_i} \bar h(\theta_0))(f) \\
    &= \sup_{\norm{f}_{\mathcal F} \leq 1} \e[f(Z)\tr  V(Z;\tilde\theta)^{-1} \e[\rho'_i(X;\theta_0) \mid Z]]\,,
\end{align*}
where in the above we apply \cref{lem:rkhs-dual,lem:c-inverse}. Now, it easily follows from \cref{lem:f-properties} that $\{f : \norm{f} \leq 1\}$ are uniformly bounded in $L_2$ norm, and in addition it follows from \cref{asm:continuous-deriv} that $\e[\rho'_i(X) \mid Z]$ has finite $L_2$ norm. Then, given \cref{asm:non-degenerate} and the above bound, it trivially follows that $\norm{B^2 (\partial / \partial\theta_i) \bar h(\theta_0)} < \infty$.
Given the above, and \cref{lem:cn-convergence}, it follows from \cref{lem:bn-squared-convergence} that $B_n^2 (\partial / \partial \theta_i) \bar h_n(\hat\theta_n) \to B^2 (\partial / \partial\theta_i) \bar h(\theta_0)$ in probability under $\mathcal H$.

Next, note that $\bar h_n(\theta_0)(f) = \e_n[f(Z)\tr  \e[\rho(X;\theta_0) \mid Z]]$. Now, by \cref{lem:f-donsker} and corollary 9.32 of \citet{kosorok2007introduction} it easily follows that the function class $\mathcal G = \{g : g(z) = f(z)\tr  \e[\rho(X;\theta_0) \mid Z=z], \norm{f} \leq 1\}$ is $\Pcal$-Donsker in the sense of \citet{kosorok2007introduction}, and thus it follows from \cref{lem:uniform-clt} that $\sqrt{n} \bar h_n(\theta_0)$ is stochastically bounded. Therefore, given the previous convergence result, we have
\begin{align*}
    W_i^{(n)} &= \inner{B^2 \frac{\partial}{\partial \theta_i} \bar h(\theta_0)}{\sqrt{n} \bar h_n(\theta_0)} + o_p(1) \\
    &= \frac{1}{\sqrt{n}} \sum_{j=1}^n \inner{q_i}{h(X_j;\theta_0)} + o_p(1)\,,
\end{align*}
where $q_i = B^2 (\partial / \partial \theta_i) \bar h(\theta_0)$. Furthermore, we note that by the definition of $\theta_0$ we have $\e[\inner{q_i}{h(X;\theta_0)}] = 0$, and
\begin{align*}
    \e[\inner{q_i}{h(X;\theta_0)}\inner{q_j}{h(X;\theta_0)}] &= \e[f_i(Z)\tr  \rho(X;\theta_0) f_j(Z)\tr  \rho(X;\theta_0)] \\
    &= \inner{C_0 q_i}{q_j} \\
    &= \inner{(B C_0 B) B \frac{\partial}{\partial \theta_i} \bar h(\theta_0)}{B \frac{\partial}{\partial \theta_j} \bar h(\theta_0)} \\
    &= \Delta_{i,j} \,,
\end{align*}
where $f_i,f_j \in \mathcal F$ are the dual elements corresponding to $q_i,q_j \in \mathcal H$, and $\Delta$ is defined as in the statement of \cref{thm:asym-norm} (since $B = C^{-1/2}$). Therefore, we conclude by the central limit theorem that $W^{(n)} + \theta_0$ is an asymptotically linear and asymptotically normal estimator for $\theta_0$, with covariance matrix $\Delta$, as required.

\end{proof}

\begin{lemma}
\label{lem:omega-value}
    Define the matrices $\Omega$ and $\Omega'$ according to
    \begin{align*}
        \Omega_{i,j} &= \left< B \frac{\partial \bar h(\theta_0)}{\partial \theta_i}, B \frac{\partial \bar h(\theta_0)}{\partial \theta_j} \right> \\
        \Omega'_{i,j} &= \e[\e[\rho'_i(X;\theta_0) \mid Z]\tr  V(Z;\tilde\theta)^{-1} \e[\rho'_j(X;\theta_0) \mid Z]]\,.
    \end{align*}
    Then $\Omega = \Omega'$.
\end{lemma}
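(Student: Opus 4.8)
The plan is to deduce this from \cref{lem:bh-value}, which already computes $\norm{B h_q}^2$ for elements of the form $h_q(f) = \e[f(Z)\tr q(Z)]$, combined with a polarization argument that passes from squared norms to the inner product.

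First I would rewrite $\tfrac{\partial \bar h(\theta_0)}{\partial\theta_i}$ in the form $h_{q_i}$, where $q_i(z) := \e[\rho'_i(X;\theta_0) \mid Z = z]$. Indeed, by the definition in \cref{eq:deriv-h} and the law of iterated expectations, $\bigl(\tfrac{\partial \bar h(\theta_0)}{\partial\theta_i}\bigr)(f) = \e[\rho'_i(X;\theta_0)\tr f(Z)] = \e[q_i(Z)\tr f(Z)]$. The function $q_i$ is bounded, hence square integrable, by \cref{asm:continuous-deriv}, so $h_{q_i}$ (and likewise $h_{q_i \pm q_j}$, since $q \mapsto h_q$ is linear) is a legitimate instance of the object treated in \cref{lem:bh-value}.

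Next I would apply the polarization identity for the linear map $B$: $\inner{B u}{B v} = \tfrac14\bigl(\norm{B(u+v)}^2 - \norm{B(u-v)}^2\bigr)$. Taking $u = h_{q_i}$, $v = h_{q_j}$ and using $h_{q_i} \pm h_{q_j} = h_{q_i \pm q_j}$ gives $\Omega_{i,j} = \tfrac14\bigl(\norm{B h_{q_i+q_j}}^2 - \norm{B h_{q_i - q_j}}^2\bigr)$. Applying \cref{lem:bh-value} to each of the two terms yields $\tfrac14\e[(q_i+q_j)(Z)\tr V(Z;\tilde\theta)^{-1}(q_i+q_j)(Z)] - \tfrac14\e[(q_i-q_j)(Z)\tr V(Z;\tilde\theta)^{-1}(q_i-q_j)(Z)]$; expanding these quadratic forms and using the symmetry of $V(z;\tilde\theta)^{-1}$ (it is the inverse of a symmetric matrix), the diagonal terms cancel and what survives is exactly $\e[q_i(Z)\tr V(Z;\tilde\theta)^{-1} q_j(Z)] = \Omega'_{i,j}$, as desired.

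I do not anticipate a substantive obstacle; the only point requiring care is well-definedness — that $h_{q_i}$, $h_{q_j}$, and $h_{q_i \pm q_j}$ all lie in the domain of $B = C^{-1/2}$, so that the norms above are finite and \cref{lem:bh-value} genuinely applies. This is precisely where \cref{asm:non-degenerate} enters: by the characterization of $\mathrm{domain}(C^{-1/2})$ in \cref{lem:operator-sqrt-inverse}, membership reduces to finiteness of $\e[q\tr V(Z;\tilde\theta)^{-1} q]$, which holds because $V(\cdot;\tilde\theta)^{-1}$ is a bounded operator on $L_2$ and each relevant $q$ is square integrable. Everything else is routine bookkeeping.
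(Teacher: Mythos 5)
Your proposal is correct and is essentially the paper's own argument: both reduce the inner product $\inner{B h_{q_i}}{B h_{q_j}}$ to squared norms of $B$ applied to linear combinations of the $h_{q_i}$, evaluate those via \cref{lem:bh-value}, and recover the off-diagonal entries by a polarization/quadratic-form identity (the paper phrases this as "two symmetric PSD matrices with equal quadratic forms coincide," which is the same content as your explicit polarization). Your added care about membership of $h_{q_i\pm q_j}$ in the domain of $B$ is a reasonable refinement the paper leaves implicit.
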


\begin{proof}[Proof of \Cref{lem:omega-value}]

First, we note that for any arbitrary vector $\alpha$ we have 
\begin{align*}
    \alpha\tr  \Omega \alpha &= \norm{B \sum_{i=1}^n \alpha_i (\partial / \partial\theta_i) \bar h(\theta_0)}^2 \\
    &= \e[ \e[\sum_{i=1}^n \alpha_i\rho'_i(Z;\theta_0)]\tr  V(Z;\tilde\theta)^{-1} \e[\sum_{i=1}^n \alpha_i\rho'_i(Z;\theta_0)]] \\
    &= \alpha\tr  \Omega' \alpha \,,
\end{align*}
where the second equality follows from \cref{lem:bh-value}. Now given this, it is clear that both $\Omega$ and $\Omega'$ are symmetric and PSD, since clearly $\norm{B \sum_{i=1}^n \alpha_i (\partial / \partial\theta_i) \bar h(\theta_0)}^2 \geq 0$. However given these matrices are symmetric and PSD, having $\alpha\tr  \Omega \alpha = \alpha\tr  \Omega' \alpha$ for every $\alpha$ ensures that $\Omega = \Omega'$, which gives us our required result.

\end{proof}

\begin{lemma}
\label{lem:fn-norm-bound}

Let $f_n^*(\theta) \in \mathcal F$ be the function maximizing $U_n(\theta, f)$ for any given $\theta$. Then under the assumptions of \cref{thm:consistency}, there exists some constant $\eta$ such that
\begin{equation*}
    \sup_{\theta \in \Theta} \norm{f_n^*(\theta)}_{\mathcal F} \leq \eta + o_p(1)\,.
\end{equation*}

\end{lemma}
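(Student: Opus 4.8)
The plan is to produce a closed form for the maximizer $f_n^*(\theta)$ in terms of the operator $C_n$, and then read the bound off from the uniform convergence already established in \cref{lem:bn-squared-convergence}. First I would rewrite $U_n(\theta,\cdot)$ as a quadratic form on the dual space $\mathcal H$. Recall $J_n(\theta)=\sup_{f\in\mathcal F}U_n(\theta,f)$ with $U_n(\theta,f)=\e_n[f(Z)\tr\rho(X;\theta)]-\tfrac14\e_n[(f(Z)\tr\rho(X;\tilde\theta_n))^2]-\tfrac{\alpha_n}{4}\norm f^2$. Writing $h\in\mathcal H$ for the element dual to $f$ under the Riesz isometry (so $\norm h_{\mathcal H}=\norm f_{\mathcal F}$), \cref{lem:rkhs-dual} gives $\e_n[f(Z)\tr\rho(X;\theta)]=\bar h_n(\theta)(f)=\inner{\bar h_n(\theta)}{h}_{\mathcal H}$, and the definition \eqref{eq:cn} of $C_n$ gives $\e_n[(f(Z)\tr\rho(X;\tilde\theta_n))^2]=\inner{C_n h}{h}_{\mathcal H}$; hence $U_n(\theta,f)=\inner{\bar h_n(\theta)}{h}_{\mathcal H}-\tfrac14\inner{(C_n+\alpha_n I)h}{h}_{\mathcal H}$. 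By \cref{lem:c-compact}, $C_n$ is compact and positive semidefinite, so for $n$ large enough that $\alpha_n>0$ the operator $C_n+\alpha_n I$ is strictly positive with bounded inverse and this functional is strictly concave in $h$; setting its Gateaux derivative to zero --- exactly the computation behind \cref{lem:operator-sqrt-inverse,lem:kvmm-cgmm-equiv} --- gives the unique maximizer $h=2(C_n+\alpha_n I)^{-1}\bar h_n(\theta)$. Thus $f_n^*(\theta)$ is the dual element in $\mathcal F$ of $2(C_n+\alpha_n I)^{-1}\bar h_n(\theta)$, and by the isometry $\norm{f_n^*(\theta)}_{\mathcal F}=2\norm{B_n^2\bar h_n(\theta)}_{\mathcal H}$ with $B_n^2=(C_n+\alpha_n I)^{-1}$ as in \eqref{eq:bn}.

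Next I would bound $\sup_{\theta\in\Theta}\norm{B_n^2\bar h_n(\theta)}_{\mathcal H}$ via \cref{lem:bn-squared-convergence}. Under \cref{asm:regularity,asm:compact-theta}, the family $\{\bar h_n(\theta):\theta\in\Theta\}$ --- taking $q(\cdot;\theta)=\rho(\cdot;\theta)$, which is uniformly bounded and Lipschitz in $\theta$ with an $x$-independent constant --- satisfies the hypotheses of \cref{lem:ht-properties} with index set $T=\Theta$, so \cref{lem:bn-squared-convergence} applies and yields $\sup_{\theta\in\Theta}\norm{B_n^2\bar h_n(\theta)-B^2\bar h(\theta)}_{\mathcal H}\to 0$ in probability, where $B^2=C^{-1}$. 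The triangle inequality then gives $\sup_{\theta\in\Theta}\norm{f_n^*(\theta)}_{\mathcal F}\le 2\sup_{\theta\in\Theta}\norm{B^2\bar h(\theta)}_{\mathcal H}+2\sup_{\theta\in\Theta}\norm{B_n^2\bar h_n(\theta)-B^2\bar h(\theta)}_{\mathcal H}=\eta+o_p(1)$, with $\eta:=2\sup_{\theta\in\Theta}\norm{B^2\bar h(\theta)}_{\mathcal H}$, which is finite by property~4 of \cref{lem:ht-properties}.

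Since the argument is essentially an assembly of earlier lemmas, there is no single difficult step. The two points requiring care are: (i) correctly identifying the closed form $f_n^*(\theta)=2(C_n+\alpha_n I)^{-1}\bar h_n(\theta)$ (as an element of $\mathcal H$, identified with its dual in $\mathcal F$) via the Riesz representation, which relies on strict concavity and hence on $\alpha_n>0$ for large $n$; and (ii) checking that $\bar h_n(\theta)$ meets the regularity hypotheses of \cref{lem:ht-properties} so that the uniform control provided by \cref{lem:bn-squared-convergence} is available over all of $\Theta$.
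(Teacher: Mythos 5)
Your proof is correct and follows essentially the same route as the paper's: derive the closed form $f_n^*(\theta)=2B_n^2\bar h_n(\theta)$ by setting the Gateaux derivative of the quadratic dual-space objective to zero, then combine the triangle inequality with \cref{lem:ht-properties} and \cref{lem:bn-squared-convergence} to get $\eta=2\sup_{\theta\in\Theta}\norm{B^2\bar h(\theta)}$. Your added care about strict concavity requiring $\alpha_n>0$ is a minor refinement the paper glosses over, but the argument is otherwise the same.
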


\begin{proof}

Recall that
\begin{align*}
    U_n(\theta, f) &= \e_n[f(Z)^t \rho(X;\theta)] - \frac{1}{4} \e_n[f(Z)\tr  \rho(X;\tilde\theta_n)] - \frac{\alpha_n}{4} \norm{f}^2 \\
    &= \inner{\bar h_n(\theta)}{h} - \frac{1}{4} \inner{(C + \alpha_n I) h}{h}\,,
\end{align*}
where $h$ is the element of $\mathcal H$ corresponding to $f \in \mathcal F$ by the duality isomorphism. Now, consider the Gateaux derivative of the above at $h_n^*(\theta)$ in the direction of $\epsilon$. This Gateaux derivative is clearly given by
\begin{equation*}
    \inner{\bar h_n(\theta)}{\epsilon} - \frac{1}{2} \inner{(C + \alpha_n I) h_n^*(\theta)}{\epsilon}\,.
\end{equation*}
For $h_n^*(\theta)$ to be the optimizing element, we require this derivative to be zero for every $\epsilon \in \mathcal H$, which immediately gives us
\begin{align*}
    h_n^*(\theta) &= 2(C_n + \alpha_n)^{-1} \bar h_n(\theta) \\
    &= 2 B_n^2 \bar h_n(\theta) \,.
\end{align*}

Next, define $h^*(\theta) = 2 B^2 \bar h(\theta)$. We note that by \cref{asm:regularity}, we have that $\bar h_n(\theta)$ and $\bar h(\theta)$ satisfy the conditions of \cref{lem:ht-properties}. Therefore by \cref{lem:ht-properties} we have $\sup_{\theta \in \Theta} \norm{B^2 \bar h(\theta)} < \infty$. Furthermore, by assumption the conditions of \cref{lem:cn-convergence}, so therefore by \cref{lem:bn-squared-convergence} we have $\sup_{\theta \in \Theta} \norm{B_n^2 \bar h_n(\theta) - B^2 \bar h(\theta)} = o_p(1)$. Putting this together, we have
\begin{align*}
    \sup_{\theta \in \Theta} \norm{f_n^*(\theta)} &= \sup_{\theta \in \Theta} \norm{h_n^*(\theta)} \\
    &\leq \sup_{\theta \in \Theta} \norm{h^*(\theta)} + \sup_{\theta \in \Theta} \norm{h_n^*(\theta) - h^*(\theta)} \\
    &= 2 \sup_{\theta \in \Theta} \norm{B^2 \bar h(\theta)} + 2 \sup_{\theta \in \Theta} \norm{B_n^2 \bar h_n(\theta) - B^2 \bar h(\theta)} \\
    &= \eta + o_p(1)\,,
\end{align*}
where we define $\eta = 2 \sup_{\theta \in \Theta} \norm{B^2 \bar h(\theta)}$, which as argued above is finite.

\end{proof}

\section{Omitted Proofs}
\label{apx:proofs}

\begin{proof}[Proof of \Cref{lem:kvmm-cgmm-equiv}]

First, note that clearly $C_n$ has rank at most $n$, as is therefore compact. Therefore, applying \cref{lem:operator-sqrt-inverse}, we get
\begin{align*}
    \norm{(&C_n + \alpha_n)^{-1/2} \bar h_n(\theta)}^2 \\
    &= \sup_{h' \in \mathcal H} \inner{\bar h_n(\theta)}{h'} - \frac{1}{4} \inner{(C_n + \alpha_n I) h'}{h'} \\
    &= \sup_{h \in \mathcal H} \e_n[\varphi(h)(Z)\tr  \rho(X;\theta)] - \frac{1}{4} \e_n[(\varphi(h)(Z)\tr  \rho(X;\tilde\theta_n))^2]  - \frac{1}{4} \alpha_n \norm{h'}^2 \,,
\end{align*}
where in the above we apply \cref{lem:rkhs-dual}, and $\varphi$ maps any element of $\mathcal H$ to its dual element in $\mathcal F$. The required result then immediately follows given the duality between $\mathcal H$ and $\mathcal F$.

\end{proof}

\begin{proof}[Proof of \Cref{thm:consistency}]

Let $J_n$ and $J$ be defined as in \cref{eq:jn,eq:j}. Now by assumption (LINK) we have $J_n(\hat \theta_n) \leq \inf_{\theta} J_n(\theta) + o_p(1)$, and by \cref{lem:objective-convergence} we have $\sup_{\theta \in \Theta} \abs{J_n(\theta) - J(\theta)} \to 0$ in probability. Therefore we have
\begin{align*}
J(\hat \theta_n) &\leq J_n(\hat \theta_n) + o_p(1) \\
&\leq \inf_{\theta} J_n(\theta) + o_p(1) \\
&\leq J_n(\theta_0) + o_p(1) \\
&\leq J(\theta_0) + o_p(1)\,.
\end{align*}

Next, suppose that $\hat\theta_n$ does not converge in probability to $\theta_0$. This implies that for some $\epsilon > 0$, we have that $P(\norm{\hat\theta_n - \theta_0} < \epsilon)$ does not converge to zero. Let $J^* = \inf_{\theta : \norm{\theta - \theta_0} \geq \epsilon} J(\theta)$. Then it must be the case that $\limsup_{n \to \infty} J(\hat\theta_n) \geq J^*$. Furthermore, by \cref{lem:j-continuous} $J(\theta)$ is continuous in $\theta$, and by \cref{lem:unique-solution} we know that $\theta_0$ is the unique element of $\Theta$ such that $J(\theta_0) = 0$, so we have that $J^* > 0$. However, we have already established that $J(\hat\theta_n) \leq J(\theta_0) + o_p(1)$, so therefore $\limsup_{n \to \infty} J(\hat\theta_n) = 0$. Thus we have a contradiction, so we conclude that $\hat\theta_n \to \theta_0$ in probability.

\end{proof}

\begin{proof}[Proof of \Cref{lem:non-smooth-rho}]

Let $\epsilon_i(X;\theta,\theta') = \rho_i(X;\theta') - \rho_i(X;\theta) - (\theta'-\theta)^\top \rho'_i(X;\theta)$, and let $B(\theta,r)$ denote the ball of radius $r$ about $\theta$. Also, let $E(\theta,\theta')$ denote the event where $\phi(X;\tilde\theta) \neq 0$ for all $\tilde\theta \in B(\theta, \|\theta'-\theta\|)$. Now, under event $E(\theta,\theta')$, by Taylor's theorem we have that $|\epsilon_i(X;\theta,\theta')| = |(\theta'-\theta)^\top \rho''(X;\tilde\theta) (\theta'-\theta)$ for some $\tilde\theta$ on the segment between $\theta$ and $\theta'$. Therefore, applying the boundedness assumption on $\rho''$, we have
\begin{equation*}
    |\epsilon_i(X;\theta,\theta')| \leq c'' \|\theta' - \theta\|^2 \,,
\end{equation*}
under event $E(\theta,\theta')$. Furthermore, regardless of whether this event occurs or not, we can generally bound
\begin{align*}
    |\epsilon_i(X;\theta,\theta')| &\leq |\rho_i(X;\theta') - \rho_i(X;\theta)| + \|\theta'-\theta\| \|\rho'_i(X;\theta)\| \\
    &\leq (L_\rho + c') \|\theta' - \theta\| \,.
\end{align*}
Therefore, we have
\begin{align*}
    \EE[|\epsilon_i(X;\theta,\theta')|] &\leq \textup{Prob}(E(\theta,\theta')) c'' \|\theta' - \theta\|^2  + \textup{Prob}(\neg E(\theta,\theta')) (L_\rho + c' ) \|\theta' - \theta\| \\ 
    &\leq c'' \|\theta' - \theta\|^2  + \textup{Prob}(\neg E(\theta,\theta')) (L_\rho + c' ) \|\theta' - \theta\| \,.
\end{align*}
Therefore, if we can show that $\textup{Prob}(\neg E(\theta,\theta')) \to 0$ as $\theta' \to \theta$, then by the above equation we have $\EE[|\epsilon_i(X;\theta,\theta')|] = o(\|\theta'-\theta\|)$ as $\theta' \to \theta$, which is what we are required to prove. Therefore, it only remains to show this.

Now, given the assumed Lipschitz-continuity of $\phi$, if $|\phi(X;\theta)| \geq L_\phi(X) \|\theta' - \theta\|$, then it must be the case that event $E(\theta,\theta')$ occurs. Therefore, we have
\begin{align*}
    \textup{Prob}(\neg E(\theta,\theta')) &\leq \textup{Prob}\Big(|\phi(X;\theta)| \leq L_\phi(X) \|\theta' - \theta\|\Big) \\
    &= \EE \Big[ \indicator{ L_\phi(X)^{-1} |\phi(X;\theta)| \leq  \|\theta' - \theta\| } \Big] \\
    &= \int_{-\|\theta'-\theta\|}^{\|\theta'-\theta\|} q_{\phi,\theta}(x) d \mu(x) \\
    &\leq 2 \|\theta' - \theta\| \sup_{|x| \leq \|\theta' - \theta\|} q_{\phi,\theta}(x) \,,
\end{align*}
where $q_{\phi,\theta}(x)$ is the probability density of $L_\phi(X)^{-1} \phi(X;\theta)$, and $\mu(x)$ is the standard Borel measure on $\RR$. Now, by assumption, $\sup_{|x| \leq \epsilon} q_{\phi,\theta}(x) < \infty$ for some $\epsilon > 0$. Therefore, whenever $\|\theta' - \theta\| \leq \epsilon$ we have $\textup{Prob}(\neg E(\theta,\theta')) \leq 2 \sup_{|x| \leq \epsilon} q_{\phi,\theta}(x) \|\theta' - \theta\|$, and so $\textup{Prob}(\neg E(\theta,\theta')) \to 0$ as $\theta' \to \theta$. Combining this with the above argument, we can conclude.

\end{proof}

\begin{proof}[Proof of \Cref{thm:asym-norm}]

Let us define
\begin{align*}
    \psi_n(\theta) &= \inner{B_n \bar h_n(\theta)}{B_n h'_n(\theta)} \\
    \psi(\theta) &= \inner{B \bar h(\theta)}{B h'(\theta)} \,.
\end{align*}

Note that by the theorem assumptions that $\|\psi_n(\hat\theta_n)\| = o_p(n^{-1/2})$, and also that $\|\psi(\theta_0)\| = 0$, since $\bar h(\theta_0) = 0$.

Next, we have
\begin{align*}
    &\sqrt{n} \big( \psi_n(\hat\theta_n) - \psi(\hat\theta_n) \big) - \sqrt{n} \big(\psi_n(\theta_0) - \psi(\theta_0) \big) \\
    &= \inner{\sqrt{n}(\bar h_n(\hat\theta_n) - \bar h(\hat\theta_n))}{B_n^2(h'_n(\hat\theta_n) - h'_n(\theta_0))} \\
    &\qquad + \inner{\sqrt{n}(\bar h_n(\hat\theta_n) - \bar h(\hat\theta_n)) - \sqrt{n}(\bar h_n(\theta_0) - \bar h(\theta_0))}{B_n^2 h'_n(\theta_0)} \\
    &\qquad + \inner{\sqrt{n}(\bar h(\hat\theta_n) - \bar h(\theta_0))}{B_n^2 h'_n(\hat\theta_n) - B^2 h'(\hat\theta_n)} \,.
\end{align*}
Now, by \cref{lem:ht-properties}, we have $\|\sqrt{n}(\bar h_n(\hat\theta_n) - \bar h(\hat\theta_n))\| = O_p(1)$,  $\|B_n^2 h'_n(\theta_0)\| = O_p(1)$, and $\|\sqrt{n}(\bar h(\hat\theta_n) - \bar h(\theta_0))\| = O_p(\sqrt{n} \|\hat\theta_n-\theta_0\|)$. Furthermore, by \cref{lem:ht-properties} and \cref{lem:bn-squared-convergence} we have $\|B_n^2(h'_n(\hat\theta_n) - h'_n(\theta_0))\| = O_p(\|\hat\theta_n-\theta_0\|) + o_p(1)$, and $\|B_n^2 h'_n(\hat\theta_n) - B^2 h'(\hat\theta_n)\| = o_p(1)$. Also, applying \cref{asm:rho-deriv}, we have
\begin{align*}
    &\sqrt{n} \| \bar h_n(\hat\theta_n) - \bar h(\hat\theta_n) - \bar h_n(\theta_0) - \bar h(\theta_0) \| \\
    &= \sqrt{n} \sup_{\|f\|_K \leq 1} (\EE_n - \EE)[f(Z)^\top(\rho(X;\hat\theta_n) - \rho(X;\theta_0))] \\
    &\leq \sqrt{n} \sup_{\|f\|_K \leq 1} (\EE_n - \EE)[f(Z)^\top\big(\rho(X;\hat\theta_n) - \rho(X;\theta_0) - D(X;\theta_0) (\hat\theta_n-\theta_0) \big)] \\
    &\qquad + \sqrt{n} \sup_{\|f\|_K \leq 1} (\EE_n - \EE)[f(Z)^\top D(X;\theta_0) (\hat\theta_n-\theta_0)] \,.
\end{align*}
Since $\sup_{\|f\|_K \leq 1} \|f\|_\infty < \infty$ and \cref{asm:rho-deriv}, the first term above is clearly $o_p(\sqrt{n} \|\hat\theta_n - \theta_0\|$. Furthermore, by \cref{asm:continuous-deriv} and \cref{lem:f-donsker}, and the Donsker preservation property of corollary 9.32 of \citet{kosorok2007introduction}, we have that $\{f(Z)^\top D(X;\theta_0)_j\}$ is $\Pcal$-Donsker for each $j \in [b]$, and therefore it easily follows that the second term above is $O_p(\|\hat\theta_n - \theta_0\|) = o_p(\sqrt{n} \|\hat\theta_n - \theta_0\|)$. Therefore, putting the above together, we have
\begin{align*}
    &\|\sqrt{n} \big( \psi_n(\hat\theta_n) - \psi(\hat\theta_n) \big) - \sqrt{n} \big(\psi_n(\theta_0) - \psi(\theta_0) \big) \| \\
    &= O_p(1)\left(O_p(\|\hat\theta_n-\theta_0\|) + o_p(1)\right) \\
    &\qquad\qquad + o_p(\sqrt{n}\|\hat\theta_n-\theta_0\|) O_p(1) + O_p(\sqrt{n}\|\hat\theta_n-\theta_0\|) o_p(1) \\
    &= o_p(1 + \sqrt{n} \|\hat\theta_n-\theta_0\|) \,.
\end{align*}

Next, let $\Omega$ be defined according to
\begin{equation*}
    \Omega_{i,j} = \inner{B h'(\theta)_i}{B h'(\theta)_j} \,,
\end{equation*}
and note that by \cref{lem:omega-value} this definition is identical to the definition of $\Omega$ in the theorem statement. Then, for any $\delta \in \RR^b$ we have
\begin{align*}
    &\psi(\theta_0 + \delta) - \psi(\theta_0) - \Omega \delta \\
    &= \inner{B \bar h(\theta_0 + \delta) - B \bar h(\theta_0)}{B h'(\theta_0 + \delta) - B h'(\theta_0)} \\
    &\qquad + \inner{\bar h(\theta_0 + \delta) - \bar h(\theta_0) - \delta^\top h'(\theta_0)}{B^2 h'(\theta_0)} \,.
\end{align*}
Applying \cref{lem:f-properties} and \cref{asm:rho-deriv}, similar to above, we have $\|\bar h(\theta_0 + \delta) - \bar h(\theta_0) - \delta^\top h'(\theta_0)\| = o(\|\delta\|)$, and also by \cref{lem:ht-properties} we have $\|B \bar h(\theta_0 + \delta) - B \bar h(\theta_0)\|, \|B h'(\theta_0 + \delta) - B h'(\theta_0)\| = O(\|\delta\|)$, and $\|B^2h'(\theta_0)\| = O(1)$. Therefore, $\psi(\theta_0 + \delta) - \psi(\theta_0) - \Omega \delta = o(\|\delta\|)$, and thus $\psi$ is Fréchet differentiable at $\theta_0$, with derivative $\Omega$. Furthermore, it trivially follows from \cref{asm:non-degenerate-theta} that $\Omega$ is strictly positive definite, and therefore is invertible.

Therefore, we have satisfied the conditions of theorem 2.11 of \citet{kosorok2007introduction}, and therefore we have that $\sqrt{n}(\hat\theta_n - \theta_0)$ has the same limit in law as $-\Omega^{-1} \sqrt{n} \psi_n(\theta_0)$. Therefore, our result immediately follows from \cref{lem:gaussian-convergence}.

\end{proof}

\begin{proof}[Proof of \Cref{lem:first-order-conditions}]

First, note that
\begin{align*}
    J_n(\theta') - J_n(\theta) &= \inner{B_n (\bar h_n(\theta') - \bar h_n(\theta))}{B_n (\bar h_n(\theta) + \bar h_n(\theta'))}  \\
    &= \inner{B_n (\theta'-\theta)^\top h'_n(\theta))}{B_n (\bar h_n(\theta) + \bar h_n(\theta'))}  \\
    &\qquad + \inner{(\bar h_n(\theta') - \bar h_n(\theta) - (\theta'-\theta)^\top h'_n(\theta))}{B^2_n (\bar h_n(\theta) + \bar h_n(\theta'))} \,.
\end{align*}
Now, we have
\begin{align*}
    &\|\bar h_n(\theta') - \bar h_n(\theta) - (\theta'-\theta) h'_n(\theta)\| \\
    &= \sup_{\|f\|_K \leq 1} \EE_n[f(Z)\top (\rho(X;\theta') - \rho(X;\theta) - D(X;\theta) (\theta'-\theta))] \\
    &\leq k_F \sum_{i=1}^m \EE_n \left[ \Big| \rho_i(X;\theta') - \rho_i(X;\theta) - (\theta'-\theta)^\top D_i(X;\theta) \Big| \right] \,.
\end{align*}
Therefore, it easily follows from \cref{asm:rho-deriv} that $\|\theta'-\theta\|^{-1} \|\bar h_n(\theta') - \bar h_n(\theta) - (\theta'-\theta) h'_n(\theta)\| \to 0$ in probability as $\theta' \to \theta$. And so, also applying \cref{lem:ht-properties} to the above, we have
\begin{equation*}
    J_n(\theta') = J_n(\theta) + (\theta'-\theta)^\top J'_n(\theta) + o_p(\|\theta'-\theta\|) \,.
\end{equation*}

Now, let $\theta_n^*$ be any minimizer of $J_n$, and suppose that $J_n'(\theta^*_n) \neq 0$. Define $\theta(\epsilon) = \theta_n^* - \epsilon J_n'(\theta_n^*)$. Then, $J_n(\theta(\epsilon)) = J_n(\theta_n^*) - \epsilon \|J_n'(\theta_n^*)\|_2 + o_p(\epsilon)$ as $\epsilon \to 0$. This would imply that $J_n(\theta(\epsilon)) < J_n(\theta^*_n)$ for some $\epsilon>0$, which contradicts that $\theta_n^*$ is a minimizer of $J_n$. Therefore, we have that $J'_n(\hat\theta_n) = 0$, from which case (a) of the lemma statement follows immediately.

For case (b), we note that under the assumption that $\rho$ is twice continuously differentiable in $\theta$, it easily follows that $J_n$ is twice differentiable in $\theta$ (since compact $\Theta$ implies that the continuous derivatives of $\rho$ are bounded, and therefore $\|\nabla \bar h_n(\theta)_i\| = \sup_{\|f\|_K \leq 1} \EE_n[f(Z)^\top \nabla \rho(X;\theta)_i]$ and $\|\nabla^2 \bar h_n(\theta)_{i,j}\| = \sup_{\|f\|_K \leq 1} \EE_n[f(Z)^\top \nabla^2 \rho(X;\theta)_{i,j}]$ must have finite.) Furthermore, we have
\begin{equation*}
    \nabla^2 J_n(\theta) = \inner{B_n \nabla^2 h_n(\theta)}{h_n(\theta)} + \inner{B_n \nabla h_n(\theta)}{\nabla h_n(\theta)^\top}.
\end{equation*}
Then, applying arguments as in \cref{lem:ht-properties}, \cref{lem:bn-convergence}, and \cref{lem:bn-squared-convergence}, it easily follows that $\nabla^2 J_n(\theta_0) \to \nabla^2 J(\theta_0) = \inner{B \nabla \bar h(\theta_0)}{B \nabla \bar h(\theta_0)^\top}$ in probability, and therefore further applying \cref{thm:consistency} we have that $\nabla^2 J_n(\theta_n^*) \to \nabla^2 J(\theta_0)$ in probability.
Thus, for sufficiently large $n$ it must be the case that $J_n$ is positive-definite. 

Next, we have $J_n(\hat\theta_n) = J_n(\theta_n^*) + (\hat\theta_n - \theta_n^*)^\top \nabla^2 J_n(\hat\theta_n^*) (\hat\theta_n - \theta_n^*) + o_p(\|\hat\theta_n - \theta_n^*\|^2)$, since as argued above $\nabla J_n(\theta_n^*) = 0$, and therefore $J_n(\hat\theta_n) - J_n(\theta_n^*) = o_p(1/n)$ implies that $\|\hat\theta_n - \theta_n^*\| = o_p(n^{-1/2})$. Finally, this gives us $J'_n(\hat\theta_n) = J'_n(\theta_n^*) + \nabla^2 J_n(\theta_n^*) (\hat\theta_n - \theta_n^*) + o_p(\|\hat\theta_n - \theta_n^*\|) = o_p(n^{-1/2})$, as required.

\end{proof}

\begin{proof}[Proof of \Cref{thm:efficiency}]

For the first part of the theorem, we note that in the case that $\tilde\theta = \theta_0$, we clearly have
\begin{equation*}
    \Delta_{i,j} = \inner{B \frac{\partial}{\partial \theta_i} \bar h(\theta_0)}{B \frac{\partial}{\partial \theta_j} \bar h(\theta_0)}\,,
\end{equation*}
since in this case we have $C = C_0$. Next, by \cref{lem:omega-value} it is clear that $\Delta = \Omega_0$, and since $\tilde\theta = \theta_0$ we have $\Omega = \Omega_0$. Therefore we have that the asymptotic variance of $\sqrt{n}(\hat\theta_n - \theta_0)$ is given by $\Omega_0^{-1}$, as required.

For the second part of the theorem, we first note that this limiting covariance matches the efficiency bound of \citet{chamberlain1987asymptotic}. Furthermore, by \cref{thm:asym-norm} we know that the estimator is asymptotically linear, so by theorem 18.7 of \citet{kosorok2007introduction} we conclucde that it is also regular, and efficient relative to all possible regular asymptotically linear estimators.

Finally, in order to argue that this estimator is efficient within the class of VMM estimators with different values of $\tilde\theta$, we need to argue that $\Omega^{-1} \Delta \Omega^{-1} - \Omega_0^{-1}$ is PSD for any $\tilde\theta$. This is equivalent to showing that $\alpha\tr  \Delta \alpha - \alpha\tr  \Omega \Omega_0^{-1} \Omega \alpha \geq 0$ for every vector $\alpha$. Given the definitions of $\Delta$, $\Omega$, and $\Omega_0$, and defining
\begin{equation*}
    q_{\alpha} = \sum_{i=1}^m \alpha_i B \frac{\partial}{\partial \theta_j} \bar h(\theta_0)\,,
\end{equation*}
we have
\begin{align*}
    \alpha\tr  \Delta \alpha - \alpha\tr  \Omega \Omega_0^{-1} \Omega \alpha &= \inner{(B C_0 B) q_{\alpha}}{q_{\alpha}} - \inner{q_\alpha}{B \nabla \bar h(\theta_0)}\tr  \Omega_0^{-1} \inner{B \nabla {q_\alpha} \bar h(\theta_0)}{q_\alpha} \\
    &= \inner{(C_0 B q_{\alpha}}{B q_{\alpha}} - \inner{B q_\alpha}{\nabla \bar h(\theta_0)}\tr  \Omega_0^{-1} \inner{\nabla {q_\alpha} \bar h(\theta_0)}{B q_\alpha} \\
    &= \inner{(W B q_{\alpha}}{B q_{\alpha}} \,,
\end{align*}
where the linear operator $W$ satisfies
\begin{equation*}
    W h = C_0 h - \nabla \bar h(\theta_0)\tr  \Omega_0^{-1} \inner{\nabla \bar h(\theta_0)}{h}\,.
\end{equation*}
Now, recall that $\Omega_0 = \inner{C_0^{-1/2} \nabla h(\theta_0)}{C_0^{-1/2} \nabla h(\theta_0)\tr }$. Given this, for any $h$, we can observe that
\begin{align*}
    &(W C_0^{-1} W)(h) \\
    &= W (h - C_0^{-1} \nabla \bar h(\theta_0)\tr  \Omega_0^{-1} \inner{\nabla \bar h(\theta_0)}{h}) \\
    &= W h - (C_0 - W h) + \nabla \bar h(\theta_0)\tr  \Omega_0^{-1} \inner{\nabla \bar h(\theta_0)}{C_0^{-1} \nabla \bar h(\theta_0)\tr } \Omega_0^{-1} \inner{\nabla \bar h(\theta_0)}{h} \\
    &= 2 W h - (C_0 - \nabla \bar h(\theta_0)\tr  \Omega_0^{-1} \inner{\nabla \bar h(\theta_0)}{h}) \\
    &= W h \,.
\end{align*}

Therefore, we have $W = W C_0^{-1} W$. Furthermore, it is trivial to wee that $W$ is self-adjoint, since $C_0$ is self-adjoint and $\Omega_0^{-1}$ is symmetric. Plugging these results into the above, we get
\begin{align*}
    \alpha\tr  \Delta \alpha - \alpha\tr  \Omega \Omega_0^{-1} \Omega \alpha &= \inner{W C_0^{-1} W B q_\alpha}{B q_\alpha} \\
    &= \norm{C_0^{-1/2} W B q_\alpha}^2 \\
    &\geq 0\,,
\end{align*}
which concludes the proof.

\end{proof}

\begin{proof}[Proof of \Cref{lem:k-step-vmm}]

We will prove this by induction. First, in the case that $k = 1$, $\tilde\theta_n$ is given by a VMM estimate whose prior estimate is a constant. However, a constant prior estimate satisfies \cref{asm:tilde-theta} trivially, and by assumption the remaining conditions of \cref{thm:asym-norm} hold, so therefore $\tilde\theta_n = \theta_0 + O_p(n^{-1/2})$. Given this and \cref{asm:continuous-deriv}, by the mean value theorem we have
\begin{equation*}
    \rho_i(x; \tilde\theta_n) = \rho_i(x; \theta_0) + \nabla \rho_i(x; \theta'_n)\tr  (\tilde\theta_n - \theta_0)\,,
\end{equation*}
for each $i \in [m]$ and $x \in \mathcal X$, where $\theta'_n$ lies on the segment between $\theta_0$ and $\tilde\theta_n$ for each $i$ and $x$. Now, by \cref{asm:continuous-deriv} $\nabla\rho_i(x;\theta)$ is uniformly bounded over $x$ and $\theta$, so therefore since $\tilde\theta_n = O_p(n^{-1/2})$ it follows that
\begin{equation*}
    \sup_{x \in \mathcal X, i \in [m]} \abs{\rho_i(x; \tilde\theta_n) - \rho_i(x; \theta_0)} = O_p(n^{-1/2})\,,
\end{equation*}
so therefore we have established the base case.

For the inductive case, assume that the desired property holds for some $k > 1$. Then, by the inductive hypothesis, $\tilde\theta_n$ is given by a VMM estimate whose prior estimate whose prior estimate satisfies \cref{asm:tilde-theta} with $p=1/2$. Furthermore, by assumption the remaining conditions of \cref{thm:asym-norm} hold, so therefore we have $\tilde\theta_n = \theta_0 + O_p(n^{-1/2})$. Therefore by the same mean value theorem-based argument as above, again we have $\sup_{x \in \mathcal X, i \in [m]} \abs{\rho_i(x; \tilde\theta_n) - \rho_i(x; \theta_0)} = O_p(n^{-1/2})$, which establishes the inductive case.

\end{proof}

\begin{proof}[Proof of \Cref{lem:kernel-vmm-closed-form}]

Recall that 
\begin{equation*}
    J_n(\theta) = \sup_{f \in \mathcal F} \e_n[f(Z)\tr  \rho(X;\theta)] - \frac{1}{4} \e_n[(f(Z)\tr  \rho(X;\tilde\theta_n))^2] - \frac{\alpha_n}{4} \norm{f}^2\,.
\end{equation*}
Now, by the representer theorem, there must exist an optimal solution to the supremum over $f \in \mathcal F$ that takes the form $f_k(z) = \sum_{i=1}^n \beta_{i,k} K_k(Z_i, z)$ for each $k \in [m]$ and $z \in \mathcal Z$, for some vector $\beta \in \mathbb R^{n \times m}$. Plugging in this to the above, we get
\begin{align*}
    J_n(\theta) &= \sup_{\beta \in \mathbb R^{n \times m}} \sum_{i,k} \beta_{i,k} \frac{1}{n} \sum_{j=1}^n K_k(Z_i, Z_j) \rho_k(X_j;\theta) \\
    &\qquad - \frac{1}{4} \sum_{i,k,i',k'} \beta_{i,k} \beta_{i',k'} \frac{1}{n} \sum_{j=1}^n K_k(Z_i, Z_j) \rho_k(X_j;\tilde\theta_n) K_{k'}(Z_{i'}, Z_j) \rho_{k'}(X_j;\tilde\theta_n) \\
    &\qquad - \frac{\alpha_n}{4} \sum_{i,i',k} \beta_{i,k} \beta_{i',k} K_k(Z_i, Z_{i'}) \\
    &= \sup_{\beta \in \mathbb R^{n \times m}} \frac{1}{n} \beta\tr  L \rho(\theta) - \frac{1}{4} \beta\tr  (Q(\tilde\theta_n) + \alpha_n L) \beta \,.
\end{align*}

Next, taking derivatives of the above with respect to $\beta$, and noting that $Q(\tilde\theta_n) + \alpha_n L$ is clearly positive semi-definite, the above above supremum is clearly attained when
\begin{align*}
    &\frac{1}{n} L \rho(\theta) - \frac{1}{2} (Q(\tilde\theta_n) + \alpha_n L) \beta = 0 \\
    &\qquad\iff \beta = \frac{2}{n} (Q(\tilde\theta_n) + \alpha_n L)^{-1} L \rho(\theta)\,.
\end{align*}

Plugging this optimal $\beta$ into the above supremum, and noting that $L$ is symmetric, gives us
\begin{equation*}
    J_n(\theta) = \frac{1}{n^2} \rho(\theta)\tr  L (Q(\tilde\theta_n) + \alpha_n L)^{-1} L \rho(\theta)\,,
\end{equation*}
as required.

\end{proof}

\begin{proof}[Proof of \Cref{lem:kernel-vmm-closed-form-iv}]

First, applying \cref{lem:kernel-vmm-closed-form} to this particular problem, we get
\begin{align*}
    J_n(\theta) &= (W - \theta(T))\tr  M (W - \theta(T)) \\
    &= \theta(T)\tr  M \theta(T) - 2 W\tr  M \theta(T) + c \,,
\end{align*}
where $\theta(T) = (\theta(T_1), \ldots, \theta(T_n))$, and $c$ is a constant that doesn't depend on $\theta$. Now, by Lagrangian duality, there must exist some value $\lambda_n \geq 0$ which implicitly depends on $r$, $K_f$, $K_g$, $\tilde\theta_n$, and the observed data, such that minimizing $J_n(\theta)$ over $\Theta$ is equivalent to minimizing $J_n(\theta) + \lambda_n \norm{\theta}^2$ over the entire RKHS given by $K_g$. Now, by the representer theorem, there must exist some optimal solution to this unconstrained problem that takes the form $\theta(t) = \sum_{i=1}^n \beta_i K_g(t, T_i)$ for every treatment value $t$. Plugging this in to the regularized and unconstrained version of our problem, and noting that $\theta(T) = L_g \beta$, we have that the optimal vector $\beta^*$ satisfies
\begin{align*}
    \beta^* &= \argmin_{\beta \in \mathbb R^n} (L_g \beta)\tr  M (L_g \beta) - 2 W\tr  M L_g \beta + c + \lambda_n \beta\tr  L_g \beta \\
    &= \argmin_{\beta \in \mathbb R^n} \beta\tr  (L_g M L_g + \lambda_n L_g) \beta - 2 (L_g M W)\tr  \beta \,.
\end{align*}

Taking derivatives with respect to $\beta$ in the above, and noting that $L_g M L_g + \lambda_n L_g$ is clearly positive semi-definite, it follows that this supremum is attained when
\begin{align*}
    &2 (L_g M L_g + \lambda_n L_g) \beta^* - 2 L_g M W = 0 \\
    &\qquad \iff \beta^* = (L_g M L_g + \lambda_n L_g)^{-1} L_g M W\,,
\end{align*}
which exactly matches our desired result.

\end{proof}

\begin{proof}[Proof of \Cref{lem:uniform-jn-approximation}]

For any given $\theta \in \Theta$, let $f_n^*(\theta)$ be the element of $\mathcal F$ maximizing the adversarial game objective $U_n(\theta,f)$, and for each $i \in [m]$ define $F_{i,n} = \sup_{\theta \in \Theta} \norm{f^*_n(\theta)_i}_\mathcal F$. Now, by \cref{lem:rkhs-dual} and the definitions of $\bar h_n(\theta)$ and $B_n$ we clearly have
\begin{align*}
    &\inner{\bar h_n(\theta)}{h'} - \frac{1}{4} \inner{B_n^{-2} h'}{h'} \\
    &\qquad = \e_n[\varphi(f)(Z)\tr  \rho(X;\theta)] - \frac{1}{4} \e_n[(\varphi(h')(Z)\tr  \rho(X;\tilde\theta_n))^2] - \frac{\alpha_n}{4} \norm{\varphi(h')}_{\mathcal F}\,,
\end{align*}
so it easily follows that $f_n^*(\theta) = 2 B_n^2 \bar h_n(\theta)$. Now, by \cref{asm:regularity} the functions $\bar h_n(\theta)$ and $\bar h(\theta)$ satisfy the conditions of \cref{lem:ht-properties}, so by \cref{lem:ht-properties,lem:bn-squared-convergence} we have that $F_{i,n} = O_p(1)$ for each $i \in [m]$.

Next, for any integer $k > d_z/2$, where $d_z$ is the dimensionality of $\mathcal Z$, let $W_k$ denote the Sobolev space of functions of the form $\mathcal Z \mapsto \mathbb R$ using derivatives up to order $k$, and let $\norm{\cdot}_{W_k}$ denote the corresponding Sobolev norm. Then, as argued in the proof of theorem D of \citet{cucker2002mathematical}, there exists some constant $c_k$, which may depend on $k$, such that $\norm{f}_{W_k} \leq c_k \norm{f}_{\mathcal F}$ for any function $f$. That is, $\sup_{\theta \in \Theta} \norm{f_n^*(\theta)_i}_{W_k} \leq c_k F_{i,n}$.

Now, by theorem 1 \citet{yarotsky2017error}, there exists a constant $c'_k$ which depends on $k$, such that for any $\epsilon > 0$ and any $f$ satisfying $\norm{f}_{W_k} \leq 1$, there exists some ReLU network $f'$ with depth at most $c_k(\log(1/\epsilon) + 1)$, and at most $c_k \epsilon^{-d_z/k} (\log(1/\epsilon) + 1)$ computation units in total, such that $\norm{f - f'}_\infty \leq \epsilon$. It immediately follows that this function $f'$ is guaranteed to exist within a fully connected ReLU architecture with depth $c_k(\log(1/\epsilon) + 1)$ and constant width $c_k \epsilon^{-d_z/k} (\log(1/\epsilon) + 1)$. Putting this result together with our previous results above, and noting that an $m$-dimensional function defined by $m$ separate networks with width $W$ and depth $D$ can be defined within a fixed architecture of width $m W$ and depth $D$, it immediately follows that as long as $D_n = \omega(\log(1/\epsilon) + 1)$ and $W_n = \omega(\epsilon^{-d_z/k} (\log(1/\epsilon) + 1))$, we can ensure that $\sup_{\theta \in \Theta} \inf_{f \in \mathcal F_n} \norm{f^*_n(\theta) - f}_\infty \leq c_k \max_{i \in [m]} F_{i,n} \epsilon$ for some sufficiently large $n$ that doesn't depend on $\epsilon$. Therefore, noting that the choice of $k > d_z/2$ was arbitrary, if we choose $D_n$ and $W_n$ as in \cref{asm:network-size}, we have $\sup_{\theta \in \Theta} \inf_{f \in \mathcal F_n} \norm{f^*_n(\theta) - f}_\infty = O_p(r_n)$.

Finally, for each $\theta \in \Theta$, let $f'_n(\theta) = \argmin_{f \in \mathcal F_n} \norm{f_n^*(\theta) - f}_\infty$, and let $U_n$ be the game objective for $\hat\theta_n^{\text{NK-VMM}}$. Then we have
\begin{align*}
    \sup_{\theta \in \Theta} \abs{J_n(\theta) - J_n^*(\theta)} &\leq \sup_{\theta \in \Theta} \abs{U_n(\theta, f_n^*(\theta)) - U_n(\theta, f_n'(\theta))} \\
    &\leq \sup_{\theta \in \Theta} \abs{\e_n[(f_n^*(\theta) - f_n'(\theta))(Z)\tr  \rho(X;\theta)]} \\
    &\qquad + \sup_{\theta \in \Theta} \e_n[(f_n^*(\theta) - f_n'(\theta))(Z)\tr  \rho(X;\tilde\theta_n) \rho(X;\tilde\theta_n)\tr  f_n^*(\theta)] \\
    &\qquad + \sup_{\theta \in \Theta} \e_n[f_n'(\theta)(Z)\tr  \rho(X;\tilde\theta_n) \rho(X;\tilde\theta_n)\tr  (f_n^*(\theta) - f_n'(\theta))(Z)] \\
    &\qquad + \sup_{\theta \in \Theta} \sum_{k=1}^m \abs{((f_n^*)_k - (f_n')_k)\tr  K_k^{-1} (f_n^*)_k} \\
    &\qquad + \sup_{\theta \in \Theta} \sum_{k=1}^m \abs{(f_n')_k\tr  K_k^{-1} ((f_n^*)_k - (f_n')_k)}\,.
\end{align*}
Now, by \cref{asm:network-size} we have $r_n = o(n^{-1-q} k_n^{-1})$, so therefore by our above result we have $\sup_{\theta \in \Theta} \norm{f_n^*(\theta) - f_n'(\theta)}_\infty = o_p(n^{-1-q} k_n^{-1})$, and $\sup_{\theta \in \Theta} \norm{f_n^*(\theta) - f_n'(\theta)}_2 = o_p(n^{-q-1/2} k_n^{-1})$. In addition, $\sup_{\theta \in \Theta} \norm{f_n^*(\theta)}_\infty$ and $\sup_{\theta \in \Theta} \norm{f_n'(\theta)}_\infty$ are stochastically bounded given the above and applying \cref{lem:f-properties}, and therefore $\sup_{\theta \in \Theta} \norm{f_n^*(\theta)}_2$ and $\sup_{\theta \in \Theta} \norm{f_n'(\theta)}_2$ are both in $O_p(\sqrt{n})$, and also by \cref{asm:k-growth} $K_k^{-1}$ has $O_p(k_n)$ operator norm for each $k \in [m]$. Therefore, plugging these results into the above bound, we get $\sup_{\theta \in \Theta} \abs{J_n(\theta) - J_n^*(\theta)} = O_p(n^{-q})$, as required.

\end{proof}

\begin{proof}[Proof of \Cref{lem:asymptotic-variance}]

We first note that
\begin{equation*}
    \beta\tr \Omega_0^{-1} \beta = \sup_{\gamma \in \mathbb R^b} \gamma\tr \beta - \frac{1}{4} \gamma\tr \Omega_0 \gamma
\end{equation*}
follows immediately from \cref{lem:operator-sqrt-inverse}, with $\mathcal H = \mathbb R^b$, $h = \beta$, $C = \Omega_0$, and $\alpha = 0$ since all linear operators in $\mathbb R^b$ are compact. This immediately gives us our first desired result.

For the second part of the lemma, consider the term $\gamma\tr \Omega_0 \gamma$. By the definition of $\Omega_0$ in \cref{thm:efficiency}, we have
\begin{equation*}
    \gamma\tr \Omega_0 \gamma = \e[q_\gamma(Z)\tr V(Z;\theta_0) q_\gamma(Z)] \\
\end{equation*}
where
\begin{equation*}
    q_\gamma(Z) = \e[\nabla \rho(X;\theta_0) \gamma \mid Z]\,.
\end{equation*}

Next, applying \cref{lem:bh-value} to the RHS of the above, we obtain
\begin{equation*}
    \gamma\tr \Omega_0 \gamma = \norm{B h_\gamma}^2 \,,
\end{equation*}
where $h_\gamma \in \mathcal H$ is defined according to
\begin{equation*}
    h_\gamma(f) = \e[f(Z)\tr \nabla \rho(X;\theta_0) \gamma]\,.
\end{equation*}
Recalling that $B = C^{-1/2}$, and that $C$ is compact from \cref{lem:c-compact}, applying \cref{lem:operator-sqrt-inverse} again gives us
\begin{equation*}
    \gamma\tr \Omega_0 \gamma = \sup_{h \in \mathcal H} \inner{h_\gamma}{h} - \frac{1}{4} \inner{C h}{h}\,,
\end{equation*}
and then by \cref{lem:rkhs-dual} and the definitions of $h_\gamma$ and $C$ we get
\begin{equation*}
    \gamma\tr \Omega_0 \gamma = \sup_{f \in \mathcal F} \e[f(Z)\tr \nabla \rho(X;\theta_0) \gamma] - \frac{1}{4} \e[(f(Z)\tr \rho(X;\theta_0))^2]\,.
\end{equation*}
Finally, plugging this into the right hand side of our equation above for $\beta\tr \Omega_0^{-1} \beta$, and taking out a $-1/4$ term, gives us our final result.

\end{proof}

\begin{proof}[Proof of \Cref{thm:kernel-inference}]

We first observe that, in order to establish that $\Omega_n \to \Omega_0$ in probability, it is sufficient to show that $\sup_{\norm{\gamma} \leq 1} \gamma\tr (\Omega_n - \Omega_0) \gamma \to 0$ in probability, since convergence of a matrix in spectral norm implies element-wise convergence. Therefore, we will instead argue this.

Next, we note that as argued in the proof of \cref{lem:asymptotic-variance}, we have
\begin{equation*}
    \gamma\tr \Omega_0 \gamma = \norm{B h(\gamma;\theta_0)}^2\,,
\end{equation*}
where $h(\gamma;\theta) \in \mathcal H$ is defined according to
\begin{equation*}
    h(\gamma;\theta)(f) = \e[f(Z)\tr \nabla \rho(X;\theta) \gamma]\,.
\end{equation*}

Furthermore, given the form of $\Omega_n$ and applying an almost identical derivation as in the proof of \cref{lem:kernel-vmm-closed-form}, we have
\begin{equation*}
    \gamma\tr \Omega_n \gamma = \norm{B_n h_n(\gamma;\hat\theta_n)}^2\,,
\end{equation*}
where $h_n(\gamma;\theta) \in \mathcal H$ is defined according to
\begin{equation*}
    h_n(\gamma;\theta)(f) = \e_n[f(Z)\tr \nabla \rho(X;\theta) \gamma]\,.
\end{equation*}

Now, we can bound
\begin{align*}
    &\sup_{\norm{\gamma} \leq 1} \gamma\tr (\Omega_n - \Omega_0) \gamma \\
    &\leq \sup_{\norm{\gamma} \leq 1} \left( \norm{B_n h_n(\gamma;\tilde\theta_n)} + \norm{B h(\gamma;\theta_0)} \right) \left| \norm{B_n h_n(\gamma;\tilde\theta_n)} - \norm{B h(\gamma;\theta_0)} \right| \\
    &\leq \left( 2 \sup_{\norm{\gamma} \leq 1} \norm{B h(\gamma;\theta_0)} + \epsilon_n \right) \epsilon_n \,,
\end{align*}
where
\begin{align*}
    \epsilon_n &= \sup_{\norm{\gamma} \leq 1} \left| \norm{B_n h_n(\gamma;\hat\theta_n)} - \norm{B h(\gamma;\theta_0)} \right| \\
    &\leq \sup_{\theta \in \Theta, \norm{\gamma} \leq 1} \Big| \norm{B_n h_n(\gamma;\theta)} - \norm{B h(\gamma;\theta)} \Big| + \sup_{\norm{\gamma} \leq 1} \left| \norm{B h(\gamma;\hat\theta_n)} - \norm{B h(\gamma;\theta_0)} \right| \\
    &\leq \sup_{\theta \in \Theta, \norm{\gamma} \leq 1} \norm{B_n h_n(\gamma;\theta) - B h(\gamma;\theta)} + \sup_{\norm{\gamma} \leq 1} \left| \norm{B h(\gamma;\hat\theta_n)} - \norm{B h(\gamma;\theta_0)} \right| \,.
\end{align*}

Given \cref{asm:continuous-deriv}, it trivially follows that the collection of functions $q(x;\gamma,\theta) = \nabla \rho(x;\theta) \gamma$ satisfy the assumptions of \cref{lem:ht-properties} under any product norm on $\gamma$ and $\theta$ given by $\norm{(\gamma,\theta) - (\gamma',\theta')} = (\norm{\gamma - \gamma'}^2 + \norm{\theta - \theta'}^2)^{1/2}$, so given the conditions on $\alpha_n$ assumed by \cref{thm:efficiency} it follows from \cref{lem:bn-convergence} that the first term in above bound on $\epsilon_n$ converges to zero in probability. In addition, by \cref{lem:ht-properties} we have that the second term in this bound on $\epsilon_n$ is bounded by $L' \norm{\hat\theta_n - \theta_0}$, for some $L'$ that doesn't depend on $\hat\theta_n$ or $\theta_0$. Therefore, this second term also converges in probability to zero, since by \cref{thm:consistency} $\hat\theta_n$ is consistent for $\theta_0$. Thus $\epsilon_n \to 0$ in probability.

Finally, again by \cref{lem:ht-properties} we have that $\sup_{\norm{\gamma} \leq 1} \norm{B h(\gamma;\theta_0)}$ is finite. Putting all of the above together, we have $\sup_{\norm{\gamma} \leq 1} \gamma\tr (\Omega_n - \Omega_0) \gamma \to 0$ in probability, which gives us our desired result.

\end{proof}

\section{Additional Experiment Details}

Here we provide some additional experimental details, including some details of our methods and scenarios not included in \cref{sec:experiments} for the sake of brevity, as well as some additional results.

\subsection{Additional Details of Scenarios}

\paragraph{Policy Learning Scenario} In our policy learning scenario, the functions $e$, $\mu_1$, $\mu_{-1}$, $\sigma_1$, and $\sigma_{-1}$ are defined according to:
\begin{align*}
    e(Z) &= \expit \left( -0.5 -0.75 Z_1 - 0.5 Z_2 - 0.25 Z_1^2 + 0.75 Z_2^2 + Z_1 Z_2 \right) \\
    \mu_{-1}(Z) &= Z_1 - Z_2 + 1.5 Z_2^2 + Z_1 Z_2 \\
    \mu_1(Z) &= 0.5 -3Z_1 + 0.5Z_2 - 2.5 Z_1^2 + 0.5 Z_2^2 + 4 Z_1Z_2 \\
    \sigma_{-1}(Z) &= \softplus(1 + Z_1 + Z_2 + Z_1^2 + Z_2^2 + 2 Z_1 Z_2) \\
    \sigma_1(Z) &= 1 \,.
\end{align*}
In addition, the explicit parameterization of our policy class is given by
\begin{equation*}
    g(z;\theta) = \theta_1 + \theta_2 Z_1 + \theta_3 Z_2 + \theta_4 Z_1^2 + \theta_5 Z_2^2 + 2 \theta_6 Z_1 Z_2 \,.
\end{equation*}
Given this and the definitions of $\mu_{-1}$ and $\mu_1$ above, it is easy to verify that the optimal parameters are given $\theta_0 = [0.5, -4, 1.5, -2.5, -1.0, 1.5]$.

\subsection{Additional Details of Estimation Methods}

\paragraph{Kernel Function for KernelVMM and MMR} Given $n$ training examples, let $s$ denote the median of the entries in the $n \times n$ matrix of euclidean distances between the observed $Z$ values, and define $\sigma_1 = 0.1 s$, $\sigma_2 = s$, and $\sigma_3 = 10s$ Then, we define our kernel according to
\begin{equation*}
    K(z_1, z_2) = \frac{1}{3} \sum_{i=1}^3 \exp \left( \frac{(z_1 - z_2)\tr(z_1 - z_2)}{2 \sigma_i^2} \right)\,.
\end{equation*}

\paragraph{Details of NeuralVMM} First, in our pilot experiments we found that using a batch size of $200$ and learning rates of $5 \times 10^{-4}$ and $2.5 \times 10^{-3}$ for optimizing $\theta$ and $f$ respectively led to consistently stable optimization of the game objective, so we used these hyperparameter values in all cases. Second, regarding the architecture of our $f$ network, we used 50 units in the first hidden layer and 20 units in the second hidden layer, with leaky ReLU activations. These choices were motivated by the choices in past work \citep{bennett2019deep,bennett2020efficient}. Note that in our pilot experiments we did not find results to be very sensitive to the choice of $\mathcal F_n$. Finally, we performed early stopping using the MMR objective on a held-out dev set Specifically, we compute the MMR objective on the dev data every $\lceil 2000/n_b \rceil$ epochs, where $n_b$ is the number of minibatches in a single epoch. After 3 burn-in evaluation cycles, we stop once the computed objective fails to improve 5 consecutive times. We chose this heuristic since, although we found MMR to have sub-optimal performance, it performed fairly consistently and this objective is quick to compute.

\paragraph{Details of SMD} First, suppose our sieve basis is $\{f^{(1)},f^{(2)},\ldots,f^{(k_n)}\}$, and let $F : \mathcal Z \mapsto \mathbb R^{k_n \times m}$ denote the matrix-valued function defined according to $F(z)_{i,j} = f^{(i)}_j(z)$. Then, by plugging in the corresponding least-squares sieve estimate $q_n$, it can easily be seen that the SMD objective is equivalent to 
\begin{equation*}
    J_n^{\text{SMD}}(\theta) = \e_n[F(Z) \rho(X;\theta)]\tr  \Delta_n \e_n[F(Z) \rho(X;\theta)] \,,
\end{equation*}
where
\begin{equation*}
    \Delta_n = \e_n[F(Z) F(Z)\tr]^{-} \e_n[F(Z) \Gamma_n(Z)^{-} F(Z)\tr] \e_n[F(Z) F(Z)\tr]^{-} \,.
\end{equation*}
Thus, given $\Gamma_n$ and $F(Z)$, we have a simple objective to minimize that is convex in the $\rho(X;\theta)$ terms, which in practice in our experiments we optimized using L-BFGS. Finally, for our sieve basis, we used B-splines with 5 knots and degree 2, which was chosen on the basis that it achieved relatively competitive results in some pilot experiments. We note that in past work the authors found that performance is typically very similar for other choices of sieve basis as long as the choice of $k_n$ is similar \citep{chen2012estimation}.

\paragraph{Details of OWGMM}

For this method, we used a 2-stage OWGMM estimator, where in the first stage we used a randomly chosen $\tilde\theta_n$, and in the second stage we used $\tilde\theta_n$ from the first stage. Compared with SMD which also hused a B-spline basis of functions, here
we instead used 10 knots and degree 3, as this was found to result in superior performance for large $n$ for this method. 

\subsection{Additional Details of Inference Methods}

\paragraph{Details of Neural Method} We used the same neural architecture for $\mathcal F_n$ as for our NeuralVMM estimation method in our estimation experiments. We also used an almost-identical optimization procedure based on alternating OAdam updates, except that we did not perform early stopping, and instead optimized for a fixed number of epochs. Based on the results of our pilot experiments, we optimized for $\lceil3000/n_b\rceil$ epochs, where $n_b$ is the number of minibatches in a single epoch. Finally, we used a learning rate of $5 \times 10^{-2}$ for each of $\gamma$ and $f$, and a batch size of 200.

\subsection{Additional Estimation Results}

\begin{table}\centering\footnotesize
    \begin{minipage}[b]{1.0\textwidth}
    \centering
        \begin{tabular}{clcccccc}
         \hline
         \multicolumn{2}{c}{\multirow{2}{*}{Method}} &  \multicolumn{6}{c}{$n$} \\
         & & 200 & 500 & 1,000 & 2,000 & 5,000 & 10,000 \\
         \hline
         \multirow{6}{*}{K-VMM} & $\alpha_n=0$ & $>100$ & $.92\pm1.8$ & $2.1\pm13.1$ & $.16\pm.34$ & $.06\pm.05$ & $.03\pm.07$ \\
         & $\alpha_n=10^{-8}$ & $14.0\pm59.8$ & $>100$ & $.36\pm.69$ & $.15\pm.31$ & $.05\pm.04$ & $.02\pm.01$ \\
         & $\alpha_n=10^{-6}$ & $1.4\pm1.3$ & $.44\pm.37$ & $.19\pm.14$ & $.09\pm.07$ & $.05\pm.04$ & $.02\pm.01$ \\
         & $\alpha_n=10^{-4}$ & $1.4\pm1.4$ & $.40\pm.33$ & $.18\pm.13$ & $.09\pm.07$ & $.05\pm.04$ & $.02\pm.01$ \\
         & $\alpha_n=10^{-2}$ & $1.5\pm1.5$ & $.49\pm.47$ & $.21\pm.17$ & $.09\pm.07$ & $.05\pm.03$ & $.02\pm.01$ \\
         & $\alpha_n=1$ & $1.7\pm1.6$ & $.87\pm.79$ & $.52\pm.64$ & $.35\pm.49$ & $.22\pm.18$ & $.19\pm.19$ \\
         \hdashline
         \multirow{4}{*}{N-VMM} & $\lambda_n=0$ & $5.2\pm2.7$ & $1.5\pm.74$ & $.55\pm.29$ & $.32\pm.20$ & $.16\pm.10$ & $.09\pm.07$ \\
         & $\lambda_n=10^{-4}$ & $5.0\pm3.0$ & $1.5\pm.73$ & $.58\pm.32$ & $.30\pm.18$ & $.15\pm.09$ & $.09\pm.08$ \\
         & $\lambda_n=1$ & $3.7\pm1.8$ & $1.4\pm.58$ & $.54\pm.29$ & $.32\pm.18$ & $.15\pm.10$ & $.09\pm.08$ \\
         \hdashline
         \multirow{3}{*}{SMD} & Identity & $4.4\pm2.9$ & $4.4\pm4.0$ & $3.3\pm3.8$ & $2.7\pm3.1$ & $2.5\pm2.9$ & $3.7\pm4.0$ \\
         & Homo & $4.3\pm3.1$ & $3.4\pm5.8$ & $3.3\pm4.9$ & $3.7\pm3.9$ & $3.6\pm3.3$ & $3.2\pm3.1$ \\
         & Hetero & $4.8\pm3.4$ & $3.5\pm4.0$ & $3.4\pm3.7$ & $2.4\pm2.9$ & $3.2\pm3.1$ & $2.7\pm3.3$ \\
         \hdashline
         MMR & & $2.1\pm.81$ & $1.7\pm.44$ & $1.5\pm.29$ & $1.4\pm.31$ & $1.3\pm.24$ & $1.3\pm.17$ \\
         \hdashline
         OWGMM & & $7.8\pm10.0$ & $5.4\pm7.1$ & $4.0\pm4.3$ & $2.0\pm2.2$ & $.71\pm.91$ & $.39\pm.41$ \\
         \hdashline
         NCB & & $5.9\pm1.3$ & $5.7\pm.67$ & $5.5\pm.63$ & $5.6\pm.53$ & $5.6\pm.28$ & $5.5\pm.22$ \\
         \hline
    \end{tabular}
    \centering{\sffamily\small\begin{enumerate}\item[(a)] \simpleiv \end{enumerate}}
    \end{minipage}\hfill\hspace{0.3cm}
    \linebreak
    \begin{minipage}[b]{1.0\textwidth}
    \centering
        \begin{tabular}{clcccccc}
         \hline
         \multicolumn{2}{c}{\multirow{2}{*}{Method}} &  \multicolumn{6}{c}{$n$} \\
         & & 200 & 500 & 1,000 & 2,000 & 5,000 & 10,000 \\
         \hline
         \multirow{6}{*}{K-VMM} & $\alpha_n=0$ & $>100$ & $.92\pm1.8$ & $2.1\pm13.1$ & $.16\pm.34$ & $.06\pm.05$ & $.03\pm.07$ \\
         & $\alpha_n=10^{-8}$ & $14.0\pm59.8$ & $>100$ & $.36\pm.69$ & $.15\pm.31$ & $.05\pm.04$ & $.02\pm.01$ \\
         & $\alpha_n=10^{-6}$ & $1.4\pm1.3$ & $.44\pm.37$ & $.19\pm.14$ & $.09\pm.07$ & $.05\pm.04$ & $.02\pm.01$ \\
         & $\alpha_n=10^{-4}$ & $1.4\pm1.4$ & $.40\pm.33$ & $.18\pm.13$ & $.09\pm.07$ & $.05\pm.04$ & $.02\pm.01$ \\
         & $\alpha_n=10^{-2}$ & $1.5\pm1.5$ & $.49\pm.47$ & $.21\pm.17$ & $.09\pm.07$ & $.05\pm.03$ & $.02\pm.01$ \\
         & $\alpha_n=1$ & $1.7\pm1.6$ & $.87\pm.79$ & $.52\pm.64$ & $.35\pm.49$ & $.22\pm.18$ & $.19\pm.19$ \\
         \hdashline
         \multirow{4}{*}{N-VMM} & $\lambda_n=0$ & $5.2\pm2.7$ & $1.5\pm.74$ & $.55\pm.29$ & $.32\pm.20$ & $.16\pm.10$ & $.09\pm.07$ \\
         & $\lambda_n=10^{-4}$ & $5.0\pm3.0$ & $1.5\pm.73$ & $.58\pm.32$ & $.30\pm.18$ & $.15\pm.09$ & $.09\pm.08$ \\
         & $\lambda_n=1$ & $3.7\pm1.8$ & $1.4\pm.58$ & $.54\pm.29$ & $.32\pm.18$ & $.15\pm.10$ & $.09\pm.08$ \\
         \hdashline
         \multirow{3}{*}{SMD} & Identity & $4.4\pm2.9$ & $4.4\pm4.0$ & $3.3\pm3.8$ & $2.7\pm3.1$ & $2.5\pm2.9$ & $3.7\pm4.0$ \\
         & Homo & $4.3\pm3.1$ & $3.4\pm5.8$ & $3.3\pm4.9$ & $3.7\pm3.9$ & $3.6\pm3.3$ & $3.2\pm3.1$ \\
         & Hetero & $4.8\pm3.4$ & $3.5\pm4.0$ & $3.4\pm3.7$ & $2.4\pm2.9$ & $3.2\pm3.1$ & $2.7\pm3.3$ \\
         \hdashline
         MMR & & $2.1\pm.81$ & $1.7\pm.44$ & $1.5\pm.29$ & $1.4\pm.31$ & $1.3\pm.24$ & $1.3\pm.17$ \\
         \hdashline
         OWGMM & & $4.7\pm3.3$ & $3.7\pm3.1$ & $4.1\pm5.9$ & $3.3\pm3.7$ & $3.4\pm3.1$ & $3.1\pm3.1$ \\
         \hdashline
         NCB & & $5.9\pm1.3$ & $5.7\pm.67$ & $5.5\pm.63$ & $5.6\pm.53$ & $5.6\pm.28$ & $5.5\pm.22$ \\
         \hline
    \end{tabular}
    \centering{\sffamily\small\begin{enumerate}\item[(b)] \heteroiv \end{enumerate}}
    \end{minipage}\hfill\hspace{0.3cm}
    \linebreak
    \begin{minipage}[b]{1.0\textwidth}
    \centering
        \begin{tabular}{clcccccc}
         \hline
         \multicolumn{2}{c}{\multirow{2}{*}{Method}} &  \multicolumn{6}{c}{$n$} \\
         & & 200 & 500 & 1,000 & 2,000 & 5,000 & 10,000 \\
         \hline
         \multirow{6}{*}{K-VMM} & $\alpha_n=0$ & $.25\pm.51$ & $.13\pm.45$ & $.09\pm.18$ & $.03\pm.05$ & $.28\pm.90$ & $.16\pm.60$ \\
         & $\alpha_n=10^{-8}$ & $.16\pm.12$ & $.10\pm.29$ & $.06\pm.06$ & $.03\pm.04$ & $.03\pm.14$ & $.02\pm.02$ \\
         & $\alpha_n=10^{-6}$ & $.13\pm.09$ & $.06\pm.03$ & $.04\pm.02$ & $.02\pm.01$ & $.01\pm.01$ & $.01\pm.00$ \\
         & $\alpha_n=10^{-4}$ & $.13\pm.09$ & $.05\pm.03$ & $.04\pm.02$ & $.02\pm.01$ & $.02\pm.01$ & $.01\pm.00$ \\
         & $\alpha_n=10^{-2}$ & $.13\pm.10$ & $.06\pm.03$ & $.04\pm.02$ & $.03\pm.01$ & $.02\pm.01$ & $.02\pm.01$ \\
         & $\alpha_n=1$ & $.17\pm.17$ & $.07\pm.04$ & $.05\pm.02$ & $.04\pm.02$ & $.03\pm.01$ & $.03\pm.01$ \\
         \hdashline
         \multirow{4}{*}{N-VMM} & $\lambda_n=0$ & $.22\pm.12$ & $.10\pm.04$ & $.06\pm.03$ & $.03\pm.02$ & $.01\pm.01$ & $.01\pm.00$ \\
         & $\lambda_n=10^{-4}$ & $.20\pm.10$ & $.09\pm.05$ & $.06\pm.03$ & $.03\pm.02$ & $.01\pm.01$ & $.01\pm.01$ \\
         & $\lambda_n=1$ & $.18\pm.11$ & $.08\pm.05$ & $.05\pm.03$ & $.03\pm.02$ & $.01\pm.01$ & $.01\pm.00$ \\
         \hdashline
         \multirow{3}{*}{SMD} & Identity & $.86\pm.97$ & $.67\pm.88$ & $.47\pm.39$ & $.54\pm.81$ & $.52\pm.71$ & $.62\pm.87$ \\
         & Homo & $.88\pm.89$ & $.69\pm.78$ & $.50\pm.66$ & $.62\pm.92$ & $.61\pm.78$ & $.56\pm.67$ \\
         & Hetero & $1.3\pm1.2$ & $.79\pm1.00$ & $.70\pm.70$ & $.65\pm.68$ & $.78\pm1.1$ & $.66\pm.86$ \\
         \hdashline
         MMR & & $.22\pm.29$ & $.10\pm.07$ & $.09\pm.09$ & $.06\pm.06$ & $.07\pm.07$ & $.06\pm.05$ \\
         \hdashline
         OWGMM & & $.26\pm.17$ & $.15\pm.17$ & $.15\pm.18$ & $.13\pm.18$ & $.14\pm.24$ & $.11\pm.19$ \\
         \hdashline
         NCB & & $.39\pm.40$ & $.32\pm.34$ & $.30\pm.27$ & $.28\pm.25$ & $.30\pm.29$ & $.33\pm.23$ \\
         \hline
    \end{tabular}
    \centering{\sffamily\small\begin{enumerate}\item[(c)] \policylearning \end{enumerate}}
    \end{minipage}\hfill\hspace{0.3cm}
    \caption{Results for our estimation experiments in terms of cost (MSE of predicted function for the IV scenarios, or policy sub-optimality for the policy learning scenario.) For each combination of scenario, method, and $n$, the mean cost of $\hat\theta_n$ is estimated over 50 replications, along with standard errors.}
\label{tab:est-results-cost}
\normalsize
\end{table}

\begin{table}\centering\footnotesize
    \begin{minipage}[b]{1.0\textwidth}
    \centering
        \begin{tabular}{clcccccc}
         \hline
         \multicolumn{2}{c}{\multirow{2}{*}{Method}} &  \multicolumn{6}{c}{$n$} \\
         & & 200 & 500 & 1,000 & 2,000 & 5,000 & 10,000 \\
         \hline
         \multirow{6}{*}{K-VMM} & $\alpha_n=0$ & $64.1$ & $.48$ & $12.8$ & $.04$ & $.01$ & $.01$ \\
         & $\alpha_n=10^{-8}$ & $.14$ & $.10$ & $.04$ & $.26$ & $.01$ & $.02$ \\
         & $\alpha_n=10^{-6}$ & $.22$ & $.14$ & $.05$ & $.02$ & $.01$ & $.01$ \\
         & $\alpha_n=10^{-4}$ & $.34$ & $.23$ & $.04$ & $.03$ & $.02$ & $.01$ \\
         & $\alpha_n=10^{-2}$ & $.30$ & $.16$ & $.03$ & $.09$ & $.03$ & $.03$ \\
         & $\alpha_n=1$ & $.58$ & $.12$ & $.03$ & $.06$ & $.02$ & $.03$ \\
         \hdashline
          \multirow{4}{*}{N-VMM} & $\lambda_n=0$ & $1.0$ & $.38$ & $.29$ & $.05$ & $.12$ & $.16$ \\
         & $\lambda_n=10^{-4}$ & $1.0$ & $.35$ & $.21$ & $.07$ & $.10$ & $.19$ \\
         & $\lambda_n=1$ & $.94$ & $.41$ & $.24$ & $.05$ & $.14$ & $.20$ \\
         \hdashline
         \multirow{3}{*}{SMD} & Identity & $.21$ & $.07$ & $.09$ & $.01$ & $.03$ & $.04$ \\
         & Homo & $.21$ & $.07$ & $.09$ & $.01$ & $.03$ & $.03$ \\
         & Hetero & $.24$ & $.06$ & $.08$ & $.03$ & $.03$ & $.03$ \\
         \hdashline
         MMR & & $.64$ & $.26$ & $.12$ & $.06$ & $.03$ & $.03$ \\
         \hdashline
         OWGMM & & $.25$ & $.12$ & $.15$ & $.05$ & $.07$ & $.12$ \\
         \hdashline
         NCB & & $2.4$ & $2.4$ & $2.4$ & $2.4$ & $2.4$ & $2.4$ \\
         \hline
    \end{tabular}
    \centering{\sffamily\small\begin{enumerate}\item[(a)] \simpleiv \end{enumerate}}
    \end{minipage}\hfill\hspace{0.3cm}
    \linebreak
    \begin{minipage}[b]{1.0\textwidth}
    \centering
        \begin{tabular}{clcccccc}
         \hline
         \multicolumn{2}{c}{\multirow{2}{*}{Method}} &  \multicolumn{6}{c}{$n$} \\
         & & 200 & 500 & 1,000 & 2,000 & 5,000 & 10,000 \\
         \hline
         \multirow{6}{*}{K-VMM} & $\alpha_n=0$ & $>100$ & $.22$ & $1.1$ & $.10$ & $.12$ & $.07$ \\
         & $\alpha_n=10^{-8}$ & $1.9$ & $>100$ & $.31$ & $.11$ & $.14$ & $.03$ \\
         & $\alpha_n=10^{-6}$ & $.41$ & $.18$ & $.28$ & $.13$ & $.09$ & $.02$ \\
         & $\alpha_n=10^{-4}$ & $.44$ & $.08$ & $.16$ & $.06$ & $.07$ & $.01$ \\
         & $\alpha_n=10^{-2}$ & $.63$ & $.21$ & $.18$ & $.08$ & $.06$ & $.01$ \\
         & $\alpha_n=1$ & $1.4$ & $1.3$ & $1.1$ & $1.0$ & $1.1$ & $1.0$ \\
         \hdashline
         \multirow{4}{*}{N-VMM} &    $\lambda_n=0$ & $2.6$ & $2.1$ & $1.5$ & $1.1$ & $.87$ & $.60$ \\
         & $\lambda_n=10^{-4}$ & $2.5$ & $2.1$ & $1.4$ & $1.0$ & $.83$ & $.62$ \\
         & $\lambda_n=1$ & $2.5$ & $2.0$ & $1.4$ & $1.2$ & $.87$ & $.61$ \\
         \hdashline
         \multirow{3}{*}{SMD} & Identity & $1.7$ & $6.7$ & $4.5$ & $.88$ & $1.1$ & $2.8$ \\
         & Homo & $3.8$ & $2.4$ & $1.0$ & $1.6$ & $2.0$ & $3.0$ \\
         & Hetero & $3.6$ & $.92$ & $3.8$ & $1.4$ & $1.8$ & $3.4$ \\
         \hdashline
         MMR & & $3.1$ & $3.1$ & $3.1$ & $3.1$ & $3.1$ & $3.1$ \\
         \hdashline
         OWGMM & & $2.6$ & $6.2$ & $1.4$ & $3.3$ & $3.3$ & $1.4$ \\
         \hdashline
         NCB & & $2.6$ & $2.8$ & $2.7$ & $2.8$ & $2.8$ & $2.7$ \\
         \hline
    \end{tabular}
    \centering{\sffamily\small\begin{enumerate}\item[(b)] \heteroiv \end{enumerate}}
    \end{minipage}\hfill\hspace{0.3cm}
    \linebreak
    \begin{minipage}[b]{1.0\textwidth}
    \centering
        \begin{tabular}{clcccccc}
         \hline
         \multicolumn{2}{c}{\multirow{2}{*}{Method}} &  \multicolumn{6}{c}{$n$} \\
         & & 200 & 500 & 1,000 & 2,000 & 5,000 & 10,000 \\
         \hline
         \multirow{6}{*}{K-VMM} & $\alpha_n=0$ & $>100$ & $>100$ & $1.5$ & $1.1$ & $>100$ & $>100$ \\
         & $\alpha_n=10^{-8}$ & $5.5$ & $2.4$ & $4.1$ & $2.2$ & $2.7$ & $.81$ \\
         & $\alpha_n=10^{-6}$ & $1.7$ & $.72$ & $1.0$ & $1.2$ & $1.3$ & $1.4$ \\
         & $\alpha_n=10^{-4}$ & $1.0$ & $1.1$ & $1.4$ & $1.5$ & $1.6$ & $1.7$ \\
         & $\alpha_n=10^{-2}$ & $1.1$ & $1.8$ & $2.0$ & $2.0$ & $2.1$ & $2.2$ \\
         & $\alpha_n=1$ & $2.2$ & $2.8$ & $2.9$ & $2.9$ & $2.9$ & $2.9$ \\
         \hdashline
         \multirow{4}{*}{N-VMM} & $\lambda_n=0$ & $10.9$ & $5.7$ & $2.0$ & $.81$ & $.59$ & $.92$ \\
         & $\lambda_n=10^{-4}$ & $11.0$ & $6.8$ & $2.0$ & $.78$ & $.62$ & $.91$ \\
         & $\lambda_n=1$ & $6.3$ & $1.4$ & $.58$ & $.85$ & $1.3$ & $1.4$ \\
         \hdashline
         \multirow{3}{*}{SMD} & Identity & $8.1$ & $2.5$ & $3.1$ & $3.5$ & $53.2$ & $4.9$ \\
         & Homo & $3.3$ & $3.0$ & $2.8$ & $3.0$ & $3.1$ & $3.1$ \\
         & Hetero & $>100$ & $>100$ & $>100$ & $>100$ & $>100$ & $>100$ \\
         \hdashline
         MMR & & $2.7$ & $3.2$ & $3.3$ & $3.5$ & $3.6$ & $3.6$ \\
         \hdashline
         OWGMM & & $2.2$ & $1.3$ & $1.6$ & $1.1$ & $1.8$ & $1.5$ \\
         \hdashline
         NCB & & $16.1$ & $4.9$ & $3.2$ & $4.2$ & $7.8$ & $6.2$ \\
         \hline
    \end{tabular}
    \centering{\sffamily\small\begin{enumerate}\item[(c)] \policylearning \end{enumerate}}
    \end{minipage}\hfill\hspace{0.3cm}
    \caption{Results for our estimation experiments in terms bias. For each combination of scenario, method, and $n$, we estimated the mean bias of $\hat\theta_n$ over 50 replications. We write $>100$ whenever the bias was greater than $100$.}
\label{tab:est-results-bias}
\normalsize
\end{table}

\begin{table}\centering\footnotesize
    \begin{minipage}[b]{1.0\textwidth}
    \centering
        \begin{tabular}{clcccccc}
         \hline
         \multicolumn{2}{c}{\multirow{2}{*}{Method}} &  \multicolumn{6}{c}{$n$} \\
         & & 200 & 500 & 1,000 & 2,000 & 5,000 & 10,000 \\
         \hline
         \multirow{6}{*}{K-VMM} & $\alpha_n=0$ & $>100$ & $2.9$ & $89.7$ & $.82$ & $.48$ & $.37$ \\
         & $\alpha_n=10^{-8}$ & $2.3$ & $1.7$ & $1.6$ & $1.8$ & $.50$ & $.41$ \\
         & $\alpha_n=10^{-6}$ & $2.3$ & $1.6$ & $1.3$ & $.88$ & $.49$ & $.37$ \\
         & $\alpha_n=10^{-4}$ & $2.3$ & $1.6$ & $1.3$ & $.85$ & $.50$ & $.37$ \\
         & $\alpha_n=10^{-2}$ & $2.4$ & $1.6$ & $1.3$ & $.84$ & $.51$ & $.37$ \\
         & $\alpha_n=1$ &  $3.3$ & $2.0$ & $1.5$ & $.86$ & $.58$ & $.40$ \\
         \hdashline
          \multirow{4}{*}{N-VMM} & $\lambda_n=0$ & $1.2$ & $1.2$ & $.92$ & $.64$ & $.38$ & $.27$ \\
         & $\lambda_n=10^{-4}$ & $1.3$ & $1.3$ & $.88$ & $.62$ & $.41$ & $.27$ \\
         & $\lambda_n=1$ & $1.1$ & $1.4$ & $.94$ & $.62$ & $.41$ & $.27$ \\
         \hdashline
         \multirow{3}{*}{SMD} & Identity & $2.0$ & $1.6$ & $1.4$ & $.83$ & $.48$ & $.39$ \\
         & Homo & $2.0$ & $1.6$ & $1.4$ & $.83$ & $.49$ & $.39$ \\
         & Hetero & $2.1$ & $1.5$ & $1.3$ & $.81$ & $.49$ & $.38$ \\
         \hdashline
         MMR & & $4.2$ & $2.3$ & $1.7$ & $.91$ & $.61$ & $.41$ \\
         \hdashline
         OWGMM & & $1.8$ & $1.5$ & $1.3$ & $.92$ & $.57$ & $.42$ \\
         \hdashline
         NCB & & $.44$ & $.27$ & $.17$ & $.17$ & $.10$ & $.08$ \\
         \hline
    \end{tabular}
    \centering{\sffamily\small\begin{enumerate}\item[(a)] \simpleiv \end{enumerate}}
    \end{minipage}\hfill\hspace{0.3cm}
    \linebreak
    \begin{minipage}[b]{1.0\textwidth}
    \centering
        \begin{tabular}{clcccccc}
         \hline
         \multicolumn{2}{c}{\multirow{2}{*}{Method}} &  \multicolumn{6}{c}{$n$} \\
         & & 200 & 500 & 1,000 & 2,000 & 5,000 & 10,000 \\
         \hline
         \multirow{6}{*}{K-VMM} & $\alpha_n=0$ & $>100$ & $1.9$ & $8.3$ & $.79$ & $.48$ & $.29$ \\
         & $\alpha_n=10^{-8}$ & $5.7$ & $>100$ & $1.1$ & $.79$ & $.43$ & $.24$ \\
         & $\alpha_n=10^{-6}$ & $2.9$ & $1.4$ & $.84$ & $.57$ & $.46$ & $.24$ \\
         & $\alpha_n=10^{-4}$ & $3.1$ & $1.4$ & $.87$ & $.59$ & $.45$ & $.23$ \\
         & $\alpha_n=10^{-2}$ & $2.9$ & $1.6$ & $1.0$ & $.63$ & $.45$ & $.25$ \\
         & $\alpha_n=1$ & $2.8$ & $1.8$ & $1.5$ & $1.2$ & $.64$ & $.59$ \\
         \hdashline
         \multirow{4}{*}{N-VMM} & $\lambda_n=0$ & $1.6$ & $1.0$ & $.76$ & $.84$ & $.63$ & $.57$ \\
         & $\lambda_n=10^{-4}$ & $1.4$ & $.97$ & $.93$ & $.81$ & $.62$ & $.57$ \\
         & $\lambda_n=1$ & $1.0$ & $.90$ & $.83$ & $.69$ & $.56$ & $.54$ \\
         \hdashline
         \multirow{3}{*}{SMD} & Identity & $17.6$ & $22.5$ & $16.7$ & $11.4$ & $12.6$ & $15.2$ \\
         & Homo & $14.5$ & $13.8$ & $14.7$ & $19.0$ & $16.9$ & $16.2$ \\
         & Hetero & $17.7$ & $11.6$ & $16.4$ & $12.4$ & $13.7$ & $14.2$ \\
         \hdashline
         MMR & & $.98$ & $.56$ & $.50$ & $.46$ & $.36$ & $.29$ \\
         \hdashline
         OWGMM & & $18.5$ & $14.9$ & $11.9$ & $15.5$ & $12.2$ & $12.1$ \\
         \hdashline
         NCB & & $1.5$ & $.90$ & $.65$ & $.51$ & $.27$ & $.19$ \\
         \hline
    \end{tabular}
    \centering{\sffamily\small\begin{enumerate}\item[(b)] \heteroiv \end{enumerate}}
    \end{minipage}\hfill\hspace{0.3cm}
    \linebreak
    \begin{minipage}[b]{1.0\textwidth}
    \centering
        \begin{tabular}{clcccccc}
         \hline
         \multicolumn{2}{c}{\multirow{2}{*}{Method}} &  \multicolumn{6}{c}{$n$} \\
         & & 200 & 500 & 1,000 & 2,000 & 5,000 & 10,000 \\
         \hline
         \multirow{6}{*}{K-VMM} & $\alpha_n=0$ & $>100$ & $>100$ & $9.5$ & $.64$ & $>100$ & $>100$ \\
         & $\alpha_n=10^{-8}$ & $22.7$ & $14.9$ & $30.7$ & $17.0$ & $21.9$ & $4.2$ \\
         & $\alpha_n=10^{-6}$ & $2.8$ & $1.1$ & $.72$ & $.47$ & $.28$ & $.22$ \\
         & $\alpha_n=10^{-4}$ & $2.4$ & $.98$ & $.68$ & $.44$ & $.27$ & $.20$ \\
         & $\alpha_n=10^{-2}$ & $1.7$ & $.82$ & $.54$ & $.36$ & $.23$ & $.20$ \\
         & $\alpha_n=1$ & $1.5$ & $.71$ & $.47$ & $.39$ & $.24$ & $.20$ \\
         \hdashline
         \multirow{4}{*}{N-VMM} & $\lambda_n=0$ & $10.8$ & $4.6$ & $1.6$ & $.69$ & $.39$ & $.27$ \\
         & $\lambda_n=10^{-4}$ & $10.9$ & $6.7$ & $1.9$ & $.71$ & $.40$ & $.29$ \\
         & $\lambda_n=1$ & $7.7$ & $1.9$ & $.83$ & $.46$ & $.30$ & $.20$ \\
         \hdashline
         \multirow{3}{*}{SMD} & Identity & $40.0$ & $20.8$ & $5.8$ & $3.1$ & $>100$ & $24.1$ \\
         & Homo & $3.7$ & $5.8$ & $5.1$ & $4.2$ & $4.3$ & $4.8$ \\
         & Hetero & $>100$ & $>100$ & $>100$ & $>100$ & $>100$ & $>100$ \\
         \hdashline
         MMR & & $1.3$ & $.80$ & $.62$ & $.55$ & $.42$ & $.34$ \\
         \hdashline
         OWGMM & & $9.9$ & $4.6$ & $5.5$ & $6.4$ & $3.1$ & $9.2$ \\
         \hdashline
         NCB & & $34.5$ & $19.7$ & $9.0$ & $8.7$ & $28.0$ & $20.7$ \\
         \hline
    \end{tabular}
    \centering{\sffamily\small\begin{enumerate}\item[(c)] \policylearning \end{enumerate}}
    \end{minipage}\hfill\hspace{0.3cm}
    \caption{Results for our estimation experiments in terms standard deviation. For each combination of scenario, method, and $n$, we estimated the mean standard deviation of $\hat\theta_n$ over 50 replications. We write $>100$ whenever the standard deviation was greater than $100$.}
\label{tab:est-results-std}
\normalsize
\end{table}

Here we provide additional results for our estimation experiments, in terms of the mean cost of our different estimation methods. As discussed in \cref{sec:experiments}, cost is calculated in terms of the $L_2$ error in the predicted regression function for our \simpleiv and \heteroiv scenarios, and in terms of regret of our learnt policy for our \policylearning scenario. We present additional tables of results here in the same format of those in \cref{tab:est-results}, but in terms of cost rather than squared error. These are provided in \cref{tab:est-results-cost}.
In addition, we present additional tables of results where we break down the squared error in terms of bias and standard deviation, which we provide in \cref{tab:est-results-bias} and \cref{tab:est-results-std}.

\subsection{Additional Inference Results}

\begin{table}\centering\small
    \begin{minipage}[b]{1.0\textwidth}
    \centering
        \begin{tabular}{cclccccc}
         \hline
         $n$ & \multicolumn{2}{c}{Method} & Cov & CovBC & PredSD(.05) & PredSD(.5) & PredSD(.95) \\
         \hline
         \multirow{10}{*}{$200$} & \multirow{6}{*}{Kernel} & $\alpha_n=0$ & 79.0 & 94.5 & .20 & .31 & .56 \\
         & & $\alpha_n=10^{-8}$ & 79.0 & 94.0 & .21 & .31 & .54 \\
         & & $\alpha_n=10^{-6}$ & 79.5 & 95.0 & .21 & .32 & .54 \\
         & & $\alpha_n=10^{-4}$ & 81.0 & 96.5 & .21 & .33 & .57 \\
         & & $\alpha_n=10^{-2}$ & 83.0 & 97.0 & .22 & .35 & .68 \\
         & & $\alpha_n=1$ & 87.0 & 98.5 & .24 & .38 & .78 \\
         \cdashline{2-8}
         & \multirow{4}{*}{Neural} & $\lambda_n=0$ & 78.0 & 94.5 & .21 & .31 & .50 \\
         & & $\lambda_n=10^{-4}$ & 77.5 & 93.5 & .21 & .31 & .51 \\
         & & $\lambda_n=1$ & 78.0 & 94.0 & .21 & .31 & .53 \\
         \hdashline
         \multirow{10}{*}{$2000$} & \multirow{6}{*}{Kernel} & $\alpha_n=0$ & 88.0 & 88.0 & .19 & .21 & .26 \\
         & & $\alpha_n=10^{-8}$ & 88.0 & 88.5 & .19 & .21 & .26 \\
         & & $\alpha_n=10^{-6}$ & 88.5 & 88.5 & .20 & .22 & .25 \\
         & & $\alpha_n=10^{-4}$ & 89.0 & 89.0 & .20 & .22 & .26 \\
         & & $\alpha_n=10^{-2}$ & 91.5 & 91.5 & .21 & .24 & .29 \\
         & & $\alpha_n=1$ & 99.5 & 99.5 & .47 & .58 & 1.0 \\
         \cdashline{2-8}
         & \multirow{4}{*}{Neural} & $\lambda_n=0$ & 87.0 & 87.5 & .19 & .21 & .23 \\
         & & $\lambda_n=10^{-4}$ & 87.0 & 87.0 & .19 & .21 & .23 \\
         & & $\lambda_n=1$ & 87.0 & 87.5 & .19 & .21 & .23 \\
         \hline
    \end{tabular}
    \centering{\sffamily\normalsize\begin{enumerate}\item[(a)] \simpleiv; the true standard deviation over the 200 replications was $1.6$ when $n=200$, and $0.21$ when $n=2000$. \end{enumerate}}
    \end{minipage}\hfill\hspace{0.3cm}
    \linebreak
    \begin{minipage}[b]{1.0\textwidth}
    \centering
        \begin{tabular}{cclccccc}
         \hline
         $n$ & \multicolumn{2}{c}{Method} & Cov & CovBC & PredSD(.05) & PredSD(.5) & PredSD(.95) \\
         \hline
         \multirow{10}{*}{$200$} & \multirow{6}{*}{Kernel} & $\alpha_n=0$ & 76.5 & 76.5 & .34 & .58 & 5.3 \\
         & & $\alpha_n=10^{-8}$ & 77.0 & 75.5 & .36 & .58 & 6.4 \\
         & & $\alpha_n=10^{-6}$ & 81.0 & 78.0 & .43 & .63 & 5.5 \\
         & & $\alpha_n=10^{-4}$ & 85.5 & 84.0 & .49 & .74 & 8.1 \\
         & & $\alpha_n=10^{-2}$ & 92.0 & 89.0 & .60 & .98 & 12.8 \\
         & & $\alpha_n=1$ & 99.0 & 98.5 & 1.4 & 3.3 & 42.1 \\
         \cdashline{2-8}
         & \multirow{4}{*}{Neural} & $\lambda_n=0$ & 64.5 & 58.5 & .25 & .44 & .87 \\
         & & $\lambda_n=10^{-4}$ & 65.5 & 58.5 & .27 & .46 & .86 \\
         & & $\lambda_n=1$ & 65.5 & 59.0 & .25 & .45 & .87 \\
         \hdashline
         \multirow{10}{*}{$2000$} & \multirow{6}{*}{Kernel} & $\alpha_n=0$ & 93.5 & 94.5 & .19 & .21 & .24 \\
         & & $\alpha_n=10^{-8}$ & 93.5 & 95.0 & .19 & .21 & .24 \\
         & & $\alpha_n=10^{-6}$ & 94.0 & 95.0 & .20 & .22 & .25 \\
         & & $\alpha_n=10^{-4}$ & 94.0 & 95.5 & .20 & .22 & .25 \\
         & & $\alpha_n=10^{-2}$ & 97.0 & 96.5 & .22 & .23 & .28 \\
         & & $\alpha_n=1$ & 100.0 & 100.0 & .47 & .61 & .98 \\
         \cdashline{2-8}
         & \multirow{4}{*}{Neural} & $\lambda_n=0$ & 91.5 & 93.5 & .19 & .21 & .22 \\
         & & $\lambda_n=10^{-4}$ & 91.5 & 93.5 & .19 & .21 & .23 \\
         & & $\lambda_n=1$ & 91.5 & 94.0 & .20 & .21 & .23 \\
         \hline
    \end{tabular}
    \centering{\sffamily\normalsize\begin{enumerate}\item[(b)] \heteroiv; the true standard deviation over the 200 replications was $3.6$ when $n=200$, and $0.23$ when $n=2000$. \end{enumerate}}
    \end{minipage}\hfill\hspace{0.3cm}
    \caption{Results of our inference experiments, using kernel VMM estimation with $\alpha_n = 10^{-8}$, and various VMM inference methods. We list results using various. The columns have the same interpretation as in \cref{tab:inf-results}.}
    \label{tab:inf-results-ksmall}
\end{table}

\begin{table}\centering\small
    \begin{minipage}[b]{1.0\textwidth}
    \centering
        \begin{tabular}{cclccccc}
         \hline
         $n$ & \multicolumn{2}{c}{Method} & Cov & CovBC & PredSD(.05) & PredSD(.5) & PredSD(.95) \\
         \hline
         \multirow{10}{*}{$200$} & \multirow{6}{*}{Kernel} & $\alpha_n=0$ & 90.0 & 91.0 & .21 & .34 & .59 \\
         & & $\alpha_n=10^{-8}$ & 91.0 & 91.5 & .22 & .34 & .60 \\
         & & $\alpha_n=10^{-6}$ & 91.0 & 92.0 & .22 & .34 & .59 \\
         & & $\alpha_n=10^{-4}$ & 92.0 & 92.5 & .23 & .35 & .60 \\
         & & $\alpha_n=10^{-2}$ & 92.5 & 93.0 & .23 & .37 & .68 \\
         & & $\alpha_n=1$ & 93.0 & 94.0 & .25 & .42 & .82 \\
         \cdashline{2-8}
         & \multirow{4}{*}{Neural} & $\lambda_n=0$ & 90.0 & 90.5 & .22 & .33 & .53 \\
         & & $\lambda_n=10^{-4}$ & 91.5 & 91.5 & .22 & .33 & .56 \\
         & & $\lambda_n=1$ & 90.5 & 91.5 & .21 & .33 & .55 \\
         \hdashline
         \multirow{10}{*}{$2000$} & \multirow{6}{*}{Kernel} & $\alpha_n=0$ & 87.0 & 84.5 & .20 & .23 & .34 \\
         & & $\alpha_n=10^{-8}$ & 86.5 & 84.5 & .20 & .23 & .34 \\
         & & $\alpha_n=10^{-6}$ & 87.5 & 85.5 & .20 & .23 & .34 \\
         & & $\alpha_n=10^{-4}$ & 89.0 & 87.0 & .21 & .24 & .35 \\
         & & $\alpha_n=10^{-2}$ & 91.0 & 89.0 & .22 & .26 & .42 \\
         & & $\alpha_n=1$ & 100.0 & 100.0 & .48 & .66 & 1.5 \\
         \cdashline{2-8}
         & \multirow{4}{*}{Neural} & $\lambda_n=0$ & 79.0 & 78.0 & .19 & .21 & .24 \\
         & & $\lambda_n=10^{-4}$ & 79.0 & 78.0 & .19 & .21 & .23 \\
         & & $\lambda_n=1$ & 79.5 & 78.5 & .19 & .21 & .23 \\
         \hline
    \end{tabular}
    \centering{\sffamily\normalsize\begin{enumerate}\item[(a)] \simpleiv; the true standard deviation over the 200 replications was $0.39$ when $n=200$, and $0.40$ when $n=2000$. \end{enumerate}}
    \end{minipage}\hfill\hspace{0.3cm}
    \linebreak
    \begin{minipage}[b]{1.0\textwidth}
    \centering
        \begin{tabular}{cclccccc}
         \hline
         $n$ & \multicolumn{2}{c}{Method} & Cov & CovBC & PredSD(.05) & PredSD(.5) & PredSD(.95) \\
         \hline
         \multirow{10}{*}{$200$} & \multirow{6}{*}{Kernel} & $\alpha_n=0$ & 80.5 & 79.0 & .41 & .73 & 4.0 \\
         & & $\alpha_n=10^{-8}$ & 79.5 & 79.5 & .41 & .71 & 3.8 \\
         & & $\alpha_n=10^{-6}$ & 84.5 & 82.5 & .45 & .75 & 4.1 \\
         & & $\alpha_n=10^{-4}$ & 88.5 & 86.0 & .51 & .88 & 4.5 \\
         & & $\alpha_n=10^{-2}$ & 94.0 & 91.5 & .64 & 1.1 & 7.4 \\
         & & $\alpha_n=1$ & 100.0 & 100.0 & 1.5 & 3.6 & 24.1 \\
         \cdashline{2-8}
         & \multirow{4}{*}{Neural} & $\lambda_n=0$ & 58.0 & 62.5 & .30 & .52 & .86 \\
         & & $\lambda_n=10^{-4}$ & 58.5 & 62.5 & .27 & .53 & .86 \\
         & & $\lambda_n=1$ & 58.5 & 60.5 & .28 & .53 & .86 \\
         \hdashline
         \multirow{10}{*}{$2000$} & \multirow{6}{*}{Kernel} & $\alpha_n=0$ & 87.5 & 89.0 & .20 & .23 & .35 \\
         & & $\alpha_n=10^{-8}$ & 87.5 & 90.0 & .20 & .23 & .36 \\
         & & $\alpha_n=10^{-6}$ & 88.5 & 90.5 & .20 & .23 & .36 \\
         & & $\alpha_n=10^{-4}$ & 91.0 & 90.5 & .21 & .24 & .37 \\
         & & $\alpha_n=10^{-2}$ & 93.0 & 93.5 & .22 & .26 & .46 \\
         & & $\alpha_n=1$ & 100.0 & 100.0 & .48 & .66 & 1.8 \\
         \cdashline{2-8}
         & \multirow{4}{*}{Neural} & $\lambda_n=0$ & 80.0 & 82.0 & .19 & .21 & .23 \\
         & & $\lambda_n=10^{-4}$ & 80.5 & 81.5 & .19 & .21 & .23 \\
         & & $\lambda_n=1$ & 81.0 & 82.0 & .19 & .21 & .23 \\
         \hline
    \end{tabular}
    \centering{\sffamily\normalsize\begin{enumerate}\item[(b)] \heteroiv; the true standard deviation over the 200 replications was $1.7$ when $n=200$, and $0.34$ when $n=2000$. \end{enumerate}}
    \end{minipage}\hfill\hspace{0.3cm}
    \caption{Results of our inference experiments, using kernel VMM estimation with $\alpha_n = 1$, and various VMM inference methods. We list results using various. The columns have the same interpretation as in \cref{tab:inf-results}.}
    \label{tab:inf-results-klarge}
\end{table}

\begin{table}\centering\small
    \begin{minipage}[b]{1.0\textwidth}
    \centering
        \begin{tabular}{cclccccc}
         \hline
         $n$ & \multicolumn{2}{c}{Method} & Cov & CovBC & PredSD(.05) & PredSD(.5) & PredSD(.95) \\
         \hline
         \multirow{10}{*}{$200$} & \multirow{6}{*}{Kernel} & $\alpha_n=0$ & 18.5 & 98.0 & .17 & .25 & .39 \\
         & & $\alpha_n=10^{-8}$ & 19.0 & 96.5 & .17 & .26 & .40 \\
         & & $\alpha_n=10^{-6}$ & 19.5 & 97.0 & .18 & .26 & .40 \\
         & & $\alpha_n=10^{-4}$ & 22.0 & 98.5 & .18 & .27 & .42 \\
         & & $\alpha_n=10^{-2}$ & 24.0 & 98.5 & .19 & .28 & .46 \\
         & & $\alpha_n=1$ & 33.0 & 100.0 & .21 & .31 & .55 \\
         \cdashline{2-8}
         & \multirow{4}{*}{Neural} & $\lambda_n=0$ & 19.0 & 96.5 & .18 & .25 & .37 \\
         & & $\lambda_n=10^{-4}$ & 17.5 & 96.5 & .18 & .25 & .37 \\
         & & $\lambda_n=1$ & 18.0 & 97.0 & .17 & .25 & .37 \\
         \hdashline
         \multirow{10}{*}{$2000$} & \multirow{6}{*}{Kernel} & $\alpha_n=0$ & 85.5 & 89.0 & .20 & .24 & .33 \\
         & & $\alpha_n=10^{-8}$ & 86.0 & 89.5 & .20 & .24 & .33 \\
         & & $\alpha_n=10^{-6}$ & 86.5 & 89.5 & .20 & .24 & .34 \\
         & & $\alpha_n=10^{-4}$ & 87.0 & 90.0 & .21 & .25 & .35 \\
         & & $\alpha_n=10^{-2}$ & 91.5 & 91.0 & .22 & .27 & .42 \\
         & & $\alpha_n=1$ & 100.0 & 100.0 & .48 & .76 & 1.5 \\
         \cdashline{2-8}
         & \multirow{4}{*}{Neural} & $\lambda_n=0$ & 78.0 & 82.5 & .19 & .20 & .22 \\
         & & $\lambda_n=10^{-4}$ & 77.5 & 82.5 & .19 & .21 & .22 \\
         & & $\lambda_n=1$ & 78.0 & 83.0 & .19 & .20 & .22 \\
         \hline
    \end{tabular}
    \centering{\sffamily\normalsize\begin{enumerate}\item[(a)] \simpleiv; the true standard deviation over the 200 replications was $0.23$ when $n=200$, and $0.29$ when $n=2000$. \end{enumerate}}
    \end{minipage}\hfill\hspace{0.3cm}
    \linebreak
    \begin{minipage}[b]{1.0\textwidth}
    \centering
        \begin{tabular}{cclccccc}
         \hline
         $n$ & \multicolumn{2}{c}{Method} & Cov & CovBC & PredSD(.05) & PredSD(.5) & PredSD(.95) \\
         \hline
         \multirow{10}{*}{$200$} & \multirow{6}{*}{Kernel} & $\alpha_n=0$ & 77.5 & 80.5 & .39 & .65 & 2.3 \\
         & & $\alpha_n=10^{-8}$ & 76.0 & 79.5 & .42 & .67 & 2.0 \\
         & & $\alpha_n=10^{-6}$ & 79.5 & 84.0 & .46 & .73 & 2.7 \\
         & & $\alpha_n=10^{-4}$ & 84.5 & 88.5 & .53 & .91 & 3.5 \\
         & & $\alpha_n=10^{-2}$ & 90.0 & 92.5 & .61 & 1.2 & 5.8 \\
         & & $\alpha_n=1$ & 100.0 & 100.0 & 1.4 & 3.5 & 21.9 \\
         \cdashline{2-8}
         & \multirow{4}{*}{Neural} & $\lambda_n=0$ & 49.0 & 51.5 & .20 & .38 & .72 \\
         & & $\lambda_n=10^{-4}$ & 53.0 & 53.0 & .23 & .40 & .70 \\
         & & $\lambda_n=1$ & 52.0 & 53.0 & .20 & .38 & .67 \\
         \hdashline
         \multirow{10}{*}{$2000$} & \multirow{6}{*}{Kernel} & $\alpha_n=0$ & 88.0 & 94.0 & .20 & .24 & .31 \\
         & & $\alpha_n=10^{-8}$ & 89.5 & 95.0 & .20 & .24 & .31 \\
         & & $\alpha_n=10^{-6}$ & 89.5 & 95.0 & .21 & .24 & .32 \\
         & & $\alpha_n=10^{-4}$ & 90.5 & 95.0 & .21 & .24 & .33 \\
         & & $\alpha_n=10^{-2}$ & 93.0 & 96.5 & .22 & .27 & .39 \\
         & & $\alpha_n=1$ & 100.0 & 100.0 & .48 & .74 & 1.3 \\
         \cdashline{2-8}
         & \multirow{4}{*}{Neural} & $\lambda_n=0$ & 82.5 & 93.0 & .19 & .20 & .22 \\
         & & $\lambda_n=10^{-4}$ & 82.5 & 93.0 & .19 & .20 & .22 \\
         & & $\lambda_n=1$ & 82.5 & 92.5 & .19 & .20 & .22 \\
         \hline
    \end{tabular}
    \centering{\sffamily\normalsize\begin{enumerate}\item[(b)] \heteroiv; the true standard deviation over the 200 replications was $1.1$ when $n=200$, and $0.24$ when $n=2000$. \end{enumerate}}
    \end{minipage}\hfill\hspace{0.3cm}
    \caption{Results of our inference experiments, using kernel VMM estimation with $\lambda_n = 0$, and various VMM inference methods. We list results using various. The columns have the same interpretation as in \cref{tab:inf-results}.}
    \label{tab:inf-results-n}
\end{table}

In this subsection we provide additional inference results, using different algorithms for the estimation procedure. In particular, in \cref{tab:inf-results-ksmall} we present results using kernel VMM with $\alpha_n=10^{-8}$, in \cref{tab:inf-results-klarge} we present results using kernel VMM with $\alpha_n=1$, and \cref{tab:inf-results-n} we present results using neural VMM with $\lambda_n=0$.

\end{document}